\newtheorem{thm}{Theorem}
\newtheorem{defin}{Definition}
\newtheorem{lem}{Lemma}
\newtheorem{assum}{Assumption}
\newtheorem{rem}{Remark}
\newtheorem{Ex}{Example}
\journal{}
\begin{document}
	
	\begin{frontmatter}

		\title{Community detection for weighted  bipartite networks}
		\author[label1]{Huan Qing}
        \ead{qinghuan@u.nus.edu & qinghuan@cumt.edu.cn & qinghuan07131995@163.com}
		\affiliation[label1]{organization={School of Mathematics,  China University of Mining and Technology},
				city={Xuzhou},
				postcode={221116},
				state={Jiangsu},
				country={China}}		
			\author[label2]{Jingli Wang\corref{cor1}}
		\cortext[cor1]{Corresponding author}
			\ead{jlwang@nankai.edu.cn}
			\affiliation[label2]{organization={School of Statistics and Data Science,  KLMDASR, LEBPS, and LPMC, Nankai University},
					city={Tianjin},
					postcode={300071},
					state={Tianjin},
					country={China}}

\begin{abstract}
The bipartite network appears in various areas, such as biology, sociology, physiology, and computer science. \cite{rohe2016co} proposed Stochastic co-Blockmodel (ScBM) as a tool for detecting community structure of binary bipartite graph data in network studies. However, ScBM completely ignores edge weight and is unable to explain the block structure of a weighted bipartite network. Here, to model a weighted bipartite network, we introduce a Bipartite Distribution-Free model by releasing ScBM's distribution restriction. We also build an extension of the proposed model by considering the variation of node degree. Our models do not require a specific distribution on generating elements of the adjacency matrix but only a block structure on the expected adjacency matrix. Spectral algorithms with theoretical guarantees on the consistent estimation of node labels are presented to identify communities. Our proposed methods are illustrated by simulated and empirical examples.
\end{abstract}

\begin{keyword}
Community detection, weighted bipartite network, spectral clustering, distribution-free model.
\end{keyword}

\end{frontmatter}

\section{Introduction}
In recent years, network science has received a lot of attention \citep{SBM,ng2001spectral, newman2003structure, Goldenberg2009, bu2013fast, zhu2020community, su2022comprehensive}. In real life, there exist many directed weighted networks such as rating networks, trust/distrust networks, dominance networks, food chain networks, telephone networks, and email networks. Because directed network naturally generalizes to
bipartite network \citep{rohe2016co}, this paper focuses on the bipartite network, a kind of popular complex network in which nodes are taken as two classes: one class is for sending nodes; another is for receiving nodes \citep{zhou2007bipartite}. The numbers of nodes for these two classes are not always equal. And edges only between the two classes instead of inside the class, i.e., there are edges from sending nodes to receiving nodes, but no edges from sending nodes to sending nodes or receiving nodes to receiving nodes. Generally, if there is a directed edge from sending node $a$ to sending node $b$, we can add the sending node $b$ to the receiving class and ignore the connection between sending nodes. Thus in a bipartite network, the edges have direction naturally, and sending nodes and receiving nodes are allowed to be the same in this paper. Since edge weight contains meaningful information and can improve the understanding of network structure \citep{newman2004analysis,barrat2004the}, this paper focuses on weighted bipartite networks.

Detecting communities plays a key role in understanding the structure of networks in various areas, including but not limited to computer science, social science, physics, biology, and statistics \citep{mcpherson2001birds,duch2005community,fortunato2010community, papadopoulos2012community}.
A large number of methods have been developed for community detection in recent years.  These methods can be categorized as heuristic methods \citep{girvan2002community, pons2005}, modularity based methods \citep{newman2006modularity,bu2013fast,zhu2020community,yuan2021community,boroujeni2022role},  model-based methods \citep{SBM,rohe2011spectral,RSC,joseph2016impact,gao2017achieving,CMM}, deep learning methods \citep{zhao2021community,su2022comprehensive}, and others \citep{2020Community,huang2021higher,reihanian2023enhanced,zhou2023community,duong2023deep}. In this paper, we focus on model-based methods. For undirected unweighted networks, the stochastic block model (SBM) \citep{SBM} is mathematically simple and relatively easy to analyze, and it is deemed as a standard framework \citep{PJBAC}. SBM assumes nodes within each community have the same expected degrees. However, this assumption is too restrictive to implement, and in many natural networks, the degrees follow approximately a power-law distribution \citep{Kolaczyk2009, Goldenberg2009, SCORE}. Therefore, the degree-corrected stochastic block model (DCSBM) \citep{DCSBM} is developed and it allows the degree of nodes to vary among different communities. By considering the degree heterogeneity of nodes, many methods are developed based on DCSBM \citep{SCORE, RSC,gao2018community, OCCAM, SLIM}. Thorough surveys of SBM's recent developments literature
have been provided by \cite{abbe2017community}, among others. Though SBM and DCSBM are popular models in network science, they can not model weighted networks, while edge weight can improve community detection \cite{newman2004analysis,barrat2004the}. For undirected weighted networks, some Weighted Stochastic Blockmodels (WSBM) have been developed, to name a few, \cite{aicher2015learning,ahn2018hypergraph, palowitch2017significance,peixoto2018nonparametric,xu2020optimal,ng2021weighted}.

However, the above-mentioned models and approaches ignore edge directionality. While if these methods are directly applied to weighted bipartite networks, they may fail to capture the asymmetric relationships. Then a natural question arises if we could extend these methods to directed networks, what shall we do, and do their theoretical properties still hold? There are a few works that are proposed for directed unweighted networks. For example, \cite{rohe2016co}  identified the communities in bipartite unweighted networks by extending SBM to Stochastic co-Blockmodel (ScBM for short) and DCSBM to DCScBM, and they also studied the co-clustering frameworks based on regularized Laplacian matrix. \cite{wang2020spectral} constructed the D-SCORE algorithm which is an extension of SCORE algorithm \citep{SCORE} to detect directed unweighted networks under a sub-model of DCScBM. \cite{zhou2019analysis} studied the spectral clustering algorithms designed by a data-driven regularization of the adjacency matrix under ScBM for unweighted bipartite networks. It should be noted that ScBM and DCScBM require that elements of the adjacency matrix are generated from Bernoulli distribution, and this is the reason that these two models are applied for unweighted bipartite networks.

However, all aforementioned works ignore either edge direction or edge weight. In this paper, we mainly focus on building models for weighted bipartite networks and designing spectral clustering methods with theoretical guarantees of estimation consistency to fit our models.  This paper makes contributions to the following four aspects:

(1) Model. (i) We propose a Bipartite Distribution-Free model (BiDFM) for weighted bipartite networks. BiDFM is built by realizing ScBM's distribution restriction such that elements of the adjacency matrix can be generated from any distribution as long as the expected adjacency matrix satisfies a block structure.  In detail, ScBM requires that elements of the adjacency matrix should be generated from the Bernoulli distribution, for comparison, our BiDFM allows elements of the adjacency matrix to be generated from any distribution, such as Bernoulli, Normal, Binomial, Poisson, Uniform, and Gamma. Especially, when it follows a Bernoulli distribution, our BiDFM degenerates to ScBM. (ii) We also propose a degree-corrected version of BiDFM by considering the node heterogeneity, called the Bipartite Degree-Corrected Distribution-Free model (BiDCDFM).

(2) Algorithm. We develop two community detection methods to fit BiDFM and BiDCDFM. Under BiDFM, we apply the singular value decomposition (SVD) on the adjacency matrix to obtain the leading left and right singular vectors with unit-norm, then form such vectors into two matrices, and finally, the K-means algorithm is used on these two matrices to get clusters. Naturally, by normalizing singular vector matrices, we construct the degree-corrected version of the proposed method under BiDCDFM.

 (3) Theoretical property. We build theoretical frameworks for the two proposed spectral clustering methods. With some mild assumptions, we obtain the theoretical bounds of estimation errors, which guarantee the proposed spectral clustering methods can recover both row node and column node memberships of all but a vanishing fraction of the nodes. Furthermore, several examples are given to illustrate the theoretical bounds for error rates when the adjacency matrix is generated from some specific distributions.  These examples guarantee the generality of our models and methods.

(4) Empirical validation. In Section \ref{NumericalStudy}, we first investigate the performances of the proposed methods for some specific distributions, and results on simulated weighted bipartite networks support our theoretical results. We then apply our methods to some real-world networks  and detect latent community information for these networks.

The following notations will be used throughout the paper: $\|\cdot\|_{F}$ denotes the Frobenius norm. $\|\cdot\|$ for a matrix denotes the spectral norm. For convenience, when we say ``leading eigenvectors'' or ``leading eigenvalues'', we are comparing the \emph{magnitudes} of the eigenvalues. For any matrix or vector $x$, $x'$ denotes the transpose of $x$. Let $\mathbb{M}_{m, K}$ be the collection of all $m\times K$ matrices where each row has only one $1$ and all others $0 $. Let $\lambda_{k}(X)$ denote the $k$-th leading eigenvalue of matrix $X$, and $\sigma_{k}(X)$ denote the $k$-th largest singular value of matrix $X$. Let $I_{m}$ be the $m\times m$ identity matrix.
\section{Bipartite Distribution-Free model}\label{sec2}
In this section, we first propose a Bipartite Distribution-Free model to generate weighted bipartite networks by releasing the distribution assumption on the adjacency matrix. Then we build a spectral clustering method to detect communities for weighted bipartite networks generated from this model. Finally, we show the theoretical consistency of the proposed method under the model. Furthermore, several examples are given to illustrate the theoretical results when adjacency matrices are generated from different distributions.
\subsection{Model}
Consider a weighted bipartite network $\mathcal{N}$ with $n_{r}$ row nodes and $n_{c}$ column nodes such that all row nodes belong to $K_{r}$ row clusters and all column nodes belong to $K_{c}$ column clusters. To emphasize that row nodes may differ from column nodes, let $i_{r}$ be row node $i$, where $i_{r}=1,2,\ldots, n_{r}$, and $j_{c}$ be column node $j$, where $j_{c}=1,2,\ldots, n_{c}$. Note that when row nodes are the same as column nodes, $\mathcal{N}$ reduces to a weighted directed network. Let $\{\mathcal{C}_{r,1}, \mathcal{C}_{r,2}, \ldots, \mathcal{C}_{r,K_{r}}\}$ be the $K_{r}$ row clusters, and $\{\mathcal{C}_{c,1}, \mathcal{C}_{c,2}, \ldots, \mathcal{C}_{c,K_{c}}\}$ be the $K_{c}$ column clusters. Let $A\in\mathbb{R}^{n_{r}\times n_{c}}$ be an asymmetric adjacency matrix of $\mathcal{N}$ such that $A(i_{r},j_{c})$ denotes the weight from row node $i_{r}$ to column node $j_{c}$ for $i_{r}=1,2,\ldots,n_{r}, j_{c}=1,2,\ldots, n_{c}$. The element $A(i_{r},j_{c})$ can be positive or negative to show the strength of weight. For example, in a man$\&$woman friendship network, let row nodes denote men, column nodes represent women, and $A(i_{r},j_{c})$ be the weight of how man $i_{r}$ rates his friendship with woman $j_{c}$. We see that a positive value of $A(i_{r},j_{c})$ means that man $i_{r}$ treats woman $j_{c}$ as his friend while a negative value of $A(i_{r},j_{c})$ means a hostile relation; in a protein function network, a positive value of $A(i_{r}, j_{c})$ means that protein $i_{r}$ promotes the synthesis of protein $j_{c}$ while a negative value of $A(i_{r}, j_{c})$ means inhibition. For a bipartite weighted network, all entries of $A$ can be any finite real numbers, instead of simply 0 and 1 as adjacency matrix for unweighted networks. To model such bipartite weighted network $\mathcal{N}$, we propose a Bipartite Distribution-Free model and its degree-corrected version in this paper. A Bipartite Distribution-Free model is parameterized by a triple of matrices $(Z_{r}, Z_{c}, P)$ and a parameter $\rho$, where $Z_{r}\in \mathbb{M}_{n_{r}, K_{r}}$ is the row membership matrix, $Z_{c}\in \mathbb{M}_{n_{c},K_{c}}$ is the column membership matrix, $P\in \mathbb{R}^{K_{r}\times K_{c}}$ is an asymmetric matrix with full rank, and $\rho>0$. For convenience, let $g_{i_{r}}$ ($g_{i_{r}}\in\{1,2, \ldots, K_{r}\}$) be community label for row node $i_{r}$. Then $Z_{r}$ can be written as
\begin{align*}
Z_{r}(i_{r},k)=\begin{cases}
1, & \mbox{when~} k=g_{i_{r}}\\
0, & \mbox{otherwise}
\end{cases},
\end{align*}
i.e., the $i_{r}$-th row and $g_{i_{r}}$-th column of $Z_{r}$ is 1  and $0$ otherwise.
Similar notations hold for column nodes. Throughout the paper, each row (column) node belongs to exactly only one row (column) cluster, and $K_{r}, K_{c}$ are known. And we assume that each row (column) cluster has at least one node, i.e., $\mathrm{rank}(Z_{r})=K_{r}, \mathrm{rank}(Z_{c})=K_{c}$. Now, we are ready to present the Bipartite Distribution-Free model.
\begin{defin}\label{BiDFM}
Let $A\in\mathbb{R}^{n_{r}\times n_{c}}$ be the adjacency matrix of bipartite weighted network $\mathcal{N}$. Let $Z_{r}\in\mathbb{M}_{n_{r}, K_{r}}, Z_{c}\in\mathbb{M}_{n_{c}, K_{c}}$ and $P\in\mathbb{R}^{K_{r}\times K_{c}}$ where $\mathrm{max}_{k, l}|P(k,l)|=1$. $P$ is full rank, i.e., $\mathrm{rank}(P)=\mathrm{min}(K_{r}, K_{c})$. Let $\rho>0$ and call it the sparsity parameter.  For $i_{r}=1,2,\ldots,n_{r}, j_{c}=1,2,\ldots,n_{c}$, Bipartite Distribution-Free model (BiDFM) assumes that for arbitrary distribution $\mathcal{F}$, $A(i_{r},j_{c})$ are independent random variables generated according to the distribution $\mathcal{F}$  and satisfy
\begin{align}\label{AFOmega}
\mathbb{E}[A(i_{r},j_{c})]=\Omega(i_{r},j_{c}), \mathrm{where~}\Omega:=\rho Z_{r}PZ'_{c}.
\end{align}
\end{defin}
By the above definition, we know that BiDFM is formed by the four model parameters $Z_{r}, Z_{c}, P,\rho$, so we denote it by $BiDFM(Z_{r}, Z_{c}, P,\rho)$. By the fact that we do not constrain $\mathcal{F}$ to be any specific distribution, Eq (\ref{AFOmega}) means that BiDFM only has a restriction on the expectation of adjacency matrix but no requirement on the specific distribution of $A$. When $\mathcal{F}$ is Bernoulli distribution, our BiDFM degenerates to ScBM \citep{rohe2016co} for unweighted directed networks, i.e., ScBM is a sub-model of our BiDFM. Meanwhile, for different distributions $\mathcal{F}$, the range of $\rho$ can be different. For detail, see Examples \ref{Bernoulli}-\ref{Signed}.

The goal of community recovery for  weighted bipartite network is to recover the row and column membership matrices $Z_{r}$ and $Z_{c}$ up to permutations. For any estimates $\hat{Z}_{r}$ and $\hat{Z}_{c}$, we use the performance criterion defined in \cite{joseph2016impact} to measure estimation error. It is introduced as follows:

Let $\mathcal{\hat{\mathcal{C}}}_{r}=\{\mathcal{\hat{\mathcal{C}}}_{r,1}, \mathcal{\hat{\mathcal{C}}}_{r,2}, \ldots, \mathcal{\hat{\mathcal{C}}}_{r,K_{r}}\}$ be the partition of row nodes $\{1,2,\ldots, n_{r}\}$ obtained from $\hat{Z}_{r}$ (i.e., if $\hat{Z}_{r}(i_{r},:)=\hat{Z}_{r}(\bar{i}_{r},:)$ for $1\leq i_{r}\neq \bar{i}_{r}\leq n_{r}$, then $i_{r}$ and $\bar{i}_{r}$ are in the same partition.). The criterion for row clustering error associated with $\mathcal{\hat{\mathcal{C}}}_{r,1}, \mathcal{\hat{\mathcal{C}}}_{r,2}, \ldots, \mathcal{\hat{\mathcal{C}}}_{r,K_{r}}$ is defined as
\begin{align*}
\hat{f}_{r}=\mathrm{min}_{\pi\in S_{K_{r}}}\mathrm{max}_{1\leq k\leq K_{r}}(|\mathcal{C}_{r,k}\cap \mathcal{\hat{\mathcal{C}}}^{c}_{r,\pi(k)}|+|\mathcal{C}^{c}_{r,k}\cap \mathcal{\hat{\mathcal{C}}}_{r,\pi(k)}|)/n_{r,k},
\end{align*}
where $n_{r,k}$ is the size of the $k$-th row community,  $S_{K_{r}}$ is the set of all permutations of $\{1,2,\ldots, K_{r}\}$, $\mathcal{\hat{\mathcal{C}}}^{c}_{r,\pi(k)}$ and $\mathcal{C}^{c}_{r,k}$ are the complementary sets. As stated in \cite{joseph2016impact}, $\hat{f}_{r}$ measures the maximum proportion of row nodes in the symmetric difference of $\mathcal{C}_{r,k}$ and $\mathcal{\hat{\mathcal{C}}}_{r,\pi(k)}$. Similarly, we can define $\mathcal{\hat{\mathcal{C}}}_{c,\pi(k)}$ and $\hat{f}_{c}$ for column nodes.
\subsection{Algorithm}\label{alg1}
Spectral clustering is a classical method for community recovery \citep{SCORE,lei2015consistency,joseph2016impact,wang2020spectral}, while it is not trivial to apply it to bipartite networks. Under BiDFM, the heuristic of spectral clustering is to group the compact singular value decomposition of the expectation matrix of $A$. Without loss of generality, we assume $K_{r}\leq K_{c}$ in this paper. Thus, we have $\mathrm{rank}(P)=\mathrm{min}(K_{r},K_{c})=K_{r}$, and $\mathrm{rank}(\Omega)=\mathrm{min}(K_{r},K_{c})=K_{r}$, i.e., $\Omega$ has $K_{r}$ nonzero singular values. Let $\Omega=U_{r}\Lambda U'_{c}$ be the compact singular value decomposition of $\Omega$ such that $U_{r}\in\mathbb{R}^{n_{r}\times K_{r}}, U_{c}\in \mathbb{R}^{n_{c}\times K_{r}}$, diagonal matrix $\Lambda\in\mathbb{R}^{K_{r}\times K_{r}}$, $U'_{r}U_{r}=I_{K_{r}}$ and $U'_{c}U_{c}=I_{K_{r}}$. Then we can have the following lemma which shows that $U_{r}$ has $K_{r}$ distinct rows and $U_{c}$ has $K_{c}$ distinct rows, and two row nodes are in the same row cluster if and only if their corresponding rows in $U_{r}$ are the same, and so are the column nodes.
\begin{lem}\label{UrUcBiDFM}
Under $BiDFM(Z_{r},Z_{c},P,\rho)$, let $\Omega=U_{r}\Lambda U'_{c}$ be the compact singular value decomposition of $\Omega$. Then for row nodes, we have $U_{r}=Z_{r}X_{r}$ where $X_{r}\in \mathbb{R}^{K_{r}\times K_{r}}$ and $\|X_{r}(k,:)-X_{r}(l,:)\|_{F}=(n_{r,k}^{-1}+n_{r,l}^{-1})^{1/2}$ for all $1\leq k<l\leq K_{r}$, where $n_{r,k}$ is the number of nodes for $k$-th row cluster. For column nodes, we have $U_{c}=Z_{c}X_{c}$ where $X_{c}\in \mathbb{R}^{K_{c}\times K_{r}}$. Meanwhile,  $U_{r}(i_{r},:)=U_{r}(\bar{i}_{r},:)$ when $g_{i_{r}}=g_{\bar{i}_{r}}$ for any two distinct row nodes $i_{r}, \bar{i}_{r}$, and $U_{c}(j_{c},:)=U_{c}(\bar{j}_{c},:)$ when $g_{j_{c}}=g_{\bar{j}_{c}}$ for any two distinct column nodes $j_{c}, \bar{j}_{c}$. Furthermore, when $K_{c}=K_{r}$, we have $\|X_{c}(k,:)-X_{c}(l,:)\|_{F}=(n_{c,k}^{-1}+n_{c,l}^{-1})^{1/2}$ for all $1\leq k<l\leq K_{c}$.
\end{lem}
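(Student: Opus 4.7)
The plan is to exploit the factorization $\Omega = \rho Z_r P Z_c'$ together with the uniqueness of the singular subspaces to force $U_r$ and $U_c$ to lie in the column spaces of $Z_r$ and $Z_c$ respectively. The norm equalities will then fall out of a short computation using the diagonal Gram matrix $Z_r' Z_r = \mathrm{diag}(n_{r,1},\ldots,n_{r,K_r})$.

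First I would verify the rank claim: because $Z_c$ has full column rank $K_c$, $P$ has rank $K_r$, and $Z_r$ has full column rank $K_r$, the product $\rho Z_r P Z_c'$ has rank exactly $K_r$. Therefore the compact SVD indeed has inner dimension $K_r$ and the columns of $U_r$ form an orthonormal basis of the column space of $\Omega$. Since $P Z_c'$ has full row rank $K_r$, the column space of $\Omega$ equals the column space of $Z_r$, so each column of $U_r$ is a linear combination of columns of $Z_r$. This gives a matrix $X_r \in \mathbb{R}^{K_r \times K_r}$ with $U_r = Z_r X_r$. The symmetric argument on the row space of $\Omega$ (which equals the column space of $Z_c$ because $Z_r P$ has full column rank $K_r$) yields $U_c = Z_c X_c$ for some $X_c \in \mathbb{R}^{K_c \times K_r}$. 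The ``same rows for same labels'' statements are then immediate: if $g_{i_r} = g_{\bar i_r} = k$, then $U_r(i_r,:) = X_r(k,:) = U_r(\bar i_r,:)$, and analogously for column nodes.

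Next I would exploit orthonormality. Substituting $U_r = Z_r X_r$ into $U_r' U_r = I_{K_r}$ gives $X_r' (Z_r' Z_r) X_r = I_{K_r}$, and since $Z_r' Z_r = D_r := \mathrm{diag}(n_{r,1},\ldots,n_{r,K_r})$, the matrix $Y_r := D_r^{1/2} X_r$ is a $K_r \times K_r$ real orthogonal matrix. Being square and orthogonal, $Y_r$ has orthonormal rows as well. Writing $X_r(k,:) = n_{r,k}^{-1/2} Y_r(k,:)$ and expanding,
\begin{align*}
\|X_r(k,:) - X_r(l,:)\|_F^2
&= \tfrac{1}{n_{r,k}}\|Y_r(k,:)\|_F^2 + \tfrac{1}{n_{r,l}}\|Y_r(l,:)\|_F^2 - \tfrac{2}{\sqrt{n_{r,k} n_{r,l}}} Y_r(k,:) Y_r(l,:)' \\
&= n_{r,k}^{-1} + n_{r,l}^{-1},
\end{align*}
since $\|Y_r(k,:)\|_F = \|Y_r(l,:)\|_F = 1$ and $Y_r(k,:) Y_r(l,:)' = 0$ for $k\neq l$. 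Taking square roots yields the desired identity.

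For the final assertion, observe that when $K_c = K_r$ the matrix $X_c$ is square, so the same computation applies verbatim: $U_c' U_c = I_{K_r}$ forces $Y_c := (Z_c' Z_c)^{1/2} X_c$ to be a square orthogonal matrix with orthonormal rows, and the two-line expansion above gives $\|X_c(k,:)-X_c(l,:)\|_F = (n_{c,k}^{-1}+n_{c,l}^{-1})^{1/2}$. I expect the only subtle point to be the column-space identification in the first step — one must carefully use that $P Z_c'$ and $Z_r P$ each have rank $K_r$ so that passing from ``column space of $\Omega$'' to ``column space of $Z_r$'' (and similarly for $Z_c$) is lossless; everything else is routine linear algebra.
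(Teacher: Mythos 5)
Your proposal is correct. One point of comparison: the paper never writes out a proof of this lemma at all (it only remarks that the statement is consistent with Lemma 2 of \cite{guo2020randomized}); the closest argument it does give is the proof of Lemma \ref{populationBiDCDFM}, which proceeds in the opposite direction from yours, by \emph{constructing} the compact SVD of $\Omega$ explicitly: it normalizes the columns of $Z_{r}$ and $Z_{c}$ to get orthonormal matrices $\Gamma_{r},\Gamma_{c}$, takes the compact SVD $V_{r}\Sigma V_{c}'$ of the small $K_{r}\times K_{c}$ core matrix, and reads off $U_{r}=\Gamma_{r}V_{r}$, $U_{c}=\Gamma_{c}V_{c}$, so that $X_{r}=D_{r}^{-1/2}V_{r}$ and the distance formula follows from orthonormality of $V_{r}$'s rows. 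You instead fix the singular subspaces first (column space of $\Omega$ equals column space of $Z_{r}$, row space is contained in that of $Z_{c}$), write $U_{r}=Z_{r}X_{r}$, $U_{c}=Z_{c}X_{c}$, and then extract the same structure from the constraint $X_{r}'(Z_{r}'Z_{r})X_{r}=I_{K_{r}}$, i.e.\ $D_{r}^{1/2}X_{r}$ orthogonal. The two routes are computationally equivalent, but yours has the small advantage of being insensitive to the non-uniqueness of the compact SVD (repeated singular values only rotate $X_{r}$ on the right, which your subspace argument never needs to resolve), whereas the explicit-construction route implicitly identifies one particular SVD with \emph{the} SVD.

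One minor imprecision worth fixing: when $K_{c}>K_{r}$, the matrix $Z_{r}P$ ($n_{r}\times K_{c}$, rank $K_{r}$) does not have full column rank, and the row space of $\Omega$ is only a $K_{r}$-dimensional subspace of the $K_{c}$-dimensional column space of $Z_{c}$, not equal to it. This does not harm your argument, because writing $U_{c}=Z_{c}X_{c}$ only requires the containment $\mathrm{rowspace}(\Omega)\subseteq\mathrm{colspace}(Z_{c})$, which is immediate from $\Omega'=Z_{c}(\rho P'Z_{r}')$; but the parenthetical ``equals'' should be weakened to ``is contained in'' in that case. Your handling of the square case $K_{c}=K_{r}$, where $X_{c}$ does become $D_{c}^{-1/2}$ times an orthogonal matrix and the same two-line expansion applies, is exactly right and matches the scope of the lemma's final claim.
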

This lemma is consistent with the Lemma 2 in \cite{guo2020randomized} which is built based on the ScBM model.
Based on Lemma \ref{UrUcBiDFM}, once we estimate $U_{r}$ and $U_{c}$, we can find the estimation of membership matrices $Z_r$ and $Z_c$. Let $\tilde{A}=\hat{U}_{r}\hat{\Lambda}\hat{U}'_{c}$ be the $K_{r}$-dimensional singular value decomposition of $A$ corresponding to the $K_{r}$ largest singular values such that $\hat{U}_{r}\in\mathbb{R}^{n_{r}\times K_{r}}, \hat{U}_{c}\in \mathbb{R}^{n_{c}\times K_{r}}$, diagonal matrix $\hat{\Lambda}\in\mathbb{R}^{K_{r}\times K_{r}}$, $\hat{U}'_{r}\hat{U}_{r}=I_{K_{r}} $ and $\hat{U}'_{c}\hat{U}_{c}=I_{K_{r}}$. We see that $\hat{U}_{r}$ should have roughly $K_{r}$ distinct rows because they are slightly perturbed versions of the rows of $U_{r}$, and $\hat{U}_{c}$ should have roughly $K_{c}$ distinct rows. Therefore, applying a clustering algorithm on the rows of $\hat{U}_{r}$ (or $\hat{U}_{c}$) can return a good community partition. In this paper, we consider the K-means clustering, defined as
\begin{align}\label{Kmeans}
&(\hat{Z}_{r},\hat{X}_{r})=\mathrm{arg~}\mathrm{min}_{\bar{Z}_{r}\in \mathbb{M}_{n_{r},K_{r}}, \bar{X}_{r}\in \mathbb{R}^{K_{r}\times K_{r}}}\|\bar{Z}_{r}\bar{X}_{r}-\hat{U}_{r}\|^{2}_{F},\\\nonumber &(\hat{Z}_{c},\hat{X}_{c})=\mathrm{arg~}\mathrm{min}_{\bar{Z}_{c}\in \mathbb{M}_{n_{c},K_{c}}, \bar{X}_{c}\in \mathbb{R}^{K_{c}\times K_{r}}}\|\bar{Z}_{c}\bar{X}_{c}-\hat{U}_{c}\|^{2}_{F}.
\end{align}
The proposed algorithm is summarized in Algorithm \ref{algBiSC}, where BiSC is short for Bipartite Spectral Clustering. 
Note that in the BiSC algorithm, if the input matrix is the population adjacency matrix $\Omega$ instead of $A$, by Lemma \ref{UrUcBiDFM}, we can exactly recover $Z_{r}$ and $Z_{c}$ up to permutation of cluster labels, and this guarantees the identifiability of BiDFM. 
\begin{algorithm}
	\caption{BiSC}
	\label{algBiSC}
	\begin{algorithmic}[1]
		\Require Adjacency matrix $A\in\mathbb{R}^{n_{r}\times n_{c}}$, number of row clusters $K_{r}$, number of column clusters $K_{c}$, with $K_{r}\leq K_{c}$.
		\Ensure Row nodes membership matrix $\hat{Z}_{r}\in \mathbb{R}^{n_{r}\times K_{r}}$, and column nodes membership matrix $\hat{Z}_{c}\in \mathbb{R}^{n_{c}\times K_{c}}$.
		\State Calculate $\hat{U}_{r}\in\mathbb{R}^{n_{r}\times K_{r}}, \hat{U}_{c}\in\mathbb{R}^{n_{c}\times K_{r}}$ of $A$.
		\State Obtain $(\hat{Z}_{r}, \hat{X}_{r})$ and $(\hat{Z}_{c}, \hat{X}_{c})$ by K-means method, i.e., by Eq (\ref{Kmeans}).
		\State Output $\hat{Z}_{r}$ and $\hat{Z}_{c}$.
	\end{algorithmic}
\end{algorithm}
\subsection{The consistency of BiSC under BiDFM}\label{sec3}
In this section, we establish the performance guarantee for BiSC under BiDFM. For convenience, set $\tau=\mathrm{max}_{ i_{r}, j_{c}}|A(i_{r},j_{c})-\Omega(i_{r},j_{c})|$ and  $\gamma=\mathrm{max}_{ i_{r}, j_{c}}\frac{\mathrm{Var}(A(i_{r},j_{c}))}{\rho}$, where $\mathrm{Var}(A(i_{r},j_{c}))$ denotes the variance of $A(i_{r},j_{c})$ under distribution $\mathcal{F}$. We make the following assumption.
\begin{assum}\label{asump}
Assume	$\gamma\rho\geq \tau^{2}\mathrm{log}(n_{r}+n_{c})/ \mathrm{~max}(n_{r},n_{c})$.
\end{assum}
This assumption is common when showing the consistency for community detection, such as \cite{lei2015consistency,SCORE, mao2020estimating,guo2020randomized}. Based on the rectangular version of Bernstein inequality in \cite{tropp2012user}, we can obtain the bound for $\|A-\Omega\|$ under $BiDFM(Z_{r},Z_{c},P,\rho)$.
\begin{lem}\label{boundABiDFM}
	Under $BiDFM(Z_{r}, Z_{c}, P,\rho)$, suppose Assumption \ref{asump} holds, with probability at least $1-o((n_{r}+n_{c})^{-\alpha})$ for any $\alpha>0$, we have
	\begin{align*}
	\|A-\Omega\|\leq C_{\alpha}(\gamma\rho\mathrm{~max}(n_{r},n_{c})\mathrm{log}(n_{r}+n_{c}))^{1/2},
	\end{align*}
	where $C_{\alpha}$ is a positive constant and proportional to  $\alpha$.
\end{lem}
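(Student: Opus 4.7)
The plan is to invoke the rectangular matrix Bernstein inequality from \cite{tropp2012user} applied to the natural decomposition
\begin{align*}
A - \Omega = \sum_{i_{r}=1}^{n_{r}}\sum_{j_{c}=1}^{n_{c}} W_{i_{r},j_{c}}, \qquad W_{i_{r},j_{c}} := (A(i_{r},j_{c})-\Omega(i_{r},j_{c}))\, e_{i_{r}}e_{j_{c}}',
\end{align*}
where $e_{i_{r}}\in\mathbb{R}^{n_{r}}$ and $e_{j_{c}}\in\mathbb{R}^{n_{c}}$ are the standard basis vectors. By Definition \ref{BiDFM}, the $W_{i_{r},j_{c}}$ are independent, mean-zero, rank-one $n_{r}\times n_{c}$ random matrices, so the hypotheses of matrix Bernstein are met.

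First I would verify the two ingredients required by the inequality. For the uniform operator-norm bound on the summands, $\|W_{i_{r},j_{c}}\| = |A(i_{r},j_{c})-\Omega(i_{r},j_{c})| \leq \tau$ by definition of $\tau$. For the matrix variance, observe that $\mathbb{E}[W_{i_{r},j_{c}} W_{i_{r},j_{c}}'] = \mathrm{Var}(A(i_{r},j_{c}))\, e_{i_{r}}e_{i_{r}}'$, so $\sum_{i_{r},j_{c}}\mathbb{E}[W_{i_{r},j_{c}} W_{i_{r},j_{c}}']$ is diagonal with $(i_{r},i_{r})$-entry $\sum_{j_{c}}\mathrm{Var}(A(i_{r},j_{c})) \leq n_{c}\gamma\rho$ by the definition of $\gamma$. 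Hence $\bigl\|\sum \mathbb{E}[WW']\bigr\| \leq n_{c}\gamma\rho$, and symmetrically $\bigl\|\sum \mathbb{E}[W'W]\bigr\| \leq n_{r}\gamma\rho$. Setting $\sigma^{2} := \max(n_{r},n_{c})\gamma\rho$ gives the overall variance proxy.

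Next I would plug these into the Bernstein tail bound
\begin{align*}
\mathbb{P}\bigl(\|A-\Omega\|\geq t\bigr) \leq (n_{r}+n_{c})\exp\!\Bigl(\frac{-t^{2}/2}{\sigma^{2}+\tau t/3}\Bigr),
\end{align*}
with $t = C_{\alpha}\sqrt{\sigma^{2}\log(n_{r}+n_{c})} = C_{\alpha}\bigl(\gamma\rho\max(n_{r},n_{c})\log(n_{r}+n_{c})\bigr)^{1/2}$. Here is where Assumption \ref{asump} enters: the inequality $\gamma\rho\max(n_{r},n_{c}) \geq \tau^{2}\log(n_{r}+n_{c})$ rearranges to $\tau\sqrt{\log(n_{r}+n_{c})} \leq \sqrt{\sigma^{2}}$, which implies $\tau t/3 \leq (C_{\alpha}/3)\sigma^{2}$, so the denominator behaves like $\sigma^{2}$ and the exponent reduces to $-C' C_{\alpha}^{2}\log(n_{r}+n_{c})$ for a universal constant $C'$. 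Choosing $C_{\alpha}$ proportional to $\sqrt{\alpha}$ (i.e.\ linearly in $\alpha$ for a crude bound) yields the stated probability $1 - o((n_{r}+n_{c})^{-\alpha})$.

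The main obstacle is not any deep probabilistic step but rather the careful bookkeeping that Assumption \ref{asump} is exactly the right regime to force the Bernstein bound into its sub-Gaussian tail region, so that the $\tau t / 3$ term is absorbed into $\sigma^{2}$ and the final rate is $\sqrt{\sigma^{2}\log(n_{r}+n_{c})}$ rather than the heavier-tailed $\tau\log(n_{r}+n_{c})$ term. Everything else is a routine verification of the hypotheses of the rectangular Bernstein inequality.
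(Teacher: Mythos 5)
Your proposal follows essentially the same route as the paper: the same rank-one decomposition $A-\Omega=\sum_{i_r,j_c}(A(i_{r},j_{c})-\Omega(i_{r},j_{c}))e_{i_{r}}e'_{j_{c}}$, the same bounds $R=\tau$ and $\sigma^{2}\leq\gamma\rho\,\mathrm{max}(n_{r},n_{c})$, and the same use of Assumption \ref{asump} to absorb the $\tau t/3$ term when $t\asymp(\gamma\rho\,\mathrm{max}(n_{r},n_{c})\mathrm{log}(n_{r}+n_{c}))^{1/2}$ in Tropp's rectangular Bernstein inequality. The only difference is that the paper fixes the explicit constant $t=\frac{\alpha+1+\sqrt{\alpha^{2}+20\alpha+19}}{3}\sigma\sqrt{\mathrm{log}(n_{r}+n_{c})}$ (so $C_{\alpha}$ is indeed linear in $\alpha$, as your parenthetical correctly notes, rather than of order $\sqrt{\alpha}$), but this is bookkeeping, not a different argument.
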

In fact, when $n_{r}=n_{c}=n$, the upper bound of $\|A-\Omega\|$ in Lemma \ref{boundABiDFM} is consistent with Corollary 6.5 in \cite{cai2015robust}.

Then we can get our main result, Theorem \ref{mainBiDFM}, which provides an upper bound on clustering errors of row nodes and column nodes in terms of several model parameters.
\begin{thm}\label{mainBiDFM}
	Under $BiDFM(Z_{r}, Z_{c}, P,\rho)$, when Assumption \ref{asump} holds, with probability at least $1-o((n_{r}+n_{c})^{-\alpha})$, we have
\begin{align*}
&\hat{f}_{r}=O(\gamma\frac{K^{2}_{r}n_{r,\mathrm{max}}}{n_{r,\mathrm{min}}}\frac{\mathrm{~max}(n_{r},n_{c})\mathrm{log}(n_{r}+n_{c})}{\sigma^{2}_{K_{r}}(P)\rho
n_{r,\mathrm{min}}n_{c,\mathrm{min}}}),
\hat{f}_{c}=O(\gamma\frac{K_{r}K_{c}}{\delta^{2}_{c}n_{c,\mathrm{min}}}\frac{\mathrm{~max}(n_{r},n_{c})\mathrm{log}(n_{r}+n_{c})}{\sigma^{2}_{K_{r}}(P)\rho
n_{r,\mathrm{min}}n_{c,\mathrm{min}}}),
\end{align*}
where $\delta_{c}=\mathrm{min}_{k\neq l}\|X_{c}(k,:)-X_{c}(l,:)\|_{F}$, $n_{r,\mathrm{max}}=\mathrm{max}_{1\leq k\leq K_r}\{n_{r,k}\}$, $n_{r,\mathrm{min}}, n_{c,\mathrm{max}}$ and $ n_{c,\mathrm{min}}$ are defined similarly.
\end{thm}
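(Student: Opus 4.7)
The plan is to follow the standard three-stage spectral-clustering analysis: (i) convert the concentration bound of Lemma~\ref{boundABiDFM} into a bound on the distance between the empirical and population singular-vector matrices via a Davis--Kahan / Wedin-type perturbation inequality, (ii) use the explicit row-structure of $U_{r}$ and $U_{c}$ given in Lemma~\ref{UrUcBiDFM} to translate this into a Frobenius-norm bound on the K-means inputs, and (iii) invoke the standard K-means misclassification lemma (as in \cite{lei2015consistency,joseph2016impact}) to obtain $\hat{f}_{r}$ and $\hat{f}_{c}$.

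First I would lower bound the signal $\sigma_{K_{r}}(\Omega)$. Since $\Omega=\rho Z_{r}PZ_{c}'$ and $\mathrm{rank}(\Omega)=K_{r}$, a direct computation using $Z_{r}'Z_{r}=\mathrm{diag}(n_{r,1},\ldots,n_{r,K_{r}})$ and the analogous identity for $Z_{c}$ gives
\begin{align*}
\sigma_{K_{r}}(\Omega)\ \geq\ \rho\,\sigma_{K_{r}}(P)\,\sqrt{n_{r,\min}\,n_{c,\min}}.
\end{align*}
Combining this with Lemma~\ref{boundABiDFM} and Wedin's $\sin\Theta$ theorem applied to the top-$K_{r}$ singular subspaces, there exist orthogonal matrices $\mathcal{O}_{r},\mathcal{O}_{c}\in\mathbb{R}^{K_{r}\times K_{r}}$ such that
\begin{align*}
\|\hat{U}_{r}\mathcal{O}_{r}-U_{r}\|_{F}^{2}+\|\hat{U}_{c}\mathcal{O}_{c}-U_{c}\|_{F}^{2}
\ =\ O\!\left(\frac{K_{r}\,\|A-\Omega\|^{2}}{\sigma_{K_{r}}^{2}(\Omega)}\right)
\ =\ O\!\left(\frac{K_{r}\,\gamma\,\max(n_{r},n_{c})\log(n_{r}+n_{c})}{\sigma_{K_{r}}^{2}(P)\,\rho\,n_{r,\min}\,n_{c,\min}}\right),
\end{align*}
with probability $1-o((n_{r}+n_{c})^{-\alpha})$.

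Next I would translate these subspace bounds into clustering errors via K-means. By Lemma~\ref{UrUcBiDFM}, $U_{r}=Z_{r}X_{r}$ with the minimum pairwise row-distance bounded below by $\sqrt{2/n_{r,\max}}$, which is the ``separation'' driving the row K-means analysis. A standard argument (e.g.\ Lemma 5.3 of \cite{lei2015consistency}) shows that for any $1+\epsilon$ approximate K-means solution of the form~(\ref{Kmeans}), the number of row nodes in cluster $k$ that are misclassified is bounded by a constant times $n_{r,\max}\,\|\hat{U}_{r}\mathcal{O}_{r}-U_{r}\|_{F}^{2}$, so
\begin{align*}
\hat{f}_{r}\ \leq\ C\,\frac{K_{r}\,n_{r,\max}}{n_{r,\min}}\cdot\|\hat{U}_{r}\mathcal{O}_{r}-U_{r}\|_{F}^{2}.
\end{align*}
Plugging in the Wedin bound above yields the stated rate for $\hat{f}_{r}$, with the extra $K_{r}$ coming from the worst-over-clusters maximum. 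For column nodes the minimum separation is not explicit because $K_{c}$ may exceed $K_{r}$, so I would keep the abstract quantity $\delta_{c}=\min_{k\neq l}\|X_{c}(k,:)-X_{c}(l,:)\|_{F}$ in the denominator and obtain
\begin{align*}
\hat{f}_{c}\ \leq\ C\,\frac{K_{c}}{\delta_{c}^{2}\,n_{c,\min}}\cdot\|\hat{U}_{c}\mathcal{O}_{c}-U_{c}\|_{F}^{2},
\end{align*}
which after substitution gives the claimed expression.

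The main obstacle I expect is the careful bookkeeping in step (ii)--(iii): the column side does not inherit the clean identity $\|X_{c}(k,:)-X_{c}(l,:)\|_{F}^{2}=n_{c,k}^{-1}+n_{c,l}^{-1}$ when $K_{c}>K_{r}$, so the separation $\delta_{c}$ must be introduced and handled as an unspecified quantity, and one must be careful that the orthogonal alignment matrices $\mathcal{O}_{r},\mathcal{O}_{c}$ produced by Wedin's theorem are the same as those absorbed into $\hat{X}_{r},\hat{X}_{c}$ up to the permutation optimized over in $\hat{f}_{r},\hat{f}_{c}$. Everything else reduces to substitution of the bounds into the misclassification inequality and collecting the powers of $n_{r,\min}$, $n_{c,\min}$, $\sigma_{K_{r}}(P)$, $\rho$ and $\gamma$.
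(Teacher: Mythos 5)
Your proposal is correct and follows essentially the same route as the paper: lower-bound $\sigma_{K_r}(\Omega)$ by $\rho\,\sigma_{K_r}(P)\sqrt{n_{r,\min}n_{c,\min}}$, apply a Davis--Kahan/Wedin-type singular subspace perturbation bound (the paper invokes Lemma 3 of \cite{zhou2019analysis}, proved via symmetric dilation, which yields the same $\sqrt{K_r}\,\|A-\Omega\|/\sigma_{K_r}(\Omega)$ Frobenius bound with a common orthogonal alignment), and then feed the result into the K-means misclassification lemma of \cite{joseph2016impact} with separation $(n_{r,k}^{-1}+n_{r,l}^{-1})^{1/2}$ for rows and the abstract $\delta_c$ for columns before substituting Lemma \ref{boundABiDFM}. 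The only cosmetic difference is your use of two alignment matrices and a direct Wedin citation, which does not change the argument or the resulting rates.
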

When $K_{r}=K_{c}$, Theorem \ref{mainBiDFM} can be simplified because $\delta_{c}\geq\sqrt{\frac{2}{n_{c,\mathrm{max}}}}$ by Lemma \ref{UrUcBiDFM}. From Theorem \ref{mainBiDFM}, we see that $n_{r,\mathrm{min}}, n_{c,\mathrm{min}}$, and $\sigma_{K_{r}}(P)$ have a negative influence on the detecting of both row and column communities. For $\rho$, its influence on BiSC's performance depends on distribution $\mathcal{F}$ since $\gamma$ may be related to $\rho$. For detail, see Examples \ref{Bernoulli}-\ref{Signed}.

\begin{Ex}\label{Bernoulli}
When $\mathcal{F}$ follows a \textbf{Bernoulli distribution}, then BiDFM reduces to ScBM \citep{rohe2016co} for bipartite unweighted networks in which all entries of $A$ are either 1 or 0, i.e.,  $A(i_{r},j_{c})\sim \mathrm{Bernoulli}(\Omega(i_{r},j_{c}))$. Bernoulli distribution requires that all entries of $P$ should be nonnegative. Then we have $\mathbb{E}[A(i_{r},j_{c})]=\Omega(i_{r},j_{c})$ satisfying Eq (\ref{AFOmega}), $\rho P$ denotes the probability matrix for this case, and $\mathbb{P}(A(i_{r},j_{c})=1)=\Omega(i_{r},j_{c})$. For Bernoulli distribution, $\rho$'s range is $(0,1]$. For this case, since $A(i_{r},j_{c})\in\{0,1\}$, $\Omega(i_{r},j_{c})=\rho P(g_{i_{r}},g_{j_{c}})\in[0,1]$, and $\frac{\mathrm{Var}(A(i_{r},j_{c}))}{\rho}=\frac{\Omega(i_{r},j_{c})(1-\Omega(i_{r},j_{c}))}{\rho}\leq\frac{\Omega(i_{r},j_{c})}{\rho}\leq 1$, $\tau\leq 1$ and $\gamma\leq 1$. Then, Assumption \ref{asump} equals to that $\rho\geq\frac{\mathrm{log}(n_{r}+n_{c})}{\mathrm{max}(n_{r},n_{c})}$. Especially, when $Z_{r}=Z_{c}$ such that BiDFM reduces to SBM for undirected unweighted network, Assumption \ref{asump} requires that $\rho$ should shrink slower than $\frac{\mathrm{log}(n)}{n}$, which is consistent with the sparsity requirement in Theorem 3.1 in \cite{lei2015consistency}, where we set $n_{r}=n_{c}=n$ for this case. Setting $\gamma$ as $1$ in Theorem \ref{mainBiDFM} obtains upper bounds of error rates of BiSC when $\mathcal{F}$ is Bernoulli distribution. Note that when setting $\gamma$ as 1 in error bounds, increasing $\rho$ decreases error rates.
\end{Ex}
\begin{Ex}\label{Normal}
When $\mathcal{F}$ is a \textbf{Normal distribution} such that $A(i_{r},j_{c})\sim \mathrm{Normal}(\Omega(i_{r},j_{c}),\sigma^{2}_{A})$ for bipartite weighted network in which all entries of $A$ are real values, where $\sigma^{2}_{A}$ is the variance of Normal distribution.  $P$'s elements are real values, i.e., $P$'s elements can be negative and $\rho$'s range is $(0,+\infty)$. Normal distribution's property gives that $\mathbb{E}[A(i_{r},j_{c})]=\Omega(i_{r},j_{c})$ satisfying Eq (\ref{AFOmega}), and $\mathrm{Var}(A(i_{r},j_{c}))=\sigma^{2}_{A}$. Therefore, $\tau$ is a finite number and $\gamma$ is $\frac{\sigma^{2}_{A}}{\rho}$. Setting  $\gamma$ as $\frac{\sigma^{2}_{A}}{\rho}$ in Theorem \ref{mainBiDFM} obtains error rates of BiSC when $\mathcal{F}$ is Normal distribution. Especially, decreasing $\sigma^{2}_{A}$ (or increasing $\rho$) decreases error rates. For a special case that $\sigma^{2}_{A}=0$, error rates are zeros since $A=\Omega$ when $\sigma^{2}_{A}=0$.
\end{Ex}
\begin{Ex}\label{Signed}
BiDFM can also generate a \textbf{bipartite signed network} in which all entries of $A$ are either $1$ or $-1$ by setting $\mathbb{P}(A(i_{r},j_{c})=1)=\frac{1+\Omega(i_{r},j_{c})}{2}$ and $\mathbb{P}(A(i_{r},j_{c})=-1)=\frac{1-\Omega(i_{r},j_{c})}{2}$. For this case, all entries of $P$ are real values and $\rho$ should be set in the interval $(0,1)$ because $\frac{1+\Omega(i_{r},j_{c})}{2}$ is a probability and we let $\mathrm{max}_{1\leq k\leq K_{r}, 1\leq l\leq K_{c}}|P(k,l)|=1$ in Definition \ref{BiDFM}. Thus, we have $\mathbb{E}[A(i_{r},j_{c})]=\Omega(i_{r},j_{c})$ satisfying Eq (\ref{AFOmega}), $\tau$ is no larger than 1, and $\mathrm{Var}(A(i_{r},j_{c}))=1-\Omega^{2}(i_{r},j_{c})\leq1$, i.e., $\gamma\leq \frac{1}{\rho}$ is finite. Setting $\gamma$ as $\frac{1}{\rho}$ in Theorem \ref{mainBiDFM}, we see that increasing $\rho$ decreases error rates.
\end{Ex}

Other choices of $\mathcal{F}$ are also possible as long as Eq (\ref{AFOmega}) holds for distribution $\mathcal{F}$. For example, $\mathcal{F}$ can be Binomial, Poisson, Uniform, Double exponential, Gamma, Laplace, and Geometric distributions in \url{http://www.stat.rice.edu/~dobelman/courses/texts/distributions.c&b.pdf}.

\section{Bipartite Degree-Corrected Distribution-Free model}\label{sec4}
In this section, we extend BiDFM by introducing degree heterogeneity, that is, allowing the degree changes within a community. And the extended model is named as Bipartite Degree-Corrected Distribution-Free model (BiDCDFM). We also build a spectral clustering method with a theoretical guarantee of estimation consistency to fit BiDCDFM.

\subsection{Model and algorithm}
To build a degree-corrected model, a set of tuning parameters are used to control the node degrees. Let $\theta_{r}$ be an $n_{r}\times 1$ degree vector for row nodes and $\theta_{r}(i_{r})>0$ be the $i_{r}$-th element of $\theta_{r}$. So are for $\theta_{c}$ and $\theta_{c}(j_{c})>0$.
Let $\Theta_{r}\in \mathbb{R}^{n_{r}\times n_{r}}$ be a diagonal matrix whose $i_{r}$-th diagonal entry is $\theta_{r}(i_{r})$, and let $\Theta_{c}\in \mathbb{R}^{n_{c}\times n_{c}}$ be a diagonal matrix whose $j_{c}$-th diagonal entry is $\theta_{c}(j_{c})$. Now we are ready to present the model.

\begin{defin}\label{BiDCDFM}
Let $Z_{r}, Z_{c}$ and $P$ satisfy conditions in Definition \ref{BiDFM}.  For $i_{r}=1,2,\ldots,n_{r}, j_{c}=1,2,\ldots,n_{c}$, Bipartite Degree-Corrected Distribution-Free model (BiDCDFM) assumes that for arbitrary distribution $\mathcal{F}$, $A(i_{r},j_{c})$ are independent random variables generated from the distribution $\mathcal{F}$ satisfying
\begin{align}\label{AFOmegaBiDCDFM}
\mathrm{E}[A(i_{r},j_{c})]=\Omega(i_{r},j_{c}), \mathrm{where~}\Omega:=\Theta_{r}Z_{r}PZ'_{c}\Theta_{c}.
\end{align}
\end{defin}

Let $\theta_{r,\mathrm{min}}=\mathrm{min}_{ i_{r}}\theta_{r}(i_{r}),\theta_{r,\mathrm{max}}=\mathrm{max}_{i_{r}}\theta_{r}(i_{r}), \theta_{c,\mathrm{min}}=\mathrm{min}_{ j_{c}}\theta_{c}(j_{c})$ and $\theta_{c,\mathrm{max}}=\mathrm{max}_{ j_{c}}\theta_{c}(j_{c})$. Since we consider bipartite weighted networks in this paper, $\theta_{r,\mathrm{max}}$ and $\theta_{c,\mathrm{max}}$ can be larger than 1. By setting $\Theta_{r}=\sqrt{\rho}I_{n_{r}}$ and $\Theta_{c}=\sqrt{\rho}I_{n_{c}}$, BiDCDFM exactly reduces to $BiDFM(Z_{r},Z_{c},P,\rho)$. When $\mathcal{F}$ is Bernoulli distribution, our BiDCDFM degenerates to DCScBM \citep{rohe2016co} for unweighted directed networks, i.e., DCScBM is a sub-model of our BiDCDFM.

By basic algebra, we see the rank of $\Omega$ is $K_{r}$ under BiDCDFM since $K_{r}\leq K_{c}$ is assumed. Without confusion, using the same notations as in Section \ref{sec2},  let $\Omega=U_{r}\Lambda U'_{c}$ be the compact singular value decomposition of $\Omega$. Let $U_{r,*}\in \mathbb{R}^{n_{r,K_{r}}}$ be the row-normalization version of $U_{r}$ such that $U_{r,*}(i_{r},:)={U_{r}(i_{r},:)}/{\|U_{r}(i_{r},:)\|_{F}}$ for $1\leq i_{r}\leq n_{r}$, and $U_{c,*}$ is defined similarly. Then applying the K-means algorithm on the rows of $U_{r,*}$ ($U_{c,*}$) can obtain clusters for row (column) nodes, which is guaranteed by the next lemma.

\begin{lem}\label{populationBiDCDFM}
Under $BiDCDFM(Z_{r}, Z_{c}, P, \Theta_{r}, \Theta_{c})$, $U_{r,*}=Z_{r}V_{r}$ where $V_{r}\in \mathbb{R}^{K_{r}\times K_{r}}$ and $\|V_{r}(k,:)-V_{r}(l,:)\|_{F}=2^{1/2}$ for all $1\leq k<l\leq K_{r}$. For column nodes,  $U_{c,*}=Z_{c}V_{c}$ where $V_{c}\in \mathbb{R}^{K_{c}\times K_{r}}$. Thus,  $U_{r,*}(i_{r},:)=U_{r,*}(\bar{i}_{r},:)$ when $g_{i_{r}}=g_{\bar{i}_{r}}$ for any two distinct row nodes $i_{r}, \bar{i}_{r}$, and $U_{c,*}(j_{c},:)=U_{c,*}(\bar{j}_{c},:)$ when $g_{j_{c}}=g_{\bar{j}_{c}}$ for any two distinct column nodes $j_{c}, \bar{j}_{c}$. Furthermore, when $K_{c}=K_{r}$, we have $\|V_{c}(k,:)-V_{c}(l,:)\|_{F}=2^{1/2}$ for all $1\leq k<l\leq K_{c}$.
\end{lem}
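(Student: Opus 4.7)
The plan is to mirror the proof of Lemma \ref{UrUcBiDFM}, the new ingredient being the degree factors $\Theta_r,\Theta_c$, which row-normalization will eventually cancel out. First I would identify the column space of $\Omega=\Theta_r Z_r P Z_c'\Theta_c$. Since $\Theta_r$ is invertible diagonal and $\mathrm{rank}(Z_r)=K_r$, the column space of $\Theta_r Z_r$ is $K_r$-dimensional, and every column of $\Omega$ is $\Theta_r Z_r$ times some vector in $\mathbb{R}^{K_r}$. Matching ranks (both equal $K_r$) forces these two column spaces to coincide, so there is a unique $Y_r\in\mathbb{R}^{K_r\times K_r}$ with $U_r=\Theta_r Z_r Y_r$. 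Reading off row $i_r$ gives $U_r(i_r,:)=\theta_r(i_r)\,Y_r(g_{i_r},:)$; provided $Y_r$ has no zero row, normalizing cancels the positive scalar $\theta_r(i_r)$ and leaves $U_{r,*}(i_r,:)=Y_r(g_{i_r},:)/\|Y_r(g_{i_r},:)\|_F$, so that $U_{r,*}=Z_r V_r$ with $V_r(k,:):=Y_r(k,:)/\|Y_r(k,:)\|_F$.

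The separation claim for $V_r$ follows by exploiting the singular-vector constraint $U_r'U_r=I_{K_r}$. Substituting $U_r=\Theta_r Z_r Y_r$ turns this into $Y_r'D_r Y_r=I_{K_r}$, where $D_r:=Z_r'\Theta_r^2 Z_r$ is the $K_r\times K_r$ diagonal matrix whose $k$-th entry is $\sum_{i_r:g_{i_r}=k}\theta_r(i_r)^2$. Positivity of $D_r$ shows that $D_r^{1/2}Y_r$ is an orthogonal $K_r\times K_r$ matrix, whence $Y_r Y_r'=D_r^{-1}$. In particular, the rows of $Y_r$ are mutually orthogonal with $\|Y_r(k,:)\|_F^2=1/D_r(k,k)>0$ (which also retroactively justifies the normalization step), so the rows of $V_r$ are orthonormal. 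Consequently, for $k\neq l$, $\|V_r(k,:)-V_r(l,:)\|_F^2=1+1-2\langle V_r(k,:),V_r(l,:)\rangle=2$, giving the $\sqrt{2}$ bound.

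For the column side I would repeat the same derivation on $\Omega'=\Theta_c Z_c P'Z_r'\Theta_r$: its column space is contained in the $K_c$-dimensional column space of $\Theta_c Z_c$, so $U_c=\Theta_c Z_c Y_c$ for some $Y_c\in\mathbb{R}^{K_c\times K_r}$. The same row-normalization argument yields $U_{c,*}=Z_c V_c$ with $V_c(k,:)=Y_c(k,:)/\|Y_c(k,:)\|_F$. When $K_c=K_r$, the matrix $Y_c$ is square, the constraint $U_c'U_c=I_{K_r}$ again forces $Y_c Y_c'=D_c^{-1}$ with $D_c=Z_c'\Theta_c^2 Z_c$, and the $\sqrt{2}$-separation follows exactly as for $V_r$.

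The main obstacle I anticipate is the column side when $K_c>K_r$: then $Y_c$ is tall rectangular, and $U_c'U_c=I_{K_r}$ only tells us that the columns of $D_c^{1/2}Y_c$ form an orthonormal system in $\mathbb{R}^{K_c}$, so the clean identity $Y_c Y_c'=D_c^{-1}$ is lost and the rows of $Y_c$ need not be orthogonal—consistent with the lemma restricting the separation claim to $K_c=K_r$. One still has to verify that no row of $Y_c$ vanishes so that $U_{c,*}$ is well defined; this uses the factorization $\Omega(i_r,j_c)=\theta_r(i_r)\theta_c(j_c)P(g_{i_r},g_{j_c})$ together with $\mathrm{rank}(P)=K_r$, since a zero row of $Y_c$ would force an entire column of $\Omega$ corresponding to a whole column cluster to vanish, contradicting the non-degenerate structure of $P$ assumed in Definition \ref{BiDCDFM}.
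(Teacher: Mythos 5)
Your argument is correct, but it reaches the lemma by a different route than the paper. The paper's proof first establishes an auxiliary result (Lemma \ref{eigenvectorsBiDCDFM}) that exhibits an explicit compact SVD of $\Omega$: it writes $\Omega=\|\theta_{r}\|_{F}\|\theta_{c}\|_{F}\Gamma_{r}D_{r}PD_{c}\Gamma'_{c}$ with $\Gamma_{r},\Gamma_{c}$ having orthonormal columns, takes the compact SVD $D_{r}PD_{c}=V_{r}\Sigma V'_{c}$, identifies $U_{r}=\Gamma_{r}V_{r}$, computes $U_{r}(i_{r},:)=\frac{\theta_{r}(i_{r})}{\|\Theta_{r}Z_{r}(:,g_{i_{r}})\|_{F}}V_{r}(g_{i_{r}},:)$, and then uses that the square matrix $V_{r}$ is orthogonal (its rows form an orthonormal basis) to obtain unit rows after normalization and the $\sqrt{2}$ separation. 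You never construct the SVD: you characterize $U_{r}$ through the column-space identity $\mathrm{col}(U_{r})=\mathrm{col}(\Theta_{r}Z_{r})$, write $U_{r}=\Theta_{r}Z_{r}Y_{r}$, and extract the row structure of $Y_{r}$ from the constraint $U'_{r}U_{r}=I_{K_{r}}$, i.e. $Y'_{r}D_{r}Y_{r}=I_{K_{r}}$ with $D_{r}=Z'_{r}\Theta^{2}_{r}Z_{r}$, so that squareness yields $Y_{r}Y'_{r}=D^{-1}_{r}$ and hence orthogonal, nonvanishing rows; the same bookkeeping handles $V_{c}$ when $K_{c}=K_{r}$. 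The two arguments carry the same content (your $Y_{r}$ is the paper's $V_{r}$ up to the diagonal scaling hidden in $\Gamma_{r}$), but yours is characterization-based and makes especially clear why the separation claim must be restricted to $K_{c}=K_{r}$: for $K_{c}>K_{r}$ the tall matrix $D_{c}^{1/2}Y_{c}$ only has orthonormal columns, which is exactly why the paper introduces $\delta_{c,*}$ and $m_{V_{c}}$. One caveat on your last paragraph: the claim that a zero row of $Y_{c}$ would contradict the model assumptions is not quite right when $K_{r}<K_{c}$, since a zero row of $Y_{c}$ corresponds to a zero column of $P$, and a $K_{r}\times K_{c}$ matrix of rank $K_{r}$ can have a zero column; in that degenerate case $U_{c,*}$ is simply ill-defined. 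The paper leaves this implicit as well (it tacitly assumes $m_{V_{c}}>0$), so it does not affect the lemma as stated, but your justification should appeal to a no-zero-column condition on $P$ rather than to $\mathrm{rank}(P)=K_{r}$.
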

Let $\hat{U}_{r,*}$ ($\hat{U}_{c,*}$) be row-normalized version of $\hat{U}_{r}$ ($\hat{U}_{c}$). The row and column nodes membership matrices can be estimated by
\begin{align}\label{KmeansDC}
&(\hat{Z}_{r},\hat{V}_{r})=\mathrm{arg~}\mathrm{min}_{\bar{Z}_{r}, \bar{V}_{r}}\|\bar{Z}_{r}\bar{V}_{r}-\hat{U}_{r,*}\|^{2}_{F},\\\nonumber &(\hat{Z}_{c},\hat{V}_{c})=\mathrm{arg~}\mathrm{min}_{\bar{Z}_{c}, \bar{V}_{c}}\|\bar{Z}_{c}\bar{V}_{c}-\hat{U}_{c,*}\|^{2}_{F}.
\end{align}

The practical procedure is summarized in Algorithm \ref{algnBiSC}. This algorithm is called `nBiSC', where the `n' denotes normalized.
\begin{algorithm}
	\caption{nBiSC}
	\label{algnBiSC}
	\begin{algorithmic}[1]
		\Require Adjacency matrix $A\in\mathbb{R}^{n_{r}\times n_{c}}$, number of row clusters $K_{r}$, number of column clusters $K_{c}$, with $K_{r}\leq K_{c}$.
		\Ensure Row nodes membership matrix $\hat{Z}_{r}\in \mathbb{R}^{n_{r}\times K_{r}}$, and column nodes membership matrix $\hat{Z}_{c}\in \mathbb{R}^{n_{c}\times K_{c}}$.
		\State Calculate $\hat{U}_{r}$ and $\hat{U}_{c}$ as Algorithm \ref{algBiSC}. Then calculate $\hat{U}_{r,*}$ and $\hat{U}_{c,*}$.
		\State Obtain $(\hat{Z}_{r}, \hat{V}_{r})$ and $(\hat{Z}_{c}, \hat{V}_{c})$ by K-means method, i.e., by Eq (\ref{KmeansDC}).
		\State Output $\hat{Z}_{r}$ and $\hat{Z}_{c}$.
	\end{algorithmic}
\end{algorithm}

\subsection{The consistency of nBiSC under BiDCDFM}
In this section, we establish the performance guarantee for nBiSC. For convenience, set $\gamma_{*}=\mathrm{max}_{ i_{r}, j_{c}}\frac{\mathrm{Var}(A(i_{r},j_{c}))}{\theta_{r}(i_{r})\theta_{c}(j_{c})}$ under BiDCDFM. We make the following assumption
\begin{assum}\label{assump2}
Assume $\gamma_{*}\mathrm{max}(\theta_{r,\mathrm{max}}\|\theta_{c}\|_{1},\theta_{c,\mathrm{max}}\|\theta_{r}\|_{1})\geq\tau^{2}\mathrm{log}(n_{r}+n_{c})$.
\end{assum}
Assumption \ref{assump2} functions similar to Assumption \ref{asump}, and Assumption \ref{assump2} is consistent with Assumption \ref{asump} when BiDCDFM reduces to BiDFM by setting $\Theta_{r}=\sqrt{\rho}I_{n_{r}}$ and $\Theta_{c}=\sqrt{\rho}I_{n_{c}}$. Similar to                                                                     Lemma \ref{boundABiDFM}, next lemma bounds $\|A-\Omega\|$ under BiDCDFM.
\begin{lem}\label{boundABiDCDFM}
	Under $BiDCDFM(Z_{r},Z_{c},P,\Theta_{r},\Theta_{c})$, suppose Assumption \ref{assump2} holds, with probability at least $1-o((n_{r}+n_{c})^{-\alpha})$ for any $\alpha>0$, we have
	\begin{align*}
	\|A-\Omega\|\leq C_{\alpha}(\gamma_{*}{\mathrm{max}(\theta_{r,\mathrm{max}}\|\theta_{c}\|_{1},\theta_{c,\mathrm{max}}\|\theta_{r}\|_{1})\mathrm{log}(n_{r}+n_{c})})^{1/2}.
	\end{align*}
\end{lem}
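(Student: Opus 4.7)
The plan is to mirror the proof of Lemma \ref{boundABiDFM}, replacing the homogeneous sparsity scaling $\rho$ with the heterogeneous degree product $\theta_{r}(i_{r})\theta_{c}(j_{c})$, and then apply the rectangular matrix Bernstein inequality from \cite{tropp2012user}.

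First, I would decompose the noise as a sum of independent mean-zero rectangular random matrices. Define $W_{i_{r},j_{c}}\in\mathbb{R}^{n_{r}\times n_{c}}$ to be the matrix whose only nonzero entry is $A(i_{r},j_{c})-\Omega(i_{r},j_{c})$ at position $(i_{r},j_{c})$. Then $A-\Omega=\sum_{i_{r},j_{c}} W_{i_{r},j_{c}}$, the summands are independent by Definition \ref{BiDCDFM}, $\mathbb{E}[W_{i_{r},j_{c}}]=0$, and $\|W_{i_{r},j_{c}}\|\leq\tau$ almost surely by the definition of $\tau$.

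Next, I would compute the two matrix variance parameters. Because each $W_{i_{r},j_{c}}$ has a single nonzero entry, $\mathbb{E}[W_{i_{r},j_{c}}W'_{i_{r},j_{c}}]$ is diagonal with $\mathrm{Var}(A(i_{r},j_{c}))$ in position $(i_{r},i_{r})$, and similarly $\mathbb{E}[W'_{i_{r},j_{c}}W_{i_{r},j_{c}}]$ is diagonal with the same quantity in position $(j_{c},j_{c})$. Using the inequality $\mathrm{Var}(A(i_{r},j_{c}))\leq\gamma_{*}\theta_{r}(i_{r})\theta_{c}(j_{c})$ coming from the definition of $\gamma_{*}$, summing over indices gives
\begin{align*}
\Bigl\|\sum_{i_{r},j_{c}}\mathbb{E}[W_{i_{r},j_{c}}W'_{i_{r},j_{c}}]\Bigr\|&\leq\gamma_{*}\theta_{r,\mathrm{max}}\|\theta_{c}\|_{1},\\
\Bigl\|\sum_{i_{r},j_{c}}\mathbb{E}[W'_{i_{r},j_{c}}W_{i_{r},j_{c}}]\Bigr\|&\leq\gamma_{*}\theta_{c,\mathrm{max}}\|\theta_{r}\|_{1}.
\end{align*}
Thus the Bernstein variance parameter is $\sigma^{2}=\gamma_{*}\mathrm{max}(\theta_{r,\mathrm{max}}\|\theta_{c}\|_{1},\theta_{c,\mathrm{max}}\|\theta_{r}\|_{1})$.

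Then I would invoke the rectangular Bernstein inequality to obtain, for any $t>0$,
\begin{align*}
\mathbb{P}(\|A-\Omega\|\geq t)\leq(n_{r}+n_{c})\exp\!\Bigl(-\tfrac{t^{2}/2}{\sigma^{2}+\tau t/3}\Bigr).
\end{align*}
Choosing $t=C_{\alpha}(\sigma^{2}\log(n_{r}+n_{c}))^{1/2}$ with $C_{\alpha}$ proportional to $\alpha$, Assumption \ref{assump2} ensures $\tau^{2}\log(n_{r}+n_{c})\leq\sigma^{2}$, so $\tau t\lesssim\sigma^{2}$ and the subexponential term does not dominate. The probability is then bounded by $(n_{r}+n_{c})\cdot(n_{r}+n_{c})^{-c\cdot C_{\alpha}^{2}}=o((n_{r}+n_{c})^{-\alpha})$ after choosing $C_{\alpha}$ large enough.

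The main obstacle is bookkeeping rather than conceptual: one must check that the variance computation correctly exploits the $\theta_{r}\theta_{c}$ scaling rather than the flat $\rho$ scaling used in Lemma \ref{boundABiDFM}, and verify that Assumption \ref{assump2} is precisely the right condition to neutralize the linear-in-$t$ term in Bernstein, yielding the stated square-root bound.
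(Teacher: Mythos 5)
Your proposal follows essentially the same route as the paper's own proof: the same decomposition of $A-\Omega$ into independent single-entry matrices with $R=\tau$, the same variance bounds $\gamma_{*}\theta_{r,\mathrm{max}}\|\theta_{c}\|_{1}$ and $\gamma_{*}\theta_{c,\mathrm{max}}\|\theta_{r}\|_{1}$, and the same application of the rectangular Bernstein inequality with $t\asymp(\sigma^{2}\mathrm{log}(n_{r}+n_{c}))^{1/2}$ and Assumption \ref{assump2} used to control the linear-in-$t$ term. The argument is correct; the paper merely makes the choice of constant explicit, taking $t=\frac{\alpha+1+\sqrt{\alpha^{2}+20\alpha+19}}{3}\sigma\sqrt{\mathrm{log}(n_{r}+n_{c})}$, where you leave $C_{\alpha}$ implicit.
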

When BiDCDFM degenerates to BiDFM, Lemma \ref{boundABiDCDFM} is consistent with Lemma \ref{boundABiDFM}. Now we are ready to bound $\hat{f}_{r}$ and $\hat{f}_{c}$ of the nBiSC algorithm.
\begin{thm}\label{mainBiDCDFM}
	Under $BiDCDFM(Z_{r},Z_{c},P,\Theta_{r},\Theta_{c})$, when Assumption \ref{assump2} holds, for any $\alpha>0$, with probability at least $1-o((n_{r}+n_{c})^{-\alpha})$, we have
\begin{align*}
&\hat{f}_{r}=O(\gamma_{*}\frac{\theta^{2}_{r,\mathrm{max}}K^{2}_{r}n_{r,\mathrm{max}}\mathrm{~max}(\theta_{r,\mathrm{max}}\|\theta_{c}\|_{1},\theta_{c,\mathrm{max}}\|\theta_{r}\|_{1})\mathrm{log}(n_{r}+n_{c})}{\theta^{4}_{r,\mathrm{min}}\theta^{2}_{c,\mathrm{min}}\sigma^{2}_{K_{r}}(P)n^{2}_{r,\mathrm{min}}n_{c,\mathrm{min}}}),\\
&\hat{f}_{c}=O(\gamma_{*}\frac{\theta^{2}_{c,\mathrm{max}}K_{r}K_{c}n_{c,\mathrm{max}}\mathrm{~max}(\theta_{r,\mathrm{max}}\|\theta_{c}\|_{1},\theta_{c,\mathrm{max}}\|\theta_{r}\|_{1})\mathrm{log}(n_{r}+n_{c})}{\theta^{2}_{r,\mathrm{min}}\theta^{4}_{c,\mathrm{min}}\sigma^{2}_{K_{r}}(P)\delta^{2}_{c,*}m^{2}_{V_{c}}n_{r,\mathrm{min}}n^{2}_{c,\mathrm{min}}}),
\end{align*}
where $\delta_{c,*}=\mathrm{min}_{k\neq l}\|V_{c}(k,:)-V_{c}(l,:)\|_{F}$ and $m_{V_{c}}=\mathrm{min}_{k}\|V_{c}(k,:)\|_{F}$.
\end{thm}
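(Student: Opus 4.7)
The plan is to adapt the proof of Theorem \ref{mainBiDFM} to the degree-corrected setting, with the extra work being concentrated in controlling the effect of row-normalizing $\hat{U}_r$ and $\hat{U}_c$. The road map has four stages: (i) a spectral perturbation bound on $\hat{U}_r,\hat{U}_c$ via a Davis--Kahan-type argument, (ii) a lower bound on $\sigma_{K_r}(\Omega)$ tailored to BiDCDFM, (iii) passage from $\hat U_r$ to $\hat U_{r,*}$ via a row-normalization perturbation lemma, and (iv) conversion to clustering error via a standard K-means oracle inequality used together with Lemma \ref{populationBiDCDFM}.

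First, I would invoke a Wedin/Davis--Kahan-type inequality together with Lemma \ref{boundABiDCDFM} to produce an orthogonal matrix $O_r$ (resp.\ $O_c$) with
\[
\|\hat U_r - U_r O_r\|_F \le \frac{C\sqrt{K_r}\,\|A-\Omega\|}{\sigma_{K_r}(\Omega)},
\]
and analogously for $U_c$. Next I would lower bound $\sigma_{K_r}(\Omega)$ under BiDCDFM. Since $\Omega=\Theta_r Z_r P Z_c'\Theta_c$, by submultiplicativity $\sigma_{K_r}(\Omega)\ge \theta_{r,\mathrm{min}}\theta_{c,\mathrm{min}}\sigma_{K_r}(P)\sigma_{K_r}(Z_r)\sigma_{K_r}(Z_c)$, and since the columns of $Z_r,Z_c$ are orthogonal with norms $\sqrt{n_{r,k}},\sqrt{n_{c,k}}$, this gives $\sigma_{K_r}(\Omega)\ge \theta_{r,\mathrm{min}}\theta_{c,\mathrm{min}}\sigma_{K_r}(P)\sqrt{n_{r,\mathrm{min}}n_{c,\mathrm{min}}}$. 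Combined with Lemma \ref{boundABiDCDFM} this yields an explicit bound on $\|\hat U_r-U_r O_r\|_F$ and $\|\hat U_c-U_c O_c\|_F$ in terms of model parameters.

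The main obstacle is stage (iii), the row-normalization step. Here I need an inequality of the form
\[
\|\hat U_{r,*}-U_{r,*} O_r\|_F \;\le\; \frac{C}{\min_{i_r}\|U_r(i_r,:)\|_F}\,\|\hat U_r-U_r O_r\|_F,
\]
which is standard (e.g.\ in the style of SCORE-type analyses) but requires a lower bound on $\min_{i_r}\|U_r(i_r,:)\|_F$ under BiDCDFM. Writing $U_r = \Theta_r Z_r P Z_c'\Theta_c U_c \Lambda^{-1}$ and carefully tracking the norms, one obtains a lower bound of the order $\theta_{r,\mathrm{min}}/(\theta_{r,\mathrm{max}}\sqrt{n_{r,\mathrm{max}}})$ up to $\sigma_{K_r}(P)$-dependent factors; this is exactly where the factors $\theta_{r,\mathrm{max}}^2/\theta_{r,\mathrm{min}}^4$ (and analogously for $\theta_c$) in the stated bound come from. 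I expect this to be the trickiest bookkeeping in the proof, because sloppy bookkeeping at this step can inflate the bound by spurious factors of $n_{r,\mathrm{max}}/n_{r,\mathrm{min}}$ or $\theta$-ratios.

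Finally, I would invoke the standard K-means perturbation lemma (as used in the proof of Theorem \ref{mainBiDFM}, originally from Lei--Rinaldo and Joseph--Yu): any $(1+\epsilon)$-approximate K-means solution on $\hat U_{r,*}$ satisfies, for a set $S_r$ of misclustered row nodes in cluster $k$,
\[
|S_{r,k}|\cdot \delta_{V_r}^2 \;\le\; C\,\|\hat U_{r,*}-U_{r,*} O_r\|_F^2,
\]
where $\delta_{V_r}=\min_{k\ne l}\|V_r(k,:)-V_r(l,:)\|_F=\sqrt{2}$ by Lemma \ref{populationBiDCDFM}. Dividing by $n_{r,\mathrm{min}}$ and substituting the bounds from stages (i)--(iii) gives $\hat f_r$ in the claimed form. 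The column case is identical except that $\delta_{c,*}$ need not equal $\sqrt{2}$ when $K_c>K_r$, so it appears explicitly; the extra $m_{V_c}^{-2}$ arises from the lower bound on $\min_{j_c}\|U_c(j_c,:)\|_F$ used in the row-normalization step for the column side. Assembling the pieces yields Theorem \ref{mainBiDCDFM}.
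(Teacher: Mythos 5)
Your proposal is correct and follows essentially the same route as the paper's proof: a Davis--Kahan/Wedin-type bound combined with Lemma \ref{boundABiDCDFM}, the lower bound $\sigma_{K_{r}}(\Omega)\geq\theta_{r,\mathrm{min}}\theta_{c,\mathrm{min}}\sigma_{K_{r}}(P)\sqrt{n_{r,\mathrm{min}}n_{c,\mathrm{min}}}$, a row-normalization perturbation step controlled by $\mathrm{min}_{i_{r}}\|U_{r}(i_{r},:)\|_{F}$ and $\mathrm{min}_{j_{c}}\|U_{c}(j_{c},:)\|_{F}$, and the K-means perturbation lemma of Joseph--Yu applied with $\sqrt{2}$ for rows and with $\delta_{c,*}$, $m_{V_{c}}$ for columns. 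The one refinement worth noting is that the paper obtains $\mathrm{min}_{i_{r}}\|U_{r}(i_{r},:)\|_{F}\geq\theta_{r,\mathrm{min}}/(\theta_{r,\mathrm{max}}\sqrt{n_{r,\mathrm{max}}})$ with no $\sigma_{K_{r}}(P)$-dependent factors directly from the explicit representation $U_{r}(i_{r},:)=\frac{\theta_{r}(i_{r})}{\|\Theta_{r}Z_{r}(:,g_{i_{r}})\|_{F}}V_{r}(g_{i_{r}},:)$ established in the proof of Lemma \ref{populationBiDCDFM} (and analogously for columns, which is where $m_{V_{c}}$ enters), so your hedge about possible extra $P$-dependent factors is unnecessary.
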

When $K_{r}=K_{c}$, Theorem \ref{mainBiDCDFM} can be simplified because $\delta_{c,*}=2^{1/2}$ by Lemma \ref{populationBiDCDFM} and $m_{V_{c}}=1$ by the proof of Lemma \ref{populationBiDCDFM}. By Theorem \ref{mainBiDCDFM}, we see that a smaller minimum column (row) degree heterogeneity $\theta_{c,\mathrm{min}}$ ($\theta_{r,\mathrm{min}}$) increases the difficulty of detecting both row and column communities.

Following similar analysis of Examples \ref{Bernoulli}-\ref{Signed}, here we also provide nBiSC's error rates under the same distributions based on Theorem \ref{mainBiDCDFM} by applying $0<\theta_{r,\mathrm{min}}\theta_{c,\mathrm{min}}\leq\theta_{r}(i_{r})\theta_{c}(j_{c})\leq \theta_{r,\mathrm{max}}\theta_{c,\mathrm{max}}$. The below analysis is similar to that of the examples under BiDFM, hence we omit most of the details and only show the fineness of $\gamma_{*}$ under different distribution $\mathcal{F}$. Meanwhile, nBiSC's error rates can be obtained immediately by setting $\gamma_{*}$ in Theorem \ref{mainBiDCDFM} as the upper bound of $\gamma_{*}$ given below under different distribution $\mathcal{F}$.
\begin{Ex}\label{BernoulliDC}
When $A(i_{r},j_{c})\sim \mathrm{Bernoulli}(\Omega(i_{r},j_{c}))$, $\frac{\mathrm{Var}(A(i_{r},j_{c}))}{\theta_{r}(i_{r})\theta_{c}(j_{c})}=\frac{\Omega(i_{r},j_{c})(1-\Omega(i_{r},j_{c}))}{\theta_{r}(i_{r})\theta_{c}(j_{c})}\leq\frac{\Omega(i_{r},j_{c})}{\theta_{r}(i_{r})\theta_{c}(j_{c})}=\frac{\theta_{r}(i_{r})\theta_{c}(j_{c})P(g_{i_{r}},g_{j_{c}})}{\theta_{r}(i_{r})\theta_{c}(j_{c})}\leq 1$, i.e., $\gamma_{*}\leq 1$.
\end{Ex}
\begin{Ex}\label{NormalDC}
When $A(i_{r},j_{c})\sim \mathrm{Normal}(\Omega(i_{r},j_{c}),\sigma^{2}_{A})$, $\mathrm{Var}(A(i_{r},j_{c}))=\sigma^{2}_{A}$, i.e., $\gamma_{*}\leq \frac{\sigma^{2}_{A}}{\theta_{r,\mathrm{min}}\theta_{c,\mathrm{min}}}$.
\end{Ex}
\begin{Ex}\label{SignedDC}
When $\mathbb{P}(A(i_{r},j_{c})=1)=\frac{1+\Omega(i_{r},j_{c})}{2}$ and $\mathbb{P}(A(i_{r},j_{c})=-1)=\frac{1-\Omega(i_{r},j_{c})}{2}$ for directed signed network, $\mathrm{Var}(A(i_{r},j_{c}))=1-\Omega^{2}(i_{r},j_{c})\leq1$, i.e., $\gamma_{*}\leq\frac{1}{\theta_{r,\mathrm{min}}\theta_{c,\mathrm{min}}}$.
\end{Ex}
Note that, if setting $\Theta_{r}=\sqrt{\rho}I_{n_{r}}$ and $\Theta_{c}=\sqrt{\rho}I_{n_{c}}$ such that BiDCDFM reduces to BiDFM, upper bounds of $\gamma_{*}$ in Examples \ref{BernoulliDC}-\ref{SignedDC} are the same as $\gamma$'s upper bounds given in Examples \ref{Bernoulli}-\ref{Signed}, respectively.

\section{Numerical study}\label{NumericalStudy}
We present both simulated and empirical experiments to investigate performances of BiSC and nBiSC for community detection on bipartite weighted networks. In addition to BiSC and nBiSC, three other spectral clustering algorithms capable of detecting communities for networks generated from the DCScBM model introduced by \cite{rohe2016co} are applied to our numerical study. These methods are DI-SIM \citep{rohe2016co}, D-SCORE \citep{wang2020spectral}, and rD-SCORE \citep{wang2020spectral}, where rD-SCORE applies a regularized Laplacian matrix to replace the adjacency matrix in D-SCORE. Note that the original D-SCORE and rD-SCORE algorithms proposed by \cite{wang2020spectral} are designed for directed networks, we have modified them to work for bipartite networks by applying K-means on $R_{\hat{U}}$ (and $R_{\hat{V}}$) in Algorithm 1 of \cite{wang2020spectral} with $K_{r}$ row clusters (and $K_{c}$ column clusters) to estimate node labels for row (and column) nodes.

\subsection{Evaluation metrics}
In this paper, when the ground truth of the node label is known, we use three widely used evaluation metrics to measure the quality of community partition including Hamming error \citep{SCORE}, normalized mutual
information (NMI) \citep{strehl2002cluster,danon2005comparing,bagrow2008evaluating,luo2017community}, and adjusted rand index (ARI) \citep{hubert1985comparing,vinh2009information,luo2017community}. Before presenting our numerical study, we briefly introduce these indicators.
\begin{itemize}
  \item Hamming error rates for row nodes is defined as
\begin{align*}
&\mathrm{ErrorRate}_{r}=n^{-1}_{r}\mathrm{min}_{J_{r}\in\mathcal{P}_{K_{r}}}\|\hat{Z}_{r}J_{r}-Z_{r}\|_{0},
\end{align*}
where $\mathcal{P}_{K_{r}}$ is a set of all $K_{r}\times K_{r}$ permutation matrices. Similarly, we can define $\mathrm{ErrorRate}_{c}$ for column nodes. Instead of showing error rates for both row and column nodes, we report $\mathrm{max}(\mathrm{ErrorRate}_{r},\mathrm{ErrorRate}_{c})$, and denote it as $\mathrm{ErrorRate}$ for convenience. ErrorRate ranges in $[0,1]$, and a smaller ErrorRate means a better performance of community detection for both row and column nodes.
  \item For row nodes, recall that $\hat{\mathcal{C}}_{r}$ is the estimated community partition from $\hat{Z}_{r}$ and $\mathcal{C}_{r}=\{\mathcal{C}_{r,1}, \mathcal{C}_{r,2},\ldots, \mathcal{C}_{r,K_{r}}\}$ is the true community partition. Let $C$ be the confusion matrix whose elements $C(k,l)$ is the number of common nodes between ground-truth community $\mathcal{C}_{r,k}$ and estimated community $\hat{\mathcal{C}}_{r,l}$. The $\mathrm{NMI}(\hat{\mathcal{C}}_{r},\mathcal{C}_{r})$ is defined as follows
\begin{align}\label{NMI}
\mathrm{NMI}(\hat{\mathcal{C}}_{r},\mathcal{C}_{r})=\frac{-2\sum_{k,l}C(k,l)\mathrm{log}(\frac{C(k,l)n_{r}}{C_{k.}C_{.l}})}{\sum_{k}C_{k.}\mathrm{log}(\frac{C_{k.}}{n_{r}})+\sum_{l}C_{.l}\mathrm{log}(\frac{C_{.l}}{n_{r}})},
\end{align}
where $C_{k.} (\mathrm{and~}C_{.l})$ is the sum of the entries of $C$ in row $k$ (and column $l$). $\mathrm{NMI}(\hat{\mathcal{C}}_{r},\mathcal{C}_{r})$ ranges in $[0,1]$, and it gets the maximum value 1 when $\hat{\mathcal{C}}_{r}$ and $\mathcal{C}_{r}$ are exactly the same. Similarly, let $\mathcal{C}_{c}=\{\mathcal{C}_{c,1}, \mathcal{C}_{c,2},\ldots, \mathcal{C}_{c,K_{c}}\}$, we compute $\mathrm{NMI}(\hat{\mathcal{C}}_{c},\mathcal{C}_{c})$ for column nodes. For convenience, we let $\mathrm{NMI}=\mathrm{min}(\mathrm{NMI}(\hat{\mathcal{C}}_{r},\mathcal{C}_{r}), \mathrm{NMI}(\hat{\mathcal{C}}_{c},\mathcal{C}_{c}))$. Thus, a larger NMI indicates a better performance of community detection for both row and column nodes.
  \item For row nodes, the $\mathrm{ARI}(\hat{\mathcal{C}}_{r},\mathcal{C}_{r})$ is defined as follows
  \begin{align}\label{ARI}
\mathrm{ARI}(\hat{\mathcal{C}}_{r},\mathcal{C}_{r})=\frac{\sum_{k,l}\binom{C(k,l)}{2}-\frac{\sum_{k}\binom{C_{k.}}{2}\sum_{l}\binom{C_{.l}}{2}}{\binom{n_{r}}{2}}}{\frac{1}{2}[\sum_{k}\binom{C_{k.}}{2}+\sum_{l}\binom{C_{.l}}{2}]-\frac{\sum_{k}\binom{C_{k.}}{2}\sum_{l}\binom{C_{.l}}{2}}{\binom{n_{r}}{2}}},
  \end{align}
where $\binom{.}{.}$ is a binomial coefficient. $\mathrm{ARI}(\hat{\mathcal{C}}_{r},\mathcal{C}_{r})$ ranges from -1 to 1. The better the algorithm performs for row nodes, the higher $\mathrm{ARI}(\hat{\mathcal{C}}_{r},\mathcal{C}_{r})$ value. Similarly, we compute $\mathrm{ARI}(\hat{\mathcal{C}}_{c},\mathcal{C}_{c})$ for column nodes and set $\mathrm{ARI}=\mathrm{min}(\mathrm{ARI}(\hat{\mathcal{C}}_{r},\mathcal{C}_{r}), \mathrm{ARI}(\hat{\mathcal{C}}_{c},\mathcal{C}_{c}))$. Thus, a larger ARI means a better performance.
\end{itemize}

\subsection{Simulation}
In this section, we conduct simulated studies for performances of the 5 aforementioned algorithms on synthetic networks when $\mathcal{F}$ are distributions as Examples \ref{Bernoulli}-\ref{SignedDC} under BiDFM and BiDCDFM.

In all synthetic networks, set $K_{r}=2,K_{c}=3$ and generate $Z_{r}$ (and $Z_{c}$) such that each row (and column) node belongs to one of the row (and column) community with equal probability. For BiDCDFM, the node heterogeneity parameters $\theta_{r}$ and $\theta_{c}$ are generated as $\theta_{r}(i_{r})=\sqrt{\rho}a_{r}(i_{r}), \theta_{c}(j_{c})=\sqrt{\rho}b_{c}(j_{c})$ for $i_{r}=1,2,\ldots,n_{r}, j_{c}=1,2,\ldots, n_{c}$, where $a_{r}(i_{r})$ and $b_{c}(j_{c})$ are random values in $(0,1)$. For the $K_{r}\times K_{c}$ matrix $P$, there is no critical criterion on choosing its elements as long as $\mathrm{rank}(P)=\mathrm{min}(K_{r},K_{c}), \mathrm{max}_{k,l}|P(k,l)|=1$ as provided in Definition \ref{BiDFM} and $P$'s elements should be set positive or real numbers depending on distribution $\mathcal{F}$ as analyzed in Examples \ref{Bernoulli}-\ref{Signed}, we simply set $P$ as two cases given below. For distribution which needs all entries of $P$ be nonnegative, we set $P$ as $P_{1}$ given below:
\[P_{1}=\begin{bmatrix}
    1&0.2&0.3\\
    0.3&0.8&0.2\\
\end{bmatrix}.\]
For distribution which allows $P$ to have negative entries, we set $P$ as $P_{2}$ given below:
\[P_{2}=\begin{bmatrix}
    -1&0.3&-0.5\\
    -0.4&0.8&0.2\\
\end{bmatrix}.\]
Meanwhile, when we consider the case $n_{r}=n_{c}$, we set $n=n_{r}=n_{c}$ for convenience. Each simulation experiment contains the following steps:

Step 1: set $\Omega=\rho Z_{r}PZ'_{c}$ under $BiDFM(Z_{r},Z_{c},P,\rho)$ (or set $\Omega=\Theta_{r}Z_{r}PZ'_{c}\Theta_{c}$ under $BiDCDFM(Z_{r},Z_{c},P,\Theta_{r},\Theta_{c})$).

Step 2: generate the $n_{r}\times n_{c}$ asymmetric adjacency matrix $A$ by letting $A(i_{r},j_{c})$ generated from a distribution $\mathcal{F}$ with expectation $\Omega(i_{r},j_{c})$ for $i_{r}=1,2,\ldots,n_{r}, j_{c}=1,2,\ldots, n_{c}$.

Step 3: apply a community detection algorithm to $A$. Record ErrorRate, NMI and ARI.

Step 4: repeat steps 2-3 for 50 times, and report the averaged ErrorRate, NMI, and ARI.

We consider the following simulation setups.
\subsubsection{Bernoulli distribution}
When $\mathcal{F}$ is a Bernoulli distribution such that $A(i_{r},j_{c})\sim \mathrm{Bernoulli}(\Omega(i_{r},j_{c}))$, as analyzed in Examples \ref{Bernoulli} and \ref{BernoulliDC}, all entries of $P$ should be nonnegative. Hence, $P$ is set as $P_{1}$. Since $\mathbb{P}(A(i_{r},j_{c})=1)=\rho P(g_{i_{r}},g_{j_{c}})\in[0,1]$  and maximum entry of $P$ is 1, $\rho$ should be set no larger than 1.

\textbf{Simulation 1(a): changing $\rho$ under BiDFM}. Let $n_{r}=200,n_{c}=300$. Let $\rho$ range in $\{0.1,0.2,0.3,\ldots,1\}$.

\textbf{Simulation 1(b): changing $\rho$ under BiDCDFM}. Let $n_{r}=600,n_{c}=900$. Let $\rho$ range in $\{0.1,0.2,0.3,\ldots,1\}$.

\textbf{Simulation 1(c): changing $n$ under BiDFM}. Let $\rho=0.5$. Let $n$ range in $\{50,100,150,\ldots,500\}$.

\textbf{Simulation 1(d): changing $n$ under BiDCDFM}. Let $\rho=0.5$. Let $n$ range in $\{500,1000,1500,2000,2500,3000\}$.
\begin{figure}[!h]
\centering

\subfigure[SIM 1(a)]{\includegraphics[width=0.24\textwidth]{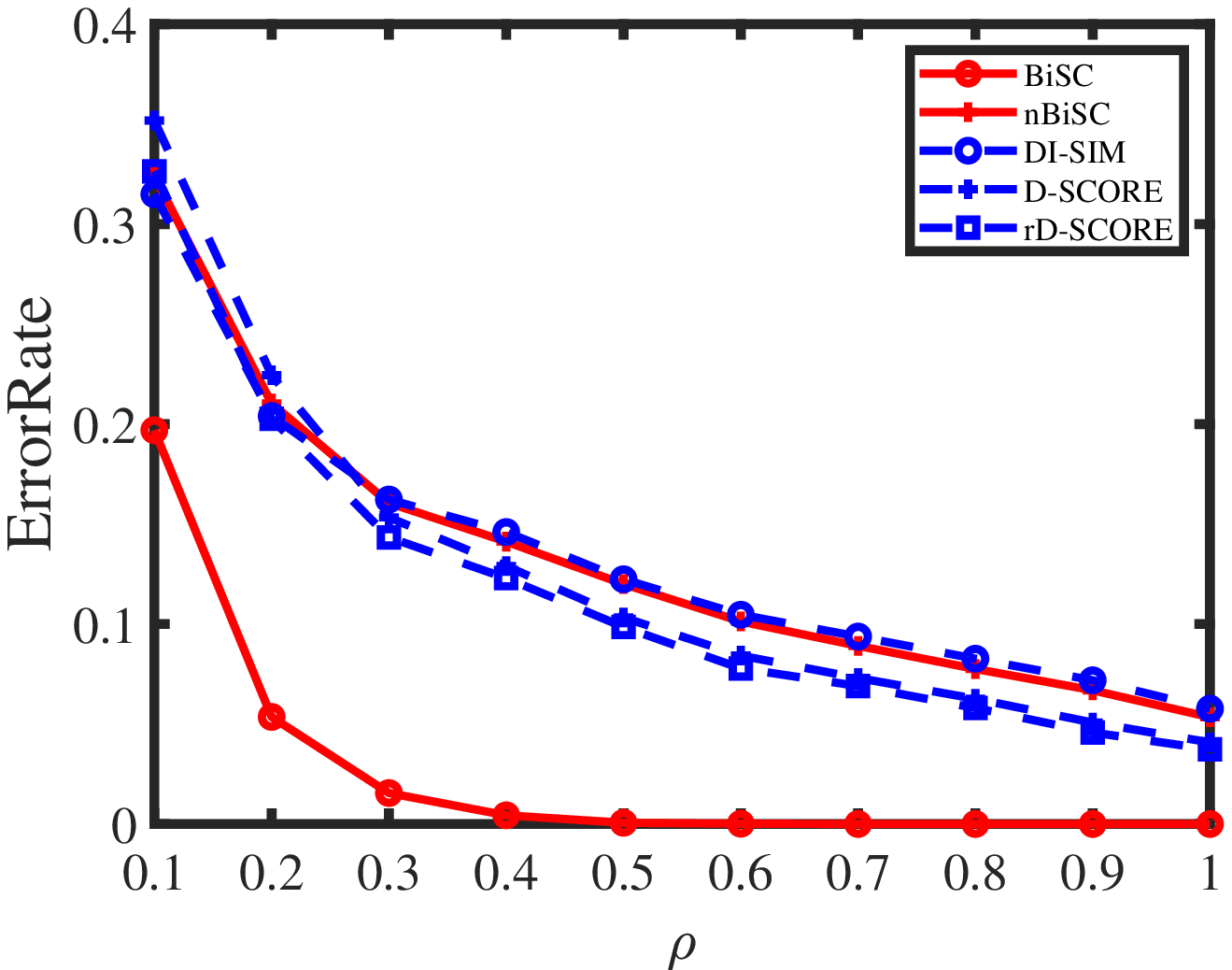}}
\subfigure[SIM 1(b)]{\includegraphics[width=0.24\textwidth]{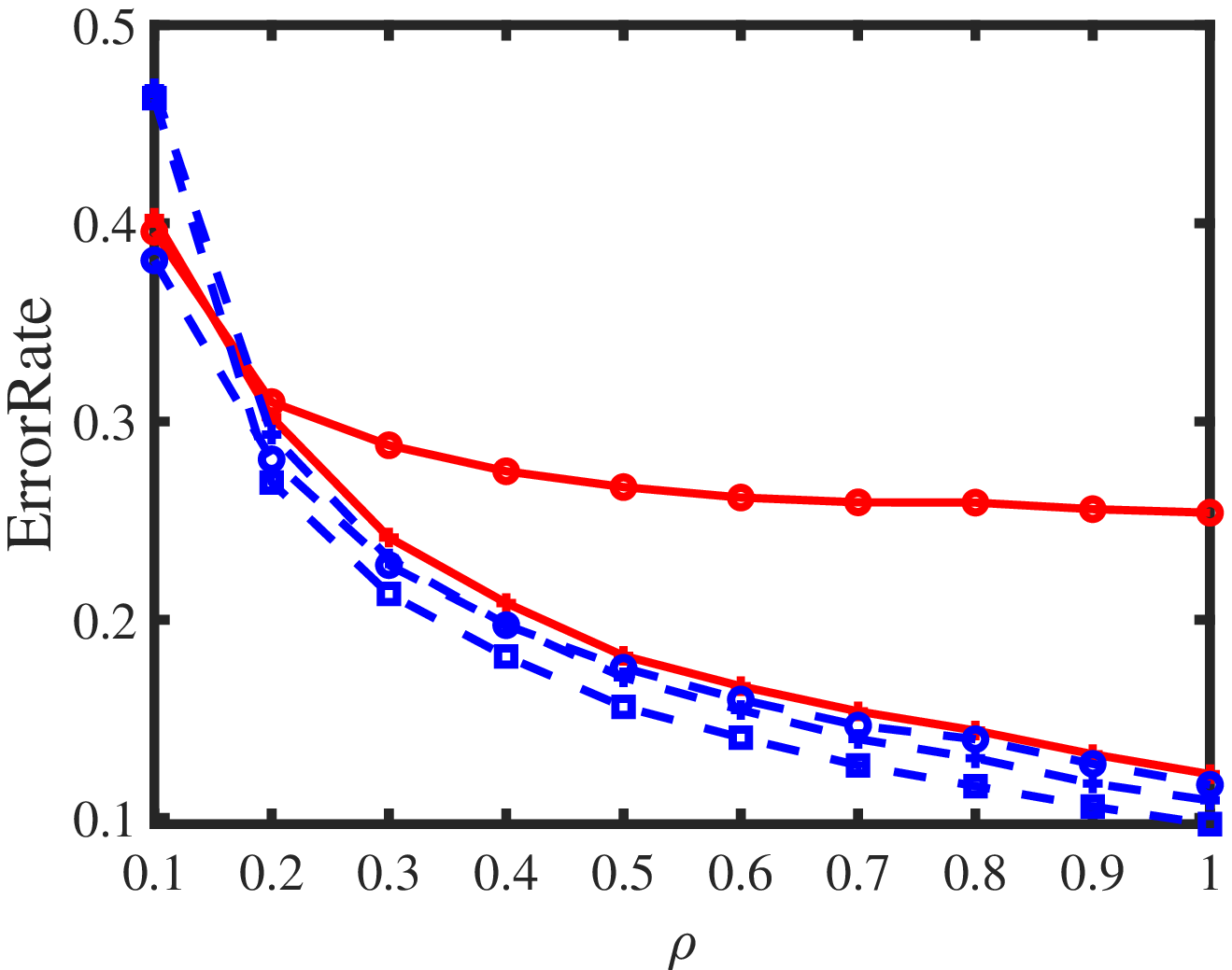}}
\subfigure[SIM 1(c)]{\includegraphics[width=0.24\textwidth]{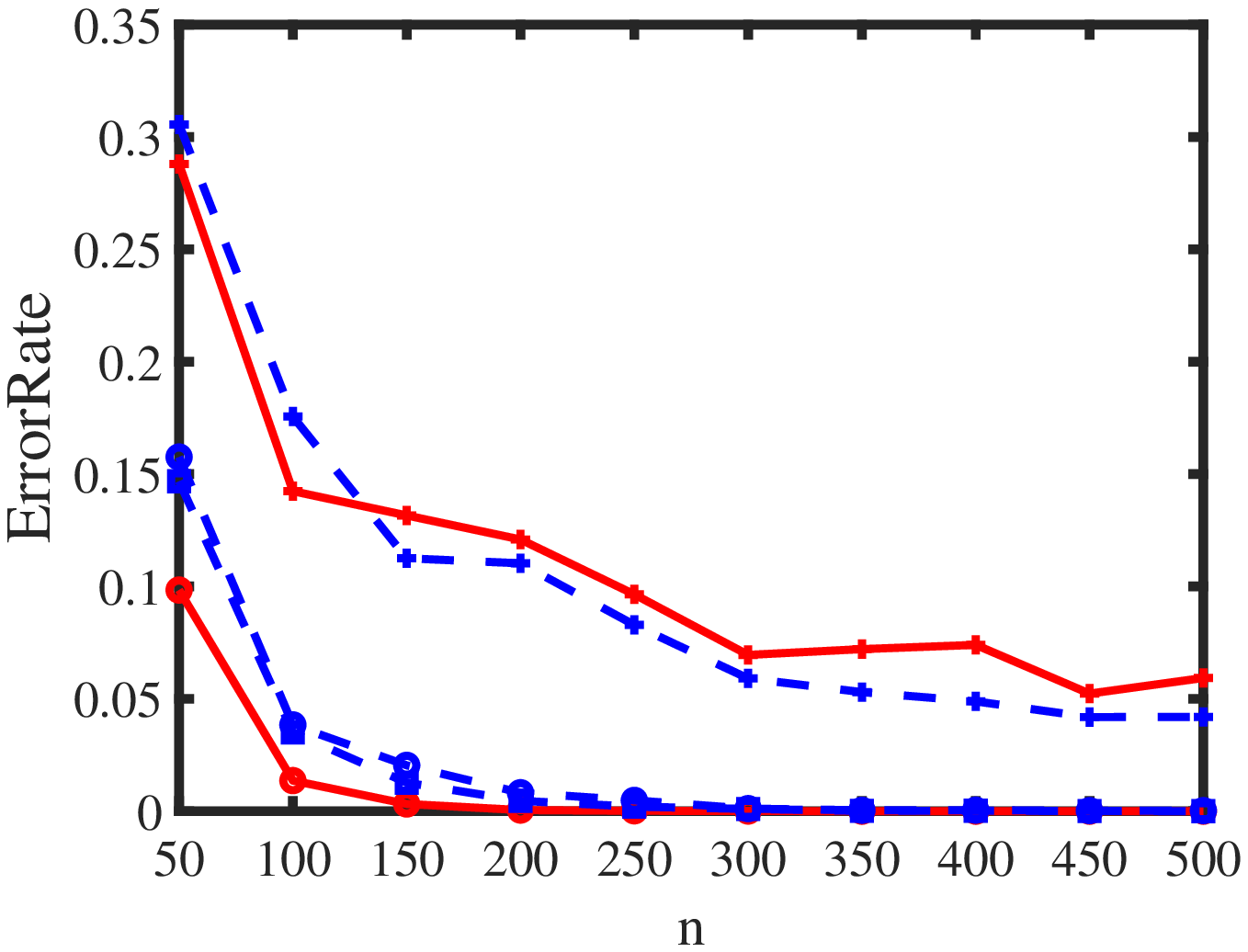}}
\subfigure[SIM 1(d)]{\includegraphics[width=0.24\textwidth]{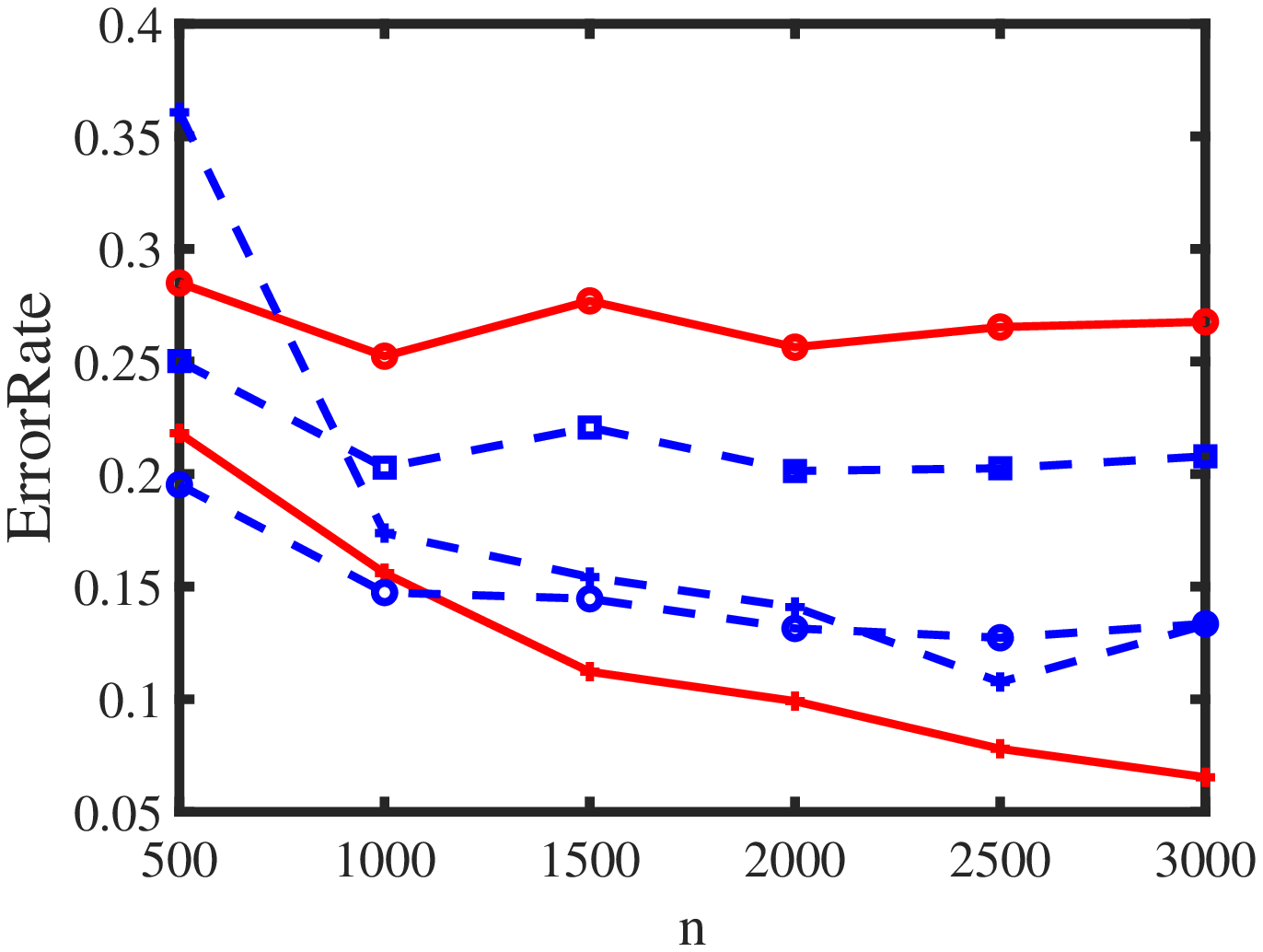}}
\subfigure[SIM 1(a)]{\includegraphics[width=0.24\textwidth]{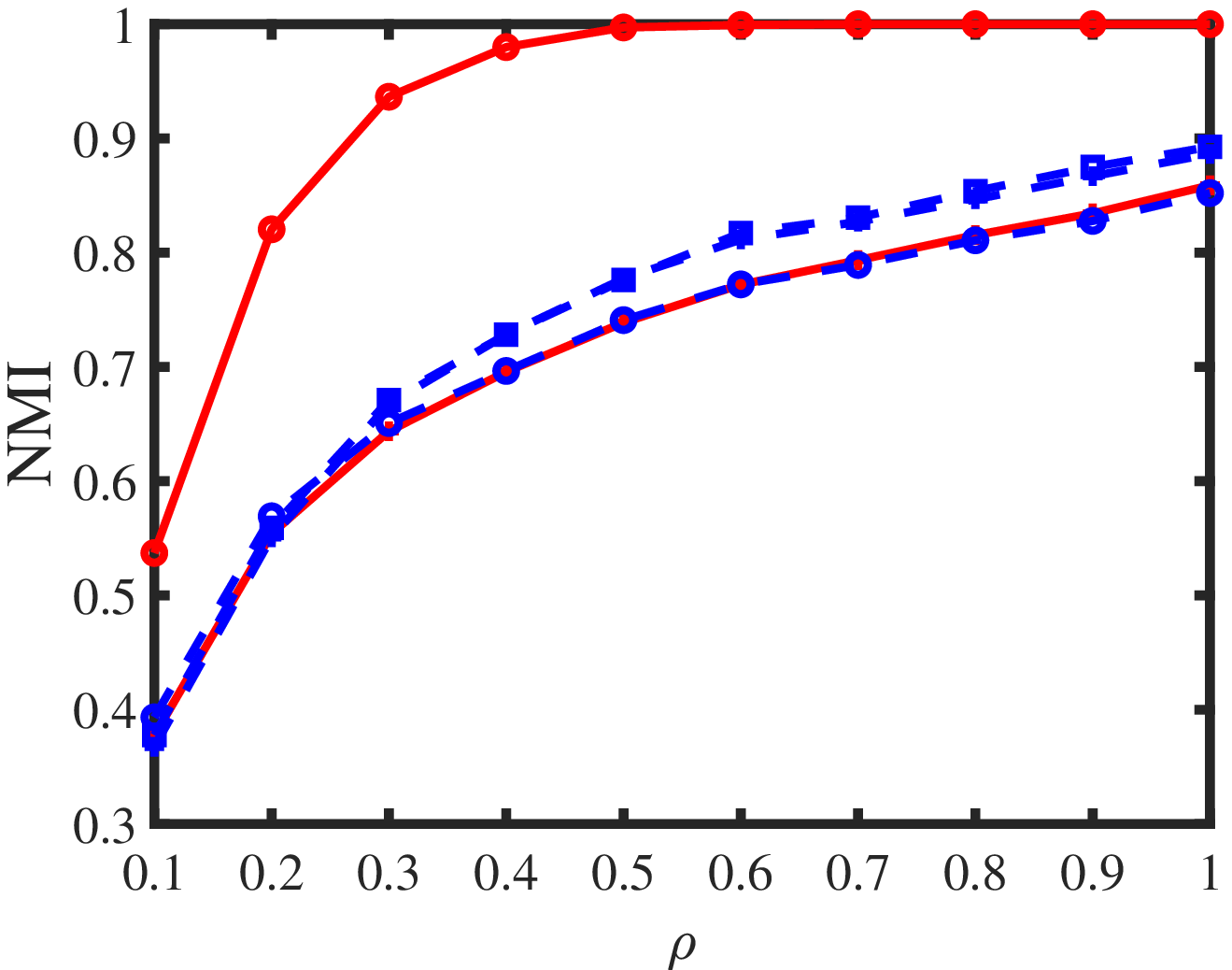}}
\subfigure[SIM 1(b)]{\includegraphics[width=0.24\textwidth]{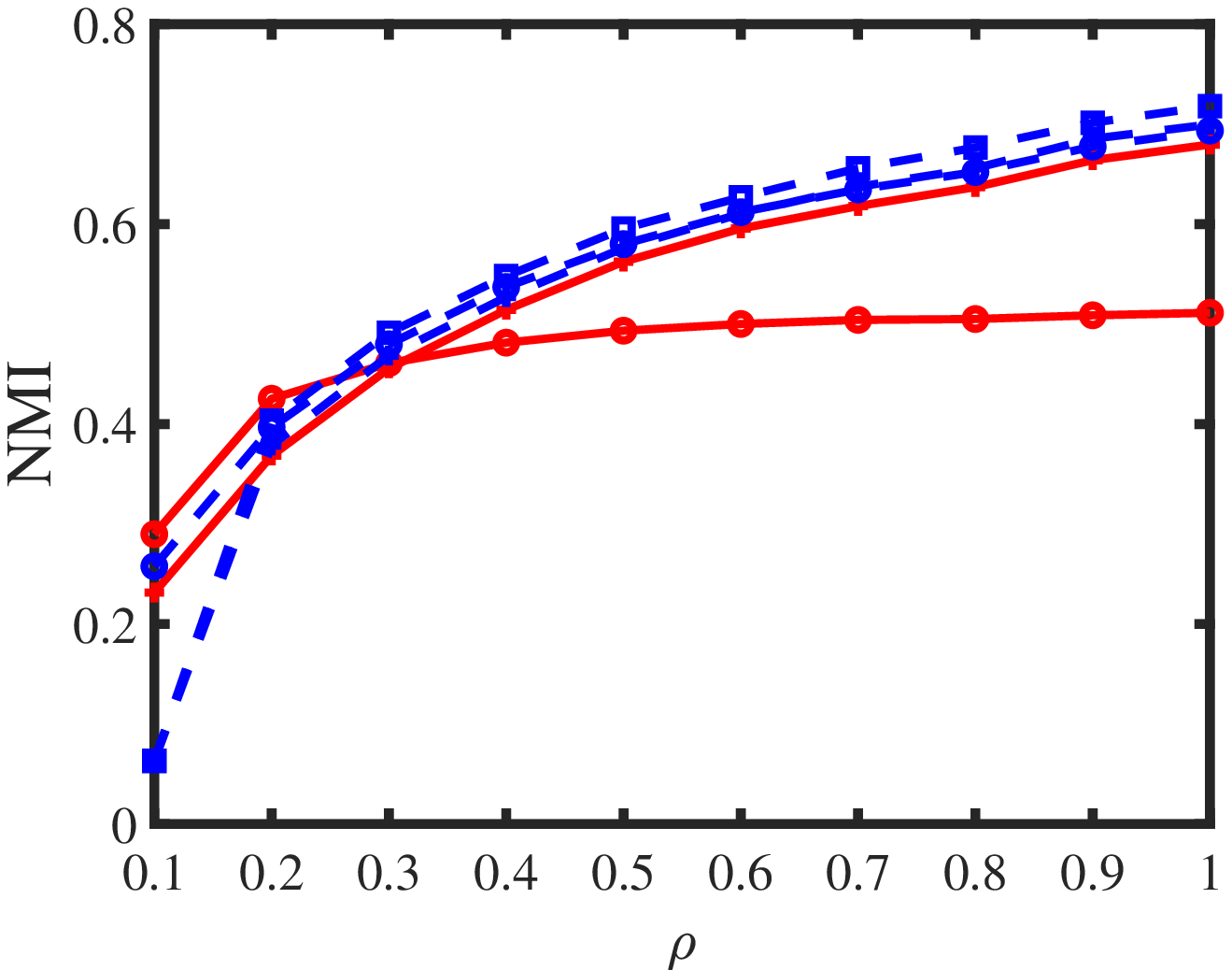}}
\subfigure[SIM 1(c)]{\includegraphics[width=0.24\textwidth]{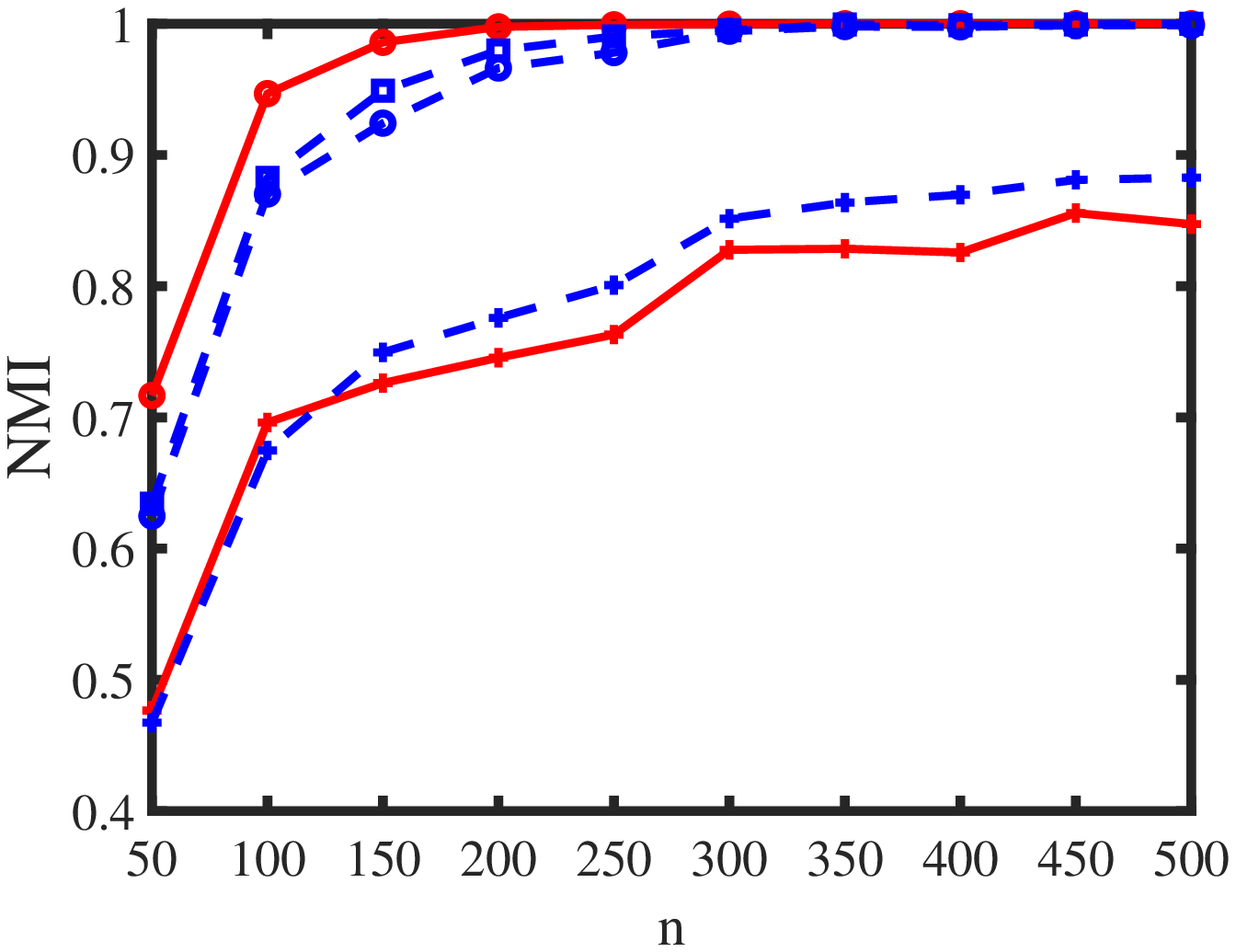}}
\subfigure[SIM 1(d)]{\includegraphics[width=0.24\textwidth]{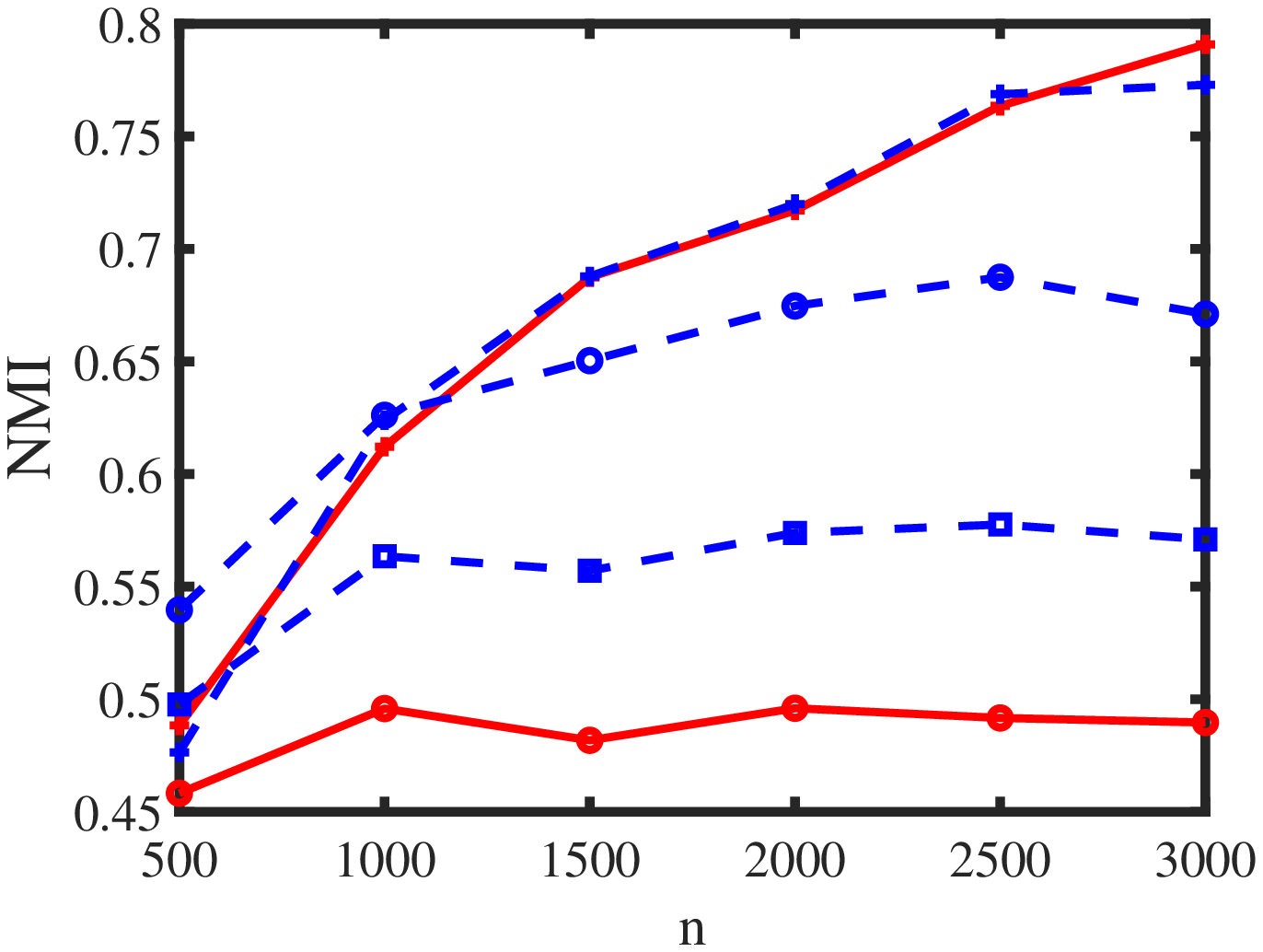}}
\subfigure[SIM 1(a)]{\includegraphics[width=0.24\textwidth]{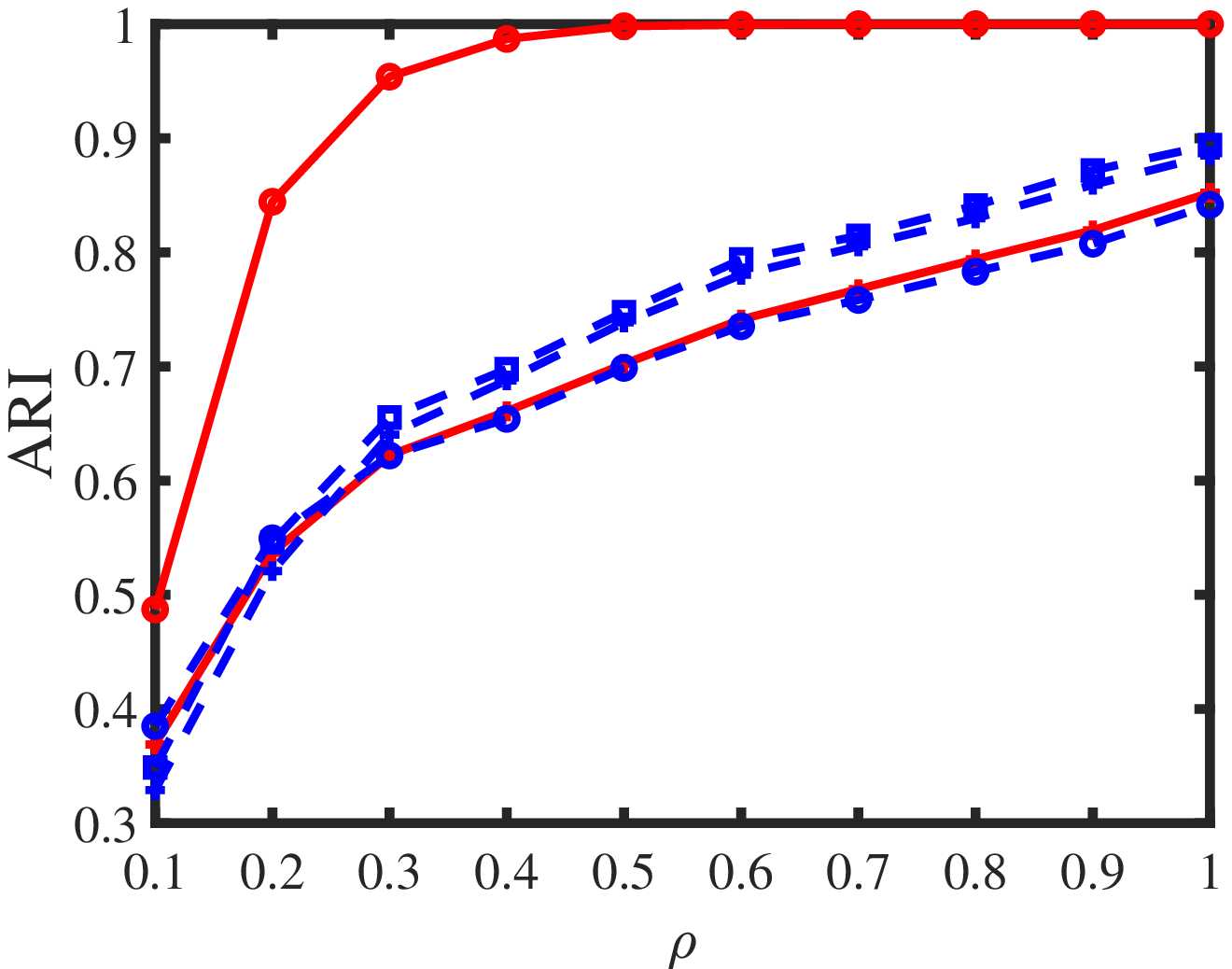}}
\subfigure[SIM 1(b)]{\includegraphics[width=0.24\textwidth]{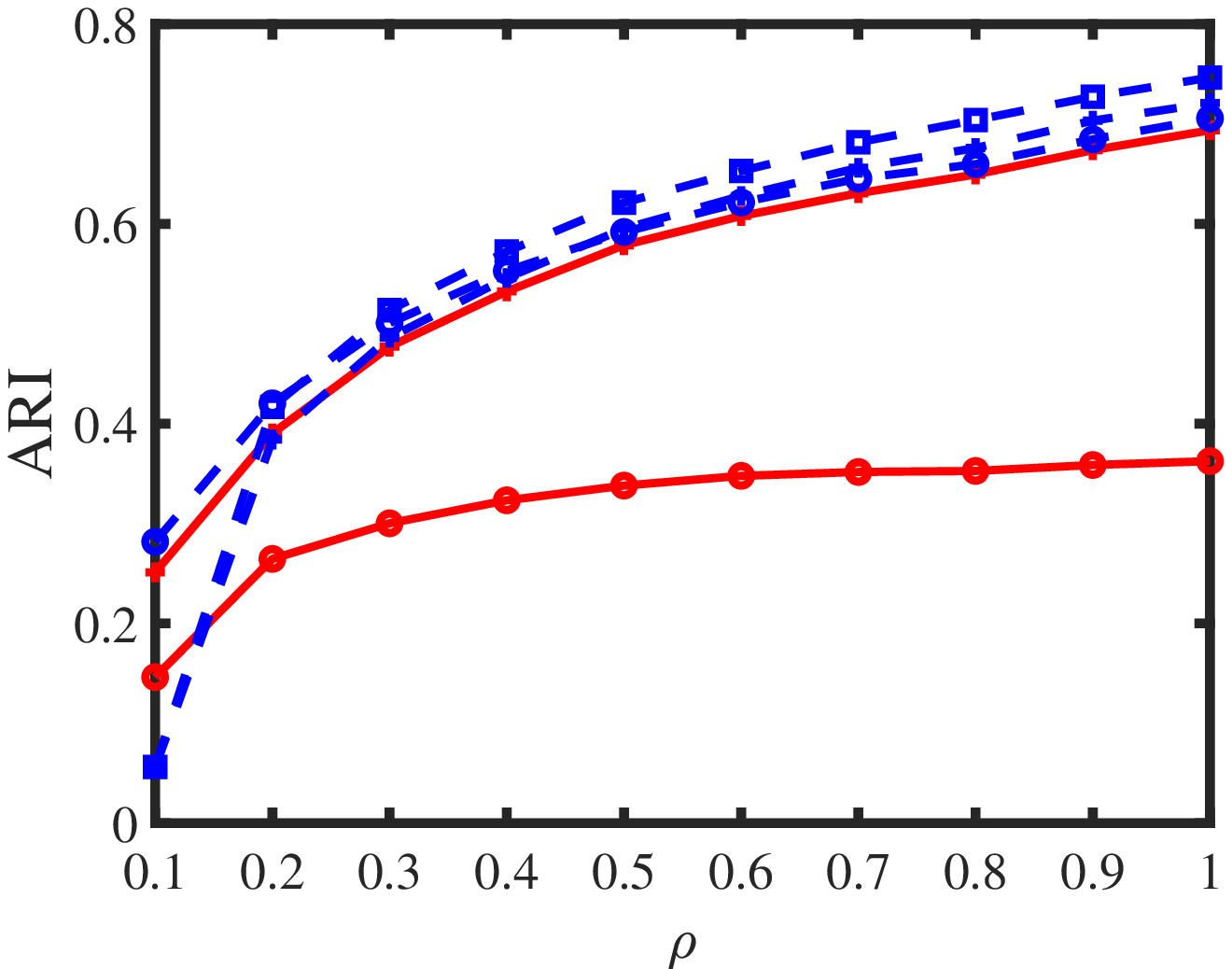}}
\subfigure[SIM 1(c)]{\includegraphics[width=0.24\textwidth]{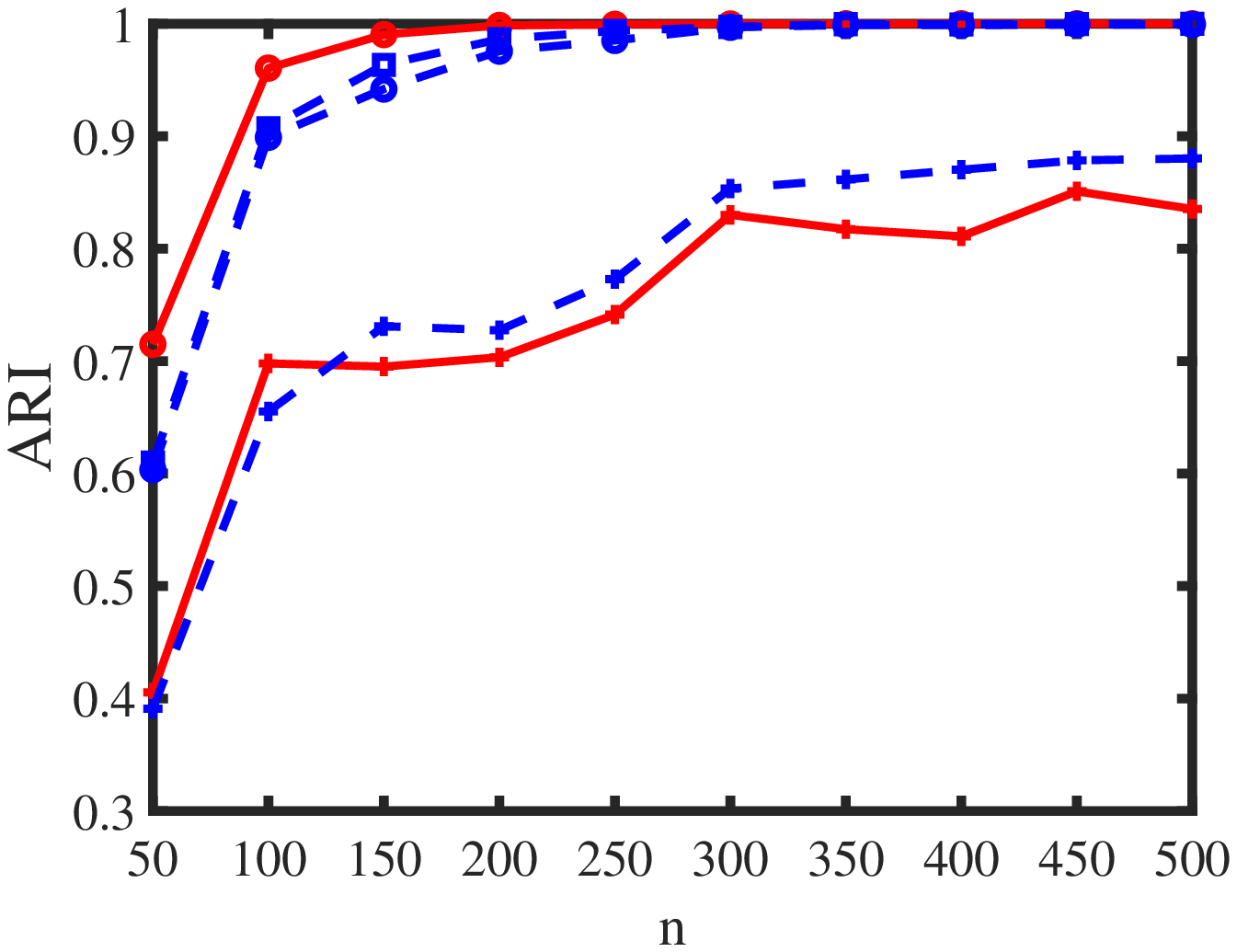}}
\subfigure[SIM 1(d)]{\includegraphics[width=0.24\textwidth]{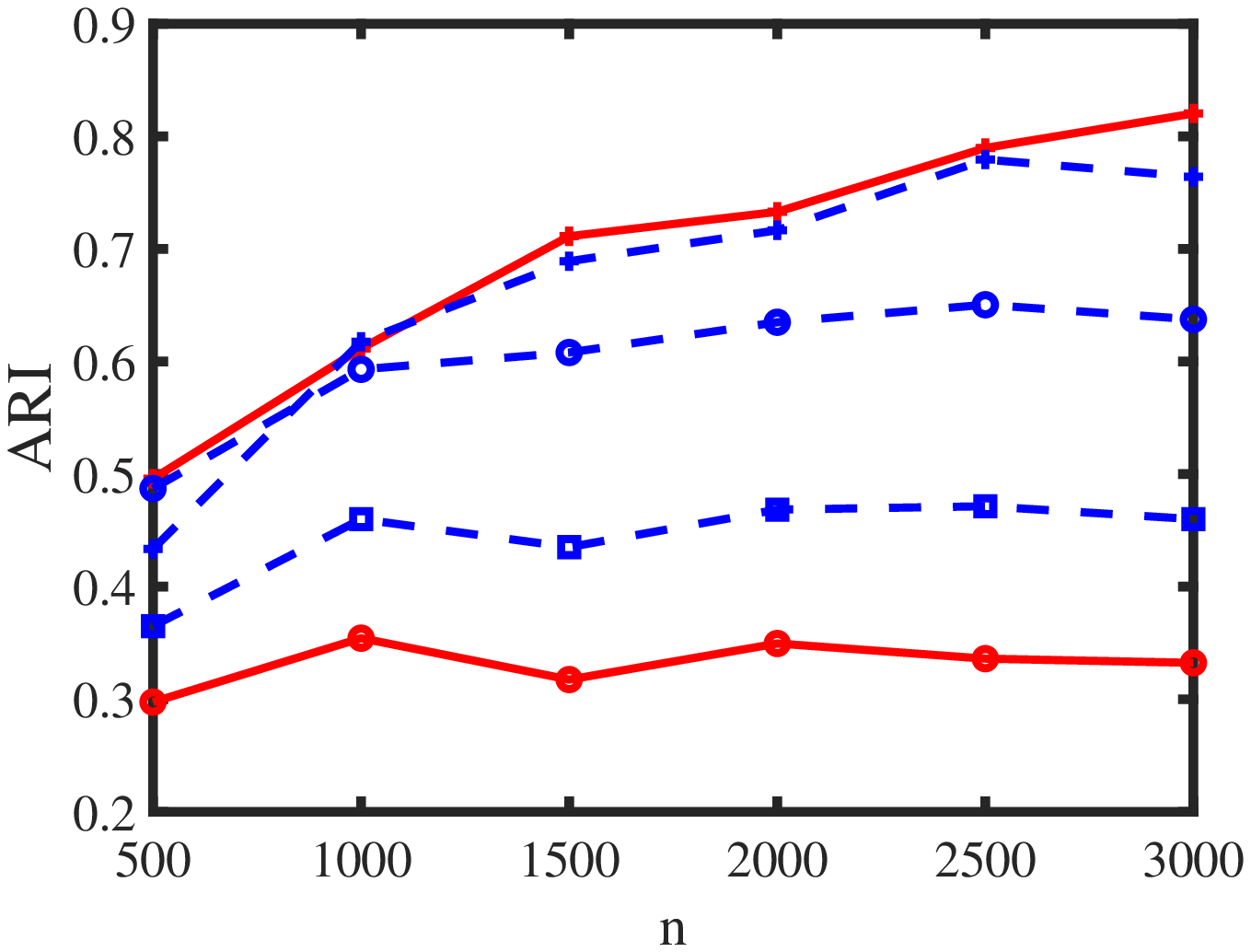}}
\caption{Numerical results of Simulation 1.}
\label{S1} 
\end{figure}

The results of Simulation 1 are summarized in Figure \ref{S1}, where Simulation is abbreviated as SIM. In Figure \ref{S1} (a) and (b), we see that all methods perform better as $\rho$ increases in terms of ErrorRate, and this is consistent with the analysis in Examples \ref{Bernoulli} and \ref{BernoulliDC}. And we also can find that when $\rho\leq 0.3$, the error rates for BiSC under two models decrease rapidly, while when $\rho$ increases further, the error rates keep stable. The error rates for the other four methods also decrease fast when $\rho\leq 0.3$, and decrease slowly when $\rho> 0.3$. Such results are natural since $\rho$ is a sparse parameter and a smaller $\rho$ indicates a sparser network, thus a larger estimation error. From Figure \ref{S1} (c)-(d), we can find that these five methods tend to perform better as $n$ increases which is consistent with the conclusion in Theorems \ref{mainBiDFM} and \ref{mainBiDCDFM}. In terms of NMI and ARI, the analysis of the results shown in panels (e)-(l) of Figure \ref{S1} is similar to that of ErrorRate. Naturally, for all cases, when the data is generated from BiDFM, BiSC outperforms nBiSC, and when the data is generated from BiDCDFM, nBiSC is better. We also find that nBiSC performs similarly to DI-SIM, D-SCORE, and rD-SCORE for Simulation 1. This is reasonable because Simulation 1 is designed under DCScBM when $\mathcal{F}$ is Bernoulli while nBiSC, DI-SIM, D-SCORE, and rD-SCORE all can fit DCScBM.
\subsubsection{Normal distribution}
When $\mathcal{F}$ is Normal distribution such that $A(i_{r},j_{c})\sim\mathrm{Normal}(\Omega(i_{r},j_{c}),\sigma^{2}_{A})$ for some $\sigma^{2}_{A}>0$, by Examples \ref{Normal} and \ref{NormalDC}, $P$ is set as $P_{2}$ and $\rho$ can be set larger than 1.

\textbf{Simulation 2(a): changing $\rho$ under BiDFM}. Let $n_{r}=200,n_{c}=300$ and $\sigma^{2}_{A}=1$. Let $\rho$ range in $\{0.1,0.2,0.3,\ldots,2\}$.

\textbf{Simulation 2(b): changing $\rho$ under BiDCDFM}. Let $n_{r}=600,n_{c}=900$ and $\sigma^{2}_{A}=1$. Let $\rho$ range in $\{0.1,0.2,0.3,\ldots,2\}$.

\textbf{Simulation 2(c): changing $\sigma^{2}_{A}$ under BiDFM}. Let $n_{r}=200, n_{c}=300$ and $\rho=0.5$. Let $\sigma^{2}_{A}$ range in $\{0.2,0.4,0.6,\ldots,2\}$.

\textbf{Simulation 2(d): changing $\sigma^{2}_{A}$ under BiDCDFM}. Let $n_{r}=600, n_{c}=900$ and $\rho=3$. Let $\sigma^{2}_{A}$ range in $\{0.2,0.4,0.6,\ldots,2\}$.

\textbf{Simulation 2(e): changing $n$ under BiDFM}. Let $\rho=0.5$ and $\sigma^{2}_{A}=1$. Let $n$ range in $\{50,100,150,\ldots,500\}$.

\textbf{Simulation 2(f): changing $n$ under BiDCDFM}. Let $\rho=1$ and $\sigma^{2}_{A}=1$. Let $n$ range in $\{500,1000,1500,2000,2500,3000\}$.
\begin{figure}
\centering
\subfigure[SIM 2(a)]{\includegraphics[width=0.24\textwidth]{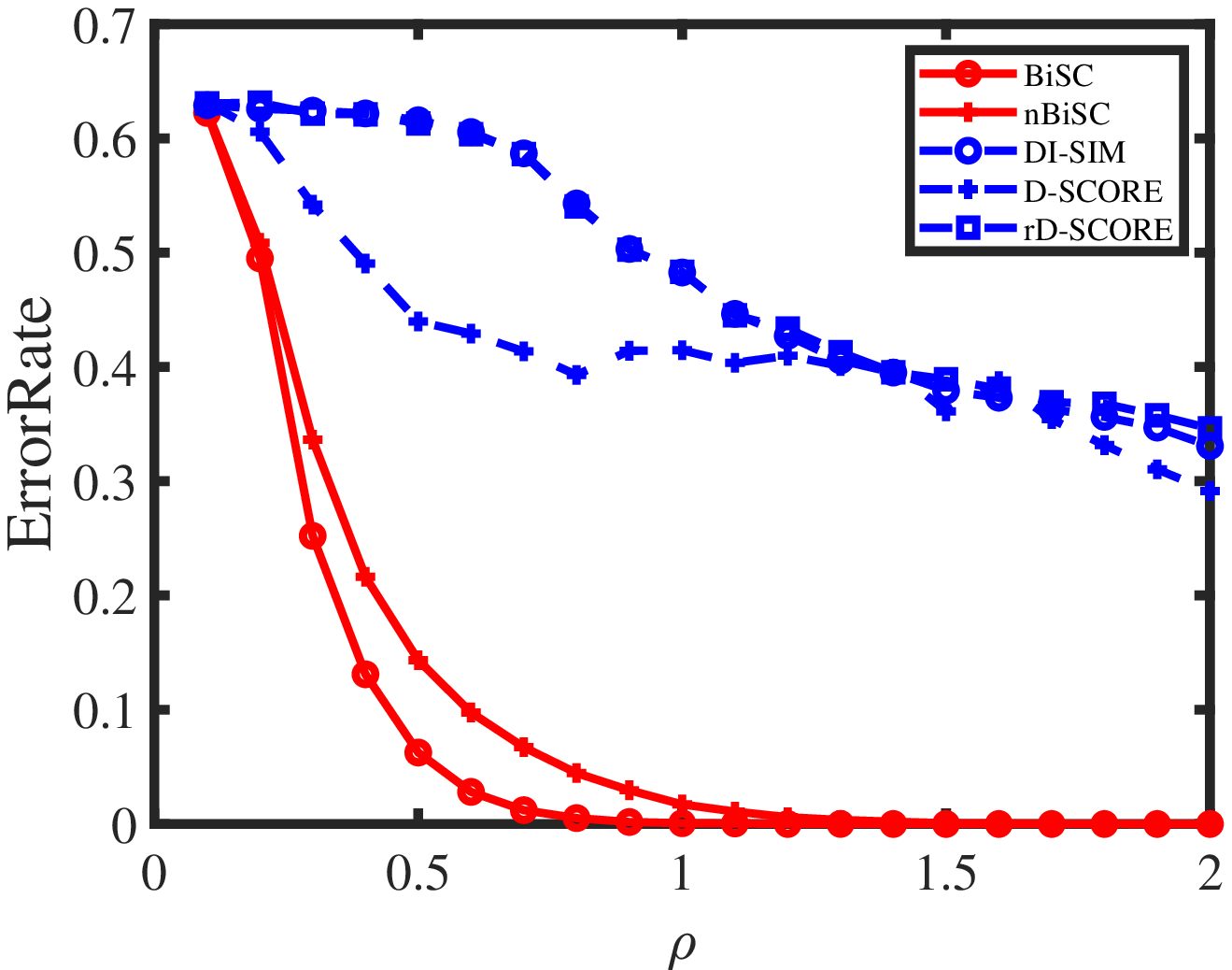}}
\subfigure[SIM 2(b)]{\includegraphics[width=0.24\textwidth]{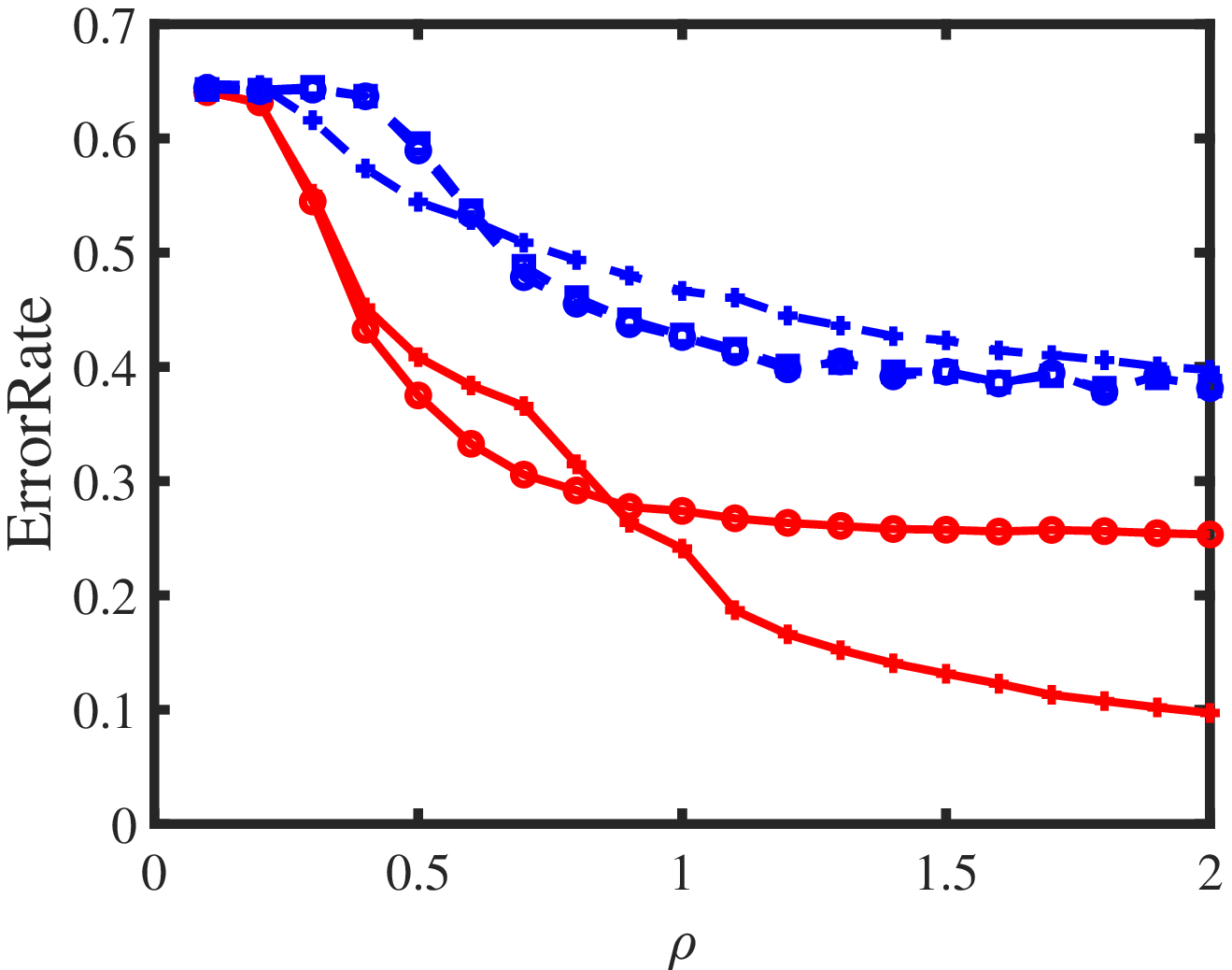}}
\subfigure[SIM 2(c)]{\includegraphics[width=0.24\textwidth]{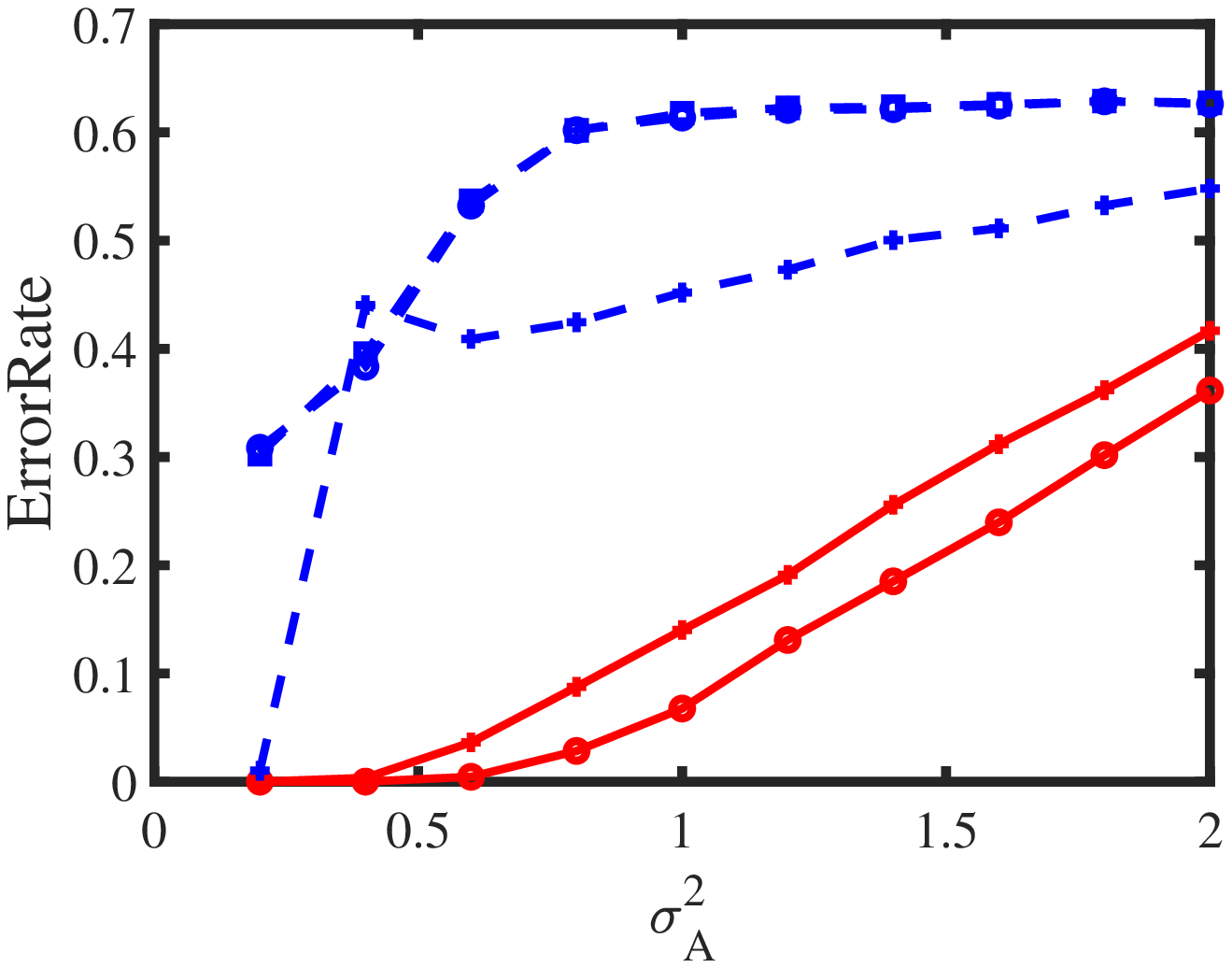}}
\subfigure[SIM 2(d)]{\includegraphics[width=0.24\textwidth]{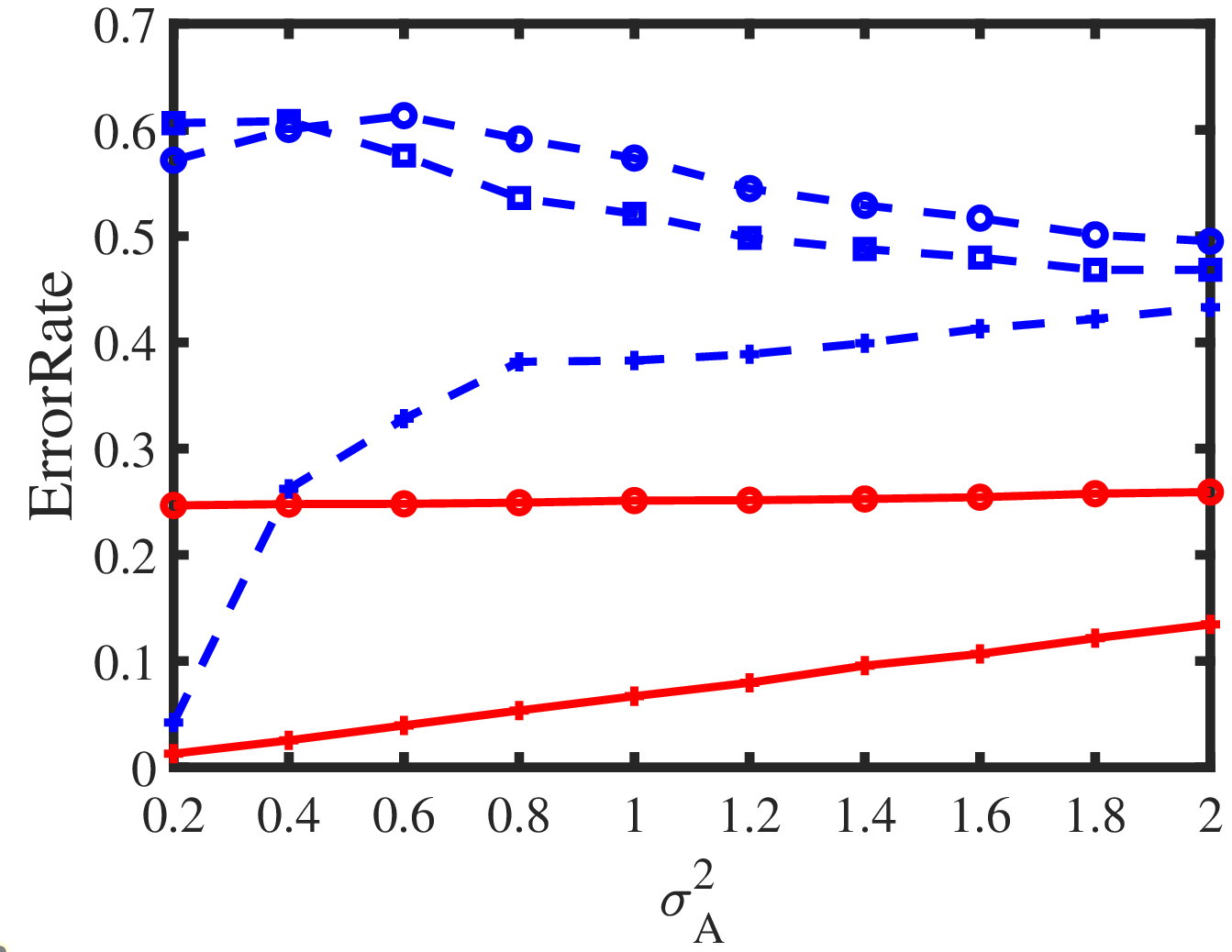}}
\subfigure[SIM 2(e)]{\includegraphics[width=0.24\textwidth]{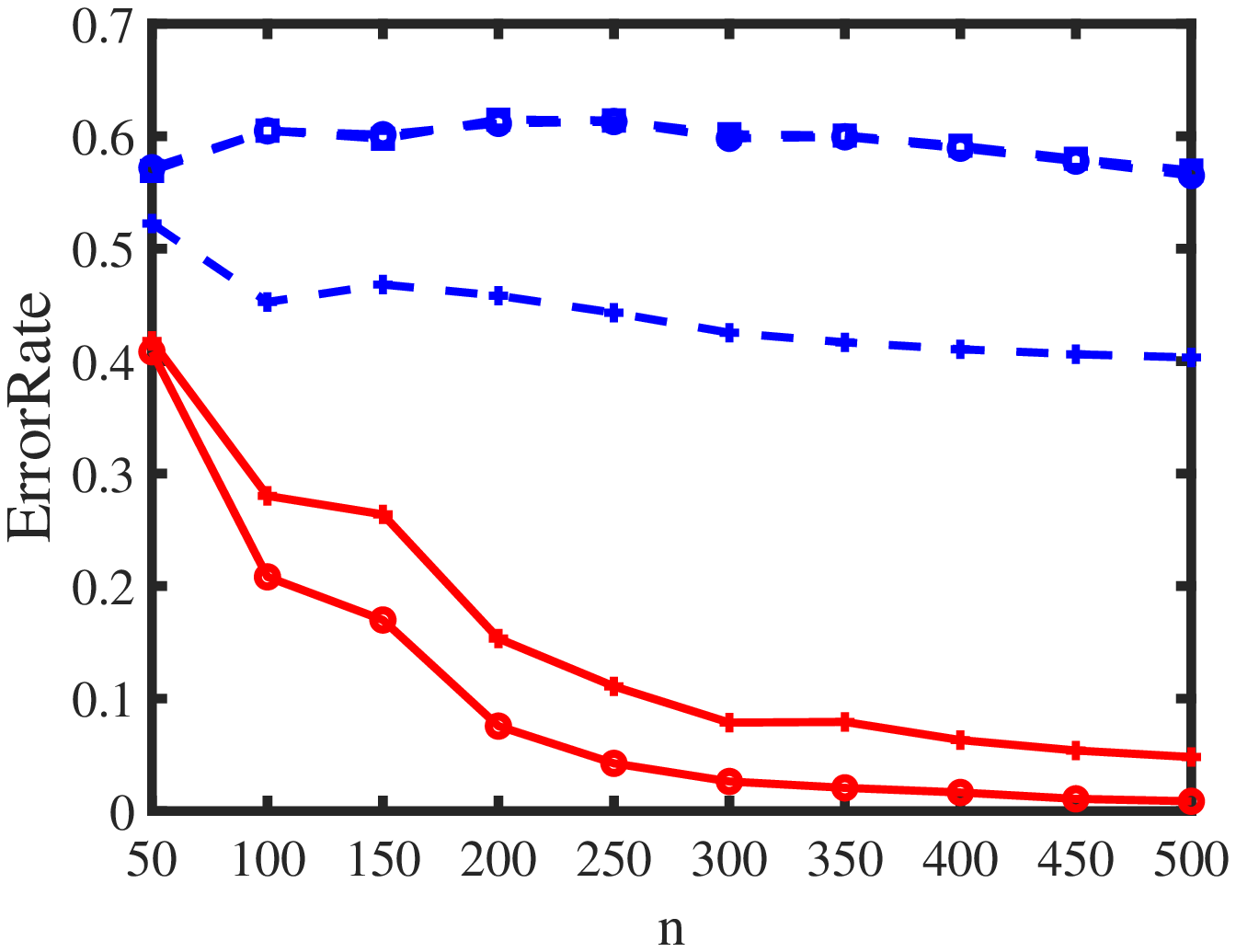}}
\subfigure[SIM 2(f)]{\includegraphics[width=0.24\textwidth]{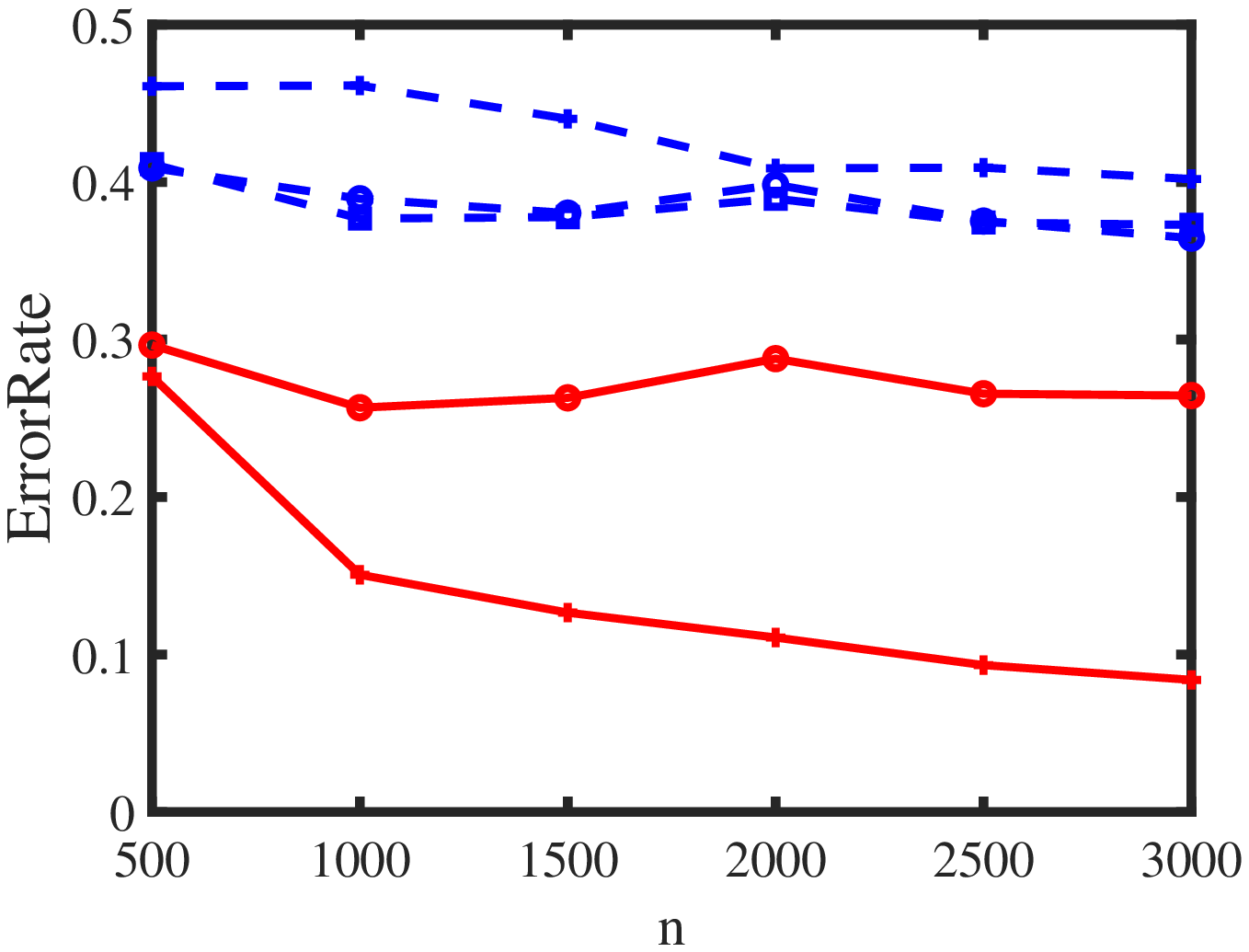}}
\subfigure[SIM 2(a)]{\includegraphics[width=0.24\textwidth]{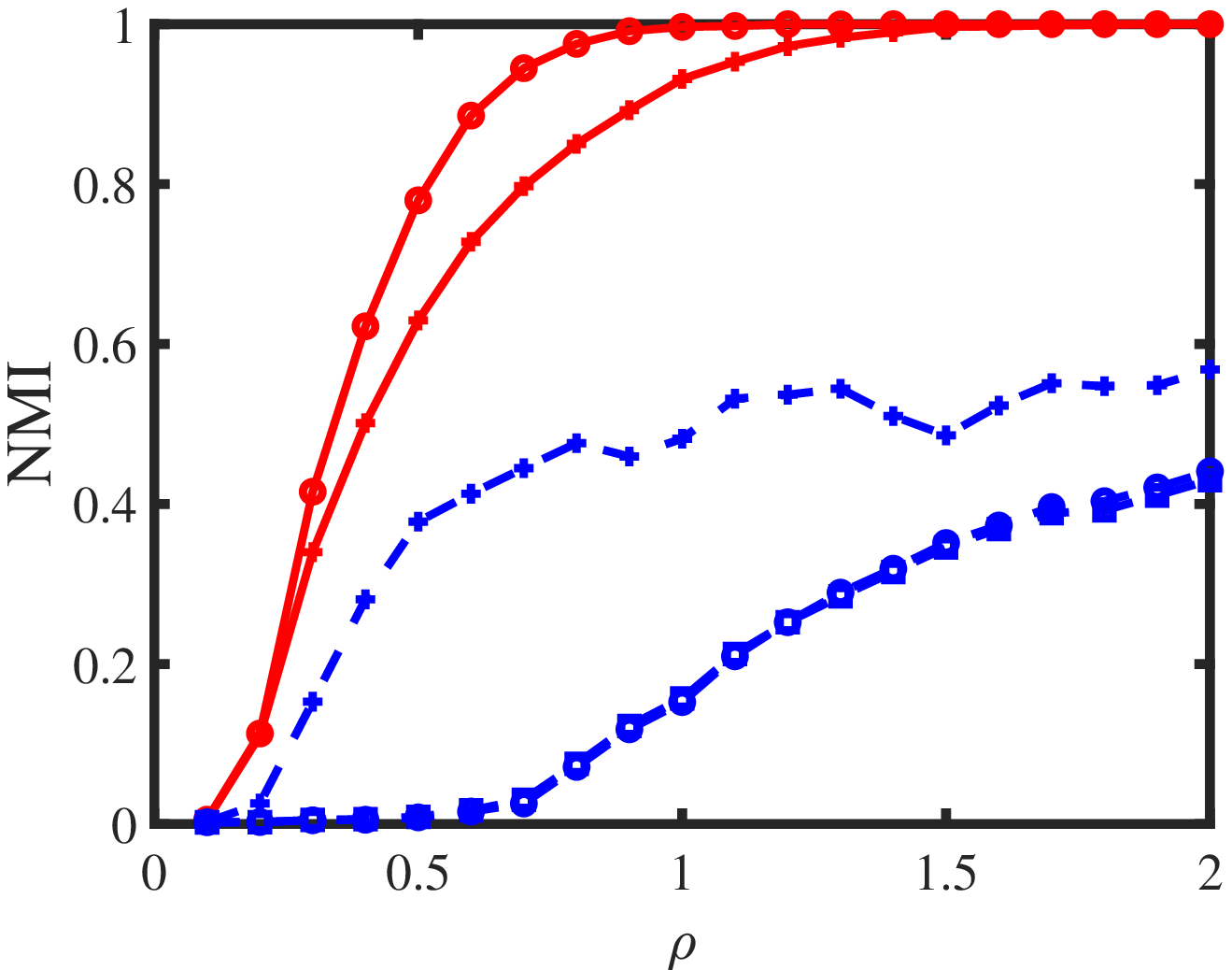}}
\subfigure[SIM 2(b)]{\includegraphics[width=0.24\textwidth]{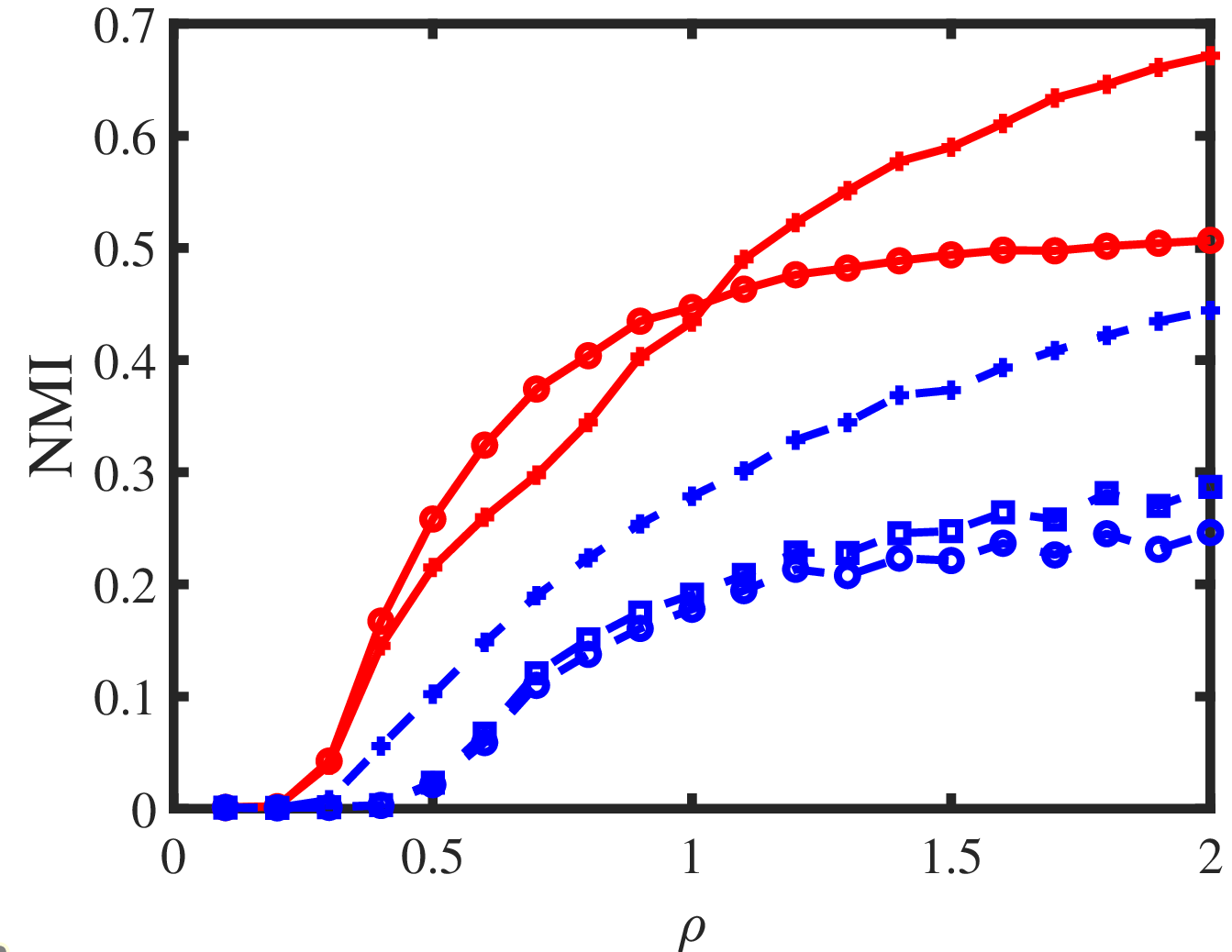}}
\subfigure[SIM 2(c)]{\includegraphics[width=0.24\textwidth]{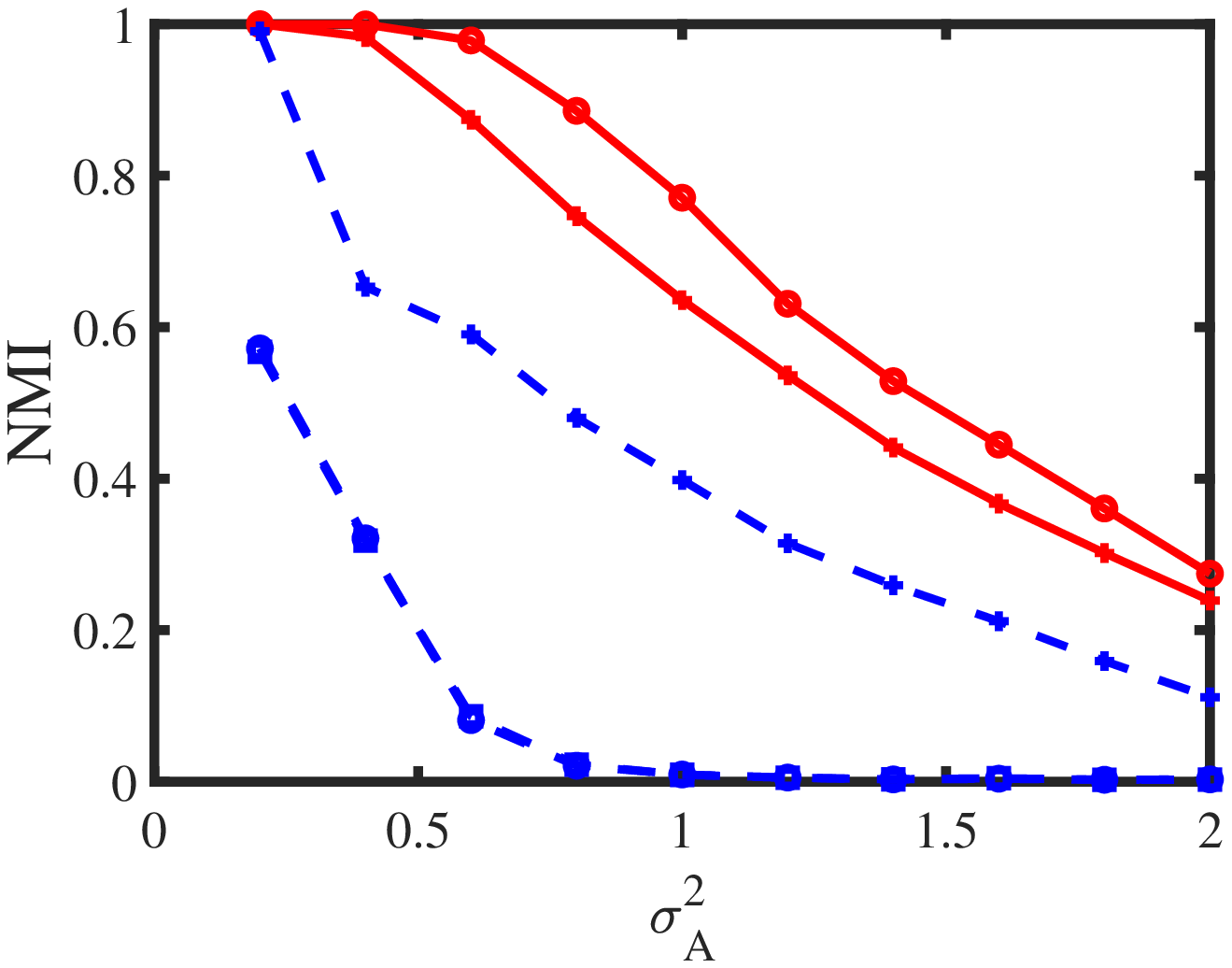}}
\subfigure[SIM 2(d)]{\includegraphics[width=0.24\textwidth]{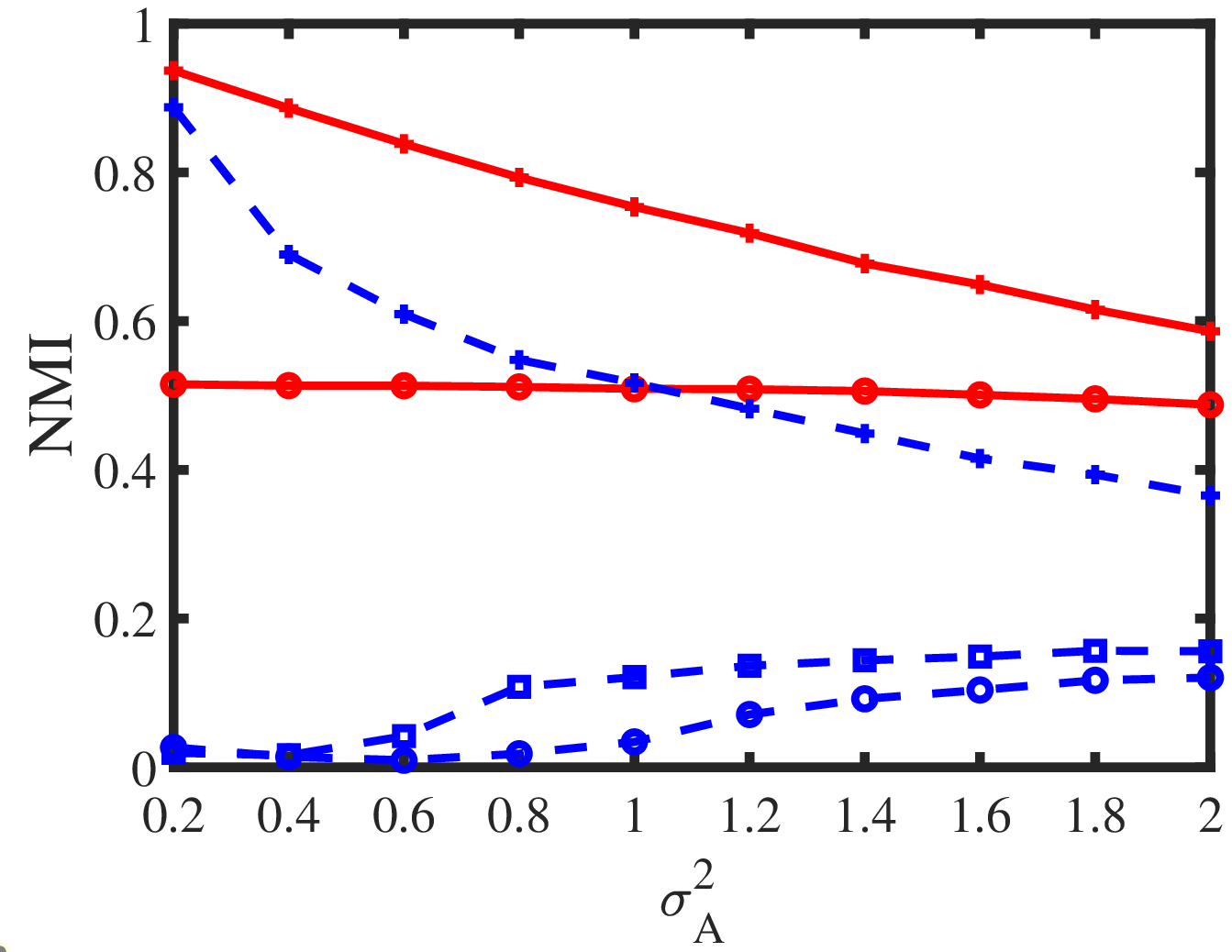}}
\subfigure[SIM 2(e)]{\includegraphics[width=0.24\textwidth]{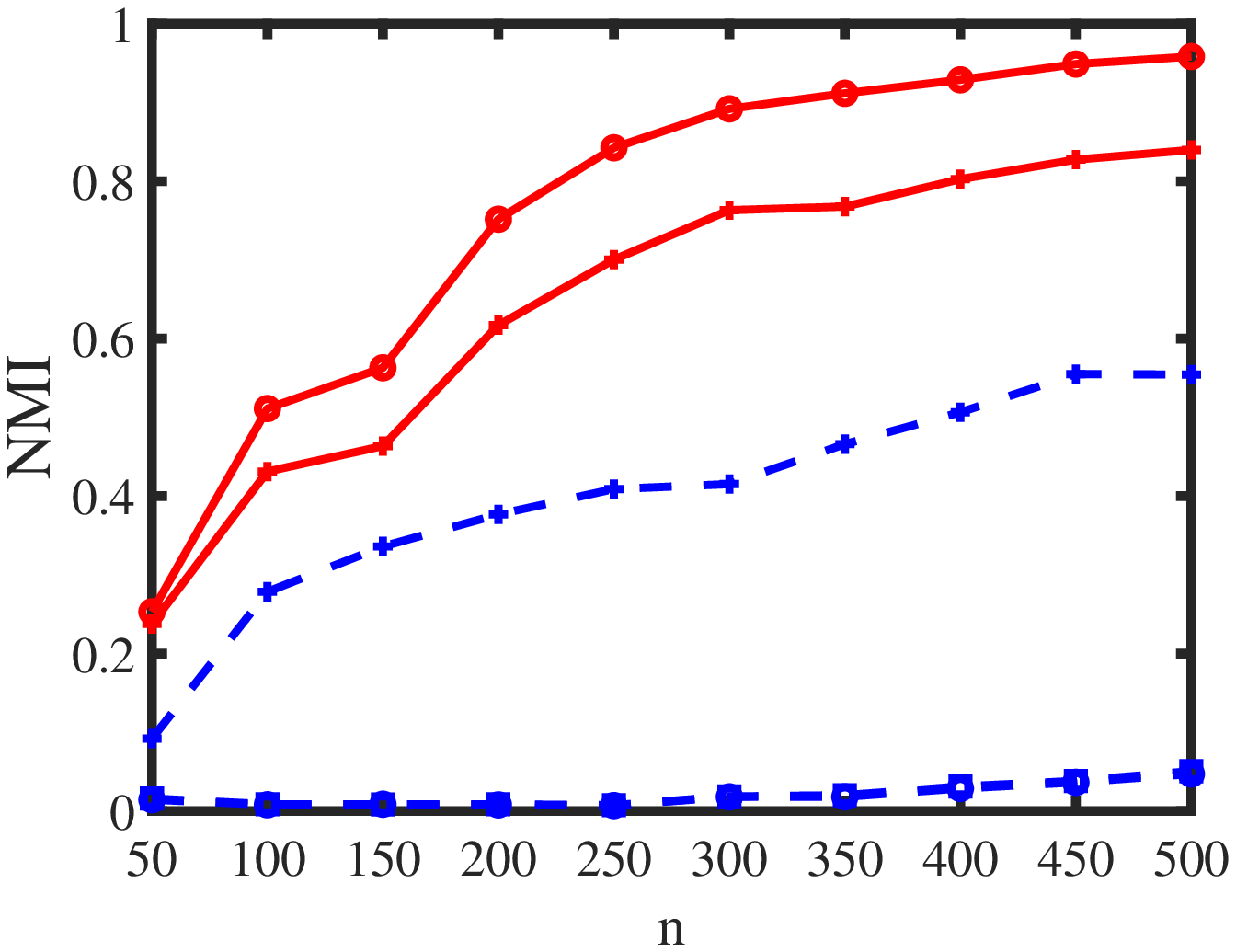}}
\subfigure[SIM 2(f)]{\includegraphics[width=0.24\textwidth]{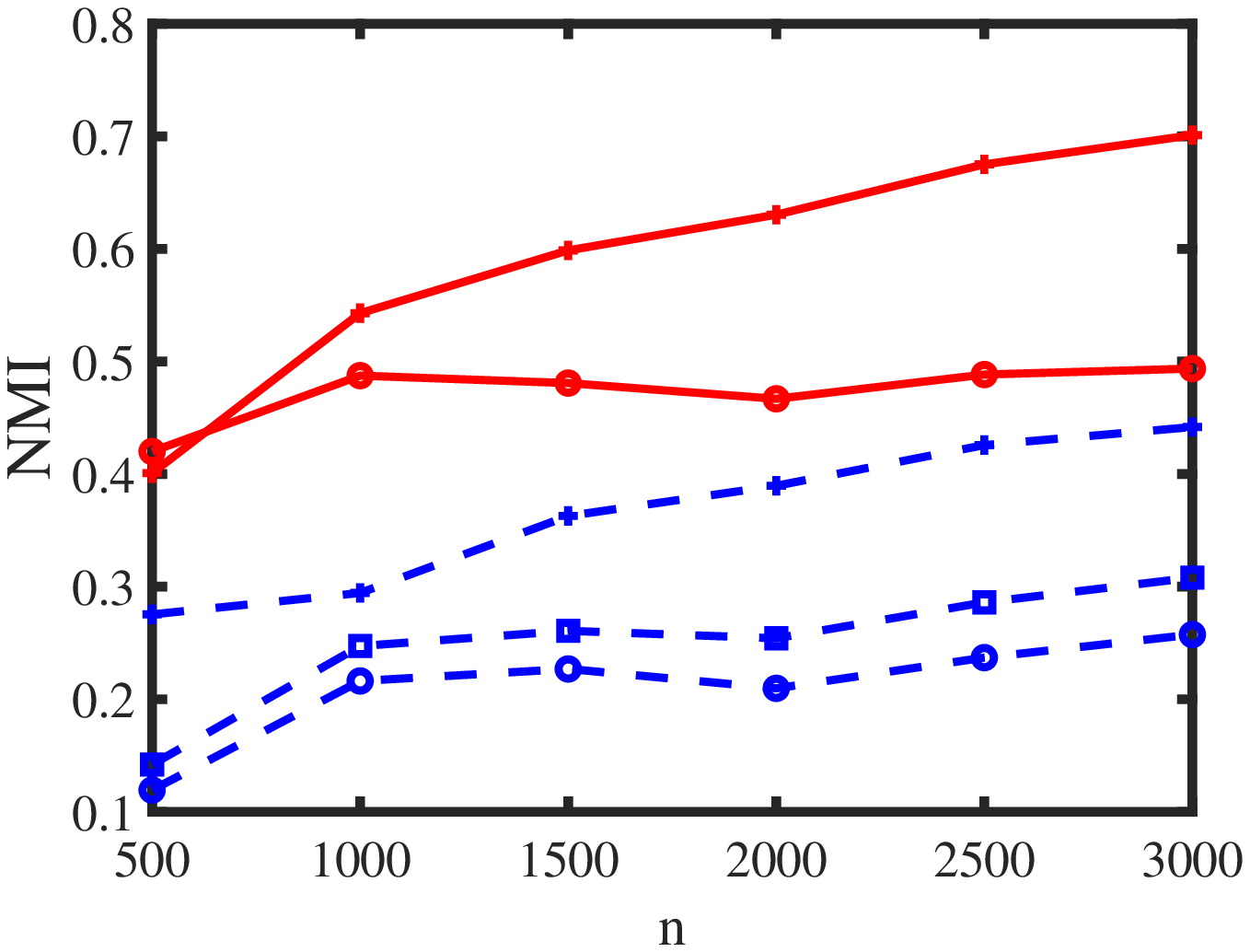}}
\subfigure[SIM 2(a)]{\includegraphics[width=0.24\textwidth]{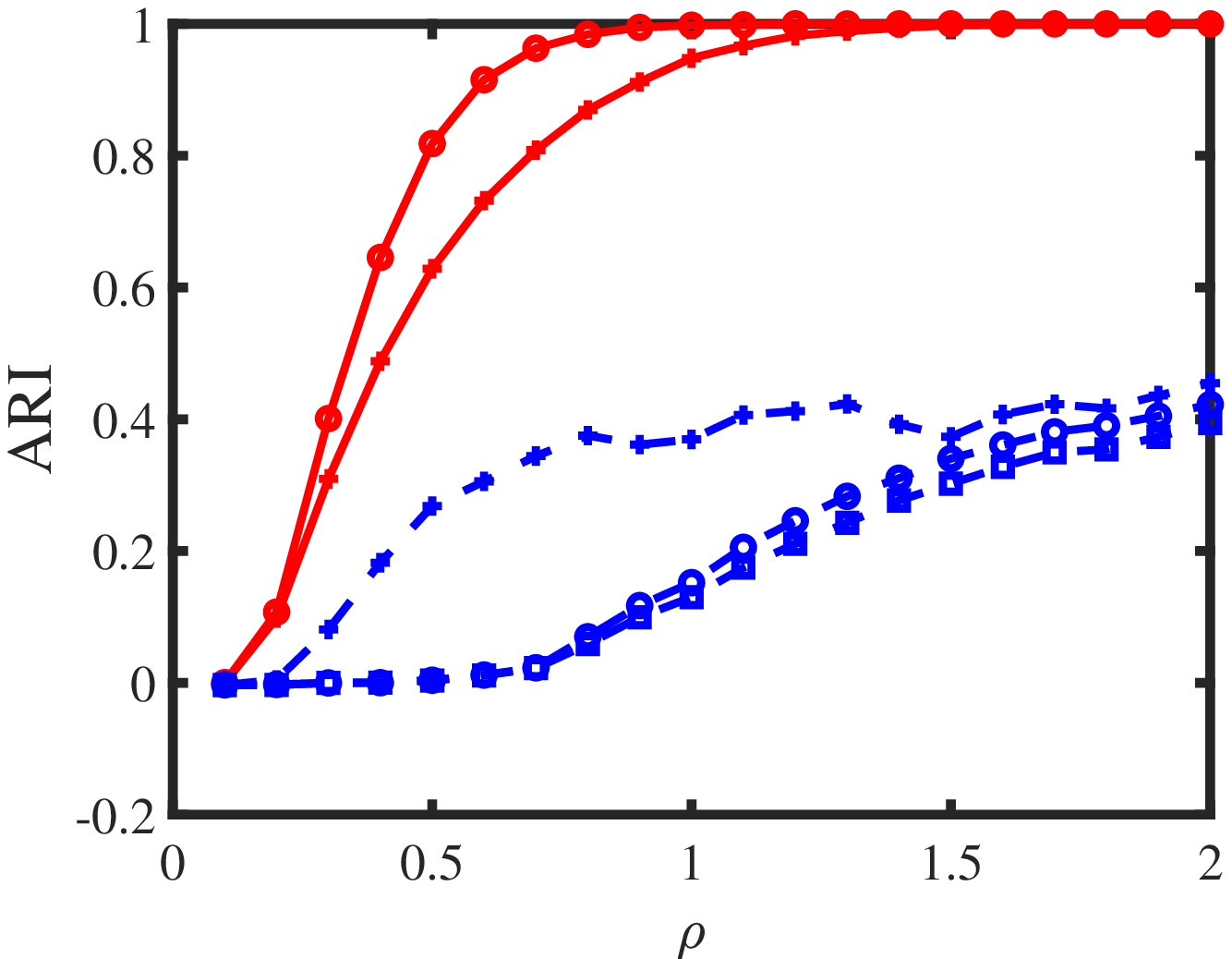}}
\subfigure[SIM 2(b)]{\includegraphics[width=0.24\textwidth]{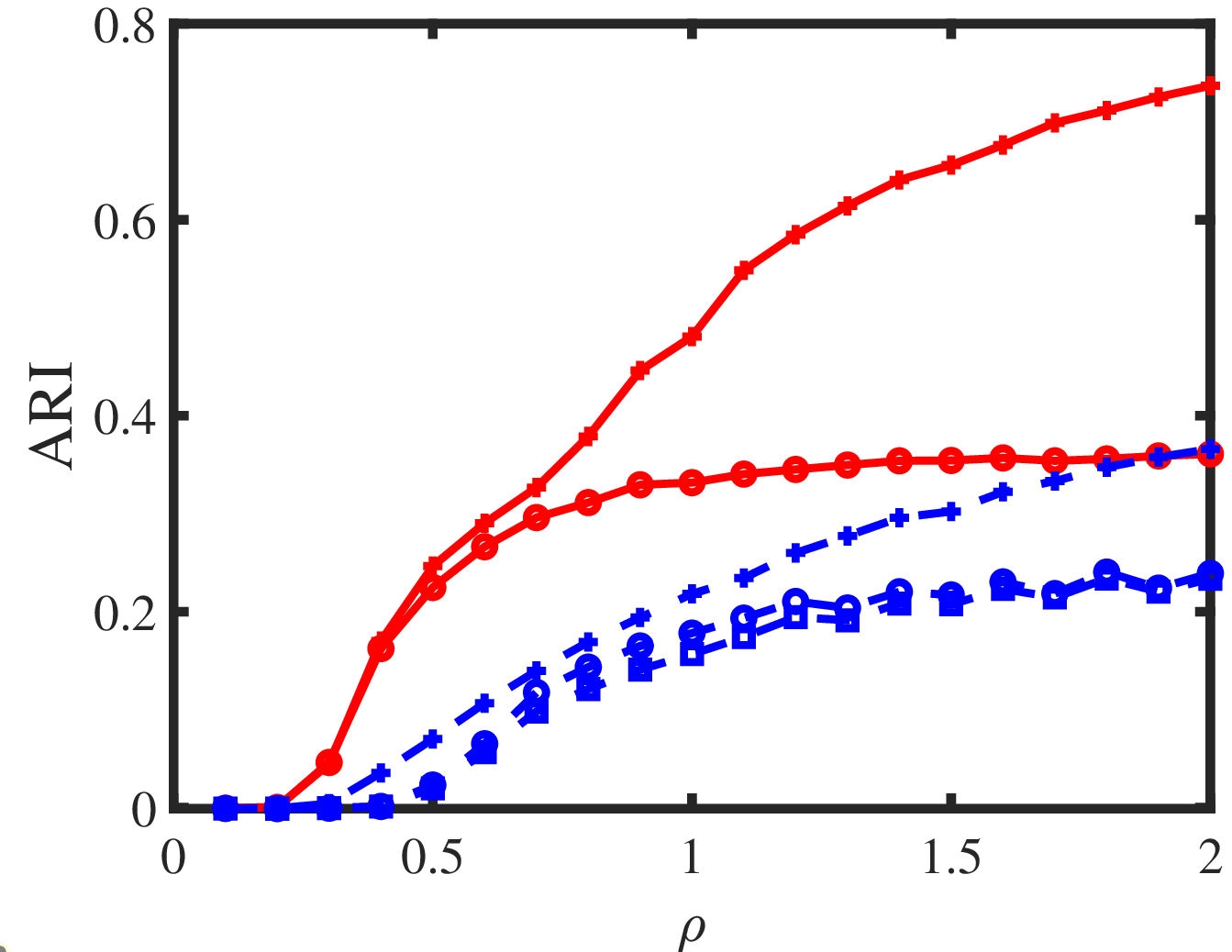}}
\subfigure[SIM 2(c)]{\includegraphics[width=0.24\textwidth]{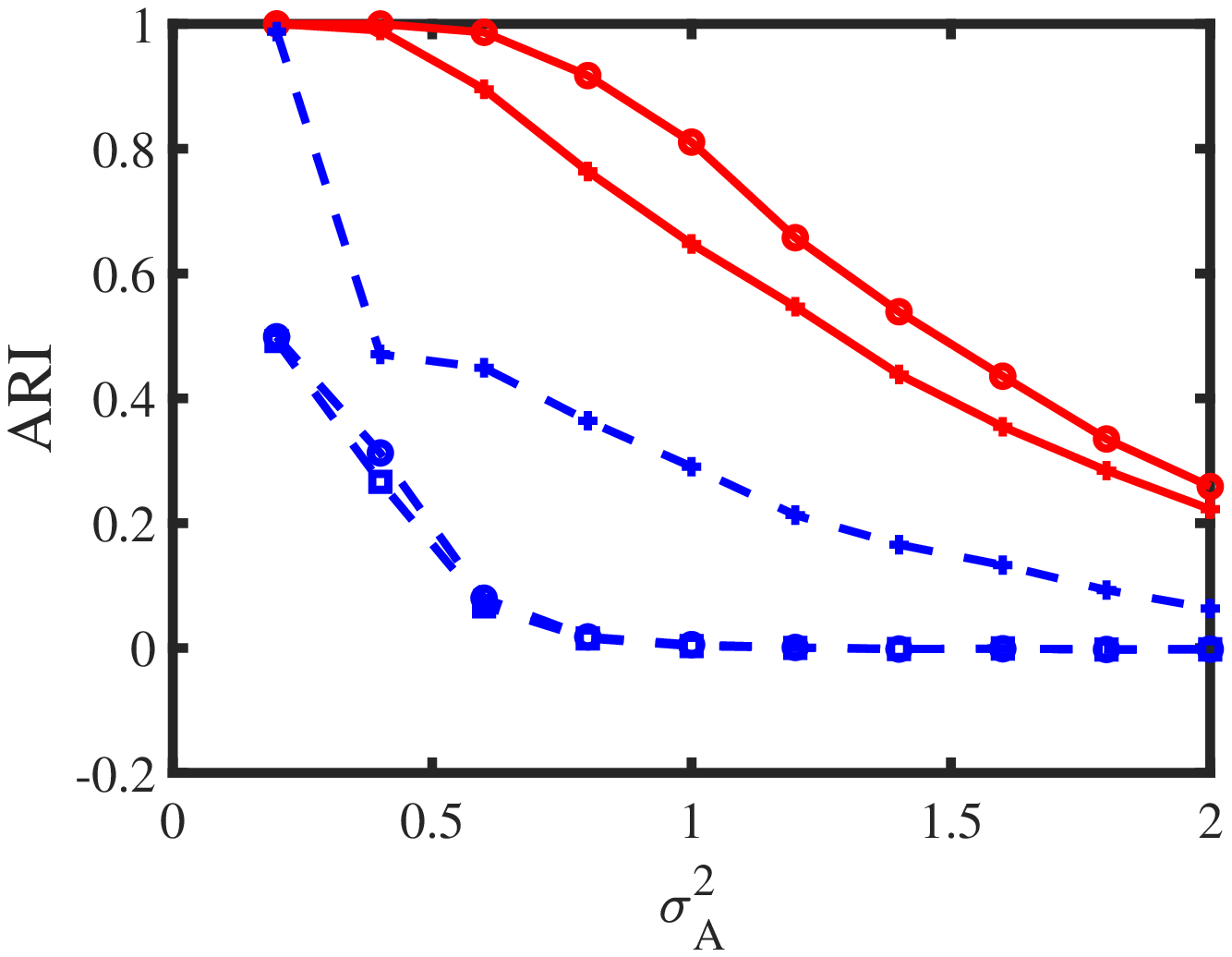}}
\subfigure[SIM 2(d)]{\includegraphics[width=0.24\textwidth]{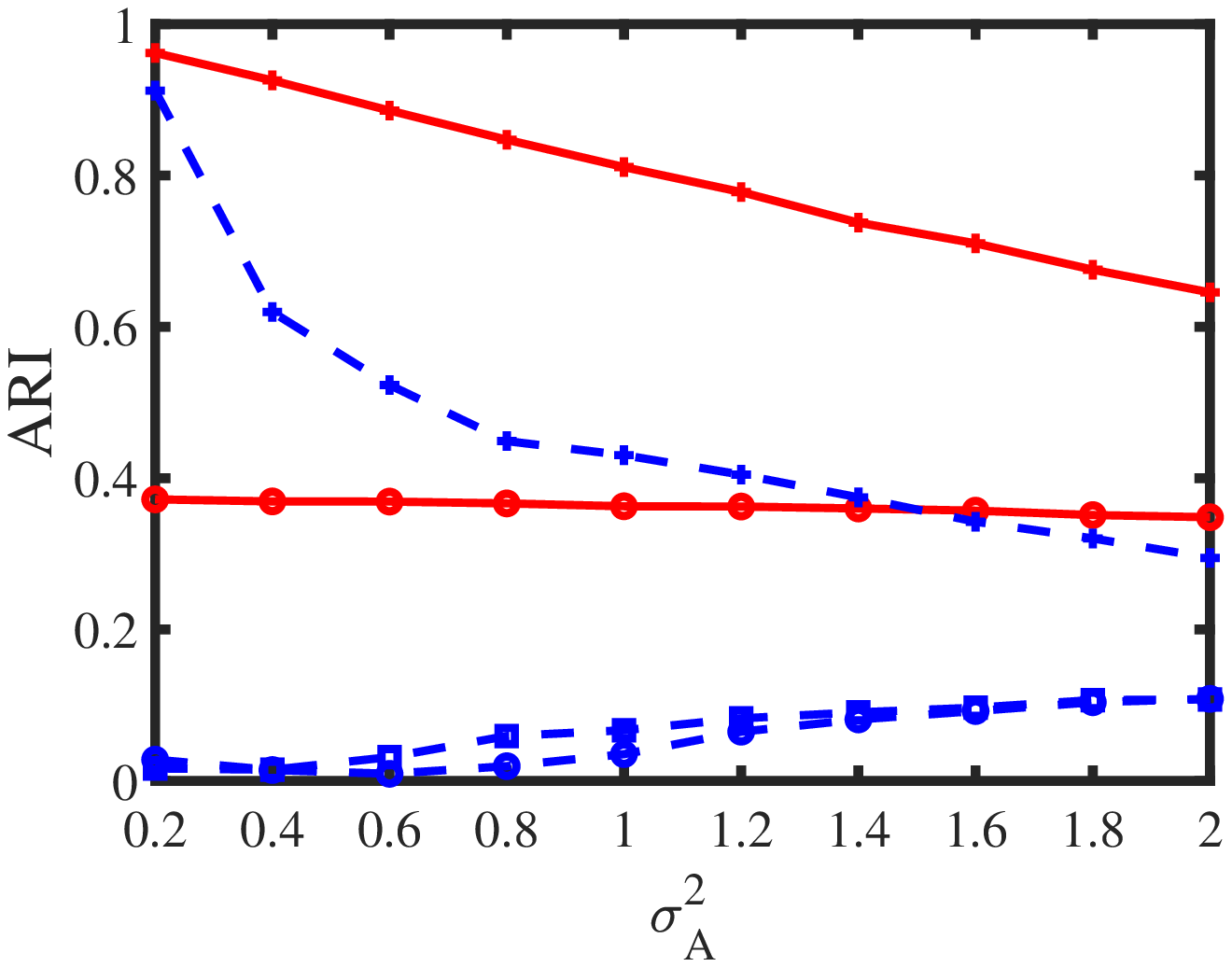}}
\subfigure[SIM 2(e)]{\includegraphics[width=0.24\textwidth]{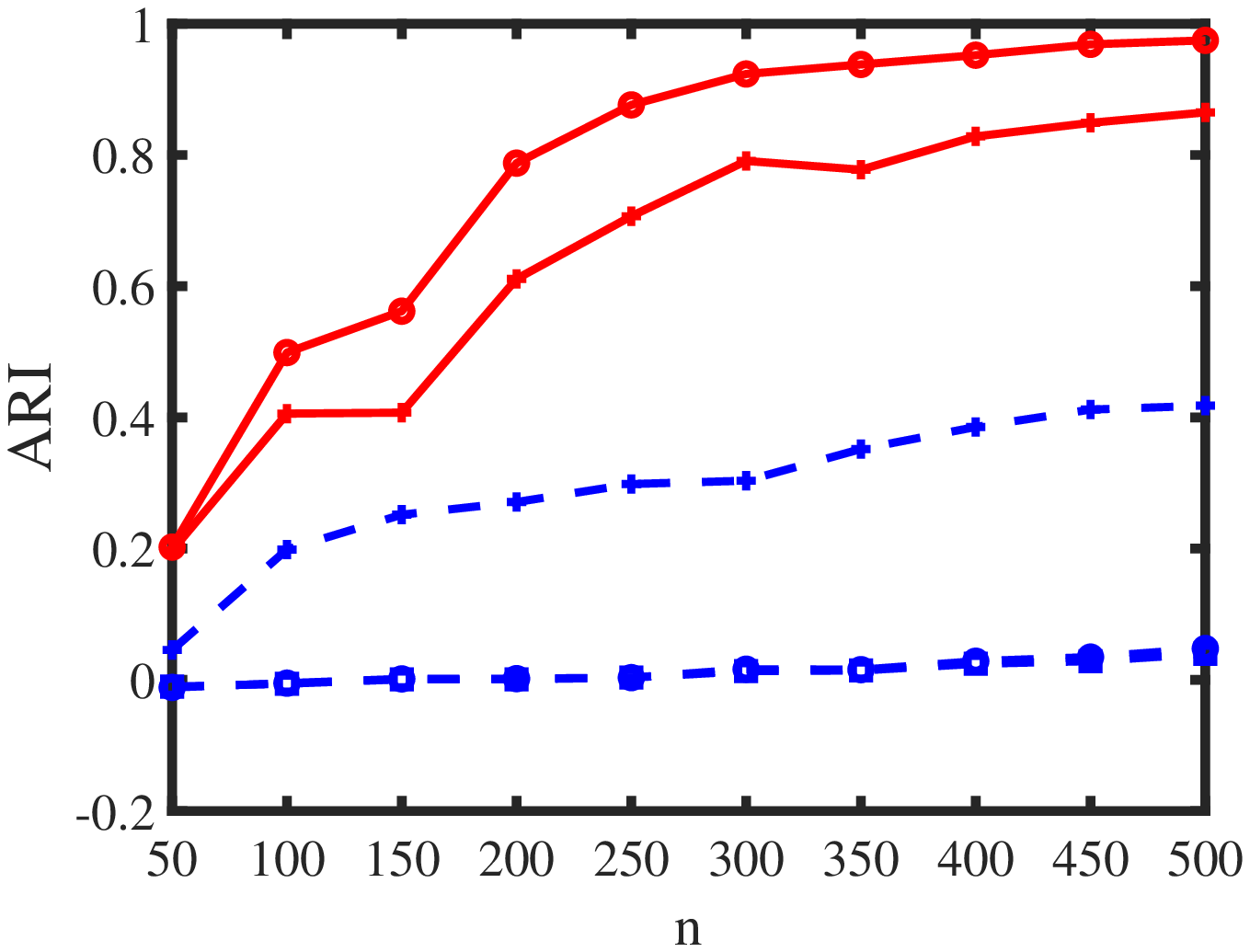}}
\subfigure[SIM 2(f)]{\includegraphics[width=0.24\textwidth]{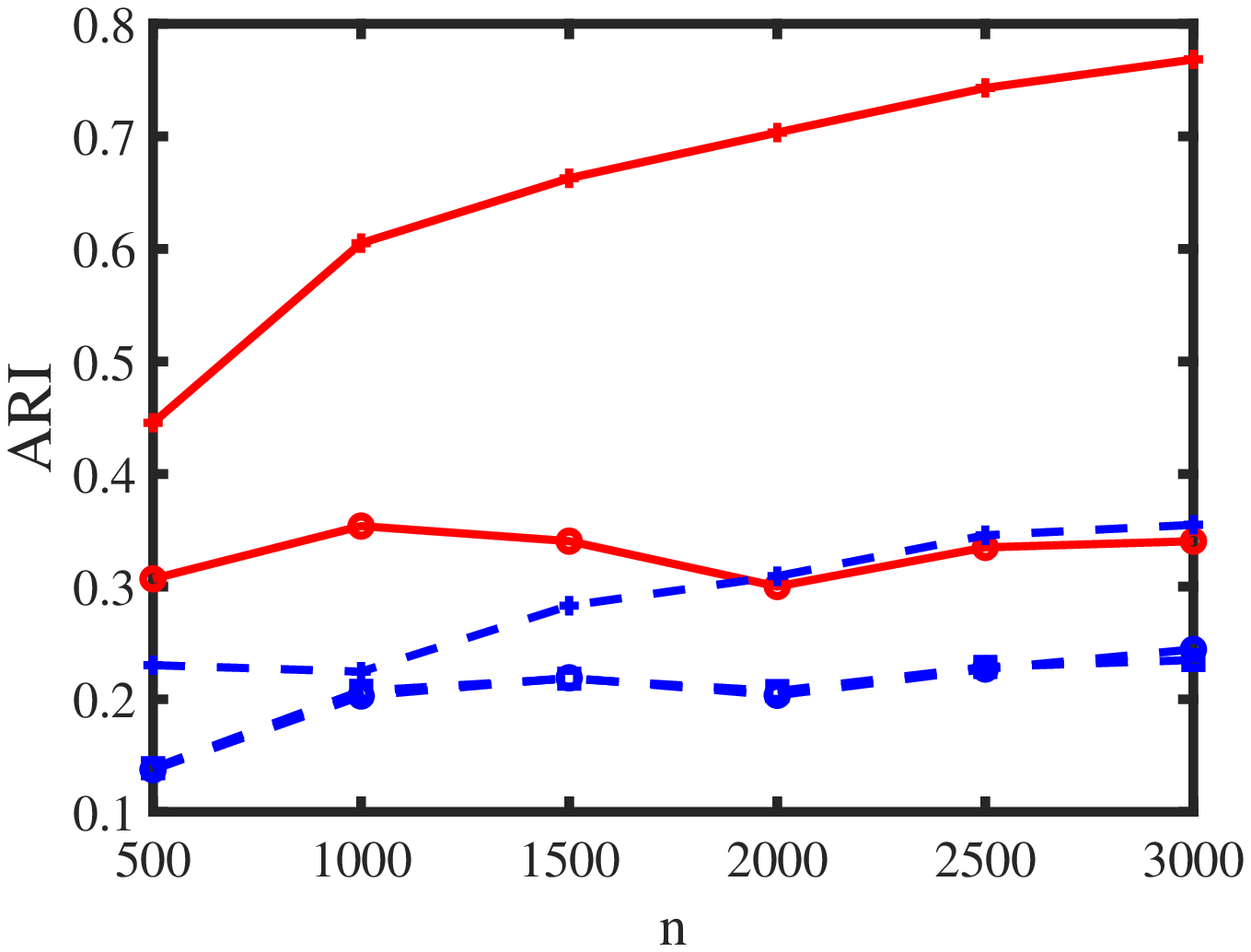}}
\caption{Numerical results of Simulation 2.}
\label{S2} 
\end{figure}
\begin{rem}
Since the original DI-SIM and rD-SCORE are proposed based on a regularized Laplacian matrix, they fail to output when $A$ has negative elements for Normal distribution (and bipartite signed networks considered in Simulation 3). To make DI-SIM and rD-SCORE work,  we make all elements of $A$ positive by adding a sufficiently large constant for DI-SIM and rD-SCORE. For BiSC, nBiSC, and D-SCORE, they always work even when $A$ has negative entries because these three algorithms are designed based on the adjacency matrix instead of the regularized Laplacian matrix.
\end{rem}
The numerical results of Simulation 2 are reported in Figure \ref{S2}. For ErrorRates, the analysis for Simulations 2(a), 2(b), 2(e), and 2(f) is similar to that of Simulation 1. For Simulations 2(c) and 2(d), we see that BiSC and nBiSC perform poorer when increasing $\sigma^{2}_{A}$, and this is consistent with findings in Examples \ref{Normal} and \ref{NormalDC} since $\sigma^{2}_{A}$ is in the numerator position of theoretical bounds of error rates for BiSC and nBiSC. Meanwhile, we also see that our BiSC and nBiSC outperform DI-SIM, D-SCORE, and rD-SCORE for Simulation 2. The numerical results of criterions NMI and ARI shown in panels (g)-(f) of Figure \ref{S2} are consistent with that of criterion ErrorRate.
\subsubsection{Bipartite signed network}
For bipartite signed network when $\mathbb{P}(A(i_{r},j_{c})=1)=\frac{1+\Omega(i_{r},j_{c})}{2}$ and $\mathbb{P}(A(i_{r},j_{c})=-1)=\frac{1-\Omega(i_{r},j_{c})}{2}$, by Examples \ref{Signed} and \ref{SignedDC}, $P$ is set as $P_{2}$ and $\rho$ should be set no larger than 1.

\textbf{Simulation 3(a): changing $\rho$ under BiDFM}. Let $n_{r}=100,n_{c}=150$. Let $\rho$ range in $\{0.1,0.2,0.3,\ldots,1\}$.

\textbf{Simulation 3(b): changing $\rho$ under BiDCDFM}. Let $n_{r}=1000,n_{c}=1500$. Let $\rho$ range in $\{0.1,0.2,0.3,\ldots,1\}$.

\textbf{Simulation 3(c): changing $n$ under BiDFM}. Let $\rho=0.5$. Let $n$ range in $\{50,100,150,\ldots,500\}$.

\textbf{Simulation 3(d): changing $n$ under BiDCDFM}. Let $\rho=1$. Let $n$ range in $\{500,750,\ldots,3000\}$.
\begin{figure}[!h]
\centering
\subfigure[SIM 3(a)]{\includegraphics[width=0.24\textwidth]{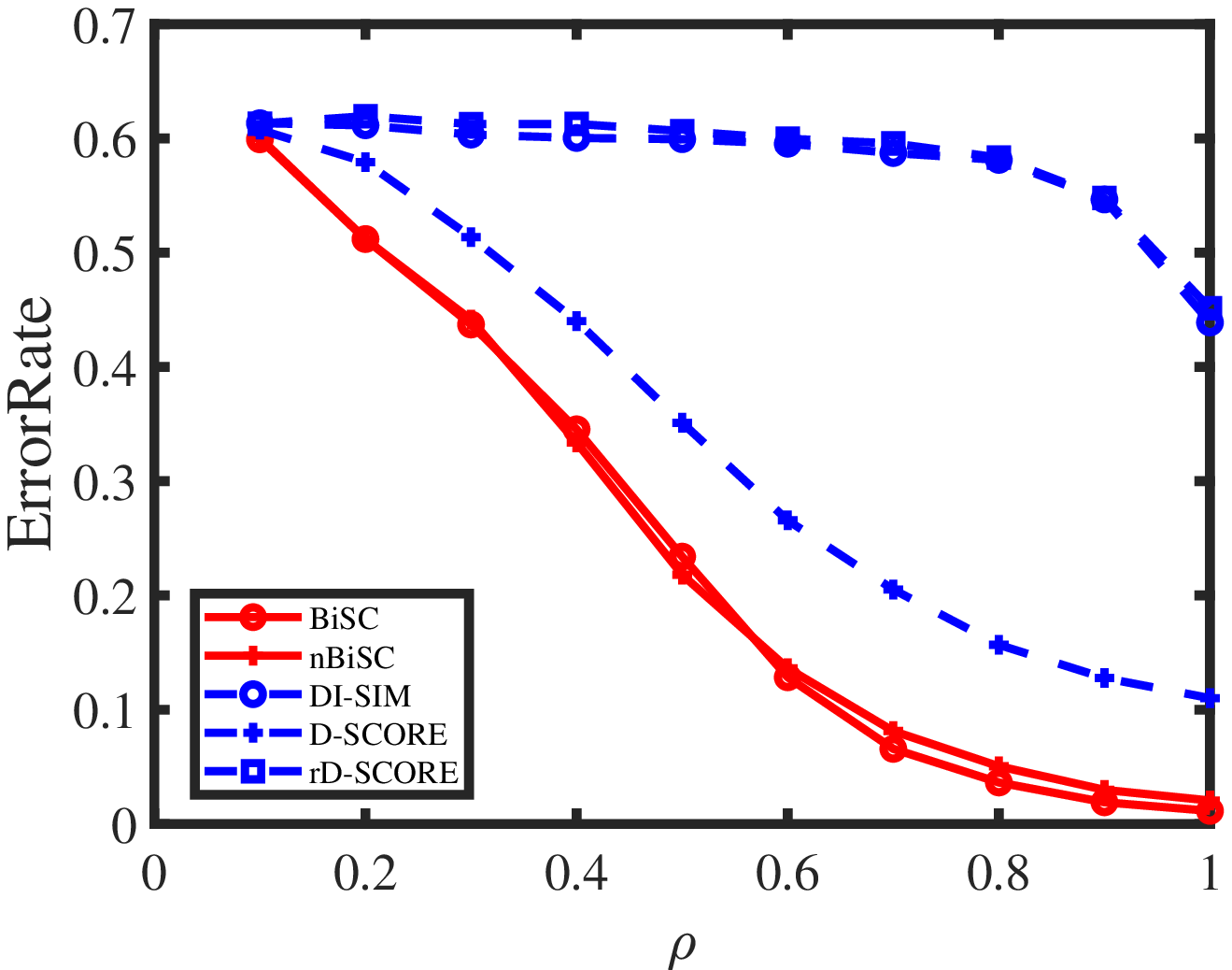}}
\subfigure[SIM 3(b)]{\includegraphics[width=0.24\textwidth]{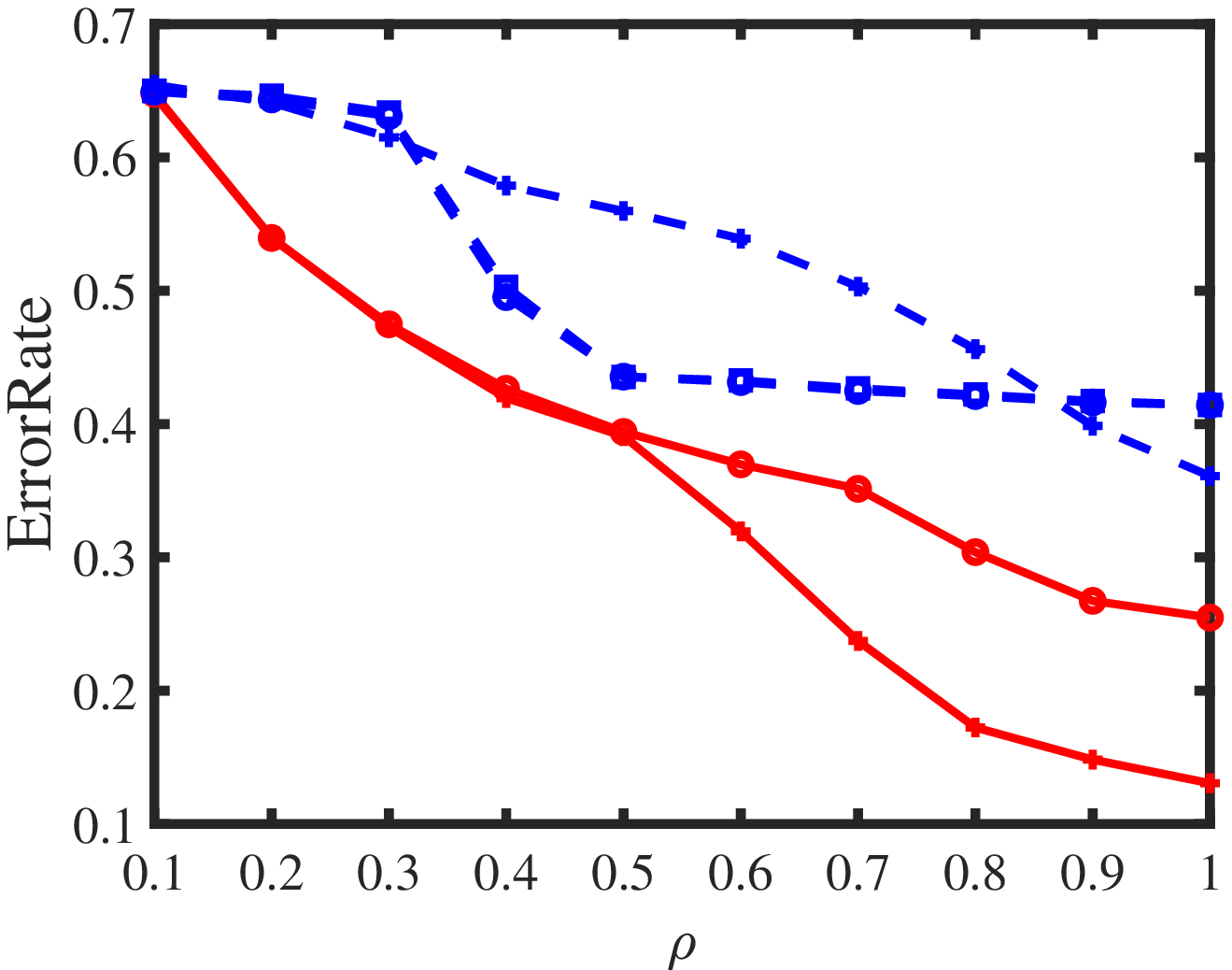}}
\subfigure[SIM 3(c)]{\includegraphics[width=0.24\textwidth]{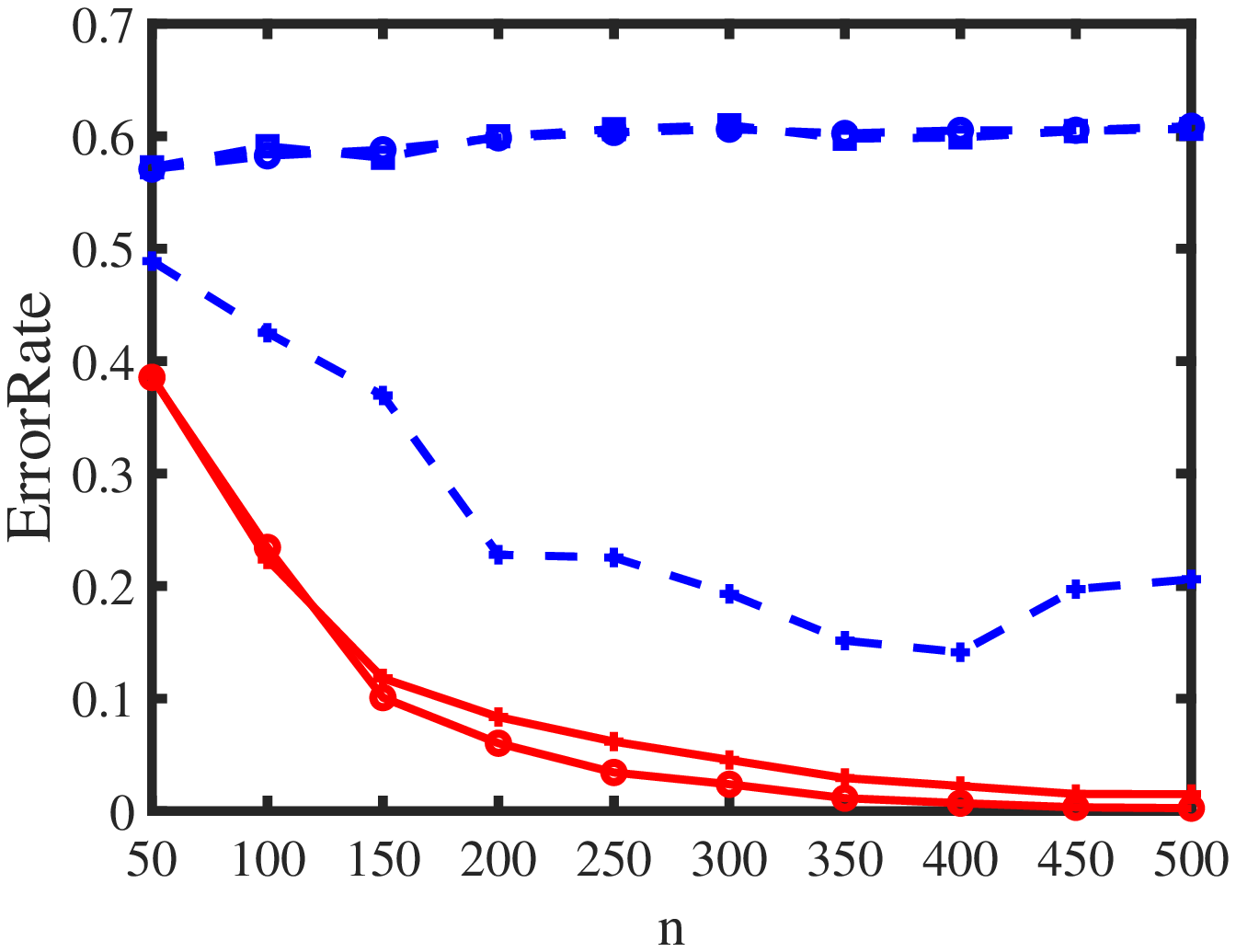}}
\subfigure[SIM 3(d)]{\includegraphics[width=0.24\textwidth]{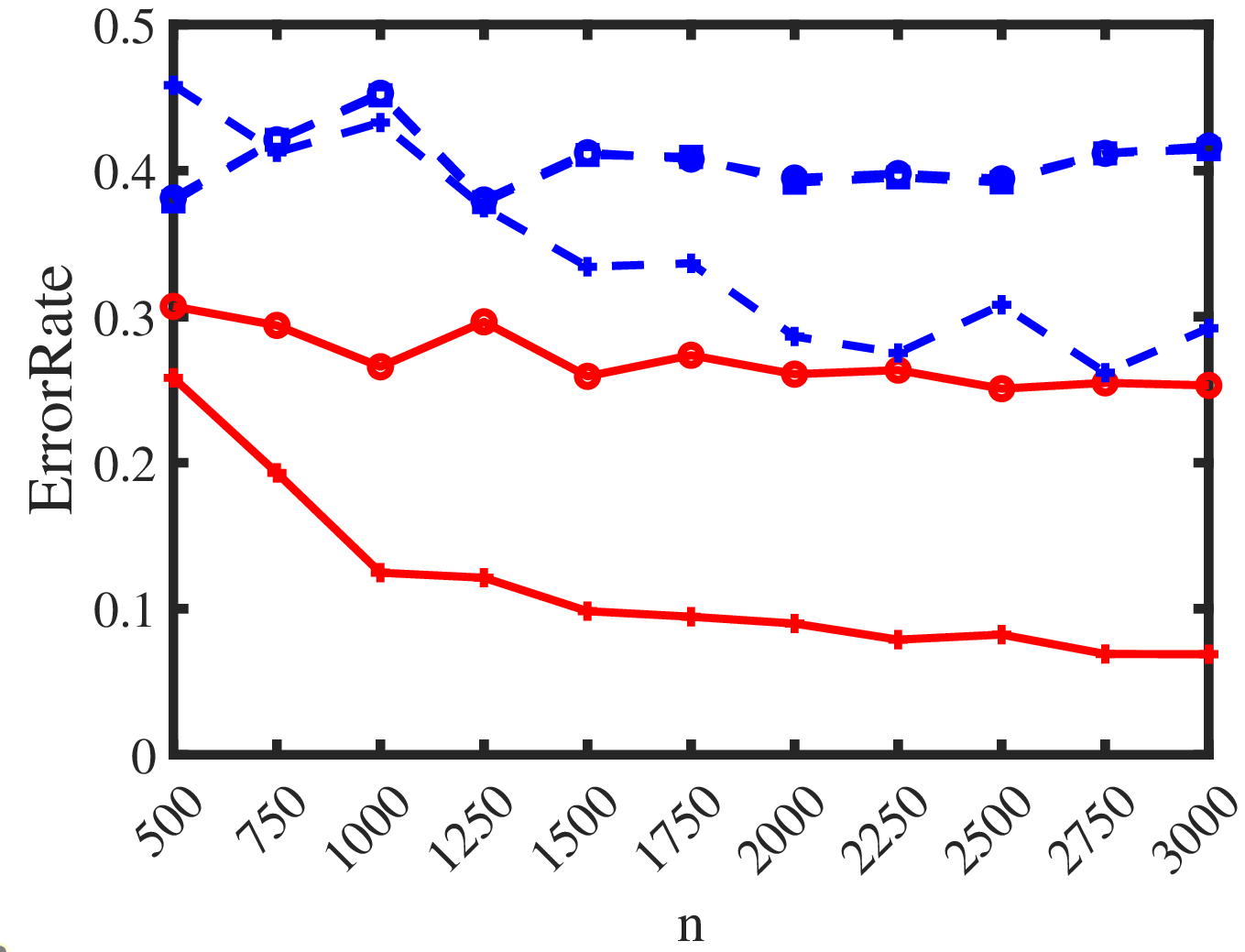}}
\subfigure[SIM 3(a)]{\includegraphics[width=0.24\textwidth]{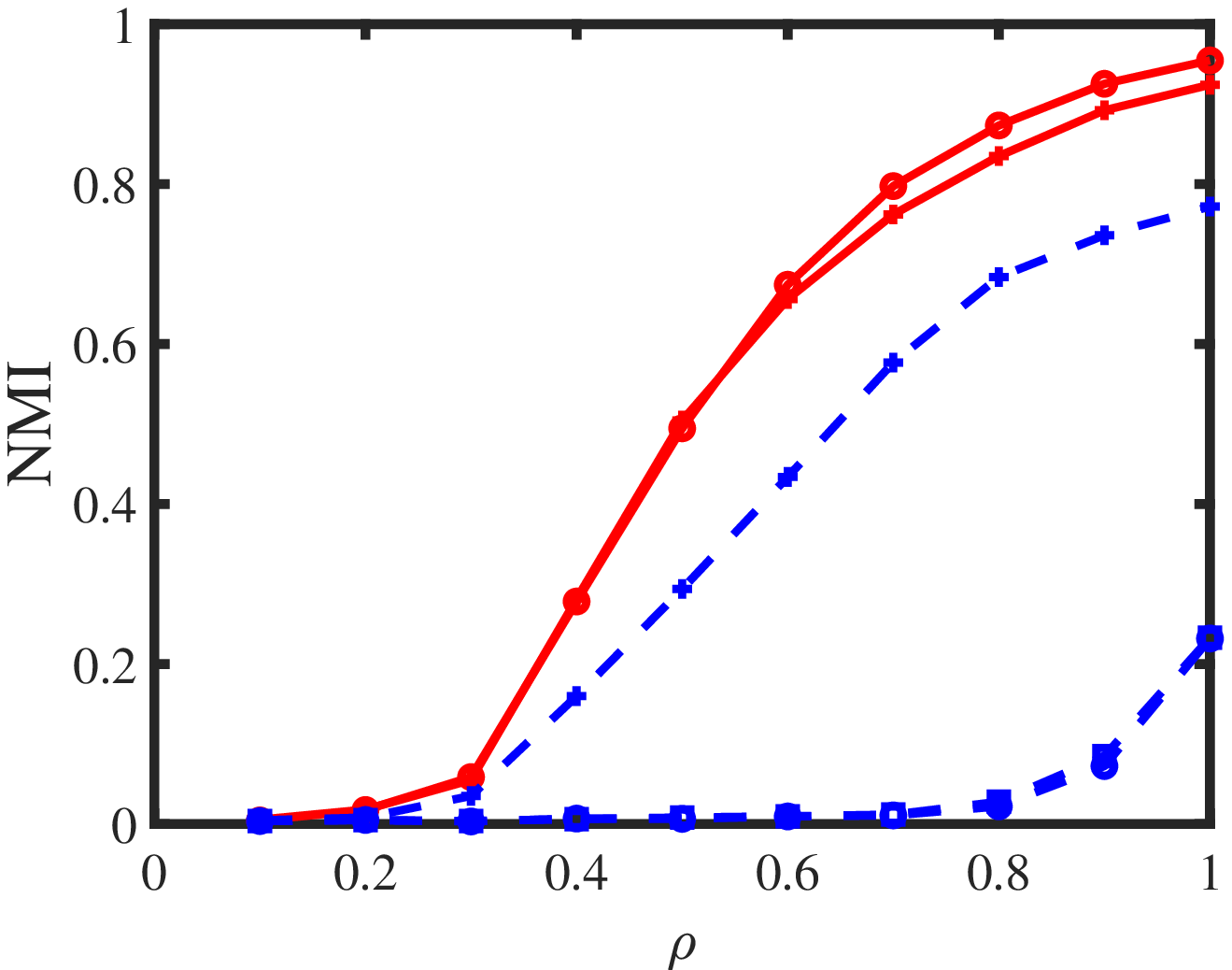}}
\subfigure[SIM 3(b)]{\includegraphics[width=0.24\textwidth]{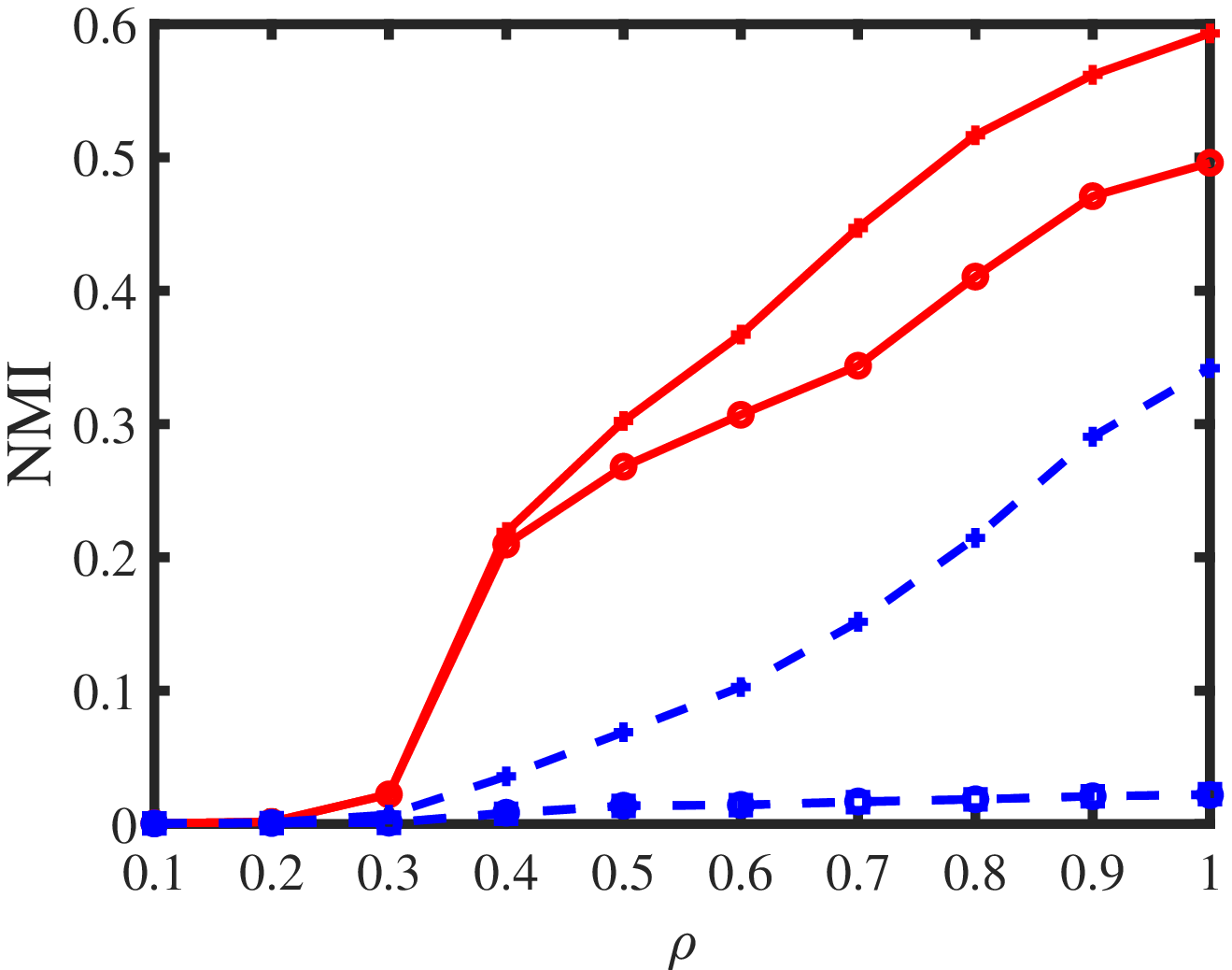}}
\subfigure[SIM 3(c)]{\includegraphics[width=0.24\textwidth]{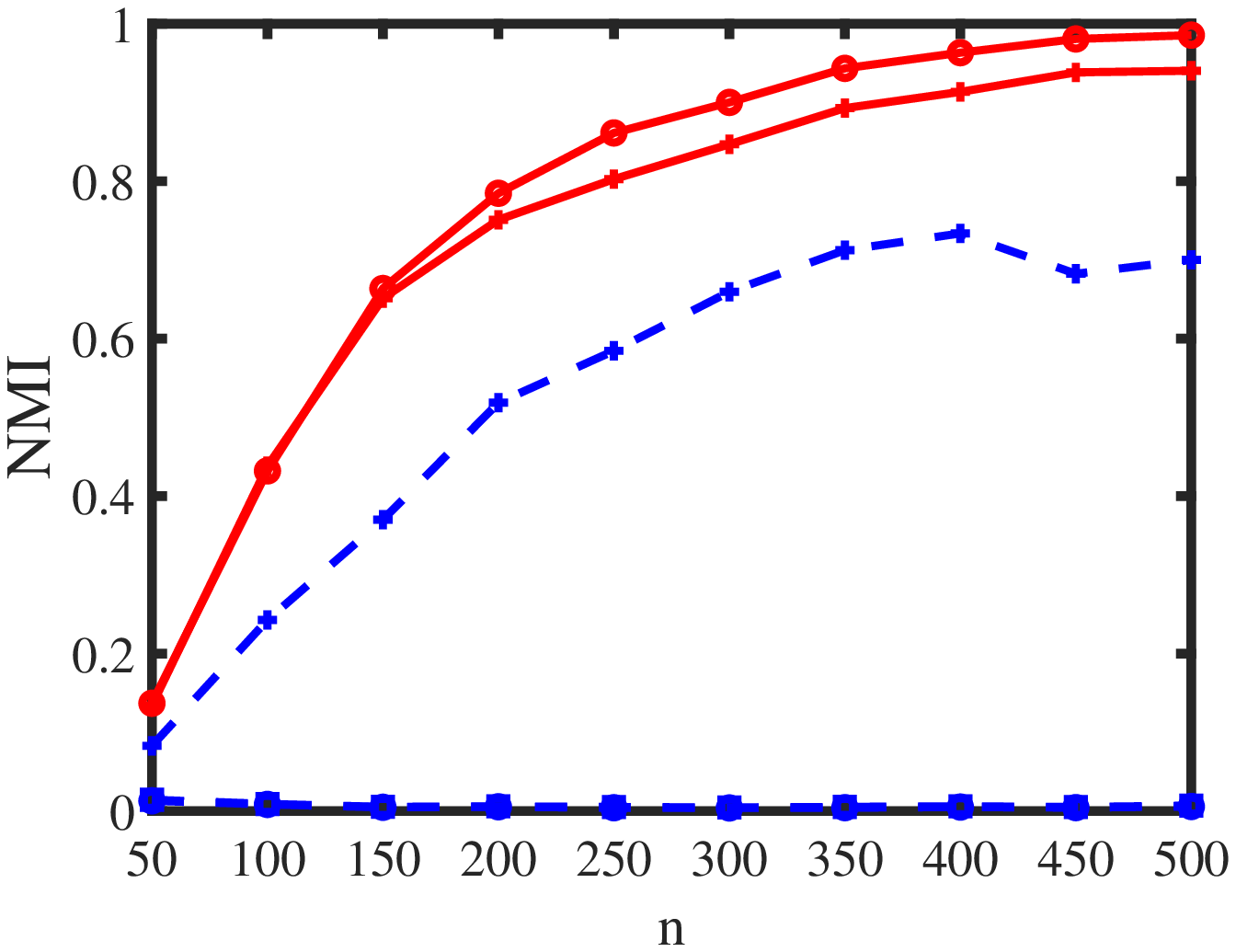}}
\subfigure[SIM 3(d)]{\includegraphics[width=0.24\textwidth]{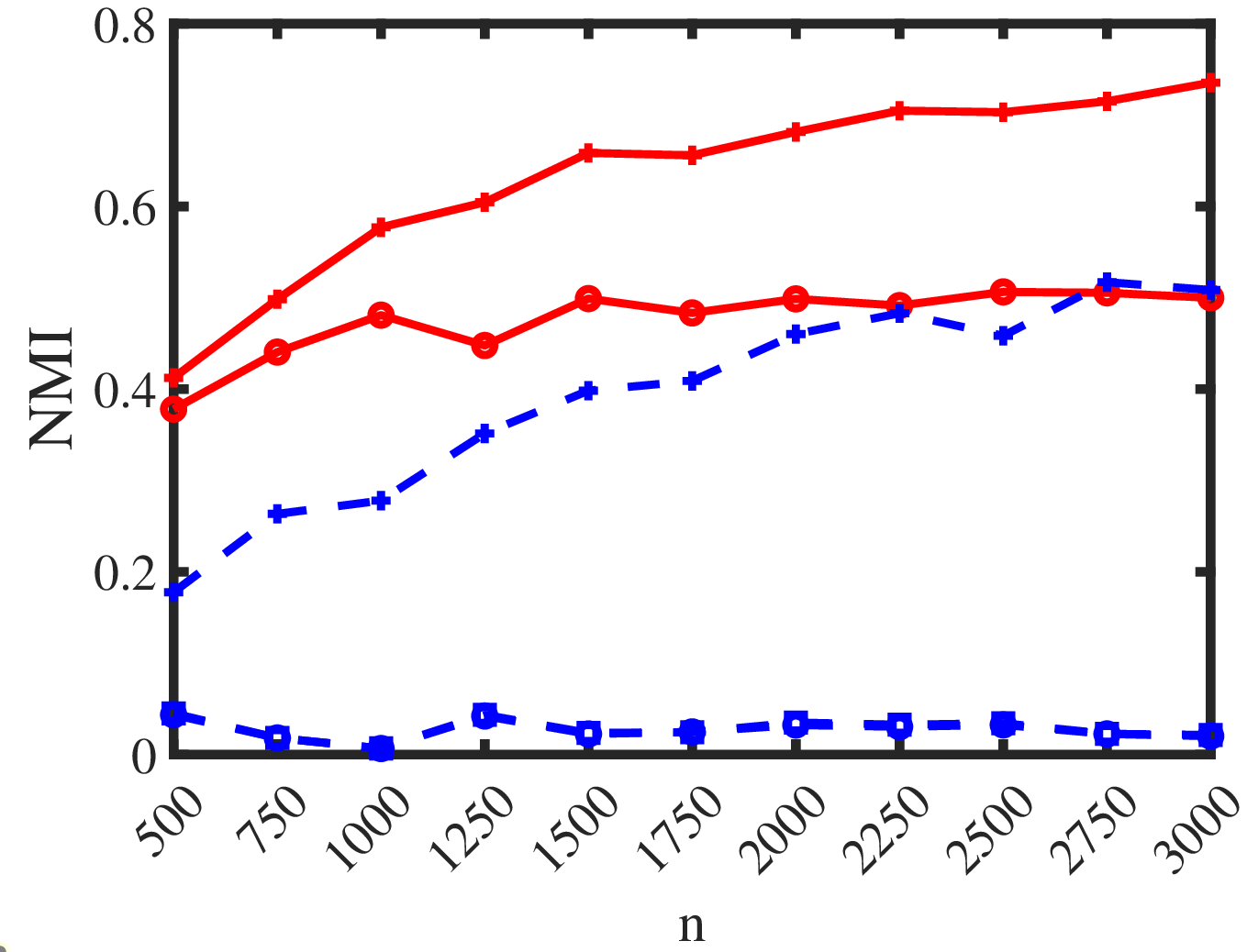}}
\subfigure[SIM 3(a)]{\includegraphics[width=0.24\textwidth]{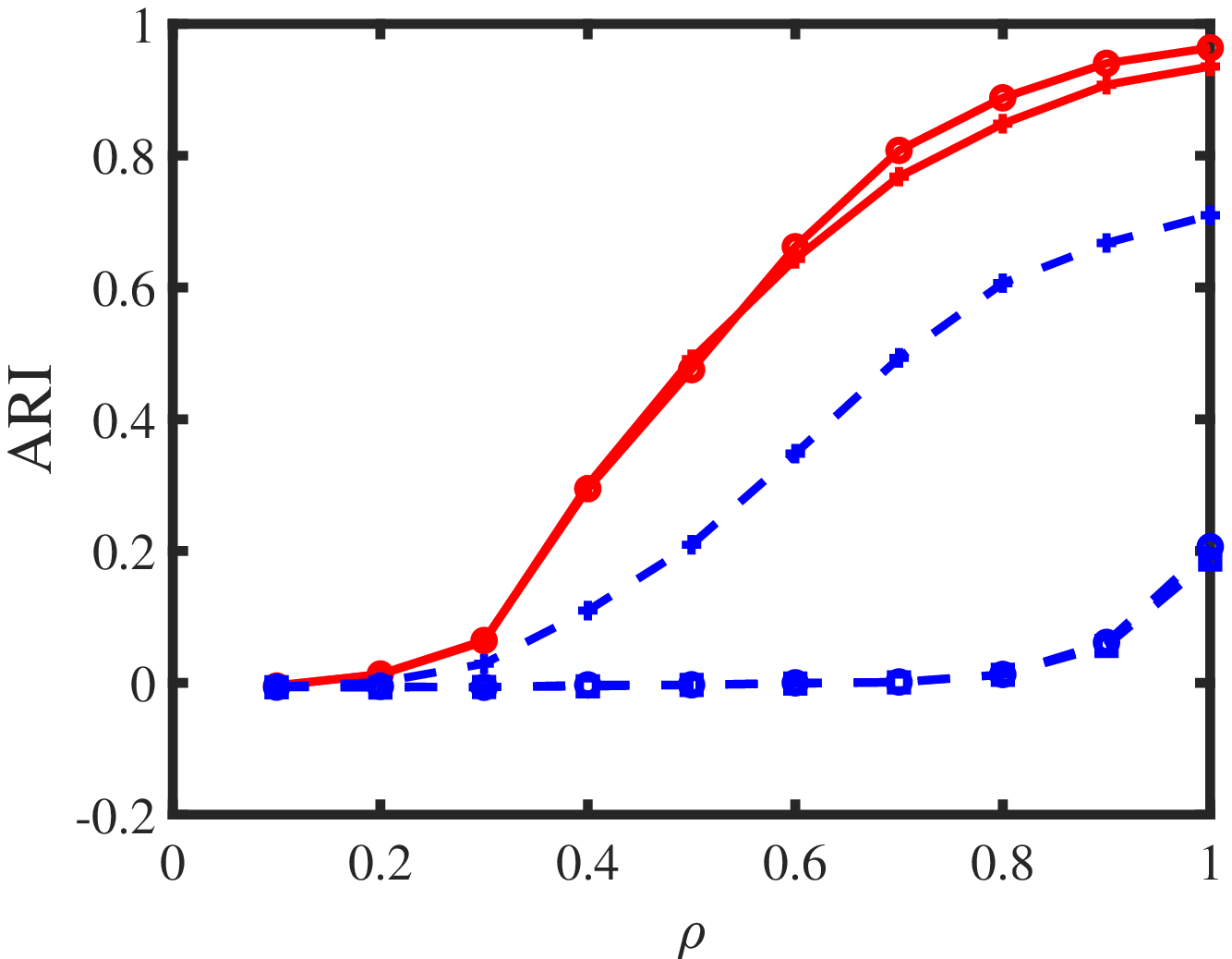}}
\subfigure[SIM 3(b)]{\includegraphics[width=0.24\textwidth]{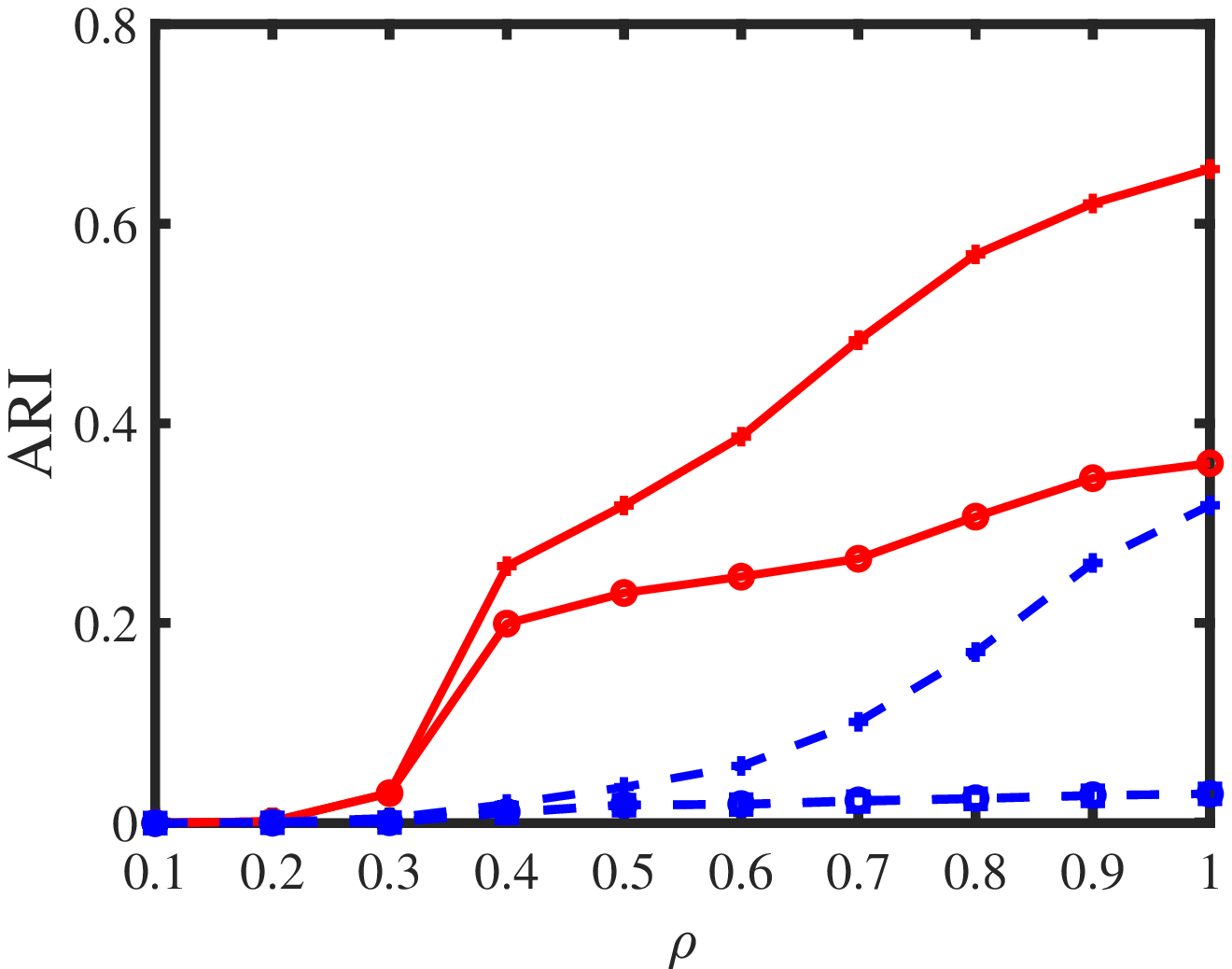}}
\subfigure[SIM 3(c)]{\includegraphics[width=0.24\textwidth]{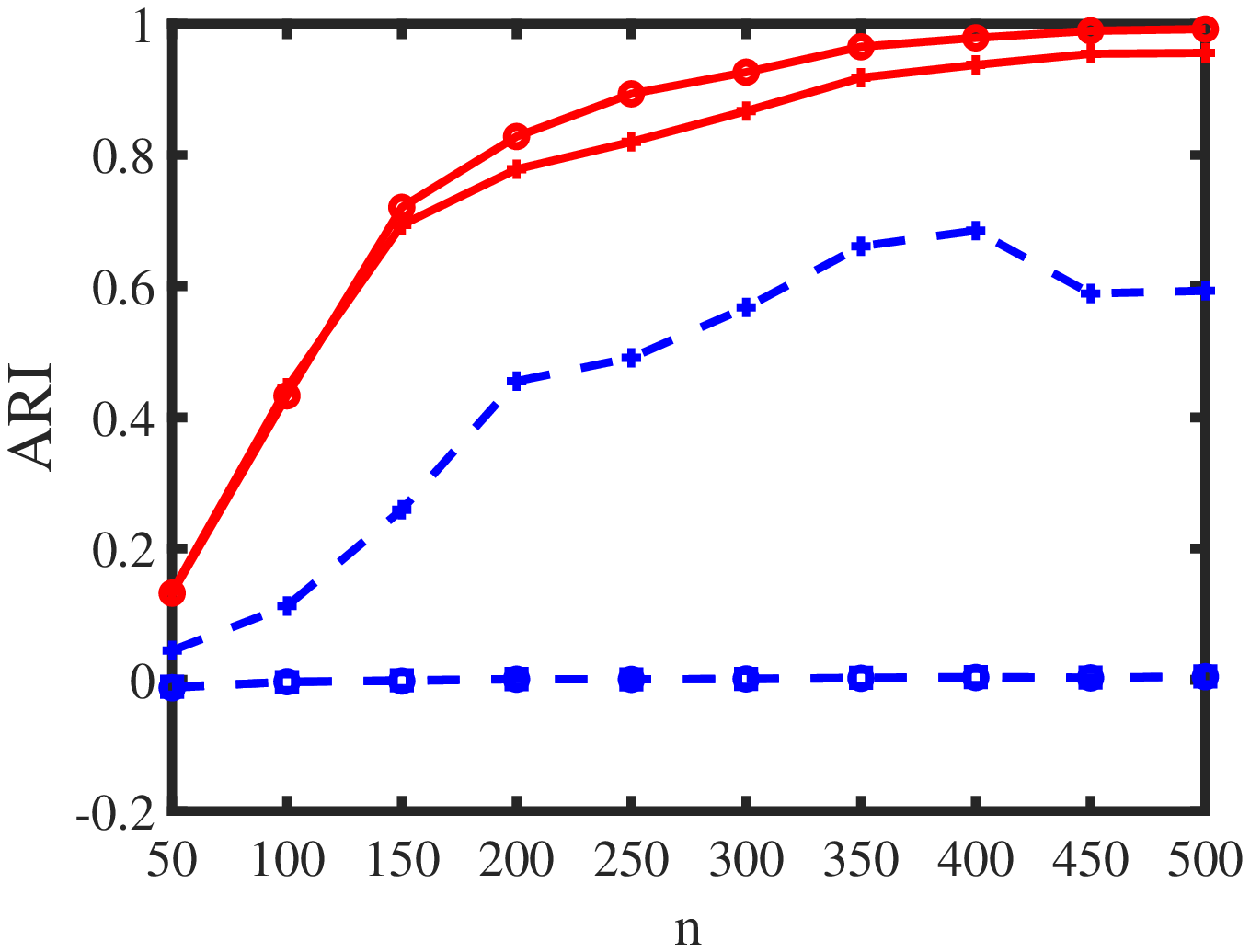}}
\subfigure[SIM 3(d)]{\includegraphics[width=0.24\textwidth]{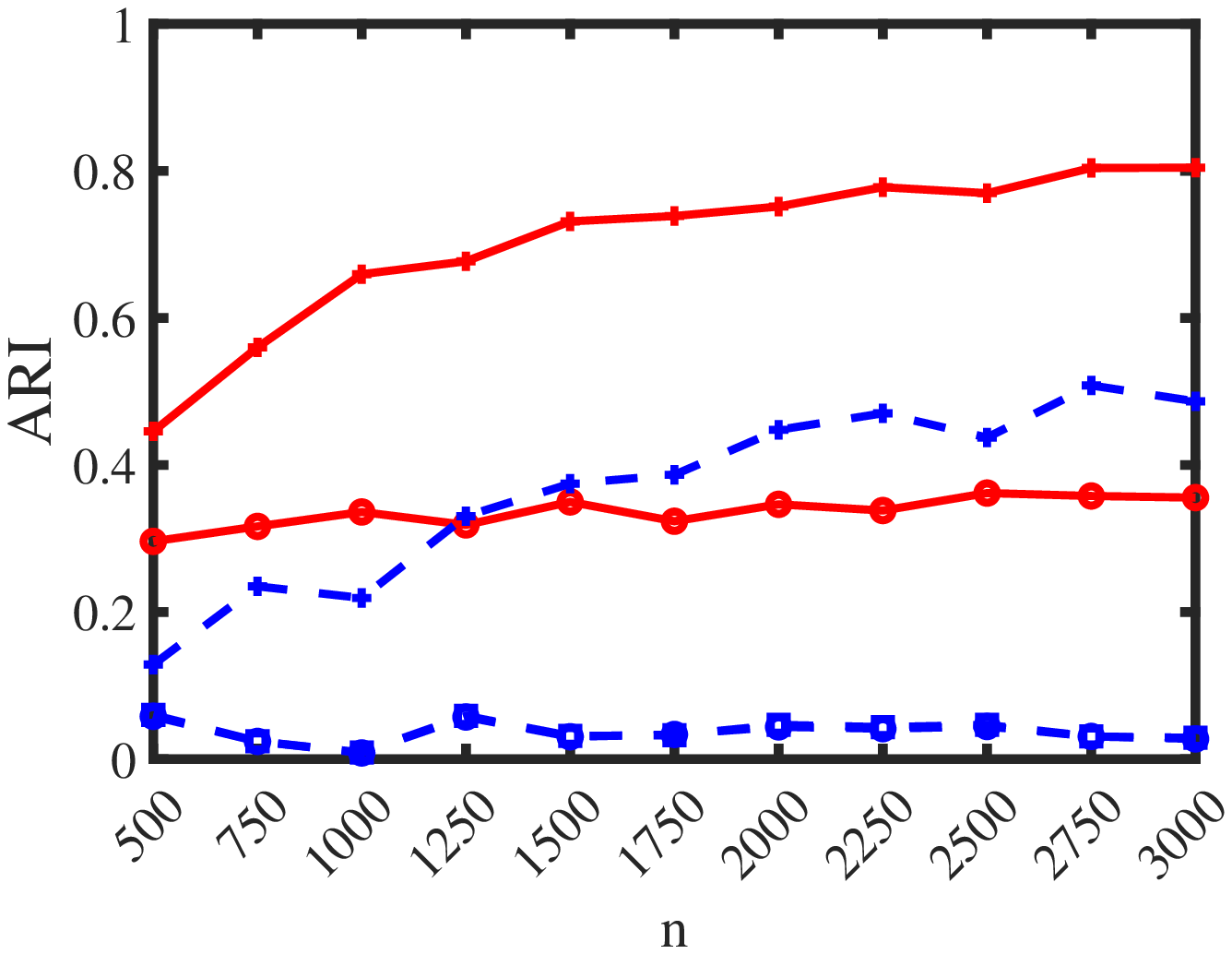}}
\caption{Numerical results of Simulation 3.}
\label{S3}
\end{figure}

The numerical results of Simulation 3 are reported in Figure \ref{S3}.  In terms of ErrorRate, from subfigures (a) and (b), we can see that the error rates of BiSC and nBiSC decrease rapidly and the error rates of the other three algorithms decrease slowly when $\rho $ increases. From subfigures (c), when $n$ increases, the error rates decrease for all methods under model BiDFM. While in subfigure (d), BiSC performs not as good as nBiSC under BiDCDFM and both BiSC and nBiSC perform better than their competitors. Thus for bipartite signed networks, BiSC and nBiSC can have similar performances under BiDFM, while when we consider the degree heterogeneity, BiSC may have large error rate. Numerical results of NMI and ARI displayed in Figure \ref{S3} (e)-(l) are consistent with that of ErrorRate. The numerical result supports analysis in Examples \ref{Signed} and \ref{SignedDC}.
\subsection{Real-world datasets}
In this section, we apply our proposed algorithms to several real-world directed weighted networks. Table \ref{realdata1} presents basic information about networks and Table \ref{realdata2} summaries the statistics of these datasets.
For all these networks, row nodes are the same as column nodes. Political blogs network can be downloaded from \url{http://www-personal.umich.edu/~mejn/netdata/}, Facebook-like Social Network can be downloaded from \url{https://toreopsahl.com/datasets/#online_social_network}, and the other three datasets are downloaded from \url{http://konect.cc/} (see also \cite{kunegis2013konect}). For visualization, we plot adjacency matrices of Crisis in a Cloister and Dutch college in Figure \ref{RealA}.

\begin{table}[h!]
\footnotesize
	\centering
	\caption{Basic information of networks}
	\label{realdata1}
	\resizebox{\columnwidth}{!}{
	\begin{tabular}{cccccccccc}
\hline\hline &Source&Directed?&Weighted?&Node meaning&Edge meaning\\
\hline
Political blogs&\cite{adamic2005political}&Yes&Yes&weblog&hyperlink\\ Crisis in a Cloister&\cite{breiger1975algorithm}&Yes&Yes&Monk&	Ratings\\
Dutch college&\cite{van1999friendship}&Yes&Yes&Student&Rating\\
Highschool&\cite{coleman1964introduction}&Yes&Yes&Boy&Friendship\\
Facebook-like Social Network&\cite{opsahl2009clustering}&Yes&Yes&User&Messages\\
\hline\hline
\end{tabular}}
\end{table}

\begin{table}[h!]
\footnotesize
	\centering
	\caption{Summarized statistics for network datasets}
	\label{realdata2}
	\resizebox{\columnwidth}{!}{
\begin{tabular}{cccccccccc}
\hline\hline
&$n$&$\mathrm{max}_{i,j}A(i,j)$&$\mathrm{min}_{i,j}A(i,j)$&$\sum_{i,j}^{n}|A(i,j)|$&\#Edges&\%Positive edges\\
\hline
Political blogs&1490&2&0&19090&19025&100\%\\
Crisis in a Cloister&18&1&-1&184&189&53.97\%\\
Dutch college&32&3&-1&162&3062&98.14\%\\
Highschool&70&2&0&506&366&100\%\\
Facebook-like Social Network&1899&98&0&59835&20296&100\%\\
\hline\hline
\end{tabular}}
\end{table}

\begin{figure}
\centering
\subfigure[Crisis in a Cloister]{\includegraphics[width=0.49\textwidth]{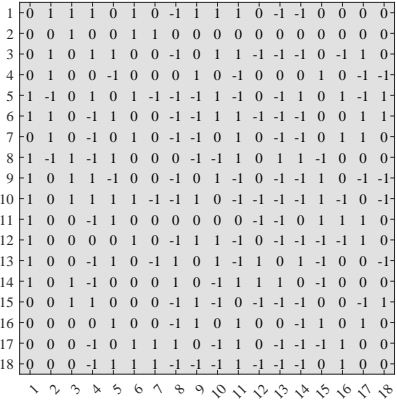}}
\subfigure[Dutch college]{\includegraphics[width=0.49\textwidth]{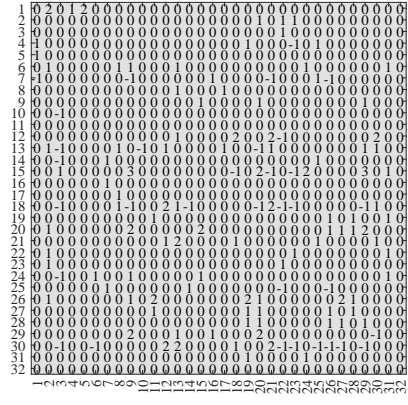}}
\caption{Adjacency matrices of Crisis in a Cloister and Dutch college.}
\label{RealA} 
\end{figure}
Before applying our algorithms to these datasets, we introduce node degree and how to estimate the number of clusters. For adjacency matrix $A\in\mathbb{R}^{n_{r}\times n_{c}}$ of a bipartite weighted network, since $A$ may contain negative elements, we define node degrees as below: let $d_{r}(i_{r})=\sum_{j_{c}=1}^{n_{c}}|A(i_{r},j_{c})|$ be the degree of row node $i_{r}$, and let $d_{c}(j_{c})=\sum_{i_{r}=1}^{n_{r}}|A(i_{r},j_{c})|$ be the degree of column node $j_{c}$. For convenience, call $d_{r}(i_{r})$ as the out-degree of row node $i_{r}$ and $d_{c}(j_{c})$ as the in-degree of column node $j_{c}$.  $d_{r}$ and $d_{c}$ records node degree variety. Meanwhile, for real-world directed weighted networks, since we have no additional information to find the exact numbers of row and column clusters, we set $K_{r}=K_{c}=K$.

\subsubsection{Real networks with ground truth}
First, we consider the Political blogs network. Since the ground-truth labels for this network are suggested by the original authors or data creators, we can calculate ErrorRate, NMI, and ARI for the 5 aforementioned algorithms on this network. Because there are two political parties ``liberal'' and ``conservative'', we set $K_{r}=K_{c}=2$ for this dataset. For Political blogs, the original data has 1490 nodes, and we call its adjacency matrix $A_{1490\times 1490}$ for convenience. For $A_{1490\times 1490}$, as shown in Table \ref{realdata2}, it has 19025 edges, where the weight of 65 edges is 2 and the weight of the rest 18960 edges is 1, i.e., the Political blogs network is nearly unweighted because the ratio of non-unit weight is $\frac{65}{19025}\approx0.0034$, a number close to zero. Let $\mathcal{I}_{r,0}=
\{i\in\{1,2,\ldots,1490\}: d_{r}(i)=0\}$ be the set containing nodes with zero out-degree, $\mathcal{I}_{c,0}=
\{i\in\{1,2,\ldots,1490\}: d_{c}(i)=0\}$ be the set containing nodes with zero in-degree, $\mathcal{I}_{0}=
\{i\in\{1,2,\ldots,1490\}: d_{r}(i)=0\mathrm{~and~}d_{c}(i)=0\}$ be the set containing nodes with both zero out-degree and zero in-degree, and $\mathcal{I}=
\{i\in\{1,2,\ldots,1490\}: d_{r}(i)=0\mathrm{~or~}d_{c}(i)=0\}$ be the set containing nodes with zero out-degree or zero in-degree, i.e., $\mathcal{I}_{0}=\mathcal{I}_{r,0}\cap\mathcal{I}_{c,0}$ and $\mathcal{I}=\mathcal{I}_{r,0}\cup\mathcal{I}_{c,0}$. We find that $|\mathcal{I}_{r,0}|=500, |\mathcal{I}_{c,0}|=425,|\mathcal{I}_{0}|=266$ and $|\mathcal{I}|=659$, i.e.,  $A_{1490\times 1490}$ has 500 nodes with zero out-degree, 425 nodes with zero in-degree, and 266 nodes with zero in-degree and out-degree. Based on this finding and the fact that the ground-truth labels for all 1490 nodes are known, we construct five new political blogs networks as below:
\begin{itemize}
  \item[(1)] Let $A_{1224\times 1224}$ be the adjacency matrix obtained by removing rows and columns in  $A_{1490\times 1490}$ respective to nodes in $\mathcal{I}_{0}$.
  \item[(2)] Let $A_{831\times 831}$ be the adjacency matrix obtained by removing rows and columns in  $A_{1490\times 1490}$ respective to nodes in $\mathcal{I}$.
  \item[(3)] Let $A_{990\times 1490}$ be the adjacency matrix obtained by removing rows in $A_{1490\times 1490}$ respective to nodes in $\mathcal{I}_{r,0}$.
  \item[(4)] Let $A_{1490\times 1065}$ be the adjacency matrix obtained by removing columns in $A_{1490\times 1490}$ respective to nodes in $\mathcal{I}_{c,0}$.
  \item[(5)] Let $A_{990\times 1065}$ be the adjacency matrix obtained by removing rows (and columns) in $A_{1490\times 1490}$ respective to nodes in $\mathcal{I}_{r,0}$ (and $\mathcal{I}_{c,0}$).
\end{itemize}
Note that we construct three bipartite weighted networks $A_{990\times 1490}, A_{1490\times 1065}$, and $A_{990\times 1065}$ with true node labels from the Political blogs network. In Figure \ref{Pblogs}, panels (a)-(l) plot the distributions of degrees, and panels (m)-(r) present the leading 8 singular values of $A$ for the Political blogs network. We see that the distributions have long tails which suggest the heterogeneity of node degrees, and eigengap suggests $K=2$ which is consistent with the ground truth since there are two political parties. Tabels \ref{ErrorRateBlogs}, \ref{NMIBlogs}, and \ref{ARIBlogs} record ErrorRate, NMI, and ARI, respectively. We see that nBiSC performs much better than BiSC. A possible reason for this phenomenon is that nBiSC considers the degree heterogeneity. Meanwhile, for $ A_{831\times 831}, A_{1490\times1065},$ and $A_{990\times 1065}$, nBiSC enjoys competitive performances with DI-SIM, D-SCORE, and rD-SCORE; for $A_{1490\times 1490}$, nBiSC and DI-SIM perform better than the other three approaches; for $A_{1224\times 1224}$, nBiSC, DI-SIM, and D-SCORE outperform BiSC and rD-SCORE. For $A_{990\times1490}$, nBiSC and DI-SIM have similar performances that are slightly poorer than D-SCORE and rD-SCORE. Unlike the numerical results of Simulations 2 and 3 where nBiSC outperforms DI-SIM, D-SCORE, and rD-SCORE, nBiSC enjoys competitive performances with these three approaches on Political blogs. A possible reason for this phenomenon is, the Political blogs network is nearly unweighted as we analyzed early while DI-SIM, D-SCORE, and rD-SCORE are designed for unweighted networks.
\begin{figure}
	\centering
	\subfigure[$A_{1490\times1490}$: $d_{r}$]{\includegraphics[width=0.24\textwidth]{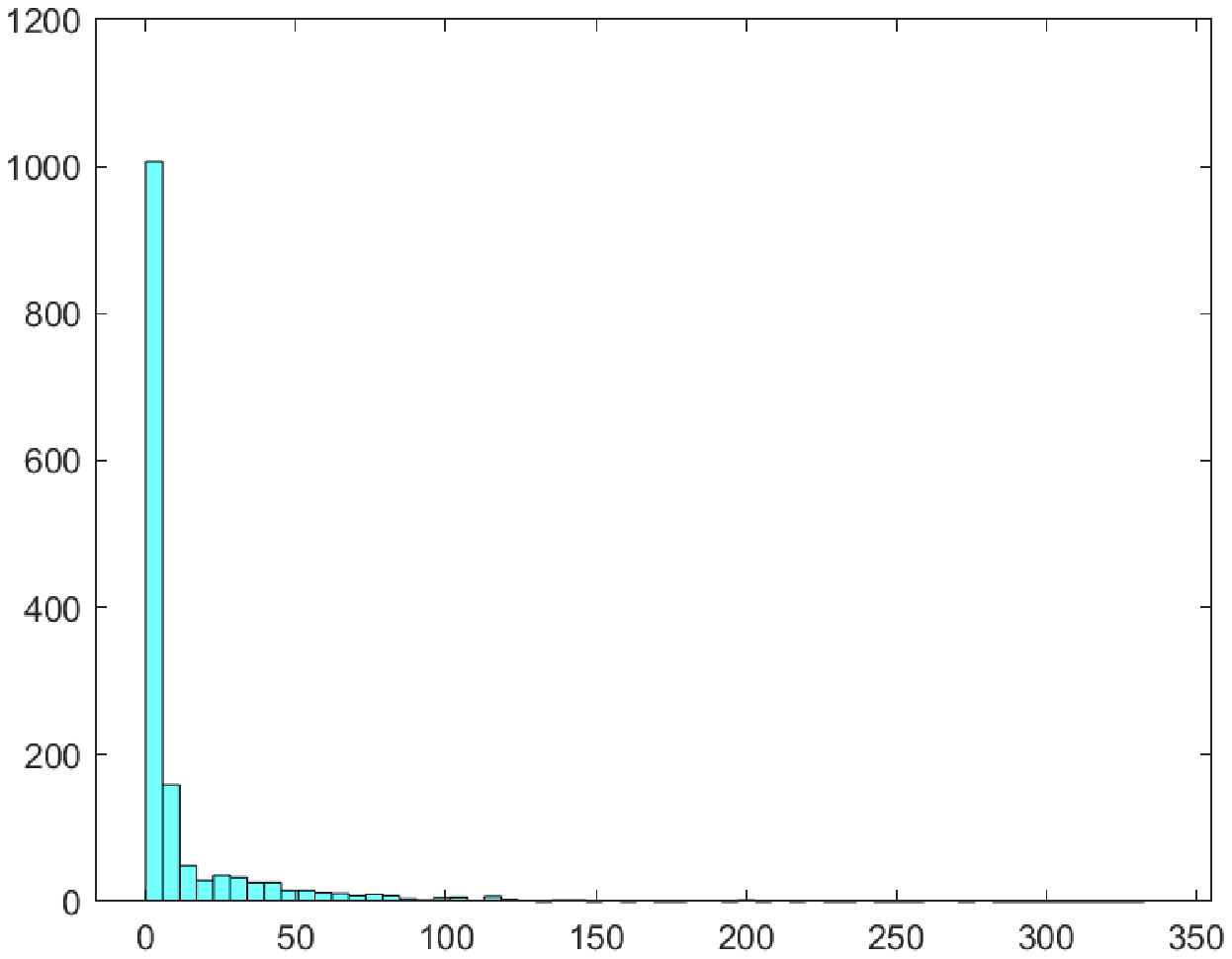}}
	\subfigure[$A_{1490\times1490}$: $d_{c}$]{\includegraphics[width=0.24\textwidth]{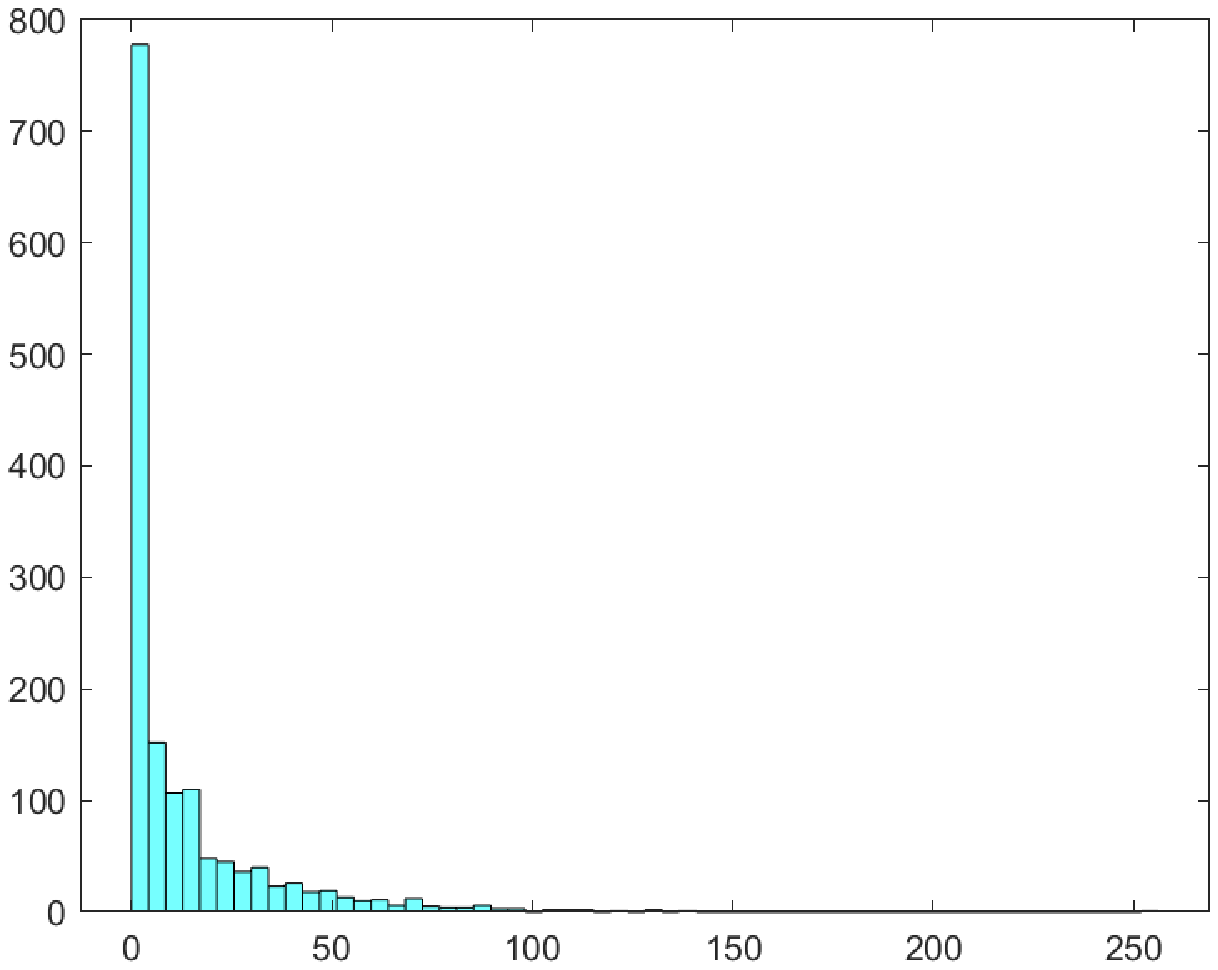}}
	\subfigure[$A_{1224\times1224}$: $d_{r}$]{\includegraphics[width=0.24\textwidth]{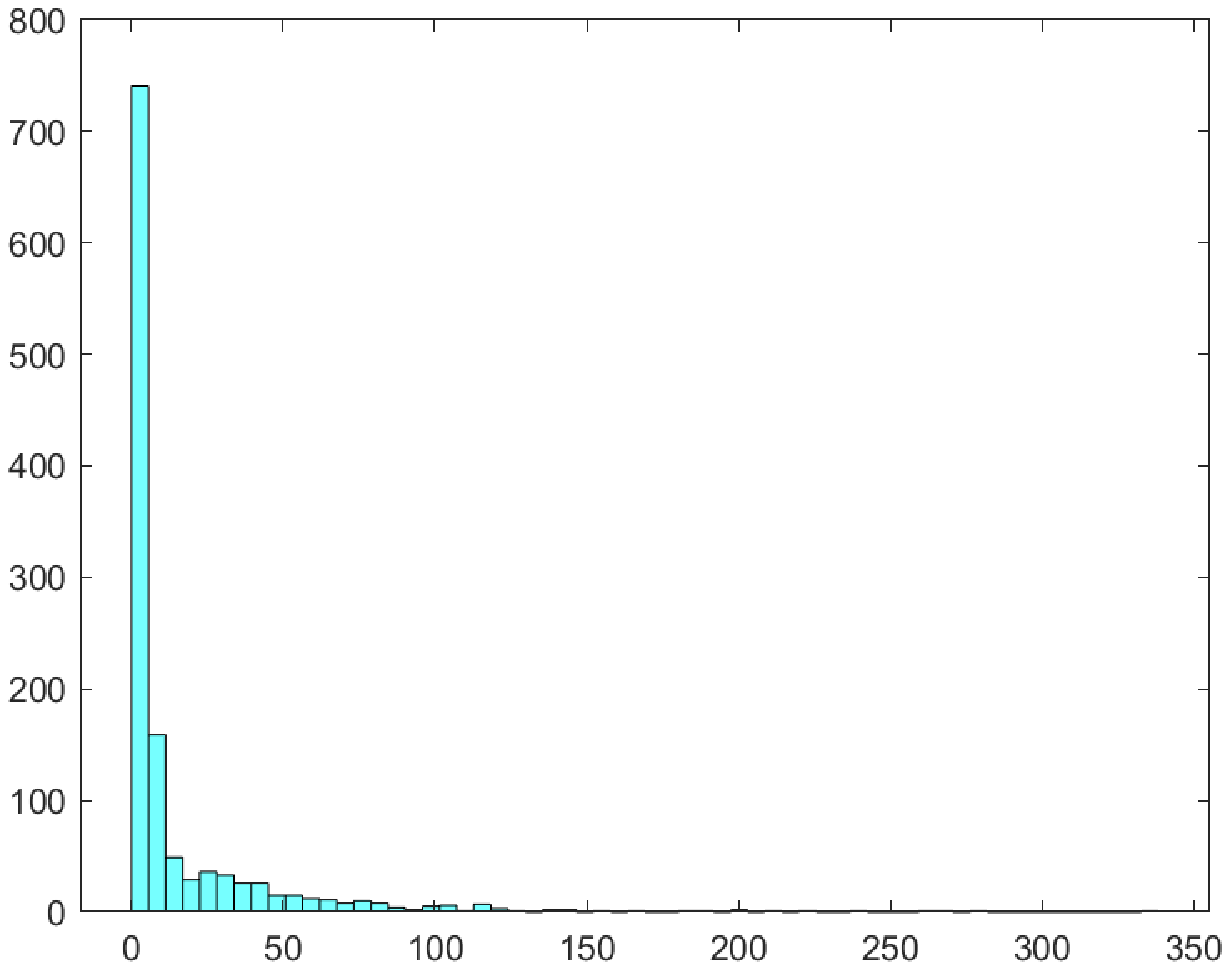}}
	\subfigure[$A_{1224\times1224}$: $d_{c}$]{\includegraphics[width=0.24\textwidth]{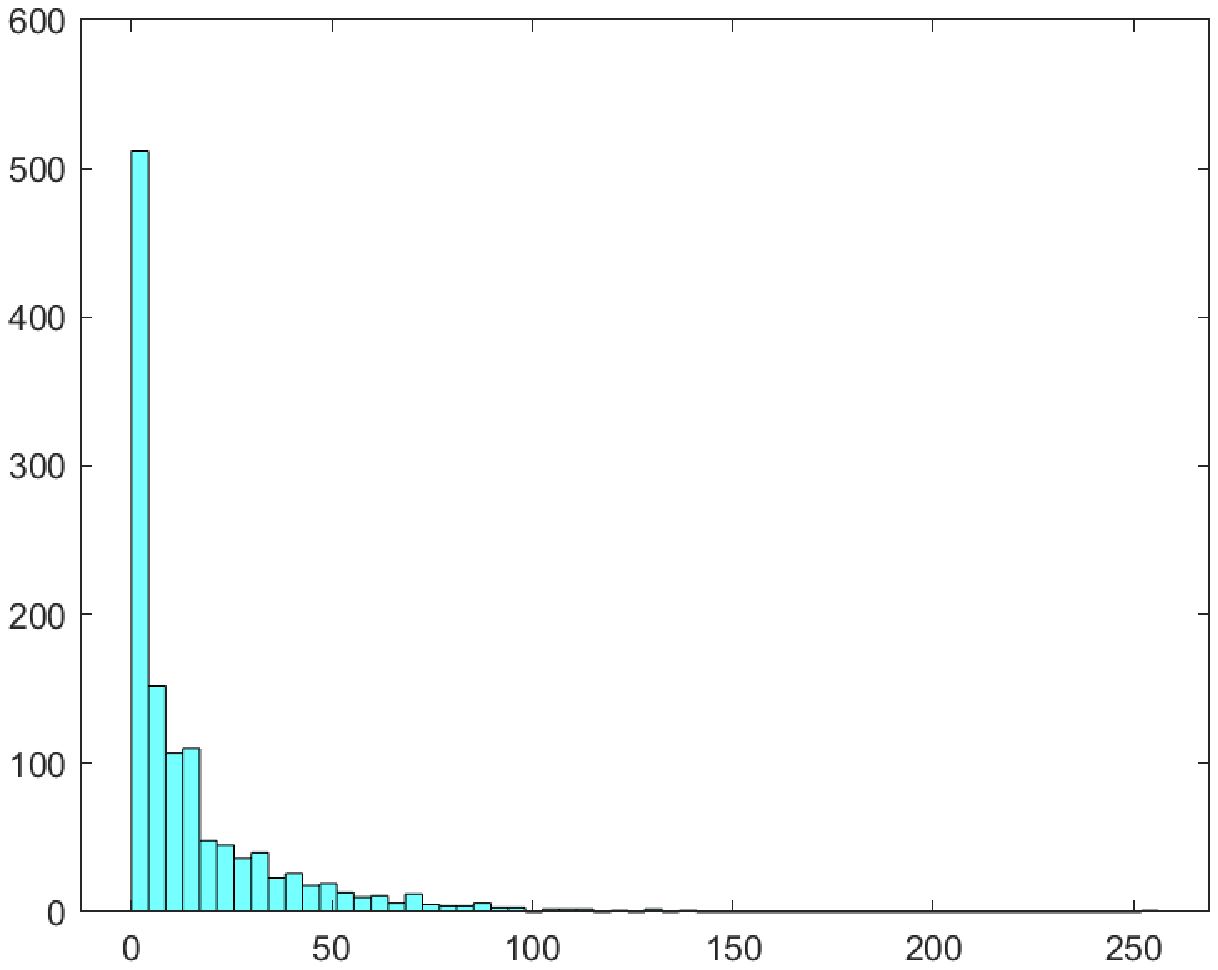}}
	\subfigure[$A_{831\times831}$: $d_{r}$]{\includegraphics[width=0.24\textwidth]{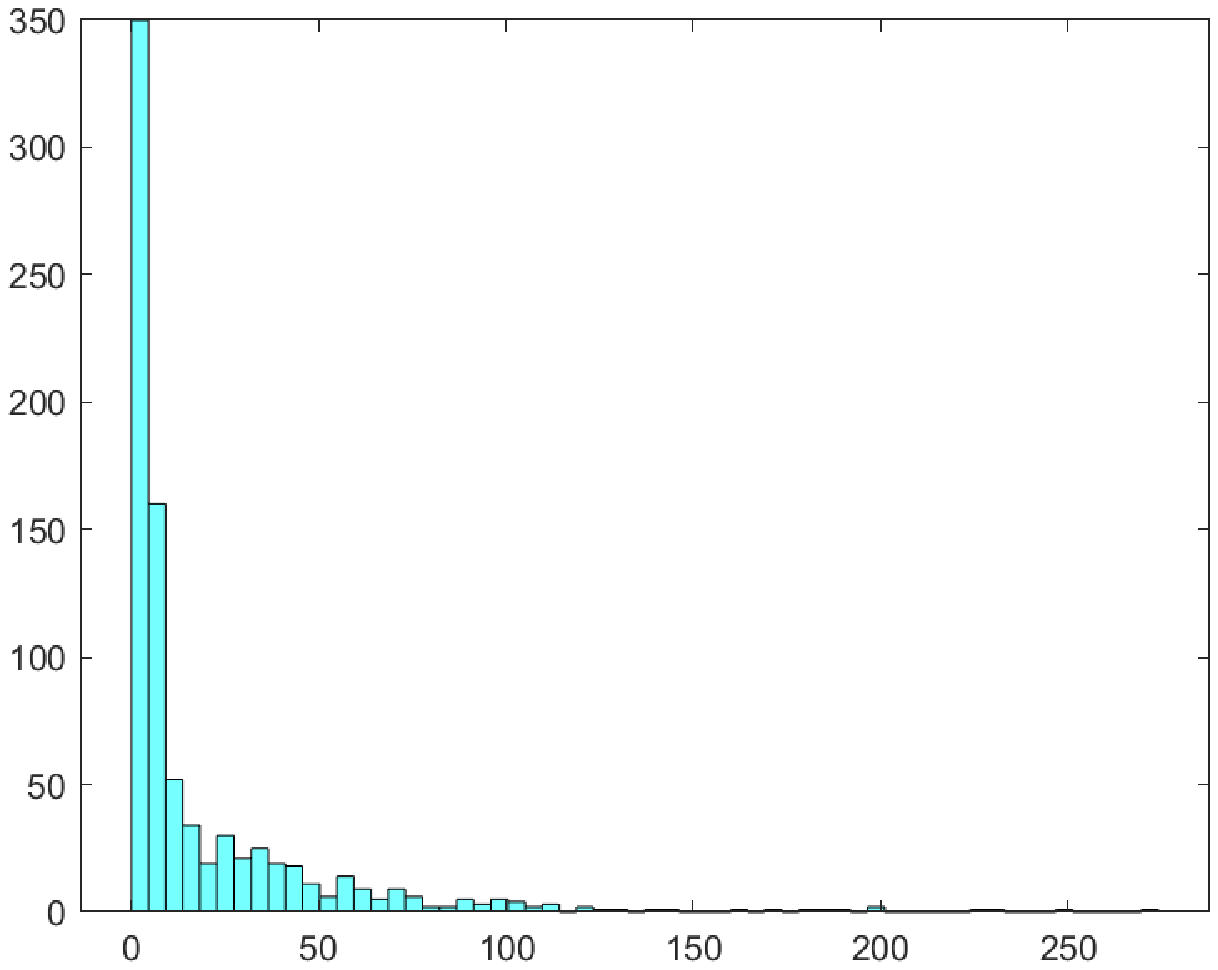}}
	\subfigure[$A_{831\times831}$: $d_{c}$]{\includegraphics[width=0.24\textwidth]{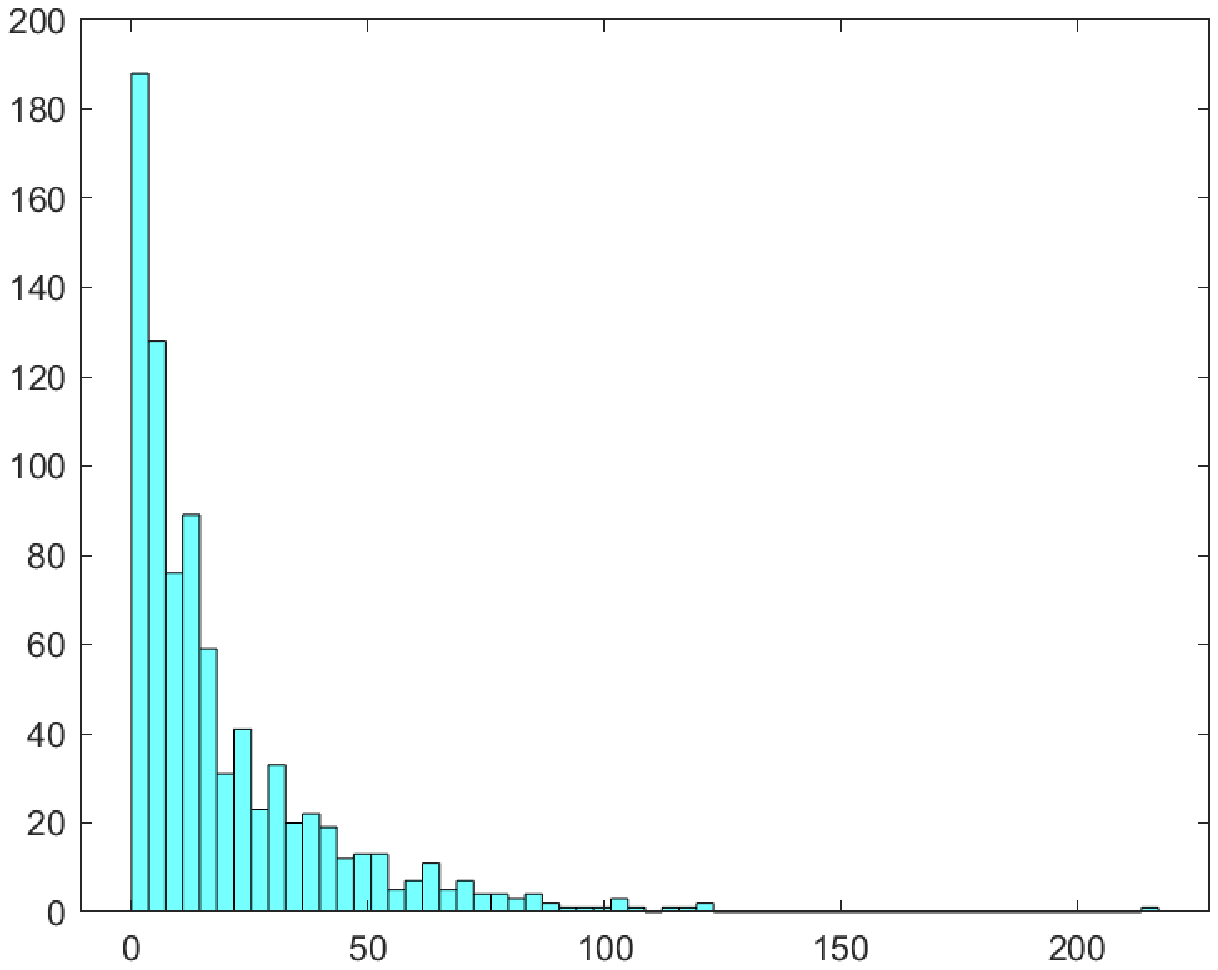}}
	\subfigure[$A_{990\times1490}$: $d_{r}$]{\includegraphics[width=0.24\textwidth]{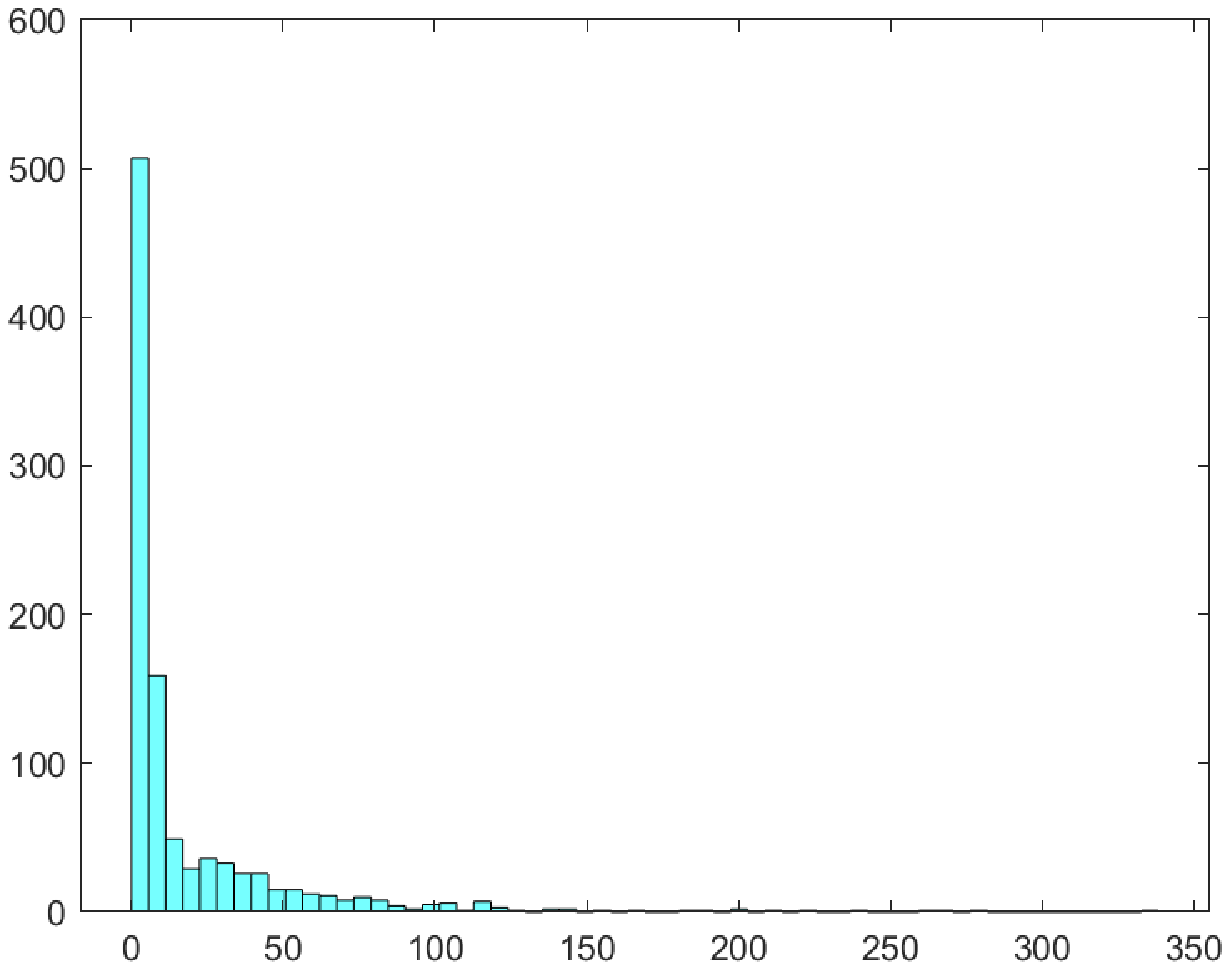}}
	\subfigure[$A_{990\times1490}$: $d_{c}$]{\includegraphics[width=0.24\textwidth]{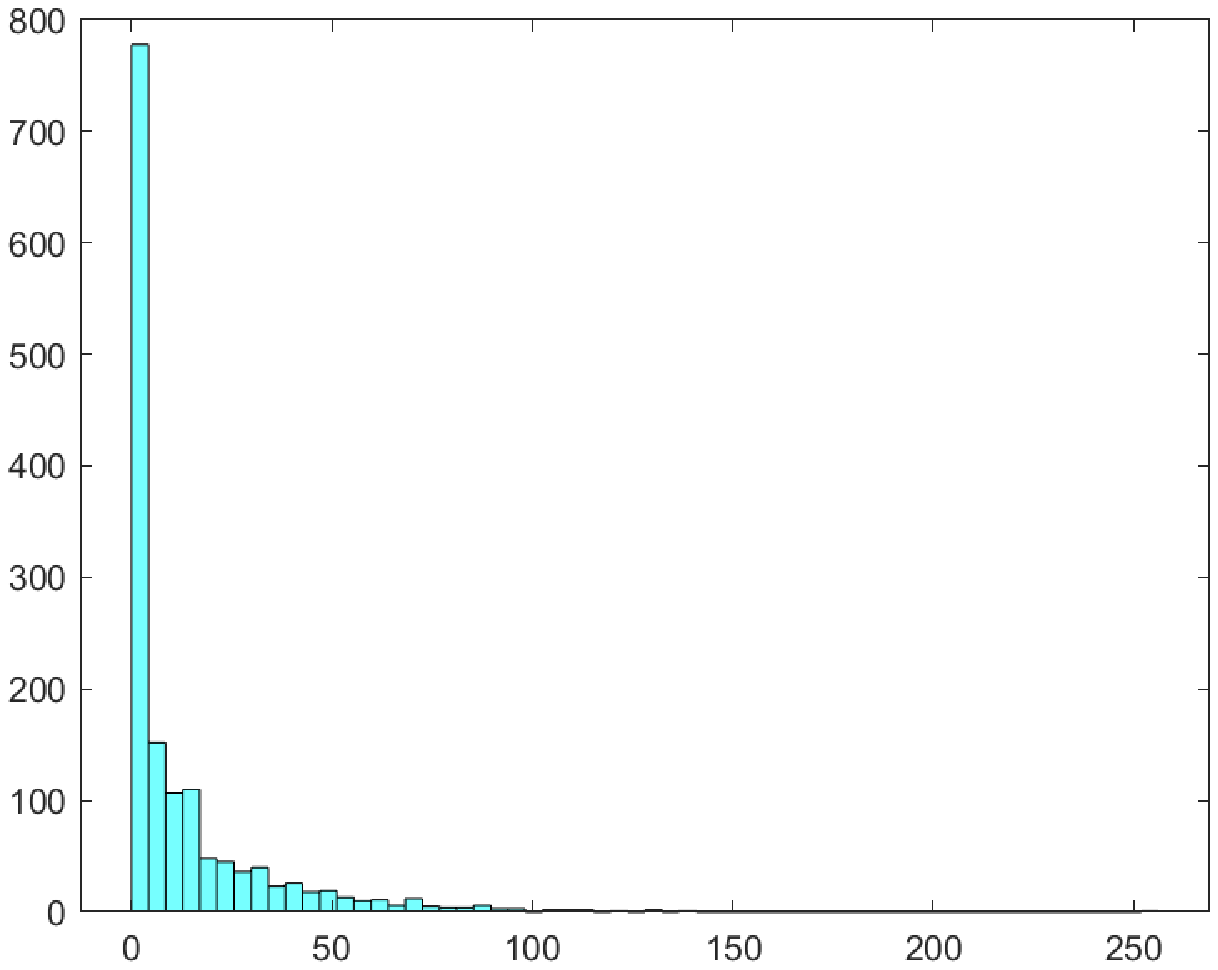}}
	\subfigure[$A_{1490\times1065}$: $d_{r}$]{\includegraphics[width=0.24\textwidth]{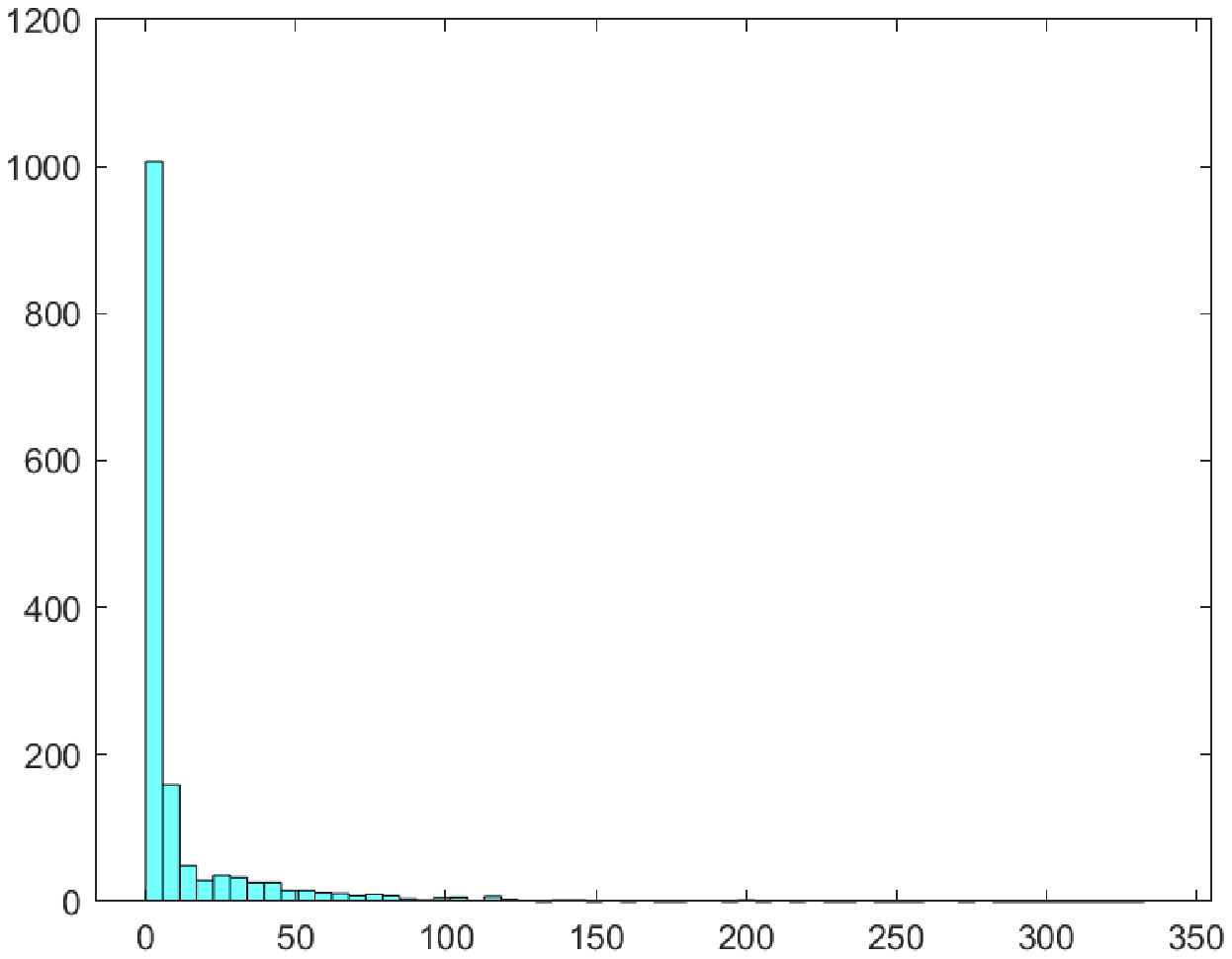}}
	\subfigure[$A_{1490\times1065}$: $d_{c}$]{\includegraphics[width=0.24\textwidth]{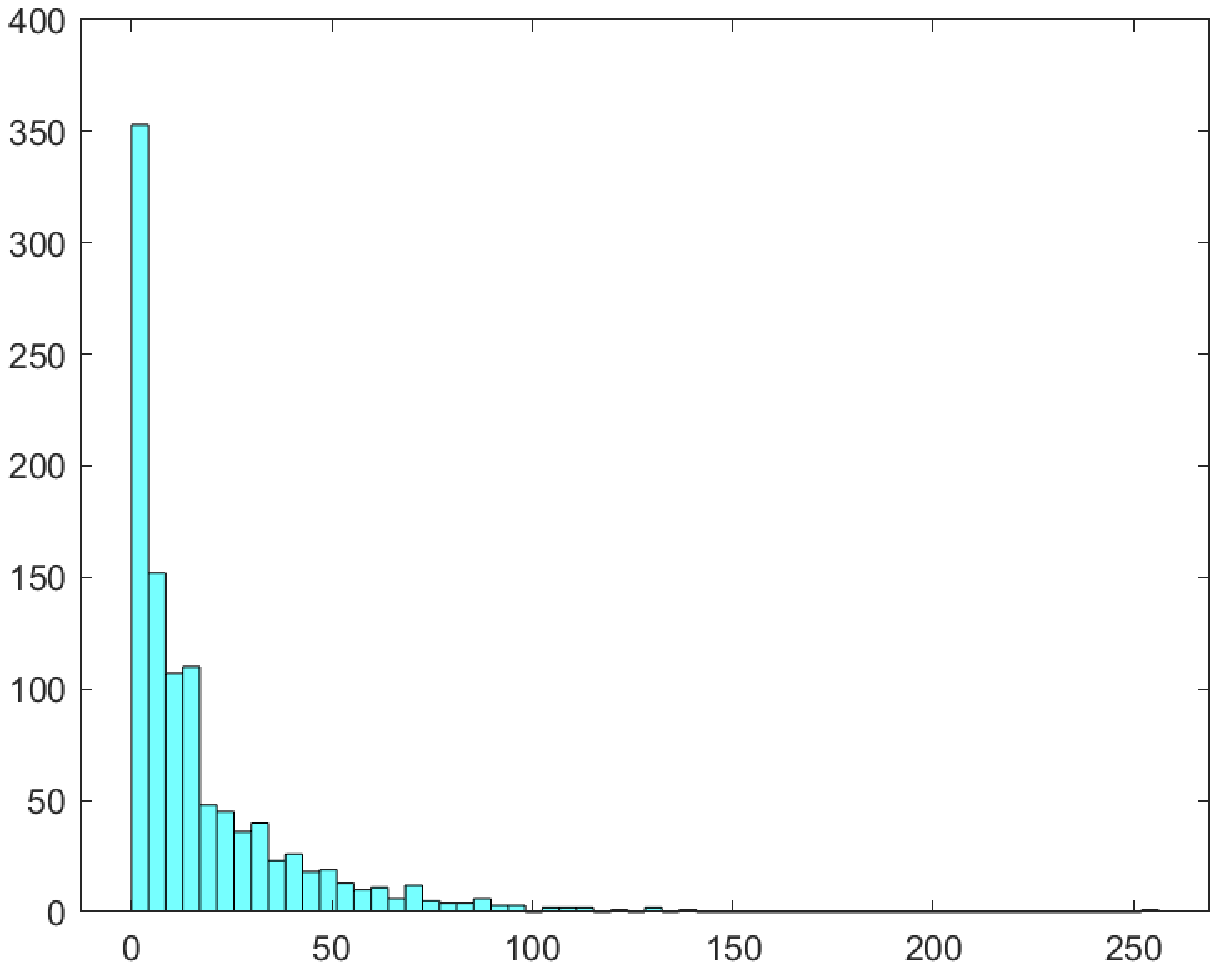}}
	\subfigure[$A_{990\times1065}$: $d_{r}$]{\includegraphics[width=0.24\textwidth]{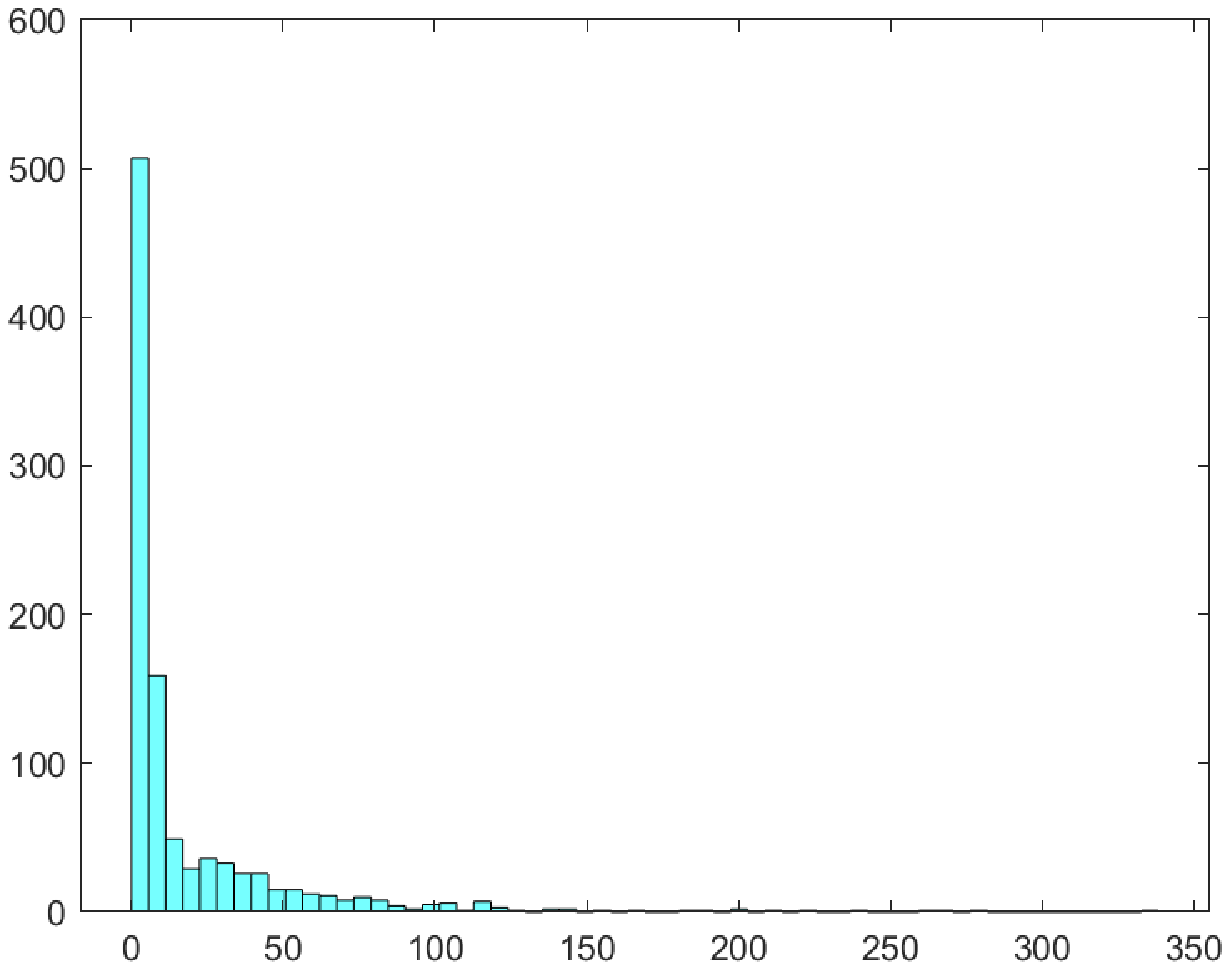}}
	\subfigure[$A_{990\times1065}$: $d_{c}$]{\includegraphics[width=0.24\textwidth]{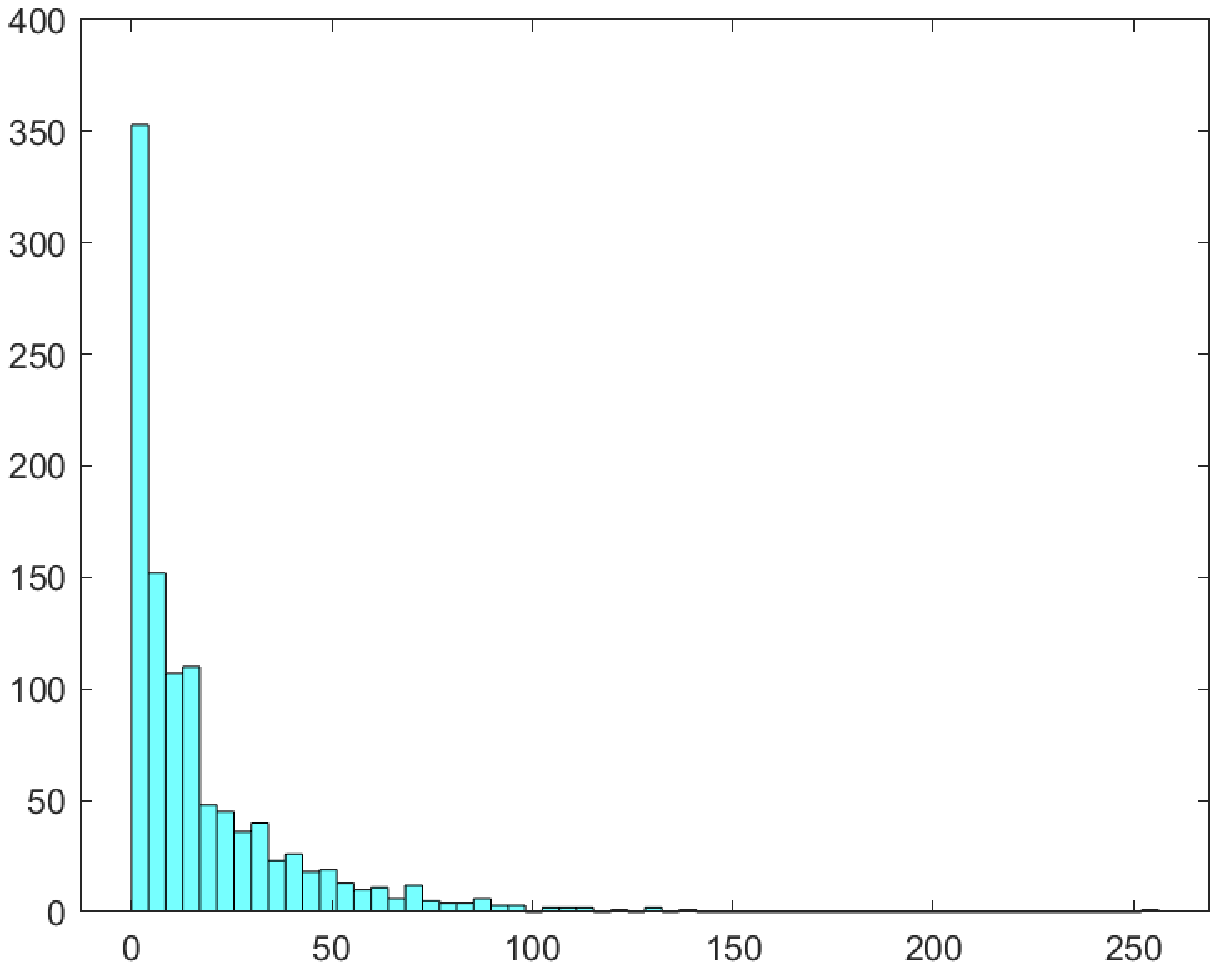}}
	\subfigure[$A_{1490\times1490}$]{\includegraphics[width=0.24\textwidth]{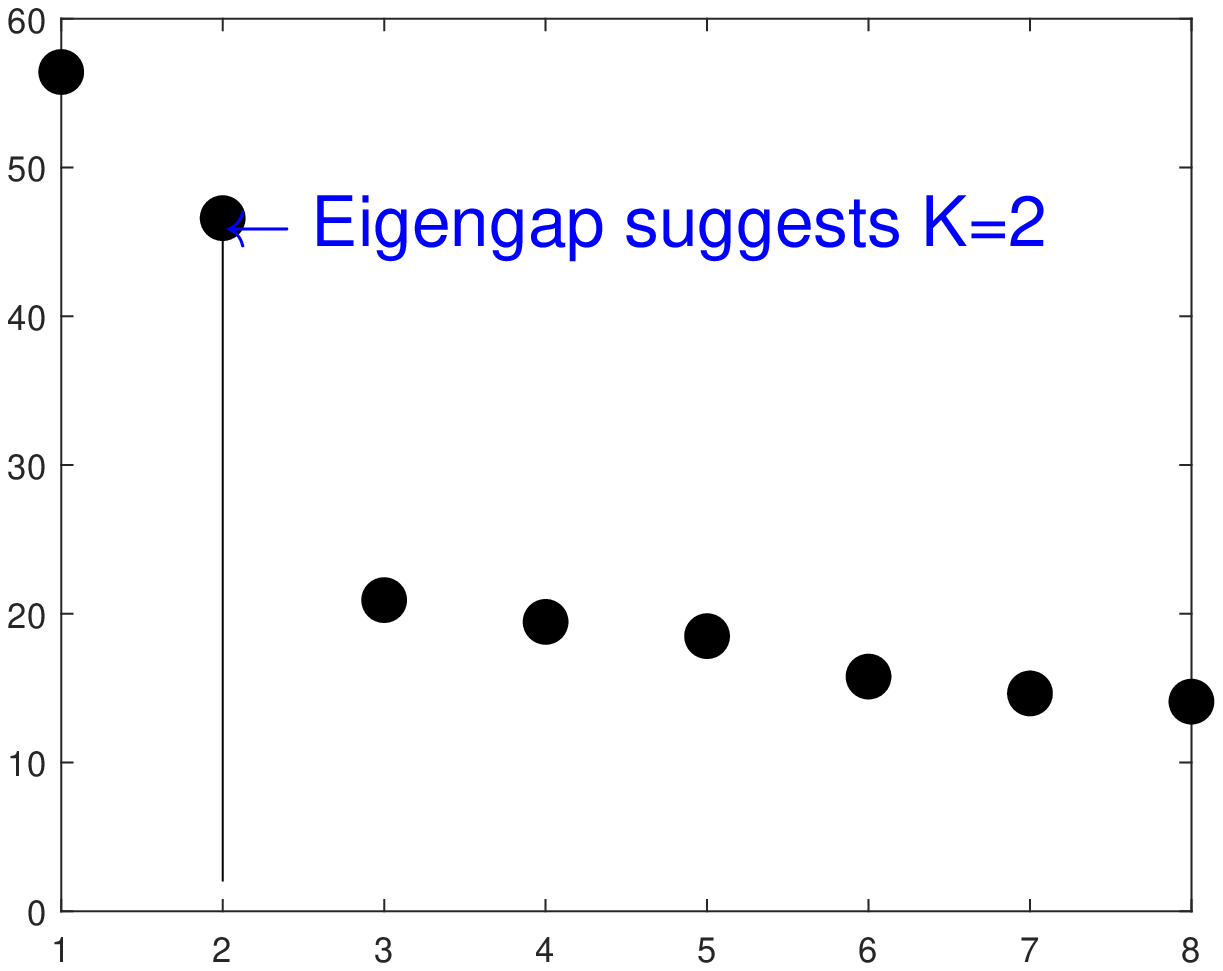}}
	\subfigure[$A_{1224\times1224}$]{\includegraphics[width=0.24\textwidth]{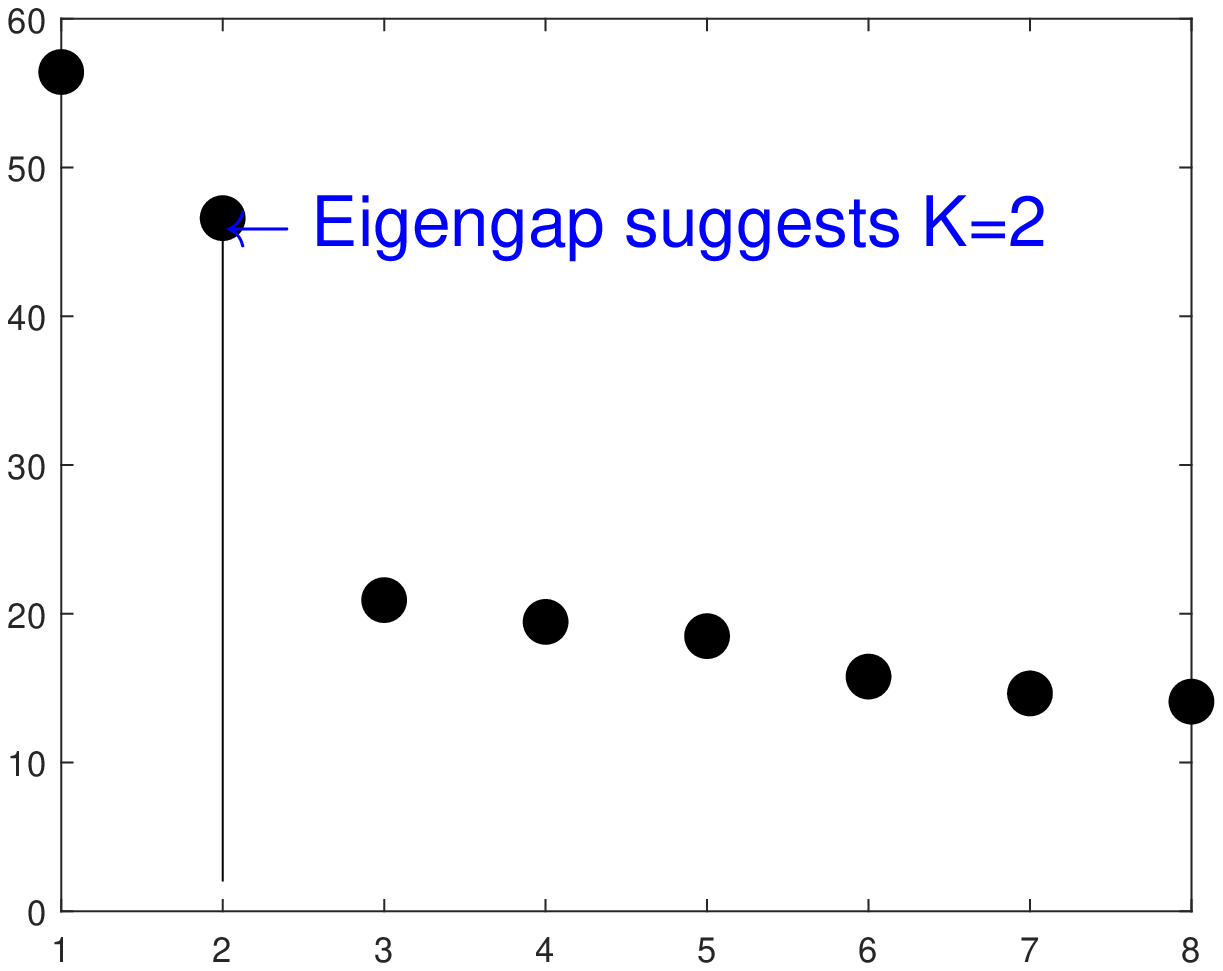}}
	\subfigure[$A_{831\times831}$]{\includegraphics[width=0.24\textwidth]{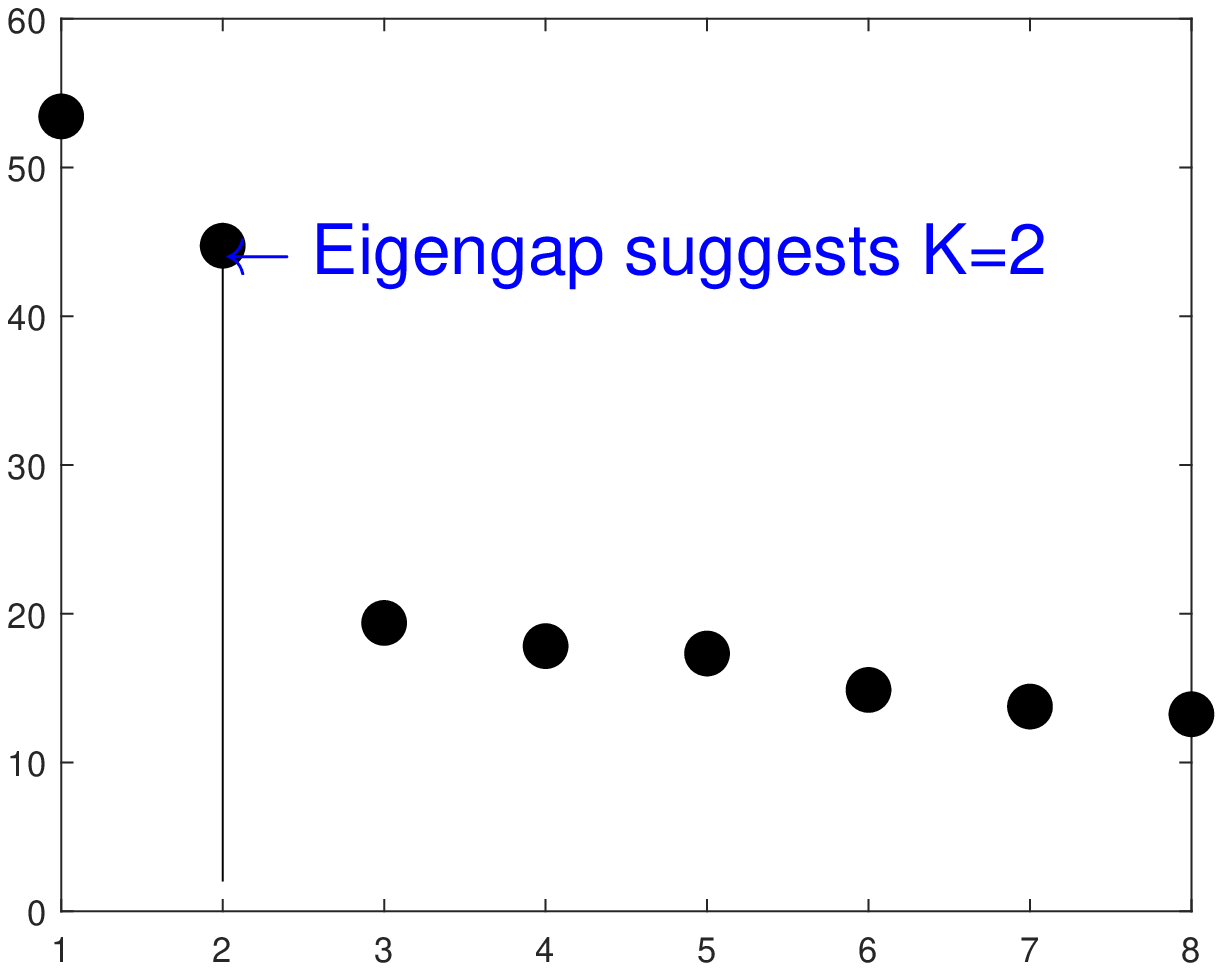}}
	\subfigure[$A_{990\times1490}$]{\includegraphics[width=0.24\textwidth]{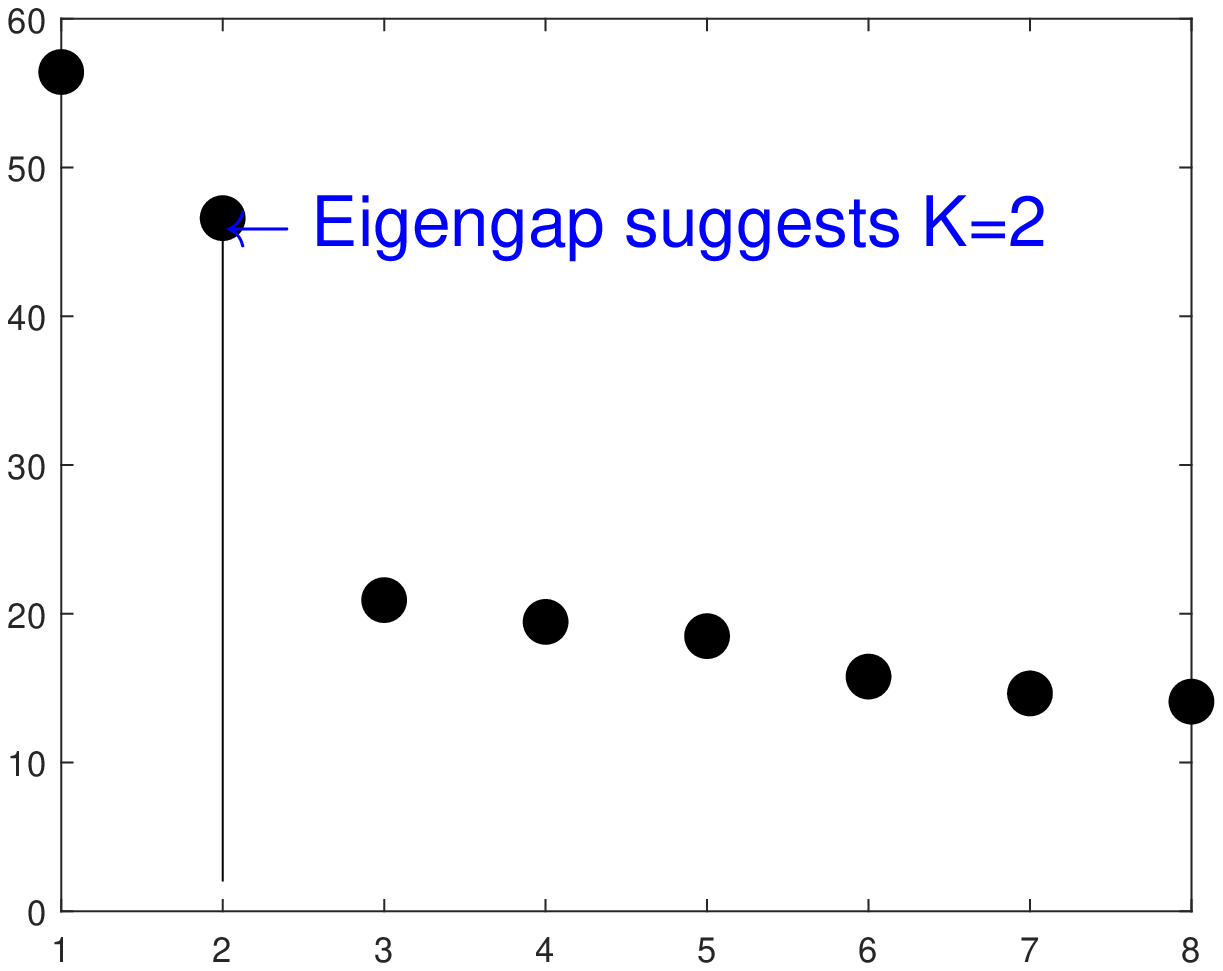}}
	\subfigure[$A_{1490\times1065}$]{\includegraphics[width=0.24\textwidth]{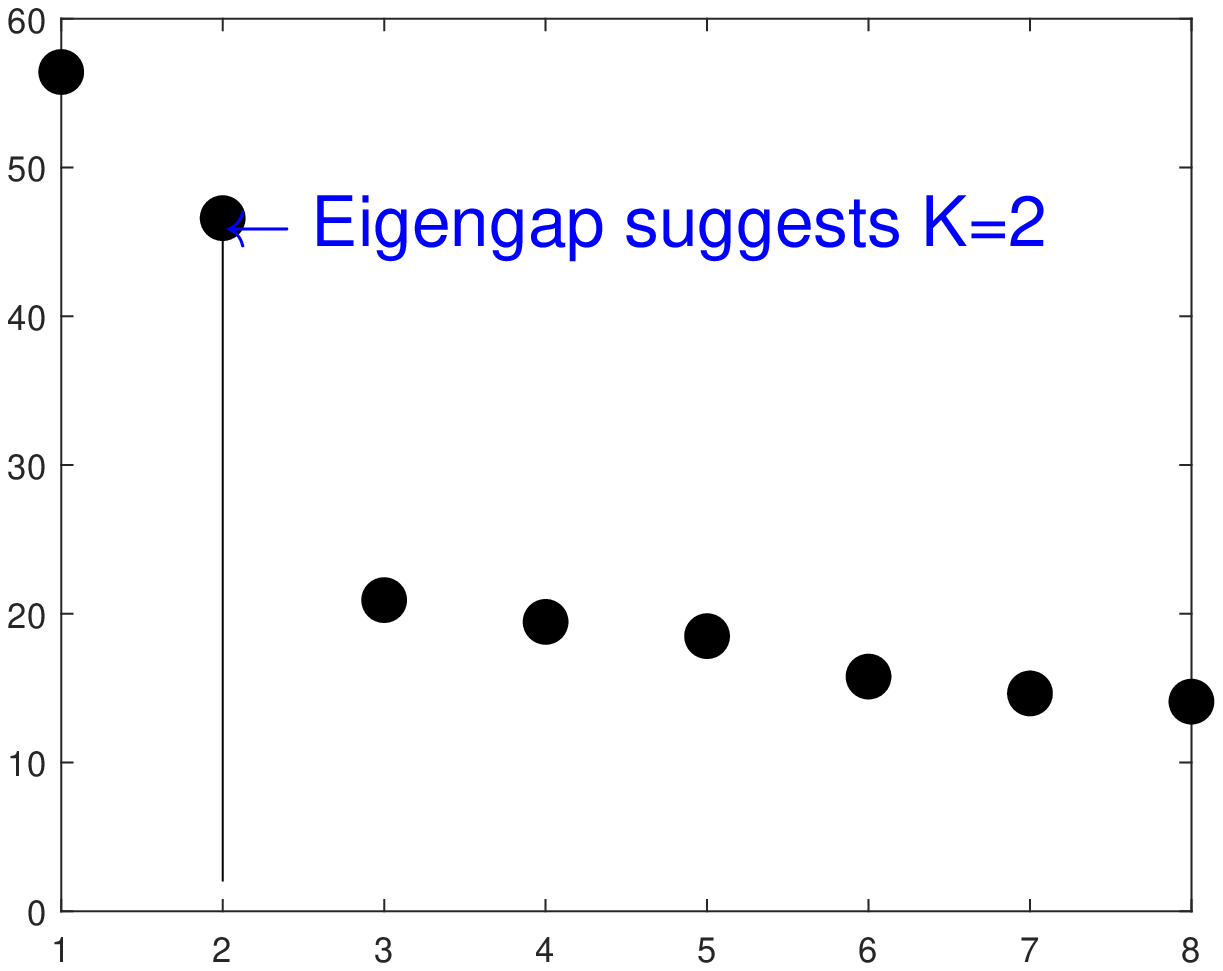}}
	\subfigure[$A_{990\times1065}$]{\includegraphics[width=0.24\textwidth]{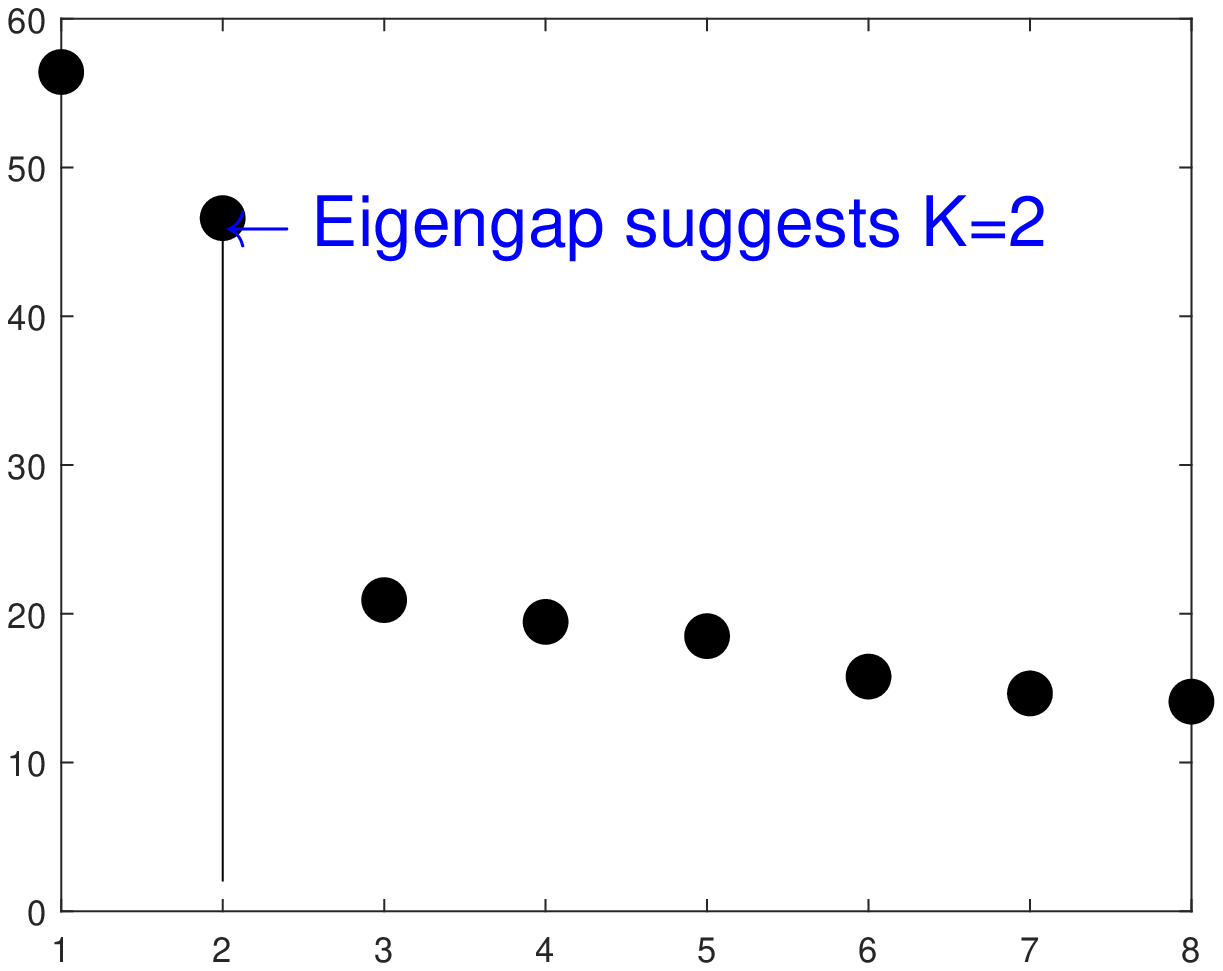}}
	\caption{Distribution of $d_{r}$ and $d_{c}$, and top 8 singular values of $A$ for Political blogs.}
	\label{Pblogs} 
\end{figure}
\begin{table}[h!]
\footnotesize
	\centering
	\caption{ErrorRates for the five methods on Political blogs with $K=2$.}
	\label{ErrorRateBlogs}
	\begin{tabular}{cccccccccc}
		\hline\hline
&$A_{1490\times1490}$&$A_{1224\times1224}$& $A_{831\times831}$&$A_{990\times1490}$&$A_{1490\times1065}$&$A_{990\times1065}$\\
\hline
BiSC&0.4463&0.4044&0.3490&0.4221&0.4463&0.3606\\
nBiSC&0.2315&0.1168&0.0529&0.2114&0.1557&0.0601\\
DI-SIM&0.2302&0.1152&0.0493&0.2087&0.1557&0.0592\\
D-SCORE&0.4631&0.1111&0.0469&0.1631&0.1550&0.0582\\
rD-SCORE&0.4617&0.4837&0.0469&0.1570&0.1510&0.0516\\
\hline\hline
	\end{tabular}
\end{table}

\begin{table}[h!]
\footnotesize
	\centering
	\caption{NMI for the five methods on Political blogs with $K=2$.}
	\label{NMIBlogs}
	\begin{tabular}{cccccccccc}
		\hline\hline
&$A_{1490\times1490}$&$A_{1224\times1224}$& $A_{831\times831}$&$A_{990\times1490}$&$A_{1490\times1065}$&$A_{990\times1065}$\\
\hline
BiSC&0.0954&0.1230&0.1716&0.1201&0.0954&0.1598\\
nBiSC&0.3058&0.4832&0.7035&0.3602&0.4263&0.6880\\
DI-SIM&0.3051&0.4876&0.7186&0.3694&0.4292&0.6912\\
D-SCORE&0.0062&0.5260&0.7281&0.4061&0.4422&0.6799\\
rD-SCORE&0.0066&3.2996e-6&0.7251&0.4167&0.4560&0.7123\\
\hline\hline
	\end{tabular}
\end{table}

\begin{table}[h!]
\footnotesize
	\centering
	\caption{ARI for the five methods on Political blogs with $K=2$.}
	\label{ARIBlogs}
	\begin{tabular}{cccccccccc}
		\hline\hline
&$A_{1490\times1490}$&$A_{1224\times1224}$& $A_{831\times831}$&$A_{990\times1490}$&$A_{1490\times1065}$&$A_{990\times1065}$\\
\hline
BiSC&0.0111&0.0353&0.0847&0.0238&0.0111&0.0723\\
nBiSC&0.2878&0.5869&0.7991&0.3327&0.4738&0.7739\\
DI-SIM&0.2907&0.5920&0.8121&0.3390&0.4738&0.7772\\
D-SCORE&0.0048&0.6046&0.8209&0.4537&0.4757&0.7805\\
rD-SCORE&0.0053&-1.3176e-04&0.8209&0.4701&0.4869&0.8039\\
\hline\hline
	\end{tabular}
\end{table}

\begin{figure}
\centering
\subfigure[Crisis in a Cloister: $d_{r}$]{\includegraphics[width=0.24\textwidth]{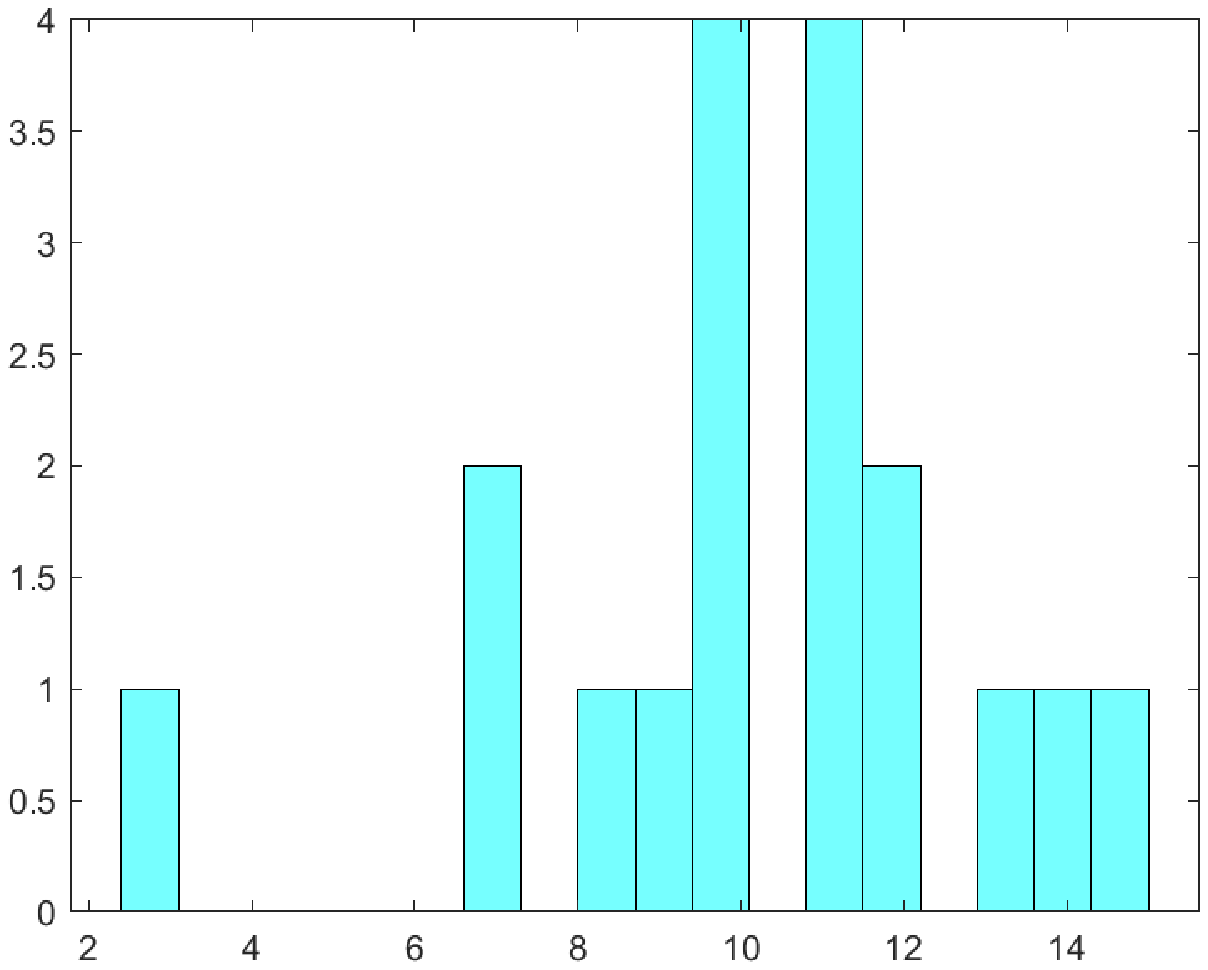}}
\subfigure[Crisis in a Cloister: $d_{c}$]{\includegraphics[width=0.24\textwidth]{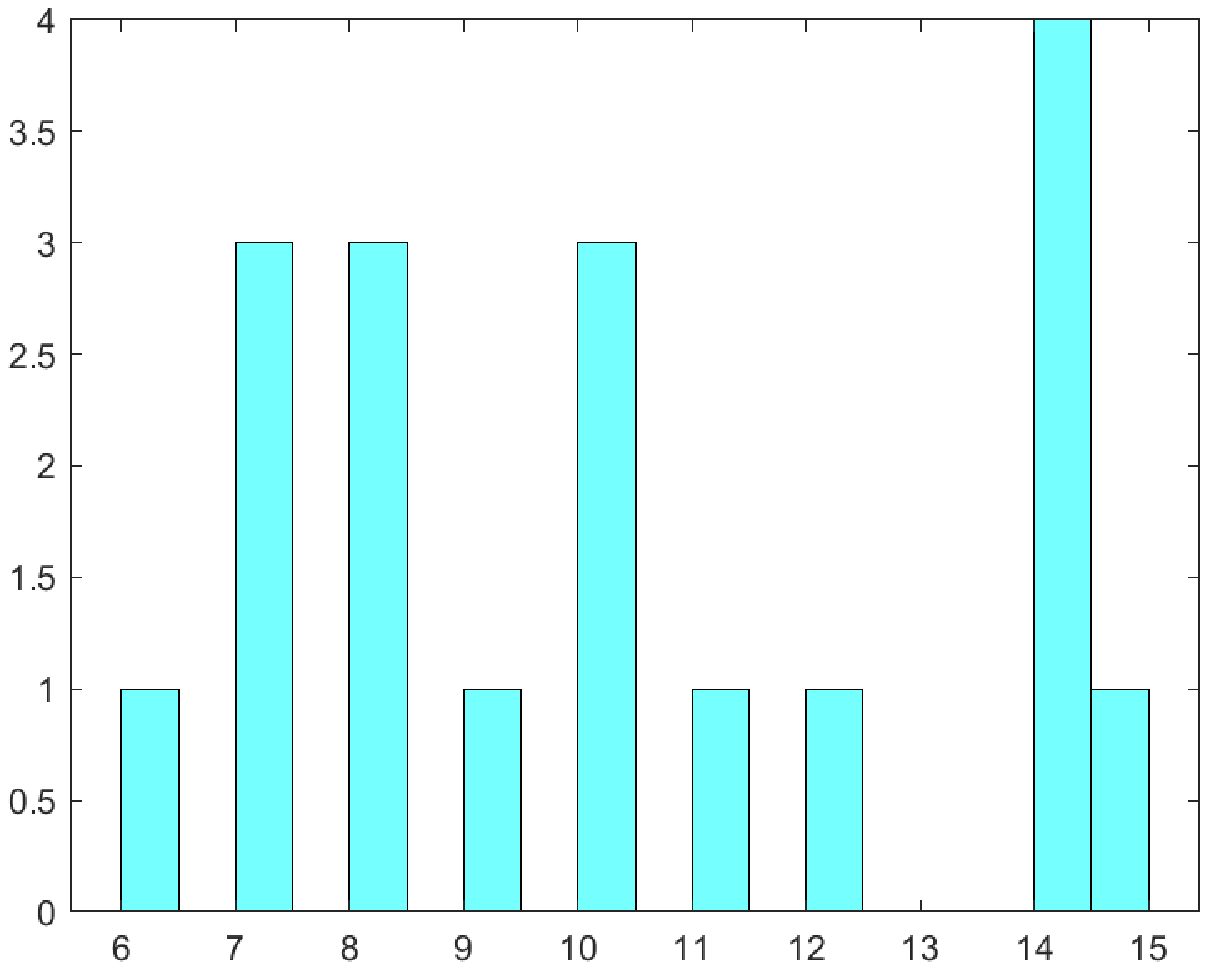}}
\subfigure[Dutch college: $d_{r}$]{\includegraphics[width=0.24\textwidth]{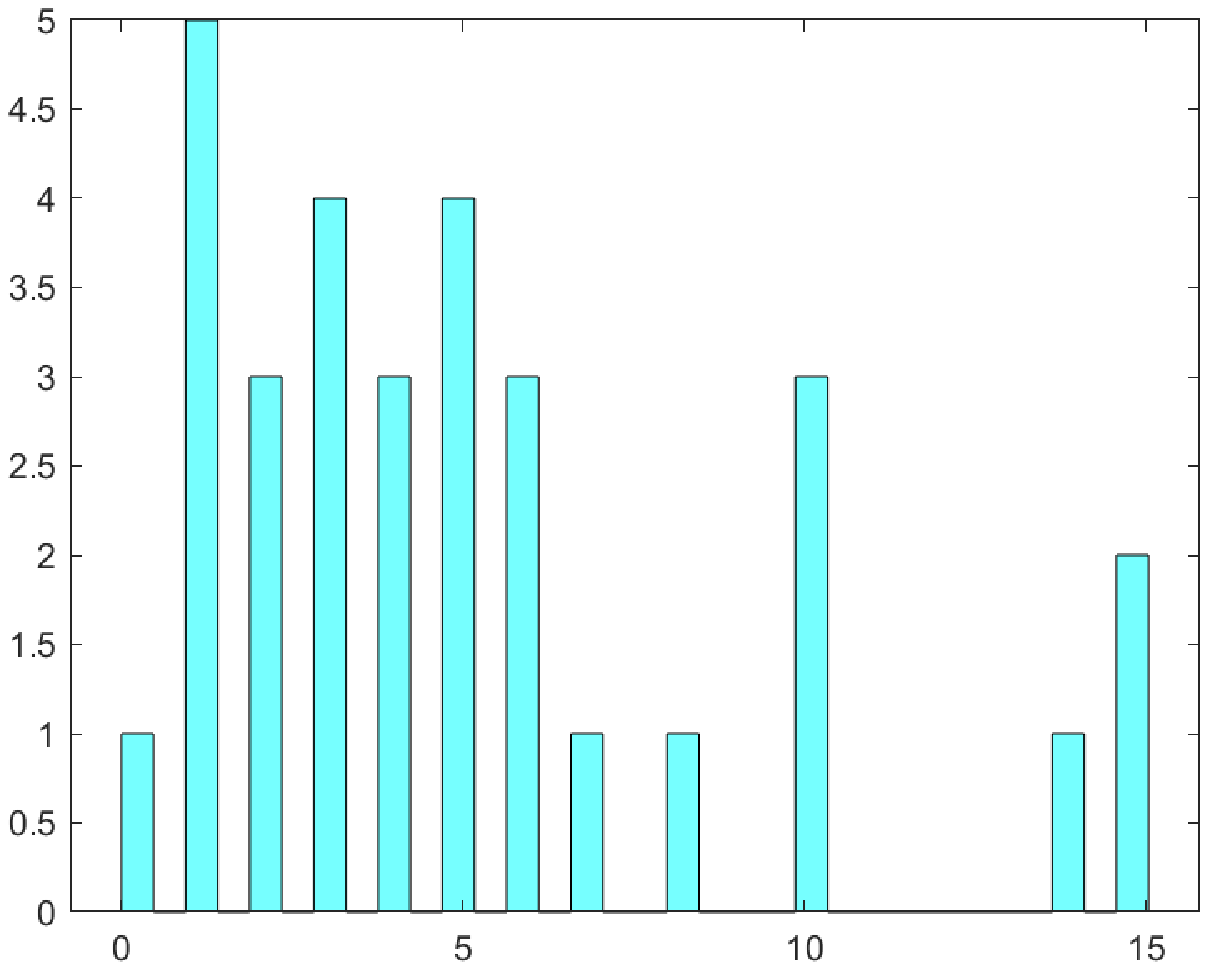}}
\subfigure[Dutch college: $d_{c}$]{\includegraphics[width=0.24\textwidth]{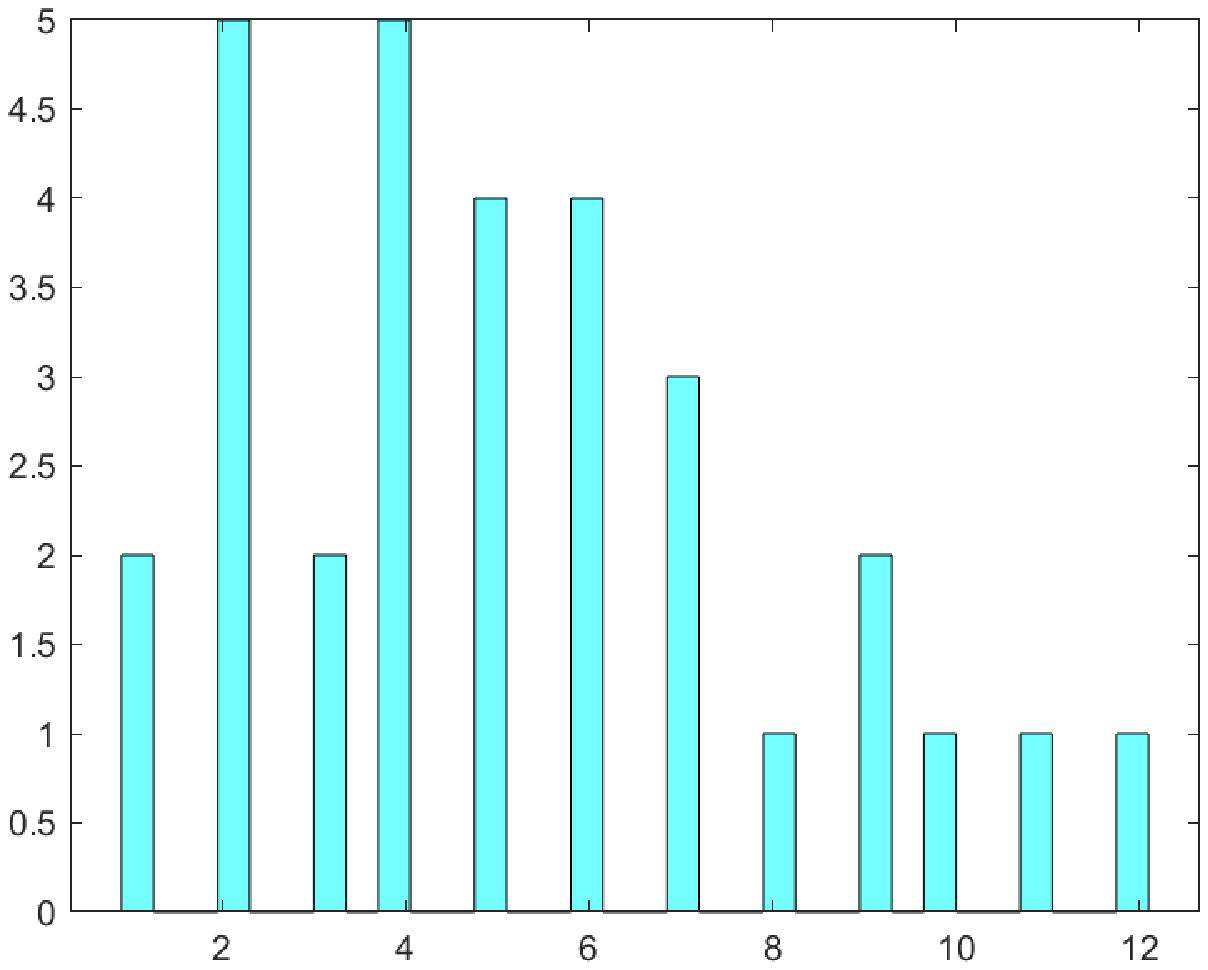}}
\subfigure[Highschool: $d_{r}$]{\includegraphics[width=0.2\textwidth]{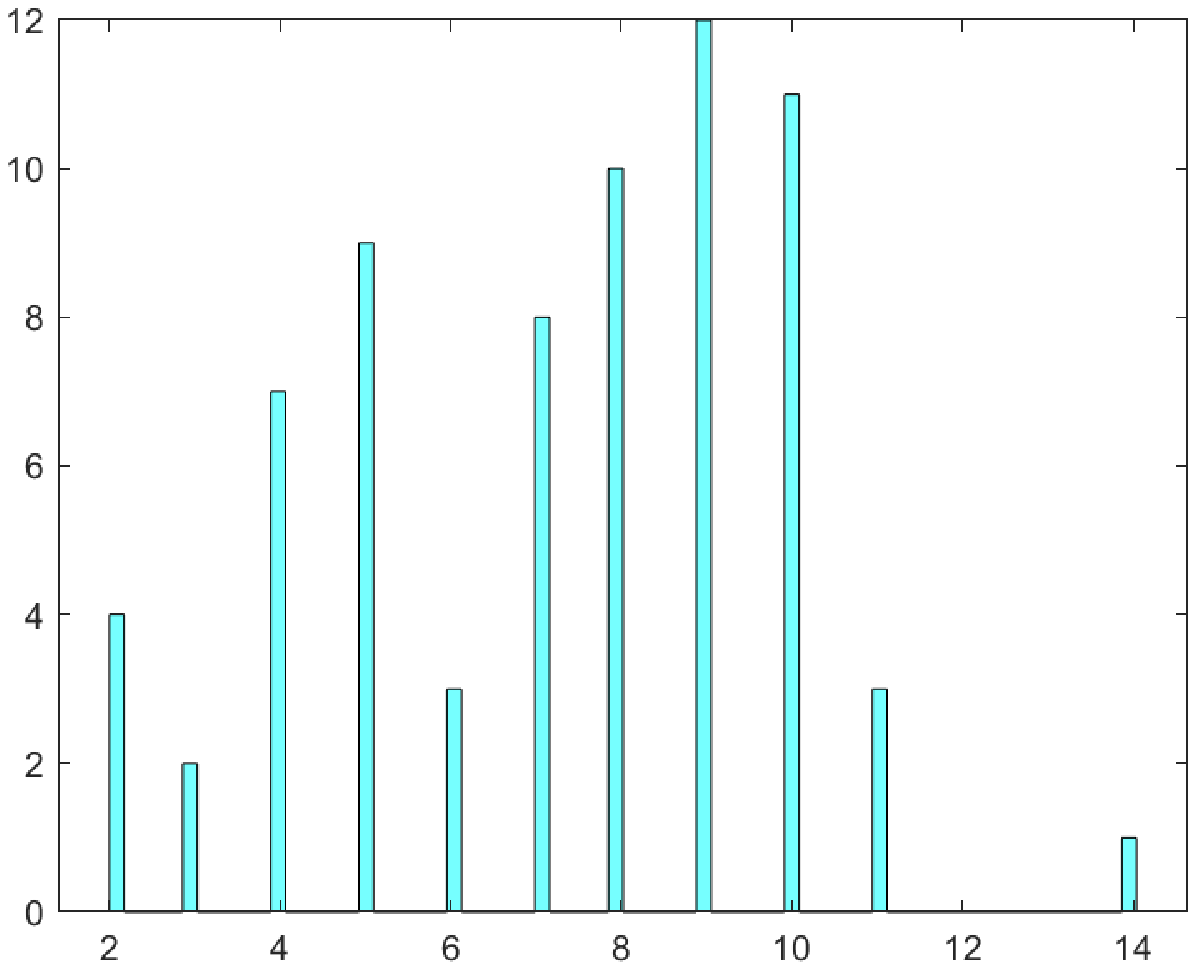}}
\subfigure[Highschool: $d_{c}$]{\includegraphics[width=0.24\textwidth]{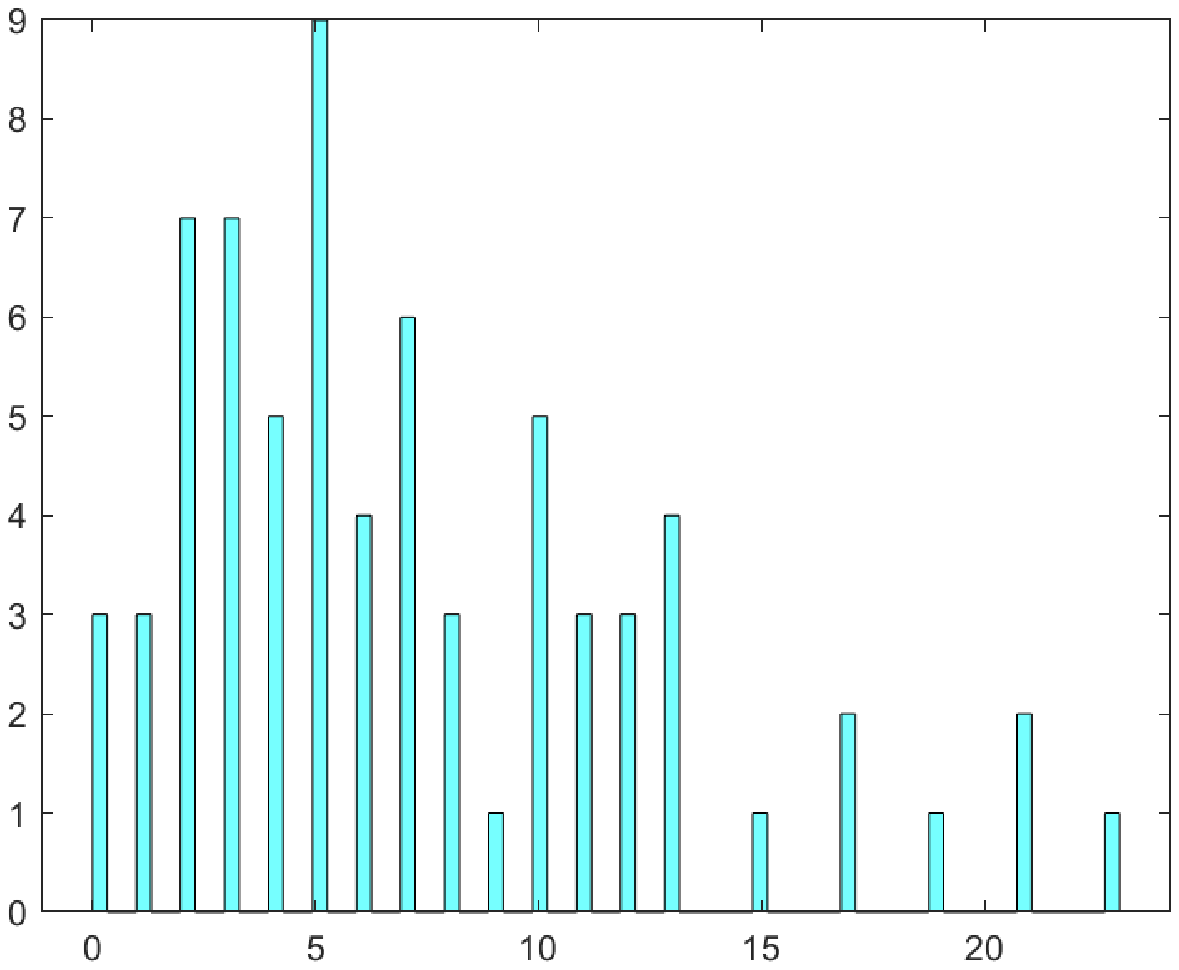}}
\subfigure[Facebook-like Social Network: $d_{r}$]{\includegraphics[width=0.24\textwidth]{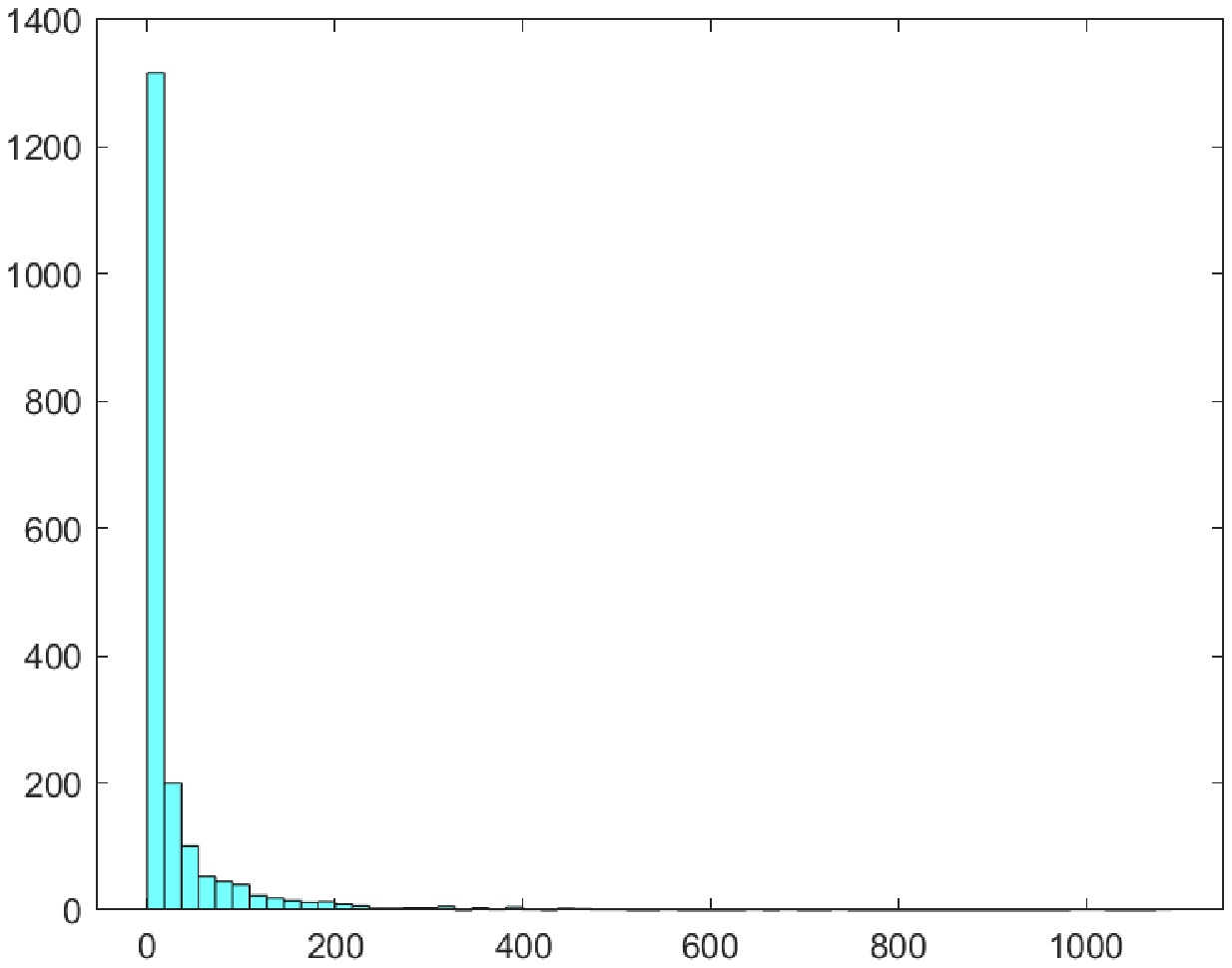}}
\subfigure[Facebook-like Social Network: $d_{c}$]{\includegraphics[width=0.24\textwidth]{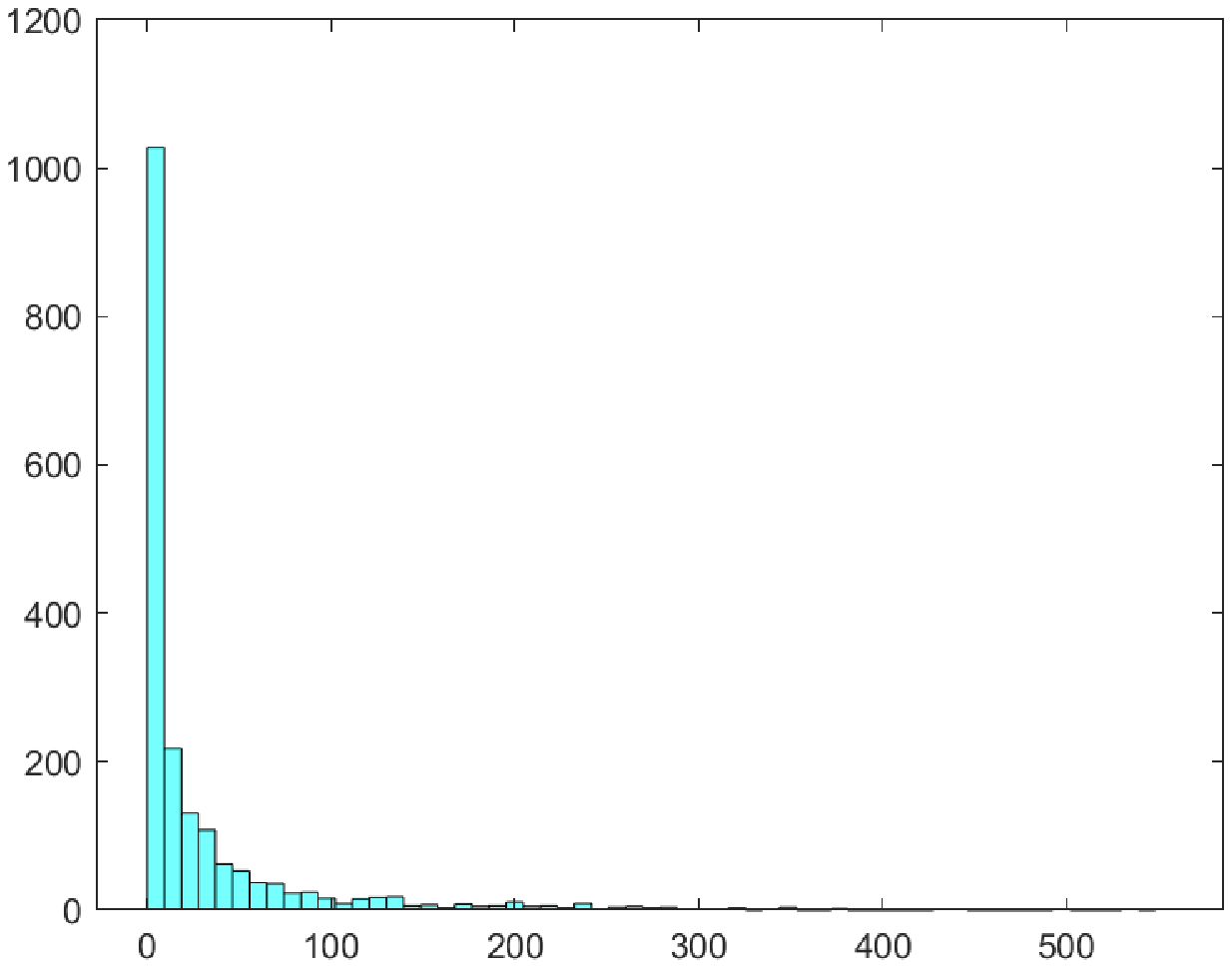}}
\subfigure[Crisis in a Cloister]{\includegraphics[width=0.24\textwidth]{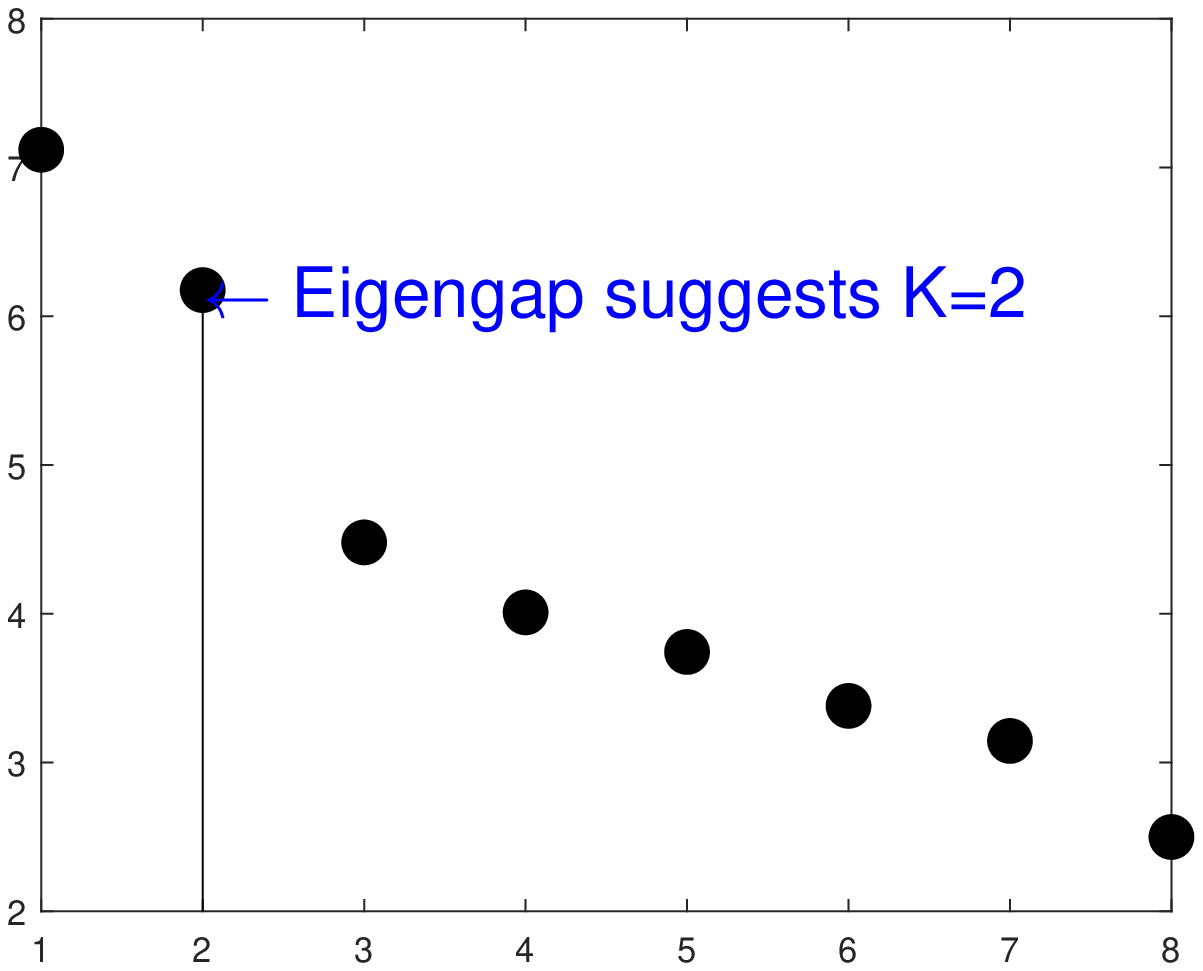}}
\subfigure[Dutch college]{\includegraphics[width=0.24\textwidth]{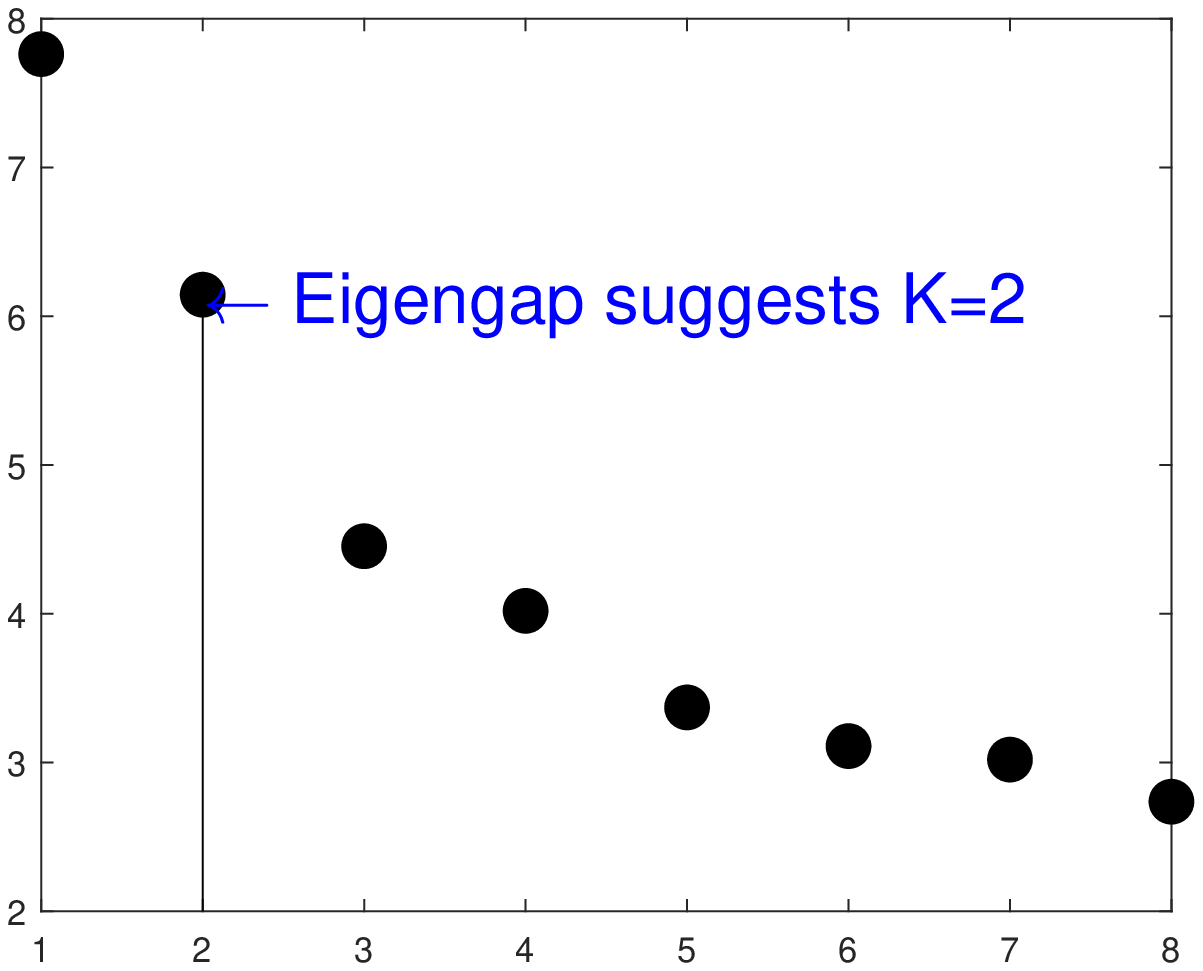}}
\subfigure[Highschool]{\includegraphics[width=0.24\textwidth]{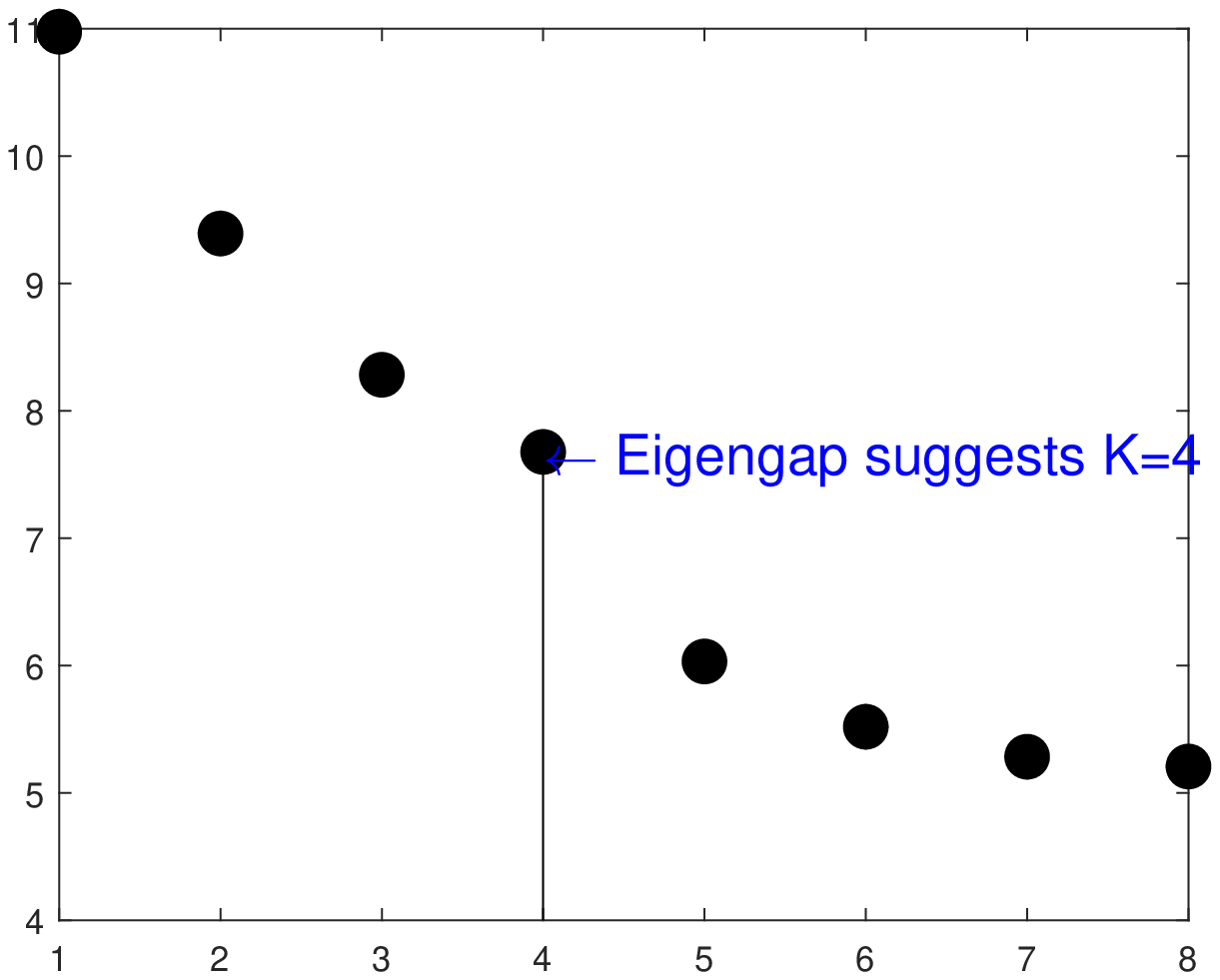}}
\subfigure[Facebook-like Social Network]{\includegraphics[width=0.24\textwidth]{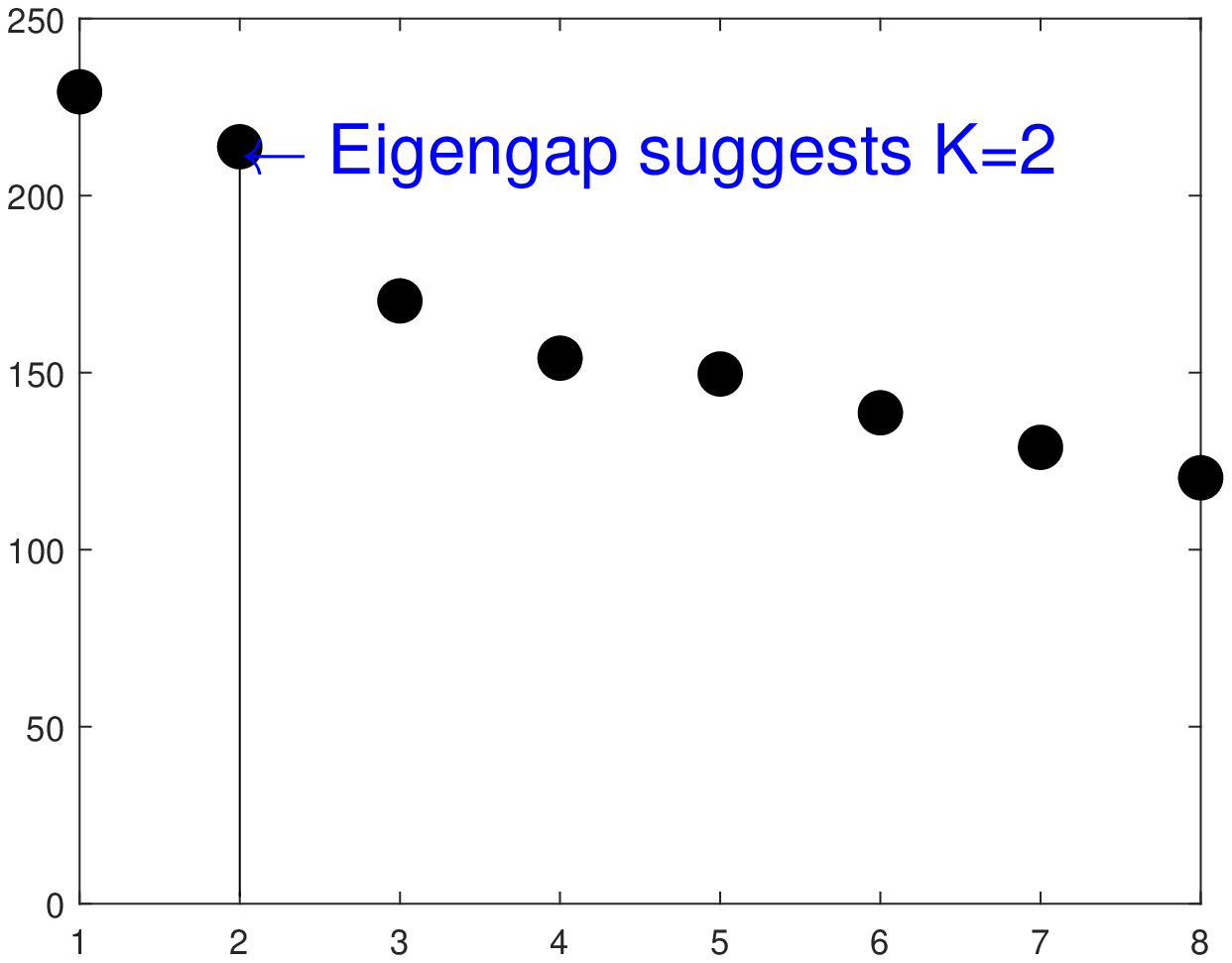}}
\caption{Panels (a)-(h): histogram of $d_{r}$ and $d_{c}$ for datasets. Panels(i)-(l): top 8 singular values of $A$ for datasets.}
\label{HistEig8} 
\end{figure}
\begin{figure}
\centering
\subfigure[Crisis in a Cloister]{\includegraphics[width=0.24\textwidth]{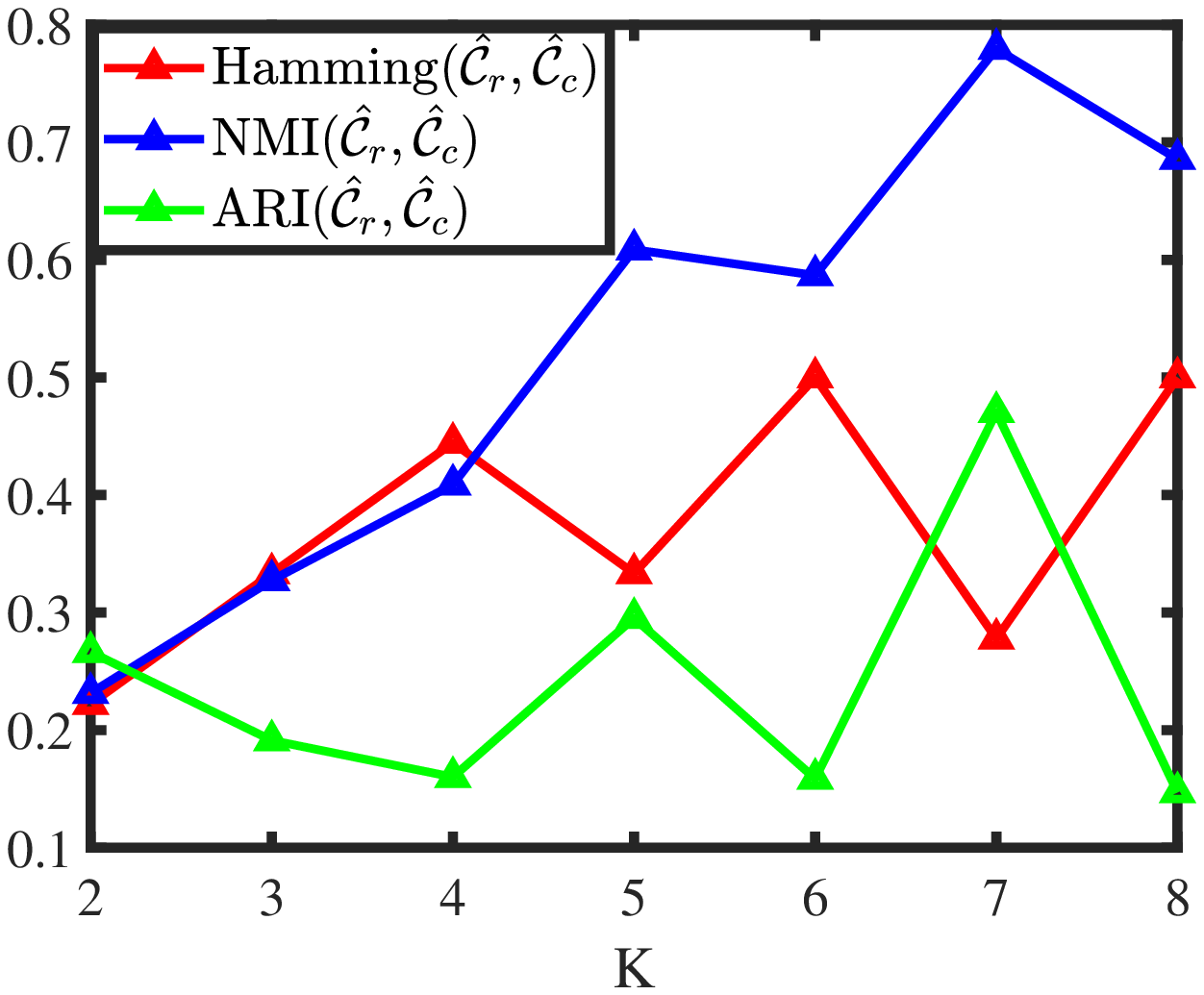}}
\subfigure[Dutch college]{\includegraphics[width=0.24\textwidth]{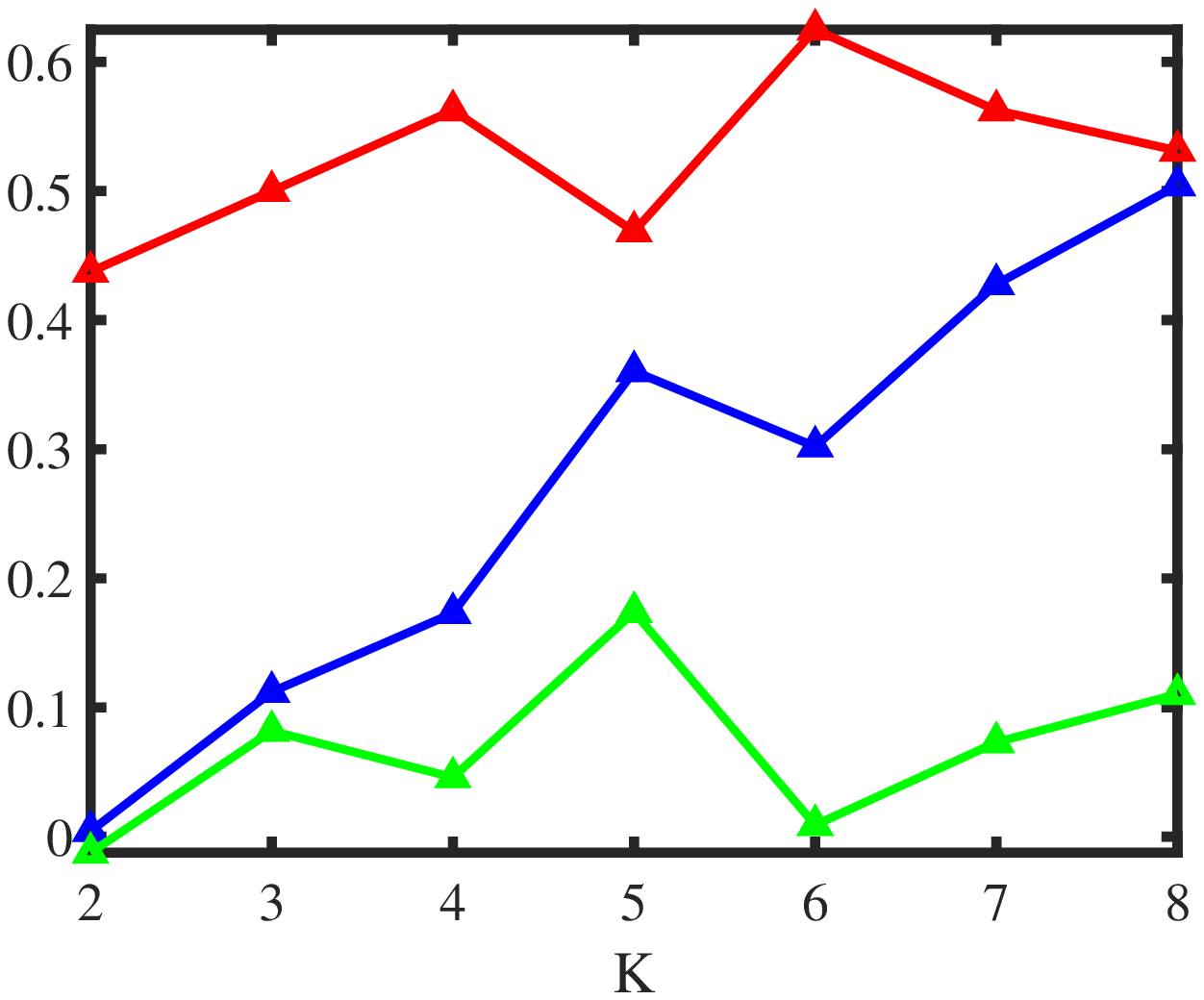}}
\subfigure[Highschool]{\includegraphics[width=0.24\textwidth]{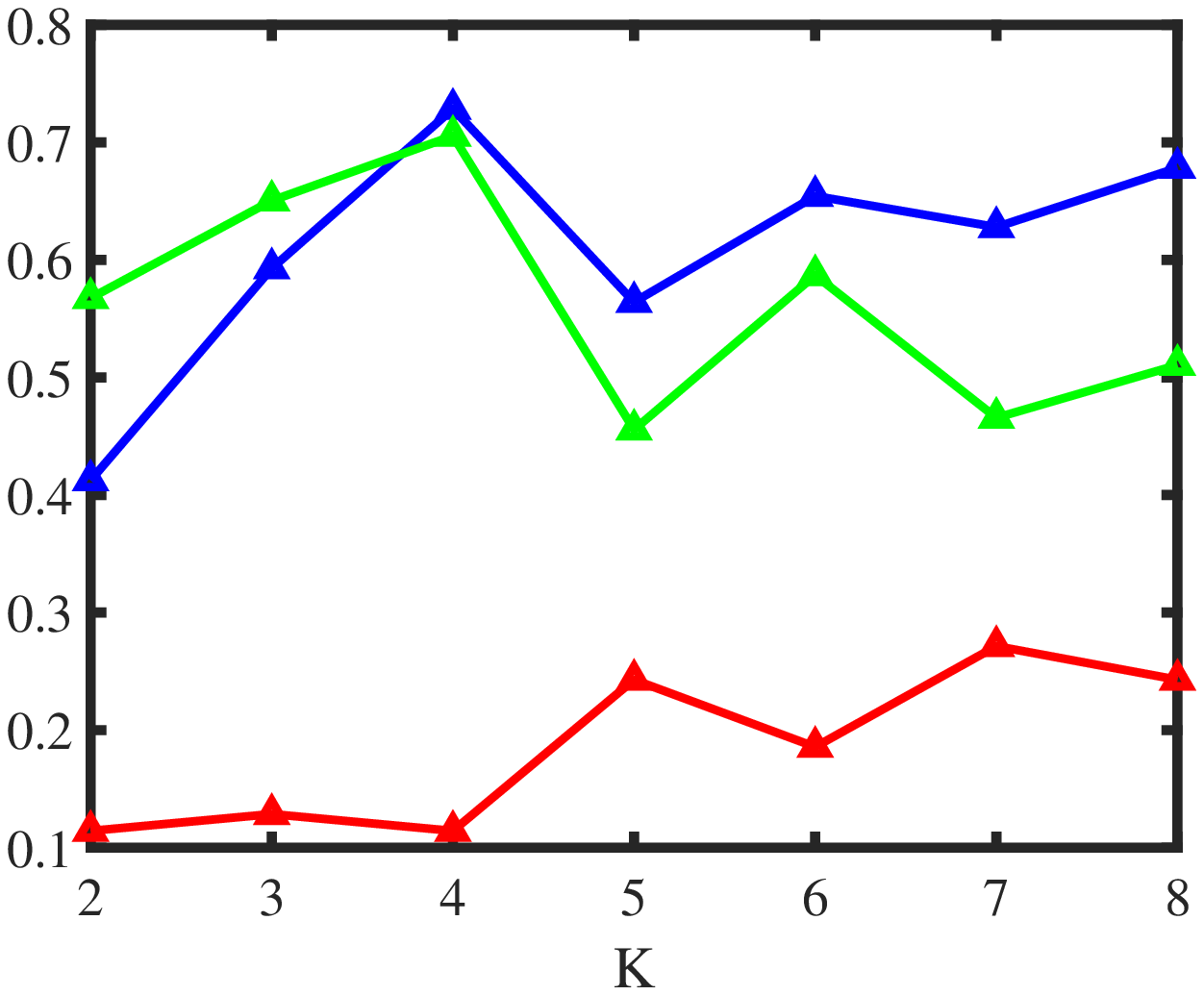}}
\subfigure[Facebook-like Social Network]{\includegraphics[width=0.24\textwidth]{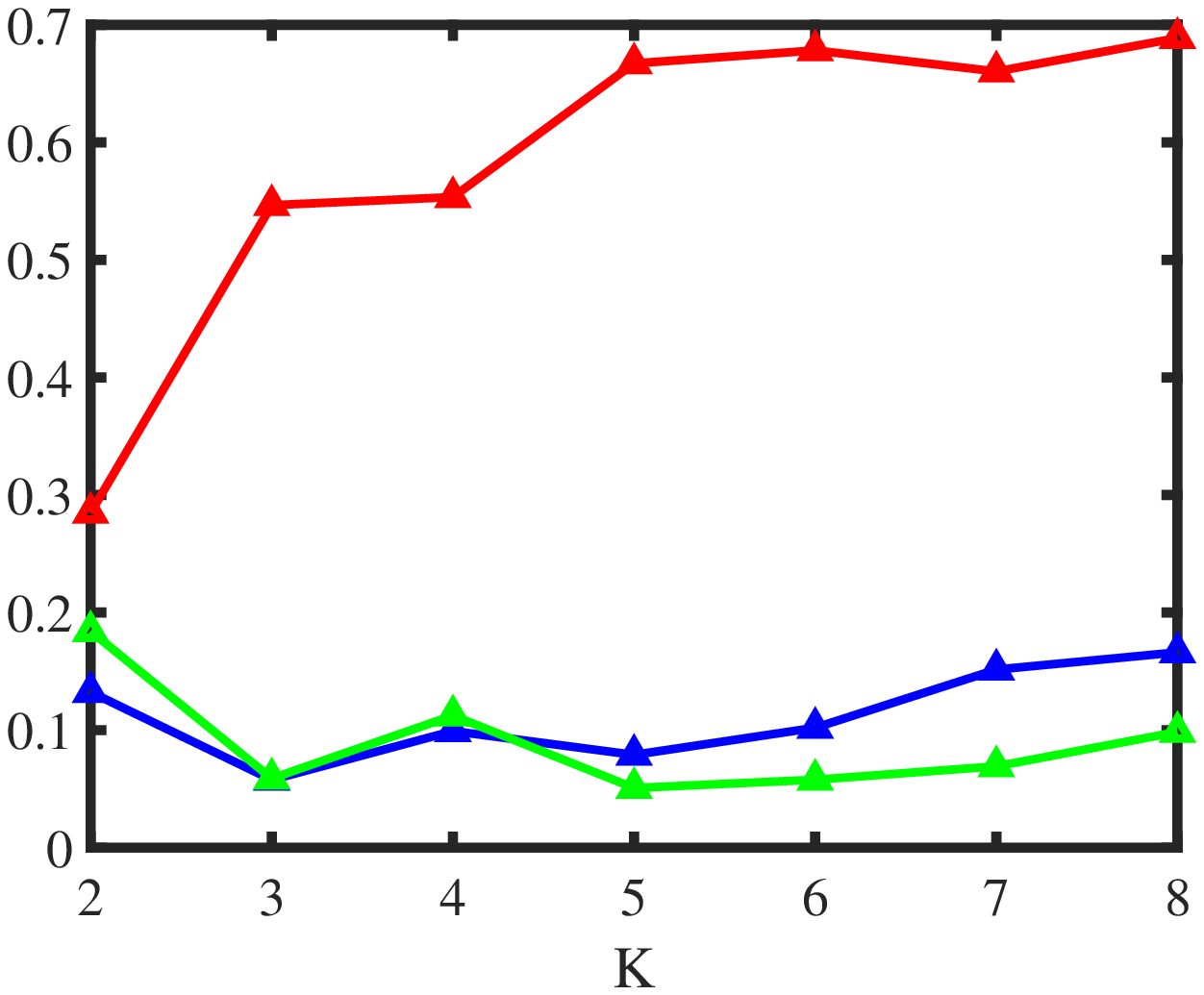}}
\caption{Three indices measuring difference between row clusters and column clusters against $K$ when applying nBiSC on datasets. }
\label{ErrorK} 
\end{figure}
\begin{figure}
\centering
\subfigure[row clusters of Crisis in a Cloister]{\includegraphics[width=0.35\textwidth]{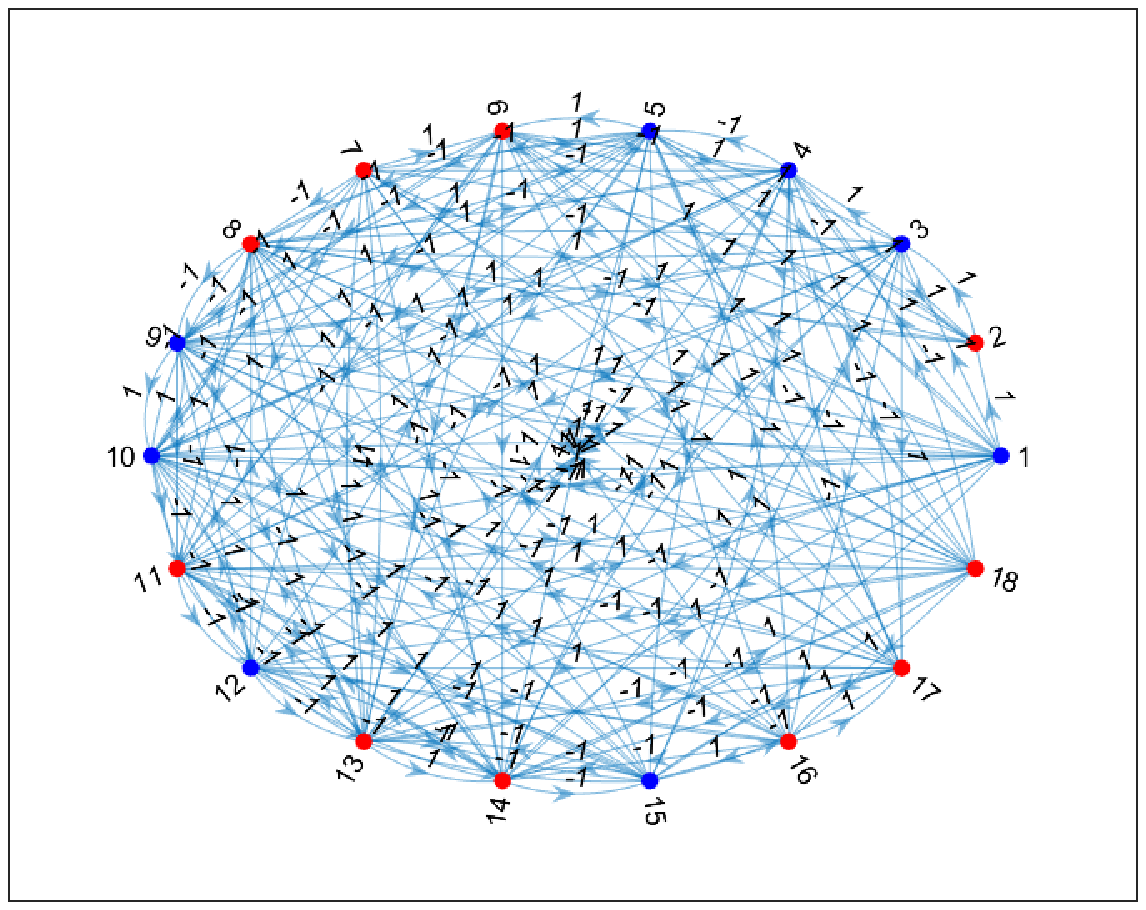}}
\subfigure[column clusters of Crisis in a Cloister]{\includegraphics[width=0.35\textwidth]{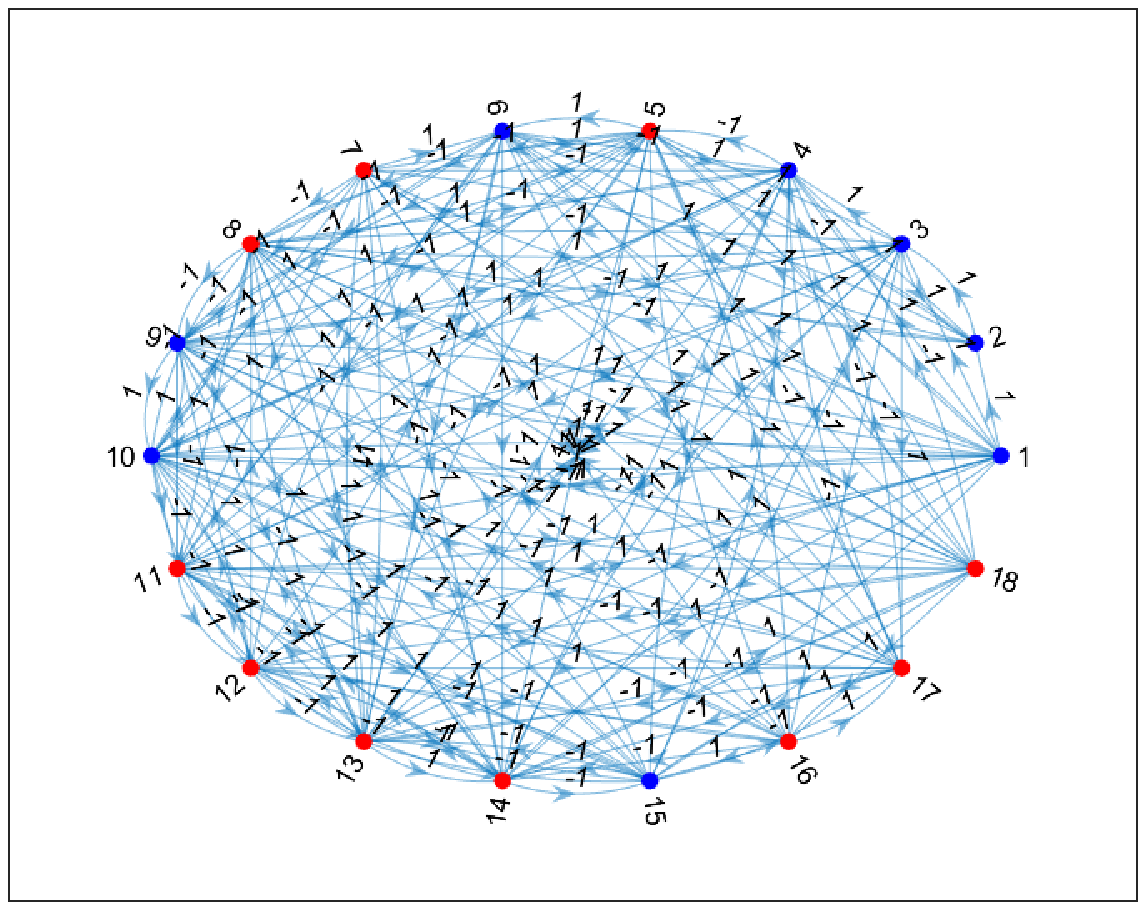}}
\subfigure[row clusters of Dutch college]{\includegraphics[width=0.35\textwidth]{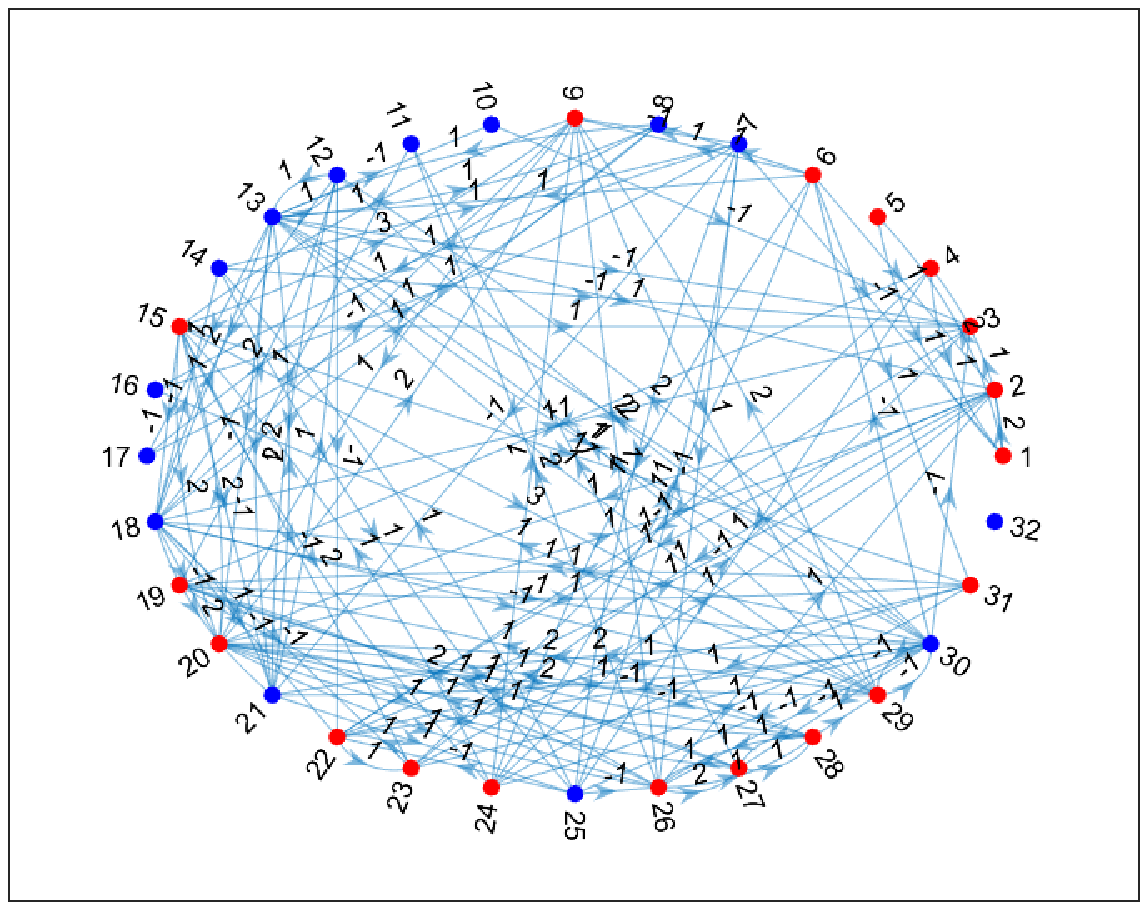}}
\subfigure[column clusters of Dutch college]{\includegraphics[width=0.35\textwidth]{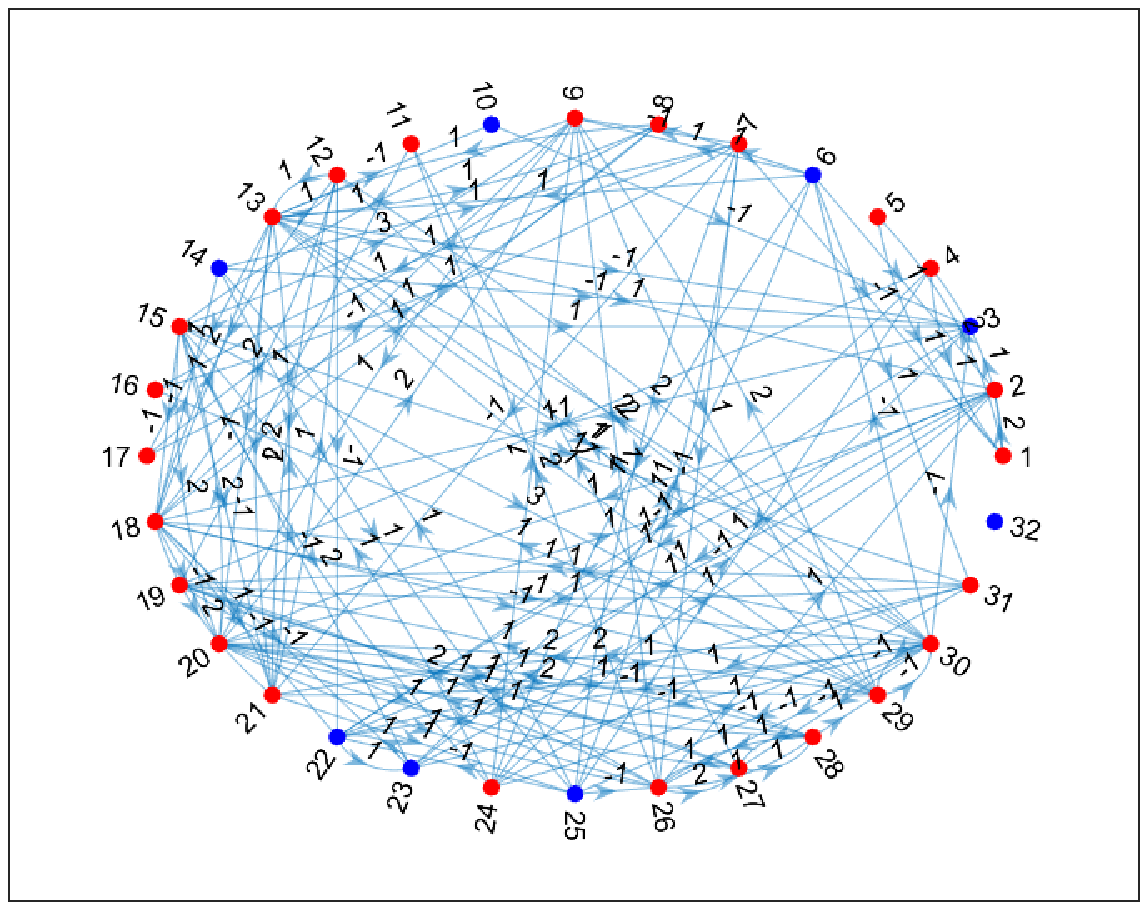}}
\subfigure[row clusters of Highschool]{\includegraphics[width=0.35\textwidth]{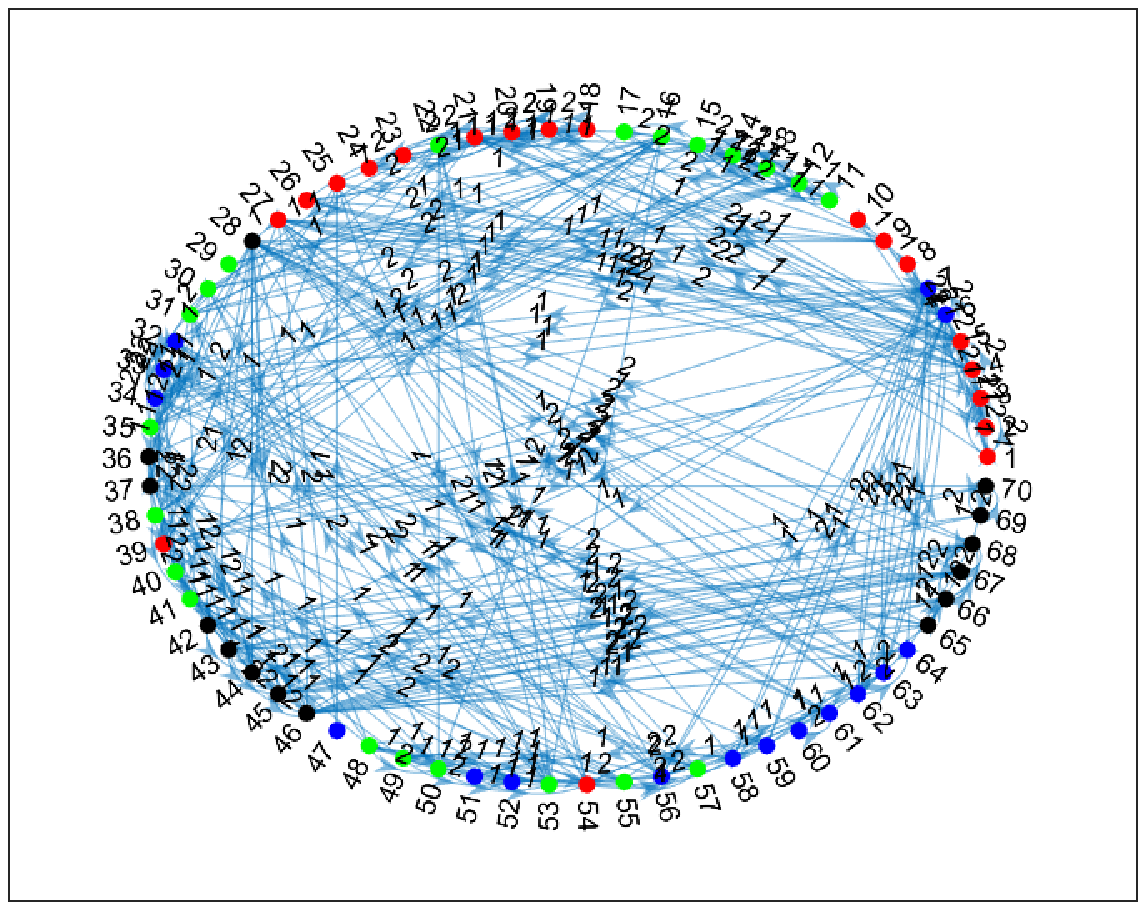}}
\subfigure[column clusters of Highschool]{\includegraphics[width=0.35\textwidth]{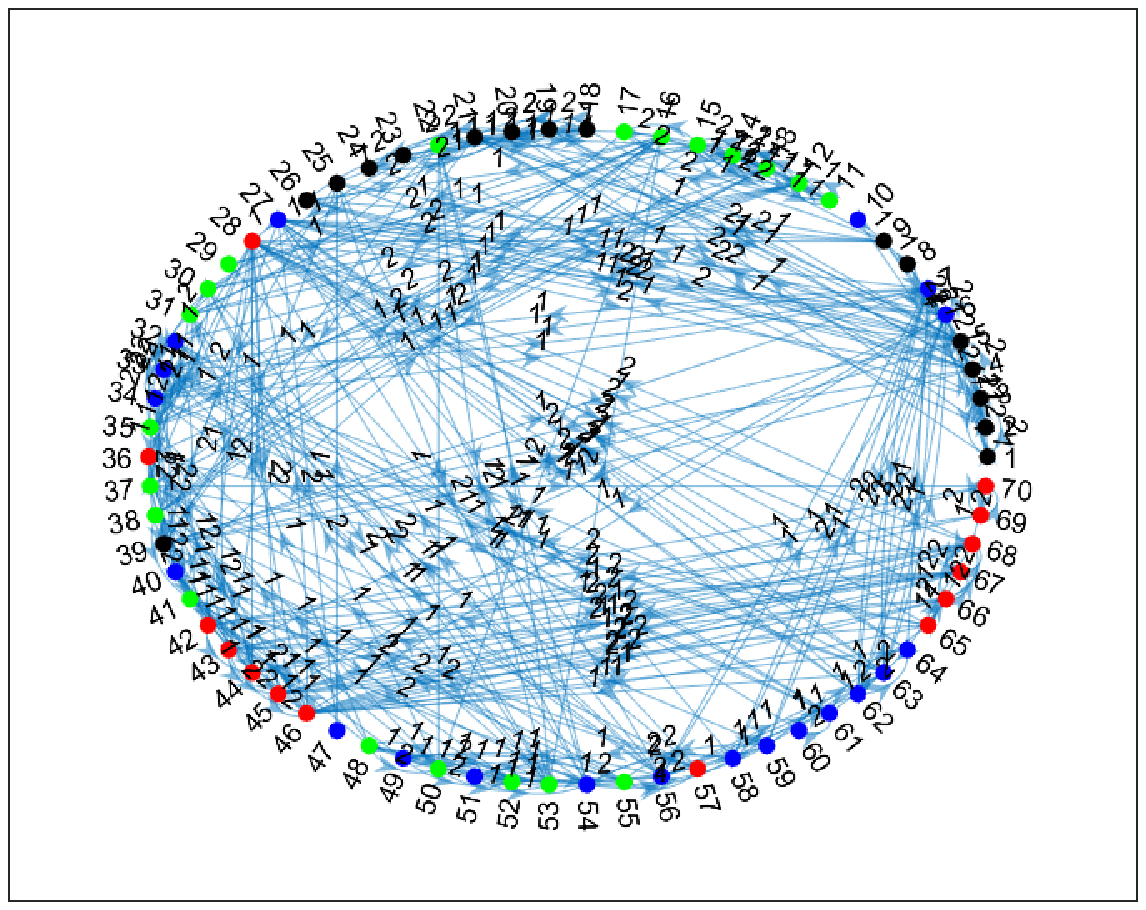}}
\caption{Row and column clusters obtained by nBiSC. For Crisis in a Cloister, Dutch college, and Highschool, as suggested by panels (q), (r), and (s) of Figure \ref{HistEig8}, we set $K$ as 2,2,4, respectively. Panel (a) records the row clusters of monks in Crisis in Cloister, where the value from monk $i$ to monk $j$ denotes the ratings of monk $i$ on monk $j$, and the 18 monks are color-coded by their row cluster returned by nBiSC. Panel (b) records the column clusters of monks in Crisis in Cloister returned by nBiSC. Similar illustration for other panels.}
\label{RandC} 
\end{figure}
\subsubsection{Real networks without ground truth}
For networks without knowing ground-truth labels and the number of clusters, we plot the histogram of $d_{r}$ and $d_{c}$, and the top 8 singular values of $A$ for these datasets in Figure \ref{HistEig8}. The long tail of node degree suggests the variety of nodes degrees, and we only apply nBiSC to these networks since nBiSC performs better than BiSC when the degree varies and nBiSC usually enjoys at least competitive performance with DI-SIM, D-SCORE, and rD-SCORE. To choose the number of communities for Crisis in a Cloister, Dutch college, and Facebook-like Social network, the eigengaps reveal an ``elbow" at the second singular value, suggesting $K=2$, where \cite{rohe2016co} also used the idea of eigengap to estimate $K$ for real-world directed networks with an unknown number of clusters. For Highschool, the eigengap suggests $K=4$.

Because no ground-truth clusters are available for these networks, and row nodes are the same as column nodes, and $K_{r}=K_{c}=K$, we measure the similarity between row and column clusters by three indices $\mathrm{Hamming}(\hat{\mathcal{C}}_{r},\hat{\mathcal{C}}_{c})\equiv n^{-1}\mathrm{min}_{J\in\mathcal{P}_{K}}\|\hat{Z}_{r}J-\hat{Z}_{c}\|_{0}$, $\mathrm{NMI}(\hat{\mathcal{C}}_{r},\hat{\mathcal{C}}_{c})$ obtained by using $\hat{\mathcal{C}}_{c}$ to replace $\mathcal{C}_{r}$ in Eq (\ref{NMI}), and $\mathrm{ARI}(\hat{\mathcal{C}}_{r},\hat{\mathcal{C}}_{c})$ obtained by using $\hat{\mathcal{C}}_{c}$ to replace $\mathcal{C}_{r}$ in Eq (\ref{ARI}), where $\mathcal{P}_{K}$ is the set of all $K\times K$ permutation matrices and we have used the facts that $\hat{Z}_{r}$ and $\hat{Z}_{c}$ can be obtained from $\hat{\mathcal{C}}_{r}$ and $\hat{\mathcal{C}}_{c}$, respectively. For these three indices, a larger $\mathrm{Hamming}(\hat{\mathcal{C}}_{r},\hat{\mathcal{C}}_{c})$ (or a smaller $\mathrm{NMI}(\hat{\mathcal{C}}_{r},\hat{\mathcal{C}}_{c})$ and a smaller $\mathrm{ARI}(\hat{\mathcal{C}}_{r},\hat{\mathcal{C}}_{c})$) indicates heavier asymmetric structures between row and column communities. Our results are reported in Figure \ref{ErrorK} when changing $K$. Since $\mathrm{Hamming}(\hat{\mathcal{C}}_{r},\hat{\mathcal{C}}_{c})$ is large, $\mathrm{NMI}(\hat{\mathcal{C}}_{r},\hat{\mathcal{C}}_{c})$ and $\mathrm{ARI}(\hat{\mathcal{C}}_{r},\hat{\mathcal{C}}_{c})$ are small for Facebook-like Social Network, we see that this data has the heaviest asymmetric structure between row clusters and column clusters among these four datasets while Highschool has the slightest asymmetric structure. For better visibility on asymmetric structure for nodes, Figure \ref{RandC} depicts row and column clusters returned by nBiSC on three small networks Crisis in a Cloister, Dutch
college, and Highschool with $K$ suggested by eigengap provided by Figure \ref{HistEig8}. We compare node labels between panels (a) and (b) of Figure \ref{RandC} to explain the asymmetric structure. We noticed that although the row nodes and column nodes are the same, the row clusters and column clusters can be  different. This is mainly due to the directionality. For example,  for Crisis in Cloister network, a node represents a monk and an edge between two monks shows that the row monk rated the column monk. Figure \ref{RandC} panel (a) demonstrates the clusters of monks who rated, while panel (b) shows the clusters of monks who were rated. For example, node $1$ and $2$ are in different colors  in panel (a), but they are in the same color  in panel (b), which means that node $1$ and $2$ have different properties when they are rating, but they are similar when they were rated.  Similar arguments hold for Dutch college and Highschool networks, and these support the asymmetric structure in these datasets.

\section{Discussion}\label{sec5}
This paper aims to detect community structures in the weighted bipartite network by extending both spectral clustering and the Stochastic co-Blockmodel to a distribution-free framework. We introduce the Bipartite Distribution-Free model, which is, to the best of our knowledge, the first null model for community detection on weighted bipartite networks. The distribution-free property of BiDFM allows the adjacency matrix $A$ can be generated from various distributions. Under BiDFM, we propose a spectral clustering method BiSC, and then we build a theoretical guarantee on the consistent estimation of the proposed method. We also extend BiDFM to BiDCDFM by considering the degree heterogeneity, and so are the algorithm and the theoretical results. Due to the distribution-free property, our error rates are general. If the distribution is specified, the error rates can also be calculated accordingly, and some examples are given in the paper. From simulation studies, numerical results support our theoretical results. On empirical weighted bipartite networks,  results suggest the dissimilarity between row clusters and column clusters. We expect that the proposed models will have applications beyond this paper, and can be widely used for detecting community structures of weighted bipartite networks in many areas, such as biology, sociology, physiology, computer science, transportation, economy, and so on.

Our idea can be extended in many ways. The hybrid-order stochastic block model \citep{wu2021hybrid} extends SBM to uniformly model the lower-order structure and higher-order structure of an undirected network. It is interesting to extend the hybrid-order idea of \cite{wu2021hybrid} to weighted bipartite networks. The binary tree stochastic block model introduced in \cite{li2022hierarchical} can model the hierarchical tree structure in an undirected unweighted network. Extending BiDFM and BiDCDFM to model the hierarchical tree structure in a weighted bipartite network is appealing. Similar to \cite{rohe2011spectral, RSC,joseph2016impact,rohe2016co}, it is of interest to design algorithms with a theoretical guarantee based on a regularized Laplacian matrix to fit BiDFM and BiDCDFM. BiSC and nBiSC can also be accelerated by the ideas of random-projection and random-sampling developed in \cite{zhang2022randomized} to handle large-scale networks. In this paper, we only focus on non-overlapping networks in which a node only belongs to a single community. Similar to \cite{MMSB,mao2018overlapping,mao2020estimating, OCCAM}, we can extend BiDFM and BiDCDFM to model overlapping weighted bipartite networks in which a node can belong to multiple communities. We leave studies of these problems for our future work.
\section*{CRediT authorship contribution statement}
\textbf{Huan Qing:} Conceptualization, Methodology, Investigation, Software, Formal analysis, Data curation, Writing-original draft, Writing-reviewing \& editing, Funding acquisition. \textbf{Jingli Wang:} Supervision, Methodology, Validation, Visualization,
Writing – review \& editing, Funding acquisition.
\section*{Declaration of competing interest}
The authors declare no competing interests.
\section*{Data availability}
Data and code will be made available on request.
\section*{Acknowledgements}
Qing's work was supported by the High level personal project of Jiangsu Province NO.JSSCBS20211218. Wang's work was supported by the Fundamental Research Funds for the Central Universities, Nankai Univerity, 63221044 and the National Natural Science Foundation of China (Grant 12001295).
\appendix
\section*{Appendix}
\section{Proof of theoretical results  for BiSC}\label{s1}
\subsection{Proof of Lemma \ref{boundABiDFM}}
\begin{proof}
We use Theorem 1.6 (Bernstein inequality for Rectangular case) in \cite{tropp2012user} to bound $\|A-\Omega\|$. This theorem is written below
\begin{thm}\label{Bern}
	Consider a finite sequence $\{X_{k}\}$ of independent, random matrices with dimensions $d_{1}\times d_{2}$. Assume that each random matrix satisfies
	\begin{align*}
		\mathbb{E}[X_{k}]=0, \mathrm{and~}\|X_{k}\|\leq R~\mathrm{almost~surely}.
	\end{align*}
	Then, for all $t\geq 0$,
	\begin{align*}
		\mathbb{P}(\|\sum_{k}X_{k}\|\geq t)\leq (d_{1}+d_{2})\cdot \mathrm{exp}(\frac{-t^{2}/2}{\sigma^{2}+Rt/3}),
	\end{align*}
	where $\sigma^{2}:=\mathrm{max}\{\|\sum_{k}\mathbb{E}(X_{k}X'_{k})\|,\|\sum_{k}\mathbb{E}(X'_{k}X_{k})\|\}$.
\end{thm}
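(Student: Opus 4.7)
The plan is to reduce the rectangular matrix Bernstein inequality to the Hermitian case via the dilation trick, and then establish the Hermitian case through the matrix Laplace transform method, Tropp's subadditivity of matrix cumulants (a consequence of Lieb's concavity theorem), and a scalar MGF estimate of Bernstein type.

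First, I would introduce the Hermitian dilation
\[
Y_k = \begin{pmatrix} 0 & X_k \\ X_k' & 0 \end{pmatrix} \in \mathbb{R}^{(d_1+d_2)\times(d_1+d_2)}.
\]
A direct computation gives $\|Y_k\| = \|X_k\| \leq R$, $\mathbb{E}[Y_k]=0$, and $Y_k^2$ block-diagonal with blocks $X_k X_k'$ and $X_k' X_k$, so $\|\sum_k \mathbb{E}[Y_k^2]\|$ is exactly the variance proxy $\sigma^2$ in the theorem statement. Since $\|\sum_k X_k\| = \lambda_{\max}(\sum_k Y_k)$, the problem reduces to showing that for symmetric zero-mean $Y_k$ of dimension $d := d_1 + d_2$ with $\|Y_k\| \leq R$ almost surely,
\[
\mathbb{P}\bigl(\lambda_{\max}(S) \geq t\bigr) \leq d \exp\Bigl(\frac{-t^2/2}{\sigma^2 + Rt/3}\Bigr), \quad S := \sum_k Y_k.
\]

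Second, I would apply the matrix Chernoff bound $\mathbb{P}(\lambda_{\max}(S) \geq t) \leq e^{-\theta t}\mathbb{E}[\mathrm{tr}\, e^{\theta S}]$ for $\theta>0$, and handle the matrix exponential of a sum via the inequality
\[
\mathbb{E}[\mathrm{tr}\, e^{\theta S}] \leq \mathrm{tr}\exp\Bigl(\sum_k \log \mathbb{E}[e^{\theta Y_k}]\Bigr),
\]
which is Tropp's master tail bound derived from Lieb's concavity theorem. This reduces the analysis to controlling the matrix cumulant $\log \mathbb{E}[e^{\theta Y_k}]$ of a single bounded summand. For that step I would use the scalar inequality $e^x \leq 1 + x + \frac{x^2/2}{1 - x/3}$ valid for $x<3$, extended functionally to the Hermitian matrix $\theta Y_k$ (whose spectrum lies in $(-3,3)$ as long as $\theta R < 3$), together with $\mathbb{E}[Y_k]=0$, to obtain $\mathbb{E}[e^{\theta Y_k}] \preceq I + \tfrac{\theta^2/2}{1-\theta R/3}\mathbb{E}[Y_k^2]$. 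Operator concavity of $\log$ then gives $\log \mathbb{E}[e^{\theta Y_k}] \preceq \tfrac{\theta^2/2}{1-\theta R/3}\mathbb{E}[Y_k^2]$. Summing and using monotonicity of $\mathrm{tr}\,\exp$ in the Loewner order yields
\[
\mathbb{E}[\mathrm{tr}\, e^{\theta S}] \leq d \exp\Bigl(\frac{\theta^2 \sigma^2/2}{1 - \theta R/3}\Bigr).
\]
Plugging into the Chernoff bound and optimizing with $\theta = t/(\sigma^2 + Rt/3) \in (0,3/R)$ produces the stated inequality.

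The main obstacle is Lieb's concavity theorem, which underpins the subadditivity of matrix cumulant generating functions and is a nontrivial piece of matrix analysis; it is also what supplies the dimensional prefactor $d_1+d_2$. Everything else — the dilation bookkeeping, the scalar Bernstein-type Taylor estimate, and the Chernoff optimization — is routine. Since Theorem \ref{Bern} is quoted verbatim from \cite{tropp2012user}, in the paper I would invoke Lieb's inequality and Tropp's master tail bound as black boxes rather than reproducing their proofs, and merely use the theorem downstream to bound $\|A-\Omega\|$ in Lemma \ref{boundABiDFM}.
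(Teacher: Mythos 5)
Your proposal is correct: the paper itself offers no proof of this statement, quoting it verbatim as Theorem 1.6 of \cite{tropp2012user} and using it only as a black box in the proofs of Lemmas \ref{boundABiDFM} and \ref{boundABiDCDFM}, which is exactly the course of action you settle on in your final paragraph. Your sketch of the underlying argument (Hermitian dilation, Lieb's concavity theorem via Tropp's master tail bound, the scalar Bernstein MGF estimate, and Chernoff optimization at $\theta = t/(\sigma^{2}+Rt/3)$) accurately reproduces the standard proof in the cited reference, so there is nothing to correct.
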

Let $e_{i_{r}}$ be an $n_{r}\times 1$ vector with $e_{i_{r}}(i_{r})=1$ and $0$ elsewhere for row nodes $1\leq i_{r}\leq n_{r}$, and $e_{j_{c}}$ be an $n_{c}\times 1$ vector with $e_{j_{c}}(j_{c})=1$ and $0$ elsewhere for column nodes $1\leq j_{c}\leq n_{c}$.
Set $W=A-\Omega$, then $W=\sum_{i_{r}=1}^{n_{r}}\sum_{j_{c}=1}^{n_{c}}W(i_{r},j_{c})e_{i_{r}}e'_{j_{c}}$. Set $W^{(i_{r},j_{c})}=W(i_{r},j_{c})e_{i_{r}}e'_{j_{c}}$. Since $\mathbb{E}(W(i_{r},j_{c}))=\mathbb{E}(A(i_{r},j_{c})-\Omega(i_{r},j_{c}))=0$, we have $\mathbb{E}(W^{(i_{r},j_{c})})=0$ and
\begin{align*}	\|W^{(i_{r},j_{c})}\|=\|(A(i_{r},j_{c})-\Omega(i_{r},j_{c}))e_{i_{r}}e'_{j_{c}}\|=|A(i_{r},j_{c})-\Omega(i_{r},j_{c})|\|e_{i_{r}}e'_{j_{c}}\|\leq \tau,
\end{align*}
i.e.,  $R=\tau$.

Next, consider the variance parameter
\begin{align*}	\sigma^{2}=\mathrm{max}\{\|\sum_{i_{r}=1}^{n_{r}}\sum_{j_{c}=1}^{n_{c}}\mathbb{E}(W^{(i_{r},j_{c})}(W^{(i_{r},j_{c})})')\|,\|\sum_{i_{r}=1}^{n_{r}}\sum_{j_{c}=1}^{n_{c}}\mathbb{E}((W^{(i_{r},j_{c})})'W^{(i_{r},j_{c})})\|\}.
\end{align*}
Since $\mathbb{E}(W^{2}(i_{r},j_{c}))=\mathbb{E}((A(i_{r},j_{c})-\Omega(i_{r},j_{c}))^{2})=\mathrm{Var}(A(i_{r},j_{c}))\leq \gamma\rho$, we have
\begin{align*}	&\|\sum_{i_{r}=1}^{n_{r}}\sum_{j_{c}=1}^{n_{c}}\mathbb{E}(W^{(i_{r},j_{c})}(W^{(i_{r},j_{c})})')\|=\|\sum_{i_{r}=1}^{n_{r}}\sum_{j_{c}=1}^{n_{c}}\mathbb{E}(W^{2}(i_{r},j_{c}))e_{i_{r}}e'_{j_{c}}e_{j_{c}}e'_{i_{r}}\|\\ &=\|\sum_{i_{r}=1}^{n_{r}}\sum_{j_{c}=1}^{n_{c}}\mathbb{E}(W^{2}(i_{r},j_{c}))e_{i_{r}}e'_{i_{r}}\|\leq \gamma\rho n_{c}.
\end{align*}
Similarly, we have $\|\sum_{i_{r}=1}^{n_{r}}\sum_{j_{c}=1}^{n_{c}}\mathbb{E}((W^{(i_{r},j_{c})})'W^{(i_{r},j_{c})})\|\leq \gamma\rho n_{r}$, which gives that
\begin{align*}
	\sigma^{2}\leq \gamma\rho\mathrm{~max}(n_{r},n_{c}).
\end{align*}
Set $t=\frac{\alpha+1+\sqrt{\alpha^{2}+20\alpha+19}}{3}\sqrt{\gamma\rho \mathrm{~max}(n_{r},n_{c})\mathrm{log}(n_{r}+n_{c})}$. By Theorem \ref{Bern}, we have
\begin{align*}
	&\mathbb{P}(\|W\|\geq t)\leq (n_{r}+n_{c})\mathrm{exp}(-\frac{t^{2}/2}{\sigma^{2}+\frac{Rt}{3}})\leq (n_{r}+n_{c})\mathrm{exp}(-\frac{t^{2}/2}{\gamma\rho\mathrm{~max}(n_{r},n_{c})+Rt/3})\\	&=(n_{r}+n_{c})\mathrm{exp}(-(\alpha+1)\mathrm{log}(n_{r}+n_{c})\cdot \frac{1}{\frac{2(\alpha+1)\gamma\rho\mathrm{~max}(n_{r},n_{c})\mathrm{log}(n_{r}+n_{c})}{t^{2}}+\frac{2(\alpha+1)}{3}\frac{R\mathrm{log}(n_{r}+n_{c})}{t}})\\	&=(n_{r}+n_{c})\mathrm{exp}(-(\alpha+1)\mathrm{log}(n_{r}+n_{c})\cdot \frac{1}{\frac{18}{(\sqrt{\alpha+19}+\sqrt{\alpha+1})^{2}}+\frac{2\sqrt{\alpha+1}}{\sqrt{\alpha+19}+\sqrt{\alpha+1}}\sqrt{\frac{R^{2}\mathrm{log}(n_{r}+n_{c})}{\gamma\rho\mathrm{~max}(n_{r},n_{c})}}})\\
	&\leq (n_{r}+n_{c})\mathrm{exp}(-(\alpha+1)\mathrm{log}(n_{r}+n_{c}))=\frac{1}{(n_{r}+n_{c})^{\alpha}},
\end{align*}
where we have used Assumption \ref{asump} and the fact that \begin{align*}&\frac{18}{(\sqrt{\alpha+19}+\sqrt{\alpha+1})^{2}}+\frac{2\sqrt{\alpha+1}}{\sqrt{\alpha+19}+\sqrt{\alpha+1}}\sqrt{\frac{R^{2}\mathrm{log}(n_{r}+n_{c})}{\gamma\rho\mathrm{~max}(n_{r},n_{c})}}\\
	&\leq \frac{18}{(\sqrt{\alpha+19}+\sqrt{\alpha+1})^{2}}+\frac{2\sqrt{\alpha+1}}{\sqrt{\alpha+19}+\sqrt{\alpha+1}}=1\end{align*}
in the last inequality. Thus, the claim follows.
\end{proof}
\subsection{Proof of Theorem \ref{mainBiDFM}}
First, we present Lemma \ref{boundUrUcBiDFM} which bounds the difference between $\hat{U}_{r}$ and $U_{r}$  ($\hat{U}_{c}$ and $U_{c}$) up to an orthogonal matrix, and this lemma is directly related with the error rates of the BiSC algorithm. For convenience, denote $err_{s}=\|A-\Omega\|$ under $BiDFM(Z_{r}, Z_{c},P,\rho)$.
\begin{lem}\label{boundUrUcBiDFM}
Under $BiDFM(Z_{r}, Z_{c}, P,\rho)$, we have
\begin{align*}	\mathrm{max}(\|\hat{U}_{r}\hat{O}-U_{r}\|_{F},\|\hat{U}_{c}\hat{O}-U_{c}\|_{F})\leq\frac{2\sqrt{2K_{r}}err_{s}}{\sigma_{K_{r}}(P)\rho\sqrt{n_{r,\mathrm{min}}n_{c,\mathrm{min}}}},
\end{align*}
where $\hat{O}$ is a $K_{r}\times K_{r}$ orthogonal matrix.
\end{lem}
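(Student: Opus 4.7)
The plan is to reduce the two-sided singular-vector bound to a single eigenvector perturbation bound by passing to the symmetric dilation
\[
B \;=\; \begin{pmatrix} 0 & \Omega \\ \Omega' & 0 \end{pmatrix}, \qquad \hat{B} \;=\; \begin{pmatrix} 0 & A \\ A' & 0 \end{pmatrix},
\]
which are $(n_r+n_c)\times(n_r+n_c)$ symmetric matrices with $\|\hat B-B\|=\|A-\Omega\|=err_s$. The leading $K_r$ eigenvectors of $B$ (respectively $\hat B$), corresponding to the positive eigenvalues $\sigma_1(\Omega)\geq\cdots\geq\sigma_{K_r}(\Omega)>0$, stack $U_r$ on top of $U_c$ (respectively $\hat U_r$ on top of $\hat U_c$), each scaled by $1/\sqrt{2}$. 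The point of this device is that any single $K_r\times K_r$ orthogonal matrix $\hat O$ that aligns the leading $K_r$-dimensional invariant subspace of $\hat B$ with that of $B$ acts block-diagonally on the stacked matrices, and therefore rotates both $\hat U_r$ and $\hat U_c$ by the \emph{same} $\hat O$.

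I would then apply a Davis--Kahan $\sin\Theta$ theorem for symmetric matrices with eigengap $\sigma_{K_r}(\Omega)-\sigma_{K_r+1}(\Omega)=\sigma_{K_r}(\Omega)$ (using $\mathrm{rank}(\Omega)=K_r$), combined with the standard $\|\sin\Theta\|_F\leq\sqrt{K_r}\|\sin\Theta\|$ and the conversion $\inf_{\hat O}\|\hat V\hat O-V\|_F\leq\sqrt{2}\|\sin\Theta\|_F$ between subspace-angle and orthogonal-rotation distances. This yields
\[
\Big\|\tfrac{1}{\sqrt{2}}\begin{pmatrix}\hat U_r\\ \hat U_c\end{pmatrix}\hat O-\tfrac{1}{\sqrt{2}}\begin{pmatrix}U_r\\ U_c\end{pmatrix}\Big\|_F \;\leq\; \frac{c\sqrt{K_r}\,err_s}{\sigma_{K_r}(\Omega)},
\]
and expanding the left side as $\tfrac{1}{\sqrt{2}}(\|\hat U_r\hat O-U_r\|_F^2+\|\hat U_c\hat O-U_c\|_F^2)^{1/2}$ transfers an $O(\sqrt{K_r}\,err_s/\sigma_{K_r}(\Omega))$ bound to $\max(\|\hat U_r\hat O-U_r\|_F,\|\hat U_c\hat O-U_c\|_F)$, with the precise constant matching the claimed $2\sqrt{2K_r}$ once the standard Davis--Kahan constants are tracked.

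To replace $\sigma_{K_r}(\Omega)$ by the model-level expression, I would exploit the factorization $\Omega=\rho Z_rPZ_c'$. Since $Z_r$ is a membership matrix, $Z_r'Z_r=\mathrm{diag}(n_{r,1},\ldots,n_{r,K_r})$, so its smallest singular value equals $\sqrt{n_{r,\min}}$; likewise $\sigma_{K_c}(Z_c)=\sqrt{n_{c,\min}}$. Writing $\Omega$ as the product $(\rho Z_r)\cdot P\cdot Z_c'$ and applying the multiplicative bound for the smallest nonzero singular value of a product of full-rank factors (handled, for the fat factor $PZ_c'$, via the identity $\lambda_{\min}(V'Z_c'Z_cV)\geq n_{c,\min}$ for the orthonormal right singular vectors $V$ of $P$), I would derive
\[
\sigma_{K_r}(\Omega)\;\geq\;\rho\,\sigma_{K_r}(P)\sqrt{n_{r,\min}\,n_{c,\min}}.
\]
Substituting this lower bound into the Davis--Kahan estimate gives exactly the conclusion of the lemma.

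The main obstacle I anticipate is coupling the left and right rotations: a standard one-sided Wedin theorem would only deliver two, a priori different, orthogonal matrices for $\hat U_r$ and $\hat U_c$, and the subsequent $K$-means analysis in Theorem \ref{mainBiDFM} is cleaner when both matrices are compared against their targets using a common $\hat O$. The symmetric dilation above is what unifies them, since any rotation of the invariant subspace of $B$ acts identically on both halves of the stacked eigenvectors. The remaining work is bookkeeping: chasing constants through Davis--Kahan and through the Frobenius-to-operator-norm conversion, and verifying the product inequality for $\sigma_{K_r}(\Omega)$ with the ``fat'' middle factor $P$.
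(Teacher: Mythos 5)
Your proposal follows essentially the same route as the paper: the paper invokes Lemma 3 of \cite{zhou2019analysis}, whose proof is precisely the symmetric-dilation-plus-Davis--Kahan argument you outline (which is what furnishes a single common $\hat{O}$ for $\hat{U}_{r}$ and $\hat{U}_{c}$), and it then lower-bounds $\lambda_{K_{r}}(\Omega\Omega')$ by $\rho^{2}\sigma^{2}_{K_{r}}(P)n_{r,\mathrm{min}}n_{c,\mathrm{min}}$ exactly as you do via $\lambda_{K_{r}}(Z'_{r}Z_{r})=n_{r,\mathrm{min}}$ and $\lambda_{K_{r}}(Z'_{c}Z_{c})\geq n_{c,\mathrm{min}}$. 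The only cosmetic difference is that the paper obtains its factor $2$ by comparing $\Omega$ with the rank-$K_{r}$ truncation $\tilde{A}$ and a triangle inequality, whereas you track the Davis--Kahan constants directly, which may yield a slightly different absolute constant but the same bound in substance.
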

\begin{proof}
By the proof of Lemma 3 in \cite{zhou2019analysis} (the technique for the proof of \cite{zhou2019analysis}'s Lemma 3 is based on applications of Davis-Kahan theorem \citep{yu2015a} and the symmetric dilation operator, see their proof for detail), we know that there exists an orthogonal matrix $\hat{O}$ such that
\begin{align*}	\mathrm{max}(\|\hat{U}_{r}\hat{O}-U_{r}\|_{F},\|\hat{U}_{c}\hat{O}-U_{c}\|_{F})\leq\frac{\sqrt{2K_{r}}\|\tilde{A}-\Omega\|}{\sqrt{\lambda_{K_{r}}(\Omega\Omega')}}.
\end{align*}
Since $\tilde{A}$ is the $K_{r}$ dimensional reduced SVD of $A$ and $\mathrm{rank}(\Omega)=K_{r}$, we have $\|A-\tilde{A}\|\leq\|A-\Omega\|$, which gives that $\|\tilde{A}-\Omega\|=\|\tilde{A}-A+A-\Omega\|\leq 2\|A-\Omega\|$. Hence, there exists a $K_{r}\times K_{r}$ orthogonal matrix $\hat{O}$ such that
\begin{align}\label{BoungZhouZhixin} \mathrm{max}(\|\hat{U}_{r}\hat{O}-U_{r}\|_{F},\|\hat{U}_{c}\hat{O}-U_{c}\|_{F})\leq\frac{2\sqrt{2K_{r}}\|A-\Omega\|}{\sqrt{\lambda_{K_{r}}(\Omega\Omega')}}.
\end{align}
For $\lambda_{K_{r}}(\Omega\Omega')$, we have
\begin{align*}	\lambda_{K_{r}}(\Omega\Omega')&=\lambda_{K_{r}}(\rho Z_{r}PZ'_{c}\rho Z_{c}P'Z'_{r})=\rho^{2}\lambda_{K_{r}}(Z'_{r}Z_{r}PZ'_{c}Z_{c}P')\\
	&\geq\rho^{2}\lambda_{K_{r}}(Z'_{r}Z_{r})\lambda_{K_{r}}(PZ'_{c}Z_{c}P')=\rho^{2}\lambda_{K_{r}}(Z'_{r}Z_{r})\lambda_{K_{r}}(Z'_{c}Z_{c}P'P)\\ &\geq\rho^{2}\lambda_{K_{r}}(Z'_{r}Z_{r})\lambda_{K_{r}}(Z'_{c}Z_{c})\lambda_{K_{r}}(PP')\geq \rho^{2} n_{r,\mathrm{min}}n_{c,\mathrm{min}}\lambda_{K_{r}}(PP'),
\end{align*}
where we have used the facts that $\lambda_{K_{r}}(Z'_{r}Z_{r})=n_{r,\mathrm{min}}, \lambda_{K_{r}}(Z'_{c}Z_{c})\geq n_{c,\mathrm{min}}$, and for any matrices $X, Y$, the nonzero eigenvalues of $XY$ are the same as the nonzero eigenvalues of $YX$. The lower bound of $\lambda_{K_{r}}(\Omega\Omega')$ gives
\begin{align*} \mathrm{max}(\|\hat{U}_{r}\hat{O}-U_{r}\|_{F},\|\hat{U}_{c}\hat{O}-U_{c}\|_{F})\leq\frac{2\sqrt{2K_{r}}\|A-\Omega\|}{\sigma_{K_{r}}(P)\rho\sqrt{n_{r,\mathrm{min}}n_{c,\mathrm{min}}}}.
\end{align*}
\end{proof}
\begin{proof}
Now, we start the proof of Theorem \ref{mainBiDFM}. Let $\varsigma_{r}>0$ be a small quantity, by Lemma 2 in \cite{joseph2016impact} and Lemma 2.1, if
\begin{align}\label{holdrBiDFM}	\frac{\sqrt{K_{r}}}{\varsigma_{r}}\|U_{r}-\hat{U}_{r}\hat{O}\|_{F}(\frac{1}{\sqrt{n_{r,k}}}+\frac{1}{\sqrt{n_{r,l}}})\leq \sqrt{\frac{1}{n_{r,k}}+\frac{1}{n_{r,l}}}, \mathrm{~for~each~}1\leq k\neq l\leq K_{r},
\end{align}
then the clustering error $\hat{f}_{r}=O(\varsigma^{2}_{r})$. Setting $\varsigma_{r}=\sqrt{\frac{2K_{r}n_{r,\mathrm{max}}}{n_{r,\mathrm{min}}}}\|U_{r}-\hat{U}_{r}\hat{O}\|_{F}$ makes Eq (\ref{holdrBiDFM}) hold for all $1\leq k\neq l\leq K_{r}$. Then we have $\hat{f}_{r}=O(\varsigma^{2}_{r})=O(\frac{K_{r}n_{r,\mathrm{max}}\|U_{r}-\hat{U}_{r}\hat{O}\|^{2}_{F}}{n_{r,\mathrm{min}}})$ by Lemma 2 in \cite{joseph2016impact}. Combine with Lemma \ref{boundUrUcBiDFM}, we have
\begin{align*}
	\hat{f}_{r}=O(\frac{K^{2}_{r}n_{r,\mathrm{max}}}{n_{r,\mathrm{min}}}\frac{err^{2}_{s}}{\sigma^{2}_{K_{r}}(P)\rho^{2}n_{r,\mathrm{min}}n_{c,\mathrm{min}}}).
\end{align*}
Similarly, let $\varsigma_{c}>0$, if
\begin{align}\label{holdcBiDFM}	\frac{\sqrt{K_{c}}}{\varsigma_{c}}\|U_{c}-\hat{U}_{c}\hat{O}\|_{F}(\frac{1}{\sqrt{n_{c,k}}}+\frac{1}{\sqrt{n_{c,l}}})\leq \|X_{c}(k,:)-X_{c}(l,:)\|_{F}, \mathrm{~for~each~}1\leq k\neq l\leq K_{c},
\end{align}
then $\hat{f}_{c}=O(\varsigma^{2}_{c})$. Since we set $\delta_{c}=\mathrm{min}_{k\neq l}\|X_{c}(k,:)-X_{c}(l,:)\|_{F}$, setting $\varsigma_{c}=\frac{2}{\delta_{c}}\sqrt{\frac{K_{c}}{n_{c,\mathrm{min}}}}\|U_{c}-\hat{U}_{c}\hat{O}\|_{F}$ makes Eq (\ref{holdcBiDFM}) hold for all $1\leq k\neq l\leq K_{c}$. Then we have $\hat{f}_{c}=O(\varsigma^{2}_{c})=O(\frac{K_{c}\|U_{c}-\hat{U}_{c}\hat{O}\|^{2}_{F}}{\delta^{2}_{c}n_{c,\mathrm{min}}})$. By Lemma \ref{boundUrUcBiDFM}, we have
\begin{align*}
	\hat{f}_{c}=O(\frac{K_{r}K_{c}}{\delta^{2}_{c}n_{c,\mathrm{min}}}\frac{err^{2}_{s}}{\sigma^{2}_{K_{r}}(P)\rho^{2}n_{r,\mathrm{min}}n_{c,\mathrm{min}}}).
\end{align*}
By Lemma \ref{boundABiDFM}, the theorem holds.
\end{proof}
\vspace*{-10pt}
\section{Proof of theoretical results  for nBiSC}\label{s2}
\subsection{Proof of Lemma \ref{populationBiDCDFM}}
\begin{proof}
Before proving this lemma, we provide one lemma which presents SVD of $\Omega$ and is helpful for our proof.
Let $D_{r}\in\mathbb{R}^{K_{r}\times K_{r}}, D_{c}\in\mathbb{R}^{K_{c}\times K_{c}}$ be two diagonal matrices such that
\begin{align*}
	D_{r}(k,k)=\frac{\|\Theta_{r}Z_{r}(:,k)\|_{F}}{\|\theta_{r}\|_{F}}, D_{c}(l,l)=\frac{\|\Theta_{c}Z_{c}(:,l)\|_{F}}{\|\theta_{c}\|_{F}}, \qquad \mathrm{for~}1\leq k\leq K_{r}, 1\leq l\leq K_{c}.
\end{align*}
Let $\Gamma_{r}\in \mathbb{R}^{n_{r}\times K_{r}}, \Gamma_{c}\in \mathbb{R}^{n_{c}\times K_{c}}$ be two matrices such that
\begin{align*}
	\Gamma_{r}(:,k)=\frac{\Theta_{r}Z_{r}(:,k)}{\|\Theta_{r}Z_{r}(:,k)\|_{F}}, \Gamma_{c}(:,l)=\frac{\Theta_{c}Z_{c}(:,l)}{\|\Theta_{c}Z_{c}(:,l)\|_{F}}, \qquad \mathrm{for~}1\leq k\leq K_{r},1\leq l\leq K_{c}.
\end{align*}
Then we have $\Gamma'_{r}\Gamma_{r}=I_{K_{r}},$ $ \Gamma'_{c}\Gamma_{c}=I_{K_{c}}$, and $\Omega=\|\theta_{r}\|_{F}\|\theta_{c}\|_{F}\Gamma_{r}D_{r}PD_{c}\Gamma'_{c}$. The following lemma provides the singular value decomposition of $\Omega$.
\begin{lem}\label{eigenvectorsBiDCDFM}
	Under $BiDCDFM(Z_{r}, Z_{c}, P, \Theta_{r}, \Theta_{c})$, let  $\Omega=U_{r}\Lambda U'_{c}$ be the compact singular value decomposition of $\Omega$ (i.e., $U_{r}\in \mathbb{R}^{n_{r}\times K_{r}}, U_{c}\in \mathbb{R}^{n_{c}\times K_{r}}$ and $U'_{r}U_{r}=I_{K_{r}}, U'_{c}U_{c}=I_{K_{r}}$). Let $D_{r}PD_{c}=V_{r}\Sigma V'_{c}$ be the compact singular value decomposition of $D_{r}PD_{c}$ (i.e., $V_{r}\in\mathbb{R}^{K_{r}\times K_{r}}, V_{c}\in\mathbb{R}^{K_{c}\times K_{r}}$, and $V'_{r}V_{r}=I_{K_{r}}, V'_{c}V_{c}=I_{K_{r}}$), we have
	\begin{align*}
		\Lambda=\|\theta_{r}\|_{F}\|\theta_{c}\|_{F}\Sigma, U_{r}=\Gamma_{r}V_{r}, \mathrm{and~}U_{c}=\Gamma_{r}V_{c}.
	\end{align*}
\end{lem}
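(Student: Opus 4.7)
The plan is to verify the claimed factorization directly and then use uniqueness of the compact SVD. First, I would observe that since $Z_r$ and $Z_c$ are membership matrices, the columns of $Z_r$ (respectively $Z_c$) have disjoint supports indexed by the row (respectively column) clusters. This immediately yields the factorizations
\begin{equation*}
\Theta_r Z_r = \|\theta_r\|_F\,\Gamma_r D_r, \qquad \Theta_c Z_c = \|\theta_c\|_F\,\Gamma_c D_c,
\end{equation*}
since the $k$-th column of $\Theta_r Z_r$ equals $\Theta_r Z_r(:,k)$, whose Frobenius norm is $\|\theta_r\|_F D_r(k,k)$, and is precisely that scalar times the unit vector $\Gamma_r(:,k)$. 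The same disjoint-support argument shows $\Gamma_r'\Gamma_r = I_{K_r}$ and $\Gamma_c'\Gamma_c = I_{K_c}$, which the excerpt already states.

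Next, plugging these factorizations into $\Omega = \Theta_r Z_r P Z_c'\Theta_c$ gives
\begin{equation*}
\Omega = \|\theta_r\|_F\|\theta_c\|_F\,\Gamma_r\,(D_r P D_c)\,\Gamma_c'.
\end{equation*}
Substituting the compact SVD $D_r P D_c = V_r \Sigma V_c'$ yields
\begin{equation*}
\Omega = \Gamma_r V_r\,\bigl(\|\theta_r\|_F\|\theta_c\|_F\,\Sigma\bigr)\,(\Gamma_c V_c)'.
\end{equation*}

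It remains to verify that the right-hand side really is a compact SVD of $\Omega$. Setting $U_r := \Gamma_r V_r$, $U_c := \Gamma_c V_c$, $\Lambda := \|\theta_r\|_F\|\theta_c\|_F\Sigma$, orthonormality follows from
\begin{equation*}
U_r'U_r = V_r'\Gamma_r'\Gamma_r V_r = V_r'V_r = I_{K_r},
\end{equation*}
and analogously $U_c'U_c = I_{K_r}$; $\Lambda$ is diagonal with positive entries since $D_r, D_c$ are invertible and $P$ has rank $K_r$ (so $D_r P D_c$ has rank $K_r$). Therefore $U_r \Lambda U_c'$ is a valid compact SVD of $\Omega$, and by the statement of the lemma this identifies $U_r$, $U_c$, and $\Lambda$ as claimed. (I would also note in passing what appears to be a typo in the statement: the conclusion should read $U_c = \Gamma_c V_c$, not $U_c = \Gamma_r V_c$; the dimensions alone force this correction.)

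There is no real obstacle here — the argument is pure linear algebra. The only subtlety worth flagging is that compact SVDs are unique only up to an orthogonal change of basis within each singular-value eigenspace, so the equality $U_r = \Gamma_r V_r$ should be read as the statement that this is \emph{a} valid choice of SVD factors, consistent with the conventions fixed in the lemma; all downstream use of the lemma only needs the column-span identities $\mathrm{col}(U_r) = \mathrm{col}(\Gamma_r)$ and $\mathrm{col}(U_c) = \mathrm{col}(\Gamma_c)$, which the construction verifies directly.
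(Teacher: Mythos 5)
Your proof is correct and follows essentially the same route as the paper: substitute the factorization $\Omega=\|\theta_{r}\|_{F}\|\theta_{c}\|_{F}\Gamma_{r}D_{r}PD_{c}\Gamma'_{c}$, insert the compact SVD of $D_{r}PD_{c}$, verify orthonormality of $\Gamma_{r}V_{r}$ and $\Gamma_{c}V_{c}$, and identify the factors with the compact SVD of $\Omega$; your added remarks (the uniqueness caveat and the typo $U_{c}=\Gamma_{c}V_{c}$ rather than $\Gamma_{r}V_{c}$) are both apt.
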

\begin{proof}
	Since $\Omega=\|\theta_{r}\|_{F}\|\theta_{c}\|_{F}\Gamma_{r}D_{r}PD_{c}\Gamma'_{c}$ and $D_{r}PD_{c}=V_{r}\Sigma V'_{c}$, we have
	\begin{align}\label{OmegaExp}
		\Omega=(\Gamma_{r}V_{r})(\|\theta_{r}\|_{F}\|\theta_{c}\|_{F}\Sigma) (\Gamma_{c}V_{c})'.
	\end{align}
	Since $(\Gamma_{r}V_{r})'\Gamma_{r}V_{r}=I_{K_{r}}, (\Gamma_{c}V_{c})'\Gamma_{c}V_{c}=I_{K_{r}}$, $\Gamma_{r}V_{r}$ and $\Gamma_{c}V_{c}$ have orthogonal columns. Thus, Eq (\ref{OmegaExp}) is the compact SVD of $\Omega$. Since $\Omega=U_{r}\Lambda U'_{c}$ denotes the compact SVD of $\Omega$, we have
	\begin{align*}
		\Lambda=\|\theta_{r}\|_{F}\|\theta_{c}\|_{F}\Sigma, U_{r}=\Gamma_{r}V_{r}, \mathrm{and~}U_{c}=\Gamma_{r}V_{c}.
	\end{align*}
\end{proof}
Now, we consider $U_{r,*}$ first. By Lemma \ref{eigenvectorsBiDCDFM}, since $U_{r}=\Gamma_{r}V_{r}$, combine it with the definition of $\Gamma_{r}$, for $1\leq i_{r}\leq n_{r}, 1\leq k\leq K_{r}$, we have
\begin{align*}	U_{r}(i_{r},k)&=e'_{i_{r}}U_{r}e_{k}=e'_{i_{r}}\Gamma_{r}V_{r}e_{k}=\Gamma_{r}(i_{r},:)V_{r}e_{k}\\	&=\theta_{r}(i_{r})[\frac{Z_{r}(i_{r},1)}{\|\Theta_{r}Z_{r}(:,1)\|_{F}}~\frac{Z_{r}(i_{r},2)}{\|\Theta_{r}Z_{r}(:,2)\|_{F}}~\ldots~\frac{Z_{r}(i_{r},K_{r})}{\|\Theta_{r}Z_{r}(:,K_{r})\|_{F}}]V_{r}e_{k}\\
	&=\frac{\theta_{r}(i_{r})}{\|\Theta_{r}Z_{r}(:,g_{i_{r}})\|_{F}}V_{r}(g_{i_{r}},k),
\end{align*}
which gives that
\begin{align*}	U_{r}(i_{r},:)=\frac{\theta_{r}(i_{r})}{\|\Theta_{r}Z_{r}(:,g_{i_{r}})\|_{F}}[V_{r}(g_{i_{r}},1)~V_{r}(g_{i_{r}},2~)\ldots~V_{r}(g_{i_{r}},K_{r})]=\frac{\theta_{r}(i_{r})}{\|\Theta_{r}Z_{r}(:,g_{i_{r}})\|_{F}}V_{r}(g_{i_{r}},:).
\end{align*}
For convenience, set $V_{r}(:,k)=a_{k}$ for $1\leq k\leq K_{r}$, then we have
\begin{align}\label{Uex}
	U_{r,*}(i_{r},:)=V_{r}(g_{i_{r}})=[a_{1}(g_{i_{r}}), a_{2}(g_{i_{r}}), \ldots, a_{K_{r}}(g_{i_{r}})],
\end{align}
where the last equality holds since $a_{1}, a_{2}, \ldots, a_{K_{r}}$ form an orthonormal base, i.e., $\sum_{j=1}^{K_{r}}a^{2}_{j}(k)=1, \sum_{l=1}^{K_{r}}a^{2}_{k}(l)=1$ and $\sum_{j=1}^{K_{r}}a_{j}(k)a_{j}(l)=0$ for $1\leq k,l\leq K_{r}$. Hence, when $g_{i_{r}}=g_{\bar{i}_{r}}$ and $i_{r}\neq \bar{i}_{r}$, we have $U_{r,*}(i_{r},:)=U_{r,*}(\bar{i}_{r},:)$ for $i_{r},\bar{i}_{r} =1,\dots, n_{r}$. When $g_{i_{r}}\neq g_{\bar{i}_{r}}$, we have
\begin{align*}	\|U_{r,*}(i_{r},:)-U_{r,*}(\bar{i}_{r},:)\|_{F}=\sqrt{\sum_{k=1}^{K_{r}}(a_{k}(g_{i_{r}})-a_{k}(g_{\bar{i}_{r}}))^{2}}=\sqrt{2}.
\end{align*}
Meanwhile, Eq (\ref{Uex}) also gives that $U_{r,*}=Z_{r}V_{r}$.

For $U_{c,*}$, follow similar proof as $U_{r,*}$, we have $U_{c,*}=Z_{c}V_{c}$. When $K_{r}<K_{c}$, the $K_{r}$ columns of $V_{c}$ do not form an orthonormal base. When $K_{r}=K_{c}$, the $K_{r}$ columns of $V_{c}$ form an orthonormal base, and this lemma holds by following similar proof as $U_{r,*}$.
\end{proof}
\subsection{Proof of Lemma \ref{boundABiDCDFM}}
\begin{proof}
Similar to the proof of Lemma \ref{boundABiDFM}, we bound $\|A-\Omega\|$ under $BiDCDFM(Z_{r},Z_{c},P,\Theta_{r},\Theta_{c})$ by Theorem \ref{Bern}. Set $W=A-\Omega$, and $W^{(i_{r},j_{c})}=W(i_{r},j_{c})e_{i_{r}}e'_{j_{c}}$. Since $\mathbb{E}(W(i_{r},j_{c}))=\mathbb{E}(A(i_{r},j_{c})-\Omega(i_{r},j_{c}))=0$, we have $\mathbb{E}(W^{(i_{r},j_{c})})=0$ and
$\|W^{(i_{r},j_{c})}\|\leq\tau$, i.e., $R=\tau$.
Next, consider the variance parameter
\begin{align*}	\sigma^{2}=\mathrm{max}\{\|\sum_{i_{r}=1}^{n_{r}}\sum_{j_{c}=1}^{n_{c}}\mathbb{E}(W^{(i_{r},j_{c})}(W^{(i_{r},j_{c})})')\|,\|\sum_{i_{r}=1}^{n_{r}}\sum_{j_{c}=1}^{n_{c}}\mathbb{E}((W^{(i_{r},j_{c})})'W^{(i_{r},j_{c})})\|\}.
\end{align*}
Since $\mathbb{E}(W^{2}(i_{r},j_{c}))=\mathbb{E}((A(i_{r},j_{c})-\Omega(i_{r},j_{c}))^{2})=\mathrm{Var}(A(i_{r},j_{c}))\leq\gamma_{*}\theta_{r}(i_{r})\theta_{c}(j_{c})$, we have
\begin{align*}	&\|\sum_{i_{r}=1}^{n_{r}}\sum_{j_{c}=1}^{n_{c}}\mathbb{E}(W^{(i_{r},j_{c})}(W^{(i_{r},j_{c})})')\|=\|\sum_{i_{r}=1}^{n_{r}}\sum_{j_{c}=1}^{n_{c}}\mathbb{E}(W^{2}(i_{r},j_{c}))e_{i_{r}}e'_{j_{c}}e_{j_{c}}e'_{i_{r}}\|\\	&=\|\sum_{i_{r}=1}^{n_{r}}\sum_{j_{c}=1}^{n_{c}}\mathbb{E}(W^{2}(i_{r},j_{c}))e_{i_{r}}e'_{i_{r}}\|\leq \gamma_{*}\theta_{r,\mathrm{max}}\|\theta_{c}\|_{1}.
\end{align*}
Similarly, we have $\|\sum_{i_{r}=1}^{n_{r}}\sum_{j_{c}=1}^{n_{c}}\mathbb{E}((W^{(i_{r},j_{c})})'W^{(i_{r},j_{c})})\|\leq\gamma_{*}\theta_{c,\mathrm{max}}\|\theta_{r}\|_{1}$, which gives that
\begin{align*}
	\sigma^{2}\leq\gamma_{*}\mathrm{~max}(\theta_{r,\mathrm{max}}\|\theta_{c}\|_{1},\theta_{c,\mathrm{max}}\|\theta_{r}\|_{1}).
\end{align*}
Set $t=\frac{\alpha+1+\sqrt{\alpha^{2}+20\alpha+19}}{3}\sqrt{\gamma_{*}\mathrm{~max}(\theta_{r,\mathrm{max}}\|\theta_{c}\|_{1},\theta_{c,\mathrm{max}}\|\theta_{r}\|_{1})\mathrm{log}(n_{r}+n_{c})}$. By Theorem \ref{Bern}, we have
\begin{align*}
	&\mathbb{P}(\|W\|\geq t)\leq (n_{r}+n_{c})\mathrm{exp}(-\frac{t^{2}/2}{\sigma^{2}+\frac{Rt}{3}})\leq (n_{r}+n_{c})\mathrm{exp}(-\frac{t^{2}/2}{\gamma_{*}\mathrm{max}(\theta_{r,\mathrm{max}}\|\theta_{c}\|_{1},\theta_{c,\mathrm{max}}\|\theta_{r}\|_{1})+Rt/3})\\	&=(n_{r}+n_{c})\mathrm{exp}(-(\alpha+1)\mathrm{log}(n_{r}+n_{c})\cdot \frac{1}{\frac{2(\alpha+1)\gamma_{*}\mathrm{max}(\theta_{r,\mathrm{max}}\|\theta_{c}\|_{1},\theta_{c,\mathrm{max}}\|\theta_{r}\|_{1})\mathrm{log}(n_{r}+n_{c})}{t^{2}}+\frac{2(\alpha+1)}{3}\frac{R\mathrm{log}(n_{r}+n_{c})}{t}})\\	&=(n_{r}+n_{c})\mathrm{exp}(-(\alpha+1)\mathrm{log}(n_{r}+n_{c})\cdot \frac{1}{\frac{18}{(\sqrt{\alpha+19}+\sqrt{\alpha+1})^{2}}+\frac{2\sqrt{\alpha+1}}{\sqrt{\alpha+19}+\sqrt{\alpha+1}}\sqrt{\frac{R^{2}\mathrm{log}(n_{r}+n_{c})}{\gamma_{*}\mathrm{max}(\theta_{r,\mathrm{max}}\|\theta_{c}\|_{1},\theta_{c,\mathrm{max}}\|\theta_{r}\|_{1})}}})\\
	&\leq (n_{r}+n_{c})\mathrm{exp}(-(\alpha+1)\mathrm{log}(n_{r}+n_{c}))=\frac{1}{(n_{r}+n_{c})^{\alpha}},
\end{align*}
where we have used Assumption \ref{assump2} that $\gamma_{*}\mathrm{max}(\theta_{r,\mathrm{max}}\|\theta_{c}\|_{1},\theta_{c,\mathrm{max}}\|\theta_{r}\|_{1})\geq\tau^{2}\mathrm{log}(n_{r}+n_{c})$ in the last inequality. Thus, the claim follows.
\end{proof}
\subsection{Proof of Theorem \ref{mainBiDCDFM}}
For convenience, set $err_{d}=\|A-\Omega\|$ under $BiDCDFM(Z_{r}, Z_{c}, P, \Theta_{r}, \Theta_{c})$. Next lemma is built under the BiDCDFM model and works similarly as Lemma \ref{boundUrUcBiDFM}.
\begin{lem}\label{boundUrUcBiDCDFM}
Under $BiDCDFM(Z_{r},Z_{c},P,\Theta_{r},\Theta_{c})$, we have
\begin{align*}
	&\|\hat{U}_{r,*}\hat{O}_{*}-U_{r,*}\|_{F}\leq \frac{4\theta_{r,\mathrm{max}}\sqrt{2K_{r}n_{r,\mathrm{max}}}err_{d}}{\theta^{2}_{r,\mathrm{min}}\theta_{c,\mathrm{min}}\sigma_{K_{r}}(P)\sqrt{n_{r,\mathrm{min}}n_{c,\mathrm{min}}}},\\
	&\|\hat{U}_{c,*}\hat{O}_{*}-U_{c,*}\|_{F}\leq \frac{4\theta_{c,\mathrm{max}}\sqrt{2K_{r}n_{c,\mathrm{max}}}err_{d}}{\theta_{r,\mathrm{min}}\theta^{2}_{c,\mathrm{min}}\sigma_{K_{r}}(P)m_{V_{c}}\sqrt{n_{r,\mathrm{min}}n_{c,\mathrm{min}}}}.
\end{align*}
where $\hat{O}_{*}$ is a $K_{r}\times K_{r}$ orthogonal matrix and $m_{V_{c}}=\mathrm{min}_{1\leq k\leq K_{c}}\|V_{c}(k,:)\|_{F}$.
\end{lem}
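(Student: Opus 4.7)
The plan is to mirror the proof of Lemma \ref{boundUrUcBiDFM} but add a row-normalization step, following the approach used in Qin and Rohe (2013) and Jin (2015) for degree-corrected spectral clustering. First I would obtain a non-normalized singular subspace bound: by the Davis--Kahan theorem together with the symmetric dilation trick (exactly as in the BiDFM case), there exists a $K_r \times K_r$ orthogonal matrix $\hat{O}$ such that
\begin{align*}
\max\bigl(\|\hat{U}_r\hat{O}-U_r\|_F,\|\hat{U}_c\hat{O}-U_c\|_F\bigr)\leq \frac{2\sqrt{2K_r}\,err_d}{\sqrt{\lambda_{K_r}(\Omega\Omega')}}.
\end{align*}

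Next I would lower-bound $\lambda_{K_r}(\Omega\Omega')$ in terms of the model parameters. Write $\Omega\Omega'=\Theta_r Z_r (P N P') Z_r'\Theta_r$ where $N=Z_c'\Theta_c^2 Z_c$ is a diagonal $K_c\times K_c$ matrix with entries at least $n_{c,\min}\theta_{c,\min}^2$, and similarly let $M=Z_r'\Theta_r^2 Z_r$. Using that nonzero eigenvalues of $XY$ coincide with those of $YX$, followed by Ostrowski-type product bounds $\lambda_{K_r}(AB)\geq\lambda_{K_r}(A)\lambda_{K_r}(B)$ for p.s.d.\ factors, I would deduce
\begin{align*}
\lambda_{K_r}(\Omega\Omega')\geq \lambda_{K_r}(M)\,\lambda_{K_r}(PNP')\geq n_{r,\min}n_{c,\min}\theta_{r,\min}^2\theta_{c,\min}^2\,\sigma_{K_r}^2(P),
\end{align*}
so that $\|\hat{U}_r\hat{O}-U_r\|_F$ and $\|\hat{U}_c\hat{O}-U_c\|_F$ are both at most $\frac{2\sqrt{2K_r}\,err_d}{\theta_{r,\min}\theta_{c,\min}\sigma_{K_r}(P)\sqrt{n_{r,\min}n_{c,\min}}}$.

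The final step transfers these bounds to the row-normalized versions. The standard inequality
\begin{align*}
\|\hat{U}_{r,*}\hat{O}_* - U_{r,*}\|_F\leq \frac{2\,\|\hat{U}_r\hat{O}-U_r\|_F}{\min_{i_r}\|U_r(i_r,:)\|_F}
\end{align*}
(obtained by showing entrywise that normalizing a vector is $2/\|\cdot\|$-Lipschitz) reduces matters to lower-bounding $\min_{i_r}\|U_r(i_r,:)\|_F$. By the explicit formula derived inside the proof of Lemma \ref{populationBiDCDFM}, $U_r(i_r,:)=\tfrac{\theta_r(i_r)}{\|\Theta_r Z_r(:,g_{i_r})\|_F}V_r(g_{i_r},:)$ with $\|V_r(g_{i_r},:)\|_F=1$, so
\begin{align*}
\min_{i_r}\|U_r(i_r,:)\|_F\geq \frac{\theta_{r,\min}}{\theta_{r,\max}\sqrt{n_{r,\max}}}.
\end{align*}
Plugging in gives the stated $\hat{U}_{r,*}$ bound after absorbing a factor of $2$. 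For the column side the same normalization argument applies, but rows of $V_c$ need not be unit vectors when $K_r<K_c$, so the lower bound becomes $\|U_c(j_c,:)\|_F\geq \tfrac{\theta_{c,\min}\,m_{V_c}}{\theta_{c,\max}\sqrt{n_{c,\max}}}$, producing the extra $m_{V_c}^{-1}$ factor in the second inequality.

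The main obstacle is the second step: the rank-$K_r$ singular value lower bound for $\Omega$, where the sandwich $\Theta_r Z_r(\,\cdot\,)Z_c'\Theta_c$ mixes tall membership matrices with diagonal heterogeneity matrices and a rectangular $P$. One must be careful to apply the product inequality only to p.s.d.\ square matrices (hence the reduction through $M$ and $N$), and to verify that both $M$ and $PNP'$ remain invertible of rank $K_r$; the assumption $\mathrm{rank}(P)=K_r\leq K_c$ together with $\mathrm{rank}(Z_r)=K_r$, $\mathrm{rank}(Z_c)=K_c$ and positivity of $\Theta_r,\Theta_c$ ensures this. A secondary subtlety is verifying that the orthogonal matrix $\hat{O}_*$ appearing after row normalization can be taken to equal the $\hat{O}$ from Davis--Kahan (this follows because normalizing is a row-wise operation that commutes with right multiplication by an orthogonal matrix).
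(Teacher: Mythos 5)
Your proposal is correct and follows essentially the same route as the paper's proof: the Davis--Kahan/dilation bound for $\mathrm{max}(\|\hat{U}_{r}\hat{O}_{*}-U_{r}\|_{F},\|\hat{U}_{c}\hat{O}_{*}-U_{c}\|_{F})$, the lower bound $\lambda_{K_{r}}(\Omega\Omega')\geq\theta^{2}_{r,\mathrm{min}}\theta^{2}_{c,\mathrm{min}}\sigma^{2}_{K_{r}}(P)n_{r,\mathrm{min}}n_{c,\mathrm{min}}$ (your regrouping through $M=Z'_{r}\Theta^{2}_{r}Z_{r}$ and $N=Z'_{c}\Theta^{2}_{c}Z_{c}$ yields the same value as the paper's eigenvalue chain, and is if anything cleaner since it keeps the product inequality on full-rank $K_{r}\times K_{r}$ p.s.d.\ factors), the row-normalization Lipschitz inequality (the paper cites Lemma F.2 of Mao et al.\ 2018 for exactly this), and the lower bounds on $\mathrm{min}_{i_{r}}\|U_{r}(i_{r},:)\|_{F}$ and $\mathrm{min}_{j_{c}}\|U_{c}(j_{c},:)\|_{F}$ via the explicit row formulas, including the $m_{V_{c}}$ factor on the column side. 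No gaps; your remark that row normalization commutes with right multiplication by the orthogonal matrix correctly justifies reusing the same $\hat{O}_{*}$, a point the paper leaves implicit.
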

\begin{proof}
Similar as the proof of Lemma \ref{boundUrUcBiDFM}, there exists a $K_{r}\times K_{r}$ orthogonal matrix $\hat{O}_{*}$ such that
\begin{align*}	\mathrm{max}(\|\hat{U}_{r}\hat{O}_{*}-U_{r}\|_{F},\|\hat{U}_{c}\hat{O}_{*}-U_{c}\|_{F})\leq\frac{2\sqrt{2K_{r}}\|A-\Omega\|}{\sqrt{\lambda_{K_{r}}(\Omega\Omega')}}.
\end{align*}
To obtain a lower bound of $\lambda_{K}(\Omega\Omega')$ under $BiDCDFM(Z_{r}, Z{c}, P, \Theta_{r}, \Theta_{c})$, we have
\begin{align*}	\lambda_{K_{r}}(\Omega\Omega')&=\lambda_{K_{r}}(\Theta_{r}Z_{r}PZ'_{c}\Theta^{2}_{c}Z_{c}P'Z'_{r}\Theta_{r})=\lambda_{K_{r}}(\Theta^{2}_{r}Z_{r}PZ'_{c}\Theta^{2}_{c}Z_{c}P'Z'_{r})\\
	&\geq\lambda_{K_{r}}(\Theta^{2}_{r})\lambda_{K_{r}}(Z_{r}PZ'_{c}\Theta^{2}_{c}Z_{c}P'Z'_{r})=\lambda_{K_{r}}(\Theta^{2}_{r})\lambda_{K_{r}}(Z'_{r}Z_{r}PZ'_{c}\Theta^{2}_{c}Z_{c}P')\\ &\geq\lambda_{K_{r}}(\Theta^{2}_{r})\lambda_{K_{r}}(Z'_{r}Z_{r})\lambda_{K_{r}}(PZ'_{c}\Theta^{2}_{c}Z_{c}P')=\lambda_{K_{r}}(\Theta^{2}_{r})\lambda_{K_{r}}(Z'_{r}Z_{r})\lambda_{K_{r}}(P'PZ'_{c}\Theta^{2}_{c}Z_{c})\\	&\geq\lambda_{K_{r}}(\Theta^{2}_{r})\lambda_{K_{r}}(Z'_{r}Z_{r})\lambda_{K_{r}}(P'P)\lambda_{K_{r}}(Z'_{c}\Theta^{2}_{c}Z_{c})=\lambda_{K_{r}}(\Theta^{2}_{r})\lambda_{K_{r}}(Z'_{r}Z_{r})\lambda_{K_{r}}(P'P)\lambda_{K_{r}}(Z_{c}Z'_{c}\Theta^{2}_{c})\\ &\geq\lambda_{K_{r}}(\Theta^{2}_{r})\lambda_{K_{r}}(Z'_{r}Z_{r})\lambda_{K_{r}}(P'P)\lambda_{K_{r}}(Z_{c}Z'_{c})\lambda_{K_{r}}(\Theta^{2}_{c})\geq\theta^{2}_{r,\mathrm{min}}\theta^{2}_{c,\mathrm{min}}\sigma^{2}_{K_{r}}(P)n_{r,\mathrm{min}}n_{c,\mathrm{min}},
\end{align*}
which gives that
\begin{align*}	\mathrm{max}(\|\hat{U}_{r}\hat{O}_{*}-U_{r}\|_{F},\|\hat{U}_{c}\hat{O}_{*}-U_{c}\|_{F})\leq\frac{2\sqrt{2K_{r}}err_{d}}{\theta_{r,\mathrm{min}}\theta_{c,\mathrm{min}}\sigma_{K_{r}}(P)\sqrt{n_{r,\mathrm{min}}n_{c,\mathrm{min}}}}.
\end{align*}
By Lemma F.2 in \cite{mao2018overlapping}, for $1\leq i_{r}\leq n_{r}, 1\leq j_{c}\leq n_{c}$, we have
\begin{align*}
	&\|\hat{U}_{r,*}(i_{r},:)\hat{O}_{*}-U_{r,*}(i_{r},:)\|_{F}\leq \frac{2\|\hat{U}_{r}(i_{r},:)\hat{O}_{*}-U_{r}(i_{r},:)\|_{F}}{\|U_{r}(i_{r},:)\|_{F}},\\
	&\|\hat{U}_{c,*}(j_{c},:)\hat{O}_{*}-U_{c,*}(j_{c},:)\|_{F}\leq \frac{2\|\hat{U}_{c}(j_{c},:)\hat{O}_{*}-U_{c}(j_{c},:)\|_{F}}{\|U_{c}(j_{c},:)\|_{F}}.
\end{align*}
Set $m_{r}=\mathrm{min}_{1\leq i_{r}\leq n_{r}}\|U_{r}(i_{r},:)\|_{F}$ and $m_{c}=\mathrm{min}_{1\leq j_{c}\leq n_{c}}\|U_{c}(j_{c},:)\|_{F}$, we have
\begin{align*}	&\|\hat{U}_{r,*}\hat{O}_{*}-U_{r,*}\|_{F}=\sqrt{\sum_{i_{r}=1}^{n_{r}}\|\hat{U}_{r,*}(i_{r},:)\hat{O}_{*}-U_{r,*}(i_{r},:)\|^{2}_{F}}\leq \frac{2\|\hat{U}_{r}\hat{O}_{*}-U_{r}\|_{F}}{m_{r}},\\	&\|\hat{U}_{c,*}\hat{O}_{*}-U_{c,*}\|_{F}=\sqrt{\sum_{j_{c}=1}^{n_{c}}\|\hat{U}_{c,*}(j_{c},:)\hat{O}_{*}-U_{c,*}(j_{c},:)\|^{2}_{F}}\leq \frac{2\|\hat{U}_{c}\hat{O}_{*}-U_{c}\|_{F}}{m_{c}}.
\end{align*}
Next, we provide lower bounds of $m_{r}$ and $m_{c}$ by below analysis. By the proof of Lemma \ref{populationBiDCDFM}, we have
\begin{align*}	U_{r}(i_{r},:)=\frac{\theta_{r}(i_{r})}{\|\Theta_{r}Z_{r}(:,g_{i_{r}})\|_{F}}V_{r}(g_{i_{r}},:),U_{c}(j_{c},:)=\frac{\theta_{c}(j_{c})}{\|\Theta_{c}Z_{c}(:,g_{j_{c}})\|_{F}}V_{c}(g_{j_{c}},:),
\end{align*}
which gives that
\begin{align*}	\|U_{r}(i_{r},:)\|_{F}&=\|\frac{\theta_{r}(i_{r})}{\|\Theta_{r}Z_{r}(:,g_{i_{r}})\|_{F}}V_{r}(g_{i_{r}},:)\|_{F}=\frac{\theta_{r}(i_{r})}{\|\Theta_{r}Z_{r}(:,g_{i_{r}})\|_{F}}\|V_{r}(g_{i_{r}},:)\|_{F}\\ &=\frac{\theta_{r}(i_{r})}{\|\Theta_{r}Z_{r}(:,g_{i_{r}})\|_{F}}\geq\frac{\theta_{r,\mathrm{min}}}{\theta_{r,\mathrm{max}}\sqrt{n_{r,\mathrm{max}}}},
\end{align*}
and
\begin{align*}	\|U_{c}(j_{c},:)\|_{F}&=\|\frac{\theta_{c}(j_{c})}{\|\Theta_{c}Z_{c}(:,g_{j_{c}})\|_{F}}V_{c}(g_{j_{c}},:)\|_{F}=\frac{\theta_{c}(j_{c})}{\|\Theta_{c}Z_{c}(:,g_{j_{c}})\|_{F}}\|V_{c}(g_{j_{c}},:)\|_{F}\\ &\geq\frac{\theta_{c}(j_{c})}{\|\Theta_{c}Z_{c}(:,g_{j_{c}})\|_{F}}m_{V_{c}}\geq\frac{\theta_{c,\mathrm{min}}}{\theta_{c,\mathrm{max}}\sqrt{n_{c,\mathrm{max}}}}m_{V_{c}},
\end{align*}
where we set $m_{V_{c}}=\mathrm{min}_{1\leq k\leq K_{c}}\|V_{c}(k,:)\|_{F}$ (Note that $m_{V_{c}}=1$ when $K_{r}=K_{c}$ by Lemma \ref{populationBiDCDFM}). Hence, we have $\frac{1}{m_{r}}\leq \frac{\theta_{r,\mathrm{max}}\sqrt{n_{r,\mathrm{max}}}}{\theta_{r,\mathrm{min}}}$ and $\frac{1}{m_{c}}\leq \frac{\theta_{c,\mathrm{max}}\sqrt{n_{c,\mathrm{max}}}}{\theta_{c,\mathrm{min}}m_{V_{c}}}$. Then, we have
\begin{align*}
	&\|\hat{U}_{r,*}\hat{O}_{*}-U_{r,*}\|_{F}\leq \frac{2\|\hat{U}_{r}\hat{O}_{*}-U_{r}\|_{F}}{m_{r}}\leq \frac{4\theta_{r,\mathrm{max}}\sqrt{2K_{r}n_{r,\mathrm{max}}}err_{d}}{\theta^{2}_{r,\mathrm{min}}\theta_{c,\mathrm{min}}\sigma_{K_{r}}(P)\sqrt{n_{r,\mathrm{min}}n_{c,\mathrm{min}}}},\\
	&\|\hat{U}_{c,*}\hat{O}_{*}-U_{c,*}\|_{F}\leq \frac{2\|\hat{U}_{c}\hat{O}_{*}-U_{c}\|_{F}}{m_{c}}\leq \frac{4\theta_{c,\mathrm{max}}\sqrt{2K_{r}n_{c,\mathrm{max}}}err_{d}}{\theta_{r,\mathrm{min}}\theta^{2}_{c,\mathrm{min}}\sigma_{K_{r}}(P)m_{V_{c}}\sqrt{n_{r,\mathrm{min}}n_{c,\mathrm{min}}}}.
\end{align*}
\end{proof}
\begin{proof}
Now, we start the proof of the Theorem \ref{mainBiDCDFM}. For $\hat{f}_{r}$, let $\varsigma_{r}>0$ be a small quantity, by Lemma 2 in \cite{joseph2016impact} and Lemma \ref{populationBiDCDFM}, if
\begin{align}\label{hold}	\frac{\sqrt{K_{r}}}{\varsigma_{r}}\|U_{r,*}-\hat{U}_{r,*}\hat{O}_{*}\|_{F}(\frac{1}{\sqrt{n_{r,k}}}+\frac{1}{\sqrt{n_{r,l}}})\leq \sqrt{2}, \mathrm{~for~each~}1\leq k\neq l\leq K_{r},
\end{align}
we have  $\hat{f}_{r}=O(\varsigma^{2}_{r})$. Set $\varsigma_{r}=\sqrt{\frac{2K_{r}}{n_{r,\mathrm{min}}}}\|U_{r,*}-\hat{U}_{r,*}\hat{O}_{*}\|_{F}$, we have Eq (\ref{hold}) holds for all $1\leq k\neq l\leq K_{r}$. Hence, $\hat{f}_{r}=O(\varsigma^{2}_{r})=O(\frac{K_{r}\|U_{r,*}-\hat{U}_{r,*}\hat{O}_{*}\|^{2}_{F}}{n_{r,\mathrm{min}}})$,  combine it with Lemma \ref{boundUrUcBiDCDFM}, we have
\begin{align*} \hat{f}_{r}=O(\frac{\theta^{2}_{r,\mathrm{max}}K^{2}_{r}n_{r,\mathrm{max}}err^{2}_{d}}{\theta^{4}_{r,\mathrm{min}}\theta^{2}_{c,\mathrm{min}}\sigma^{2}_{K_{r}}(P)n^{2}_{r,\mathrm{min}}n_{c,\mathrm{min}}}).
\end{align*}
Similarly, for $\hat{f}_{c}$, let $\varsigma_{c}>0$, by Lemma 2 in \cite{joseph2016impact} and Lemma \ref{populationBiDCDFM}, if
\begin{align*}	\frac{\sqrt{K_{c}}}{\varsigma_{c}}\|U_{c,*}-\hat{U}_{c,*}\hat{O}_{*}\|_{F}(\frac{1}{\sqrt{n_{c,k}}}+\frac{1}{\sqrt{n_{c,l}}})\leq \|V_{c}(k,:)-V_{c}(l,:)\|_{F}, \mathrm{~for~each~}1\leq k\neq l\leq K_{c},
\end{align*}
we have  $\hat{f}_{c}=O(\varsigma^{2}_{c})$. Set $\varsigma_{c}=\frac{2}{\delta_{c,*}}\sqrt{\frac{K_{c}}{n_{c,\mathrm{min}}}}\|U_{c,*}-\hat{U}_{c,*}\hat{O}_{*}\|_{F}$, then we have $\hat{f}_{c}=O(\varsigma^{2}_{c})=O(\frac{K_{c}\|U_{c,*}-\hat{U}_{c,*}\hat{O}_{*}\|^{2}_{F}}{\delta^{2}_{c,*}n_{c,\mathrm{min}}})$,  combine it with Lemma \ref{boundUrUcBiDCDFM}, we have
\begin{align*}
	\hat{f}_{c}=O(\frac{\theta^{2}_{c,\mathrm{max}}K_{r}K_{c}n_{c,\mathrm{max}}err^{2}_{d}}{\theta^{2}_{r,\mathrm{min}}\theta^{4}_{c,\mathrm{min}}\sigma^{2}_{K}(P)\delta^{2}_{c,*}m^{2}_{V_{c}}n_{r,\mathrm{min}}n^{2}_{c,\mathrm{min}}}).
\end{align*}
By Lemma \ref{boundABiDCDFM}, when Assumption \ref{assump2} holds, for any $\alpha>0$, with probability at least $1-o((n_{r}+n_{c})^{-\alpha})$, we have
\begin{align*}
	&\hat{f}_{r}=O(\gamma_{*}\frac{\theta^{2}_{r,\mathrm{max}}K^{2}_{r}n_{r,\mathrm{max}}\mathrm{~max}(\theta_{r,\mathrm{max}}\|\theta_{c}\|_{1},\theta_{c,\mathrm{max}}\|\theta_{r}\|_{1})\mathrm{log}(n_{r}+n_{c})}{\theta^{4}_{r,\mathrm{min}}\theta^{2}_{c,\mathrm{min}}\sigma^{2}_{K_{r}}(P)n^{2}_{r,\mathrm{min}}n_{c,\mathrm{min}}}),\\
	&\hat{f}_{c}=O(\gamma_{*}\frac{\theta^{2}_{c,\mathrm{max}}K_{r}K_{c}n_{c,\mathrm{max}}\mathrm{~max}(\theta_{r,\mathrm{max}}\|\theta_{c}\|_{1},\theta_{c,\mathrm{max}}\|\theta_{r}\|_{1})\mathrm{log}(n_{r}+n_{c})}{\theta^{2}_{r,\mathrm{min}}\theta^{4}_{c,\mathrm{min}}\sigma^{2}_{K_{r}}(P)\delta^{2}_{c,*}m^{2}_{V_{c}}n_{r,\mathrm{min}}n^{2}_{c,\mathrm{min}}}).
\end{align*}
\end{proof}
\bibliographystyle{agsm}
\bibliography{refBiDFMs}
\end{document}